\let\mathcal\relax
\DeclareMathAlphabet{\mathcal}{OMS}{cmsy}{m}{n}
\renewcommand*\l@subsubsection[2]{{}{}{}} %
\renewcommand\section{%
  \@startsection{section}{1}{\z@}%
    {-3.5ex \@plus -1ex \@minus -.2ex}%
    {2.3ex \@plus.2ex}%
    {\normalfont\Large\bfseries}%
}
\renewcommand\subsection{%
  \@startsection{subsection}{2}{\z@}%
    {-3.25ex \@plus -1ex \@minus -.2ex}%
    {1.5ex \@plus .2ex}%
    {\normalfont\large\bfseries}%
}
\renewcommand\subsubsection{%
  \@startsection{subsubsection}{3}{\z@}%
    {-3.25ex \@plus -1ex \@minus -.2ex}%
    {1ex \@plus .2ex}%
    {\normalfont\normalsize\bfseries}%
}
\newenvironment{proof}[1][\proofname]{\par
    \pushQED{\qed}%
    \normalfont \topsep6\p@\@plus6\p@\relax
    \trivlist
    \item\relax
    {\itshape
    #1\@addpunct{.}}\enskip
    \hspace\labelsep\ignorespaces
    }{%
    \popQED\endtrivlist\@endpefalse
    }
\numberwithin{equation}{section}
\renewcommand\@biblabel[1]{[#1]\hfill\hspace*{6pt}}
\def\ps@opening
      \def\@oddfoot{{%
            }}%
      \def\@evenfoot{{%
            }}%
  \let\@evenhead\relax
  \let\@oddhead\relax}
\def\authorandsep{\ifnum\arabic{myauthcount@totc}=\arabic{myauthcount}\ifnum\arabic{myauthcount@totc}=1\else 
\fi\else\fi}
\def\author#1{\gdef\@author{#1}}
\def\@author{}
\def\author{\@@author}%
\renewcommand{\@@author}[2][]{%
  \g@addto@macro\@author{%
    \refstepcounter{myauthcount}%
      \hspace*{0.001pt}{\authorandsep#2\ifx#1\@empty\else\textsuperscript{#1}\fi}\par\par
      \vspace*{3pt}
      }
                        }%
\def\@maketitle{%
  \let\footnote\thanks
  \clearemptydoublepage
    \checkoddpage\ifcpoddpage\setlength{\aboveskipchk}{-28.6pt}\else\setlength{\aboveskipchk}{-28.6pt}\fi
  \vspace*{\aboveskipchk}%
  \vspace{\dropfromtop}%
    \vspace*{11pt}\par%
  %
  \hbox to \textwidth{%
  \parbox[t]{\titlepagewd}{%
    \ifx\@subtitle\@empty%
    \else%
    {\sffamilyfontcn\fontsize{14}{21}\selectfont\raggedright \@subtitle \par}%
    \vspace{7.5\p@}%
    \fi%
        {\fontsize{12bp}{14pt}\selectfont\bfseries\leftskip0pt plus1fill\rightskip0pt plus1fill {\@title} \par}
    \vspace{12\p@}%
        {\fontsize{10bp}{13.5}\selectfont
        \leftskip0pt plus1fill\rightskip0pt plus1fill \@author \par}%
    \vspace{4\p@}
    \vspace{-4\p@}
        {\fontsize{10bp}{12}\selectfont\leftskip0pt plus1fill\rightskip0pt plus1fill\@corresp \par}%
    \vspace{10\p@}
    \vspace*{10.5pt}
    \begingroup
    \parindent=0pt
    {\fontsize{9bp}{11}\selectfont\leftskip18pt\rightskip18pt\@abstract\par}
    \endgroup
    }%
  }
  \vspace{13.5\p@}%
  \ifx\@boxedtext\@empty\else\vspace*{12pt}\par\fi%
  \@boxedtext
  \vspace{12\p@ plus 6\p@ minus 6\p@}%
  \vspace{\extraspace}
  \vspace*{-18pt}
  }
\def\fnum@table{Table~\thetable.}
\long\def\@tablecaption#1#2{%
  \begingroup%
  \centering
      \fontsize{10bp}{12pt}
      \selectfont%
      #1\enskip{#2\par}%
  \endgroup\vspace{\belowcaptionskip}}
\renewcommand\appendix{\setcounter{section}{0}%
  \setcounter{subsection}{0}%
  \def\thesection{\Alph{section}}
  \def\thesubsection{\Alph{section}.\arabic{subsection}}%
  \@addtoreset{equation}{section}%
  \gdef\theequation{\Alph{section}.\arabic{equation}}}
\newtheorem{theorem}{Theorem}[section]
\newtheorem{lemma}[theorem]{Lemma}
\newtheorem{definition}[theorem]{Definition}
\newtheorem{remark}[theorem]{Remark} 
\newtheorem{corollary}[theorem]{Corollary}
\newcommand{\supp}{\operatorname{supp}}
\newcommand{\eps}{\varepsilon}
\DeclareMathOperator*{\argmin}{\arg\min}
\DeclareMathOperator*{\argmax}{\arg\max}
\newcommand{\myMF}[4]{\expandafter#1\csname#3#4\endcsname{{#2{#4}}}}
\newcommand{\myMFcmd}[4]{\expandafter#1\csname#3#4\endcsname{{#2{\csname#4\endcsname}}}}
\let\one\bbone
\newcommand{\cA}{\mathcal A}
\newcommand{\cG}{\mathcal G} \newcommand{\cH}{\mathcal H}
 \newcommand{\cJ}{\mathcal J}
 \newcommand{\cL}{\mathcal L}
\newcommand{\cO}{\mathcal O}  
 \newcommand{\cR}{\mathcal R}
 \newcommand{\cV}{\mathcal V}
\newcommand{\hatbm}[1]{\widehat{\bm{#1}}}
\newcommand{\tildebm}[1]{\widetilde{\bm{#1}}}
\newcommand{\aver}[1]{\langle {#1} \rangle}
\let\tilde\widetilde
\let\hat\widehat
\let\epsilon\varepsilon
\let\eps\varepsilon
\let\subset\subseteq
\let\tn\textnormal
\let\cdots\customcdots
\let\dots\customcdots
\let\ldots\customcdots
\let\myforall\forall
\def\forall{{\myforall\, }}
\let\myexists\exists
\def\exists{{\myexists\, }}
\definecolor{mygray}{RGB}{230,230,230}
\definecolor{myorange}{HTML}{ff7f0e}
\begin{document}
	
	\DOI{DOI HERE}
	\copyrightyear{2025}
	\vol{00}
	\pubyear{2025}
	\access{Advance Access Publication Date: Day Month Year}
	\appnotes{Paper}
	\copyrightstatement{Published by Oxford University Press on behalf of the Institute of Mathematics and its Applications. All rights reserved.}
	\firstpage{1}
	
	

    	\title[Struggles of Shallow Networks with High Frequencies]{Why Shallow Networks Struggle to Approximate\\ and Learn High Frequencies}
	
	\author{Shijun Zhang*\ORCID{0000-0003-4115-7891}
		\address{\orgdiv{Department of Applied Mathematics}, \orgname{
        Hong Kong Polytechnic University},
			\orgaddress{
				\country{Hong Kong}}}}
                
	\author{Hongkai Zhao\ORCID{0009-0006-0373-7181}
		\address{\orgdiv{Department of Mathematics}, \orgname{Duke University}, \orgaddress{
				\postcode{27708}, \state{NC}, \country{USA}}}}
                
	\author{Yimin Zhong\ORCID{0000-0003-1537-7364}
		\address{\orgdiv{Department of Mathematics and Statistics}, \orgname{Auburn University}, \orgaddress{
				\postcode{36830},
				\state{AL}, 
				\country{USA}}}}
                
	\author{Haomin Zhou\ORCID{0000-0001-7647-2600}
		\address{\orgdiv{School of Mathematics},
			\orgname{Georgia Institute of Technology}, 
			\orgaddress{
				\postcode{30332}, 
				\state{GA}, 
				\country{USA}}}}
	
	\authormark{S. Zhang, H. Zhao, Y. Zhong, and H. Zhou}
	
	\corresp[*]{Corresponding author: 
		\href{mailto:shijun.zhang@polyu.edu.hk}{shijun.zhang@polyu.edu.hk}
	}
	
	\received{Date}{0}{Year}
	\revised{Date}{0}{Year}
	\accepted{Date}{0}{Year}
	
	

    \abstract{In this work, we present a comprehensive study combining mathematical and computational analysis to explain why a two-layer neural network struggles to handle high frequencies in both approximation and learning, especially when machine precision, numerical noise, and computational cost are significant factors in practice. Specifically, we investigate the following fundamental computational issues: (1) the minimal numerical error achievable under finite precision, (2) the computational cost required to attain a given accuracy, and (3) the stability of the method with respect to perturbations. The core of our analysis lies in the conditioning of the representation and its learning dynamics. Explicit answers to these questions are provided, along with supporting numerical evidence.}
	
	\keywords{shallow neural networks; low-pass filter; Gram matrix; generalized Fourier analysis; Radon transform; Rashomon set.\\[5pt]}
	
	\maketitle


\section{Introduction}
Neural networks are now widely used in machine learning, artificial intelligence, and many other areas as a parameterized representation with certain structures for approximating an input-to-output relation, e.g., a function or map in mathematical terms. 
They have achieved notable successes in practice but also encountered significant challenges.
More importantly, many basic and practical questions are still open. Extensive studies have been carried out to understand the properties of neural networks and how they work in different perspectives, such as universal approximation property, representation capacity, and optimization process (mostly based on gradient descent with different variations), often separately. For example, it has been widely known that neural networks can approximate any Lipschitz function with a small error.
The approximation theory has been extensively studied for various types of activation functions and diverse structures of networks~\cite{shijun:4,shijun:5,shijun:net:arc:beyond:width:depth,shijun:arbitrary:error:with:fixed:size,jiao2021deep,yarotsky18a,yarotsky2017,shijun:1,shijun:2,shijun:3,ZHOU2019,10.3389/fams.2018.00014,2019arXiv190501208G,2019arXiv190207896G,suzuki2018adaptivity,Ryumei,Wenjing,Bao2019ApproximationAO}. Recently, there are works~\cite{shijun:2,yarotsky18a,shijun:thesis} discussing the explicit constructions of the ``optimal'' networks of multiple layers.
 However, one daunting issue that has not been studied systematically in the past is whether such ``optimal'' approximations can be possibly attained by training the networks and more importantly, what is the approximation limit in terms of a finite machine precision, computation cost, and the property of the function being approximated? Hence, an effective algorithm needs to consider all these aspects to achieve well-balanced accuracy, efficiency, and stability. Due to the nonlinear nature of neural network representations, this is a challenging task. 

In this work, instead of a mathematical study of approximation theory, which usually does not consider the practical constraint of finite machine precision or the cost and stability of finding a good solution, we study a few basic questions from a practice point of view for both approximation and optimization for two-layer neural networks. Our consideration includes both asymptotic/continuous and non-asymptotic/discrete regimes in terms of network width:
\begin{itemize}
\item the minimal numerical error one can achieve given a finite machine precision;
\item
the computation time (cost) to achieve a certain accuracy for the training process;
\item 
stability to perturbations, e.g., noise in the data, or over-fitting. 
\end{itemize}
Our study shows that, in practice, a shallow neural network is essentially a ``low-pass filter'' due to the ill-conditioning of the representation which is explicitly characterized by the spectral decay of the Gram matrix, composed of pairwise correlation of the parameterized activation functions, and asymptotic equivalence of the eigenmodes to the eigenfunctions of the Laplace operator (generalized Fourier modes) in arbitrary dimensions. 
More specifically, ill-conditioning of the representation means smooth modes, the number of which depends on the spectral decay rate of the Gram matrix and machine precision, can be captured and stably used for approximation. Although the universal approximation property of two-layer neural networks is proved in theory, conditioning of the representation and the finite machine precision determine the achievable numerical accuracy which may be far less than the machine precision in practice, for example, when approximating functions with significant high-frequency components, such as functions with rapid changes and/or fast oscillations. Moreover, the numerical accuracy can not be further improved by increasing the amount of data or the network's width after a certain threshold since the number of eigenmodes that can be stably captured with a given machine precision does not increase. 
On the other hand, the low pass filter nature leads to certain stability with respect to perturbations in the high modes, e.g., noises, or over-parametrization. 

One of the most important features when using neural networks for approximation is the capability of learning, i.e., optimizing the parameters to adapt to the underlying function manifested by data. However,  the initial representation with randomized parameters can not capture those high frequency components needed to represent fine features due to the ill-conditioning and hence can not guide the optimization to achieve the adaptivity effectively. Moreover, we show that ill-conditioning of the representation causes slow learning dynamics for high frequencies, which are needed in an adaptive representation, for gradient-based optimization. Furthermore, the adaptive distribution of parameters can lead to even worse conditioning of the representation and hence even slower learning dynamics. These difficulties make approximation of high frequencies based on learning challenging if not impossible.

From a probabilistic perspective, we show that the Rashomon set, the set of parameters where accurate approximations can be achieved, for a two-layer neural network has a small measure for highly oscillatory functions. The measure decreases exponentially with respect to the oscillation frequency. It implies, in practice, both low probabilities of being close to a good approximation for a random initial guess and high computational cost for finding one. 

These understandings of the limit of a one-hidden layer network prompt us to study how to use multi-layer to circumvent the limit through effective smooth decomposition and composition in our future work.



\subsection{Literature review}
The approximation theory of shallow neural networks has been well-known since the universal approximation theorem~\cite{barron1993, Cybenko1989ApproximationBS,chui1992approximation,park1991universal,klusowski2018approximation,breiman1993hinging}. The error bounds of approximation in $L^{\infty}$ norm have been demonstrated either through explicit construction or by probabilistic proofs, see~\cite{domingo2021tighter,klusowski2018approximation,hornik1989multilayer} and the references therein. However, it has been observed widely in practice that shallow neural networks cannot approximate highly oscillatory functions effectively~\cite{chen2022bridging, grossmann2023can}. Several explanations have been proposed in the past few years. 

The authors of~\cite{luo2019theory,xu2019frequency} summarized such phenomenon as a heuristic law called \emph{frequency principle}, that is, the training process of the neural network recovers lower Fourier frequencies first. Several explanations based on the frequency principle are proposed. When the activation function $\sigma$ is analytic, e.g., $\texttt{Tanh}$ or $\texttt{Sigmoid}$, the authors in~\cite{xu2019frequency} have shown that the training of the network at the final stage can be slow due to the inability of smooth activations to pick up the high-frequency components. While for non-smooth activation functions, e.g., $\texttt{ReLU}$, $\texttt{LReLU}$, and $\texttt{ELU}$, the authors of~\cite{luo2019theory} provided an interpretation for the \emph{ frequency principle} based on the smoothing effect of the activation functions.  However, the theory cannot be applied to general cases since it demands strong regularity assumptions for the activation functions and the objective function.
In high dimensions, ~\cite{eldan2016power} and~\cite{safran2017depth} claimed the bottleneck of the training of shallow networks may come from the so-called ``depth separation'' of the capacity between shallow and deep neural networks and explained that shallow neural networks demand a width that grows exponentially in dimension to fit discontinuous functions in $L^2$ norm while deep neural networks can fit well with a much smaller width.

 Another explanation attributed the difficulty to the configuration of the training dynamics. Most of the literature focuses on two regimes: the Neural Tangent Kernel (NTK) regime and the Mean-Field regime. (1) In the NTK regime, the definition of network slightly differs from the classical one, each layer is scaled by a key factor $\frac{1}{\sqrt{n_l}}$ with $n_l$ being the width. If the network is sufficiently wide~\cite{zhang2019fast,li2020learning}, the parameters of the neural network almost freeze, except for the last layer. This configuration can sometimes be viewed as the final stage of training when the parameters are nearly optimal. Under such circumstances, training dynamics has been extensively investigated~\cite{du2019gradient,li2018learning,jacot2018neural}. With well-distributed labeled data, the slow convergence rate relates to the fast decay rate of the eigenvalues of the neuron-tangent kernel~\cite{cao2019towards,su2019learning,velikanov2021explicit,nitanda2020optimal}. Studies of the eigenvalues of practical kernels have shown that the decay rates of the leading eigenvalues are closely related to the regularity near the diagonal of the kernel~\cite{birman1970asymptotic}, while a fully explicit characterization of all eigenvalues is difficult. (2) In the mean-field regime, the shallow network is commonly used where an additional factor of $\frac{1}{n}$ is applied to the classical one with $n$ being the width. Parameters can be treated as an empirical distribution function or a particle system~\cite{mei2018mean,rotskoff2018trainability,sirignano2020mean}. As the width of the network becomes infinity, the limiting distribution obeys a gradient flow under the Wasserstein metric, and the convergence is proved in~\cite{rotskoff2018trainability} for $C^1$ activation function under the assumption that the empirical measure of particle system converges. A more general class of particle systems has been explored in~\cite{chizat2018global}. However, the corresponding convergence rate is not mentioned. 
 
 In addition to the above possible explanations, initialization of parameters may also play a vital role in understanding the difficulty of training neural networks to fit highly oscillatory functions. A recent work~\cite{holzmuller2020training} considered a special setting for the two-layer \texttt{ReLU} network that the labeled points and biases are not well distributed and proved that the trained network will not converge to the desired objective function.

\subsection{Contributions}

Our main contributions are summarized below.
\begin{itemize}
    \item We explicitly characterize the decay rate of the eigenvalues of the Gram kernel corresponding to \texttt{ReLU} activation functions (and others) in any dimensions and show that the corresponding eigenfunctions are equivalent to generalized Fourier modes. The corresponding discrete Gram matrix is also analyzed. The study implies that the approximation by a two-layer neural network can only maintain a finite number of leading (smooth) modes accurately given a finite machine precision.    
    \item 
    We investigate the nonlinear learning dynamics based on the gradient flow for two-layer \texttt{ReLU} networks with finite width in a bounded domain. We show slow learning dynamics for high-frequency modes. 
    \item The measure of Rashomon set, the set of parameters in parameter space that renders an approximation with a given tolerance, for two-layer neural networks is characterized. The result shows that oscillatory functions are difficult to represent and learn from a probability perspective.
\end{itemize}



In this work, we mainly focus on using $\texttt{ReLU}$ as the activation function. Our study can be extended to other activation functions as shown in the appendix. Here is the outline of this paper. First, we present a spectral analysis of the Gram matrix and least square approximation in Section~\ref{sec:gram:and:ls}. Then we study the training dynamics based on gradient descent in Section~\ref{sec:dynamics}.
In Section~\ref{sec:Rashomon}, the probability framework and Rashomon set are employed to show why oscillatory functions are difficult to represent and learn. Extension of the current work is briefly discussed in Section~\ref{sec:discussions}. 


\section{Gram matrix and least square approximation}
\label{sec:gram:and:ls}

We first introduce some notations and the general setup for two-layer neural networks. Denote $[n]=\{1,2,\cdots,n\}$ and $C(D)$ the continuous functions on a compact domain $D\subseteq\bbR^d$.
Let $\calH_n$ be the hypothesis space generated by shallow feed-forward networks of width $n$. Each $h\in \calH_n$ has the following classical form
\begin{equation}
\label{eq:def:shallow:net}
    h(\bmx) = \sum_{i=1}^n a_i \sigma(\bmw_i \cdot \bmx - b_i)+c\quad \tn{for any $\bmx\in \bbR^d$},
\end{equation}
where $n\in\bbN^+$ is the width of the network, $\bmw_i\in \bbR^d$, $a_i, b_i,c\in\bbR$ are parameters for each $i\in[n]$.
Finding the best $h\in \calH_n$ to approximate the objective function $f(\bmx)\in C(D)$ is usually converted into minimizing the expected (or true) risk
\begin{equation*}
    \calL(h,f)\coloneqq \bbE_{\bmx\sim  \calU(D)}\left[\ell\big(h(\bmx),f(\bmx)\big)\right],
\end{equation*}
where $\calU$ is some data distribution over $D$ and $\ell(\cdot,\cdot)$ is a loss function. In practice, only finitely many samples $\big\{\big(\bmx_i,f(\bmx_i)\big)\big\}_{i=1}^N$ are available and the data distribution is unknown. However, one could approximate the expected risk by  the empirical risk $\calL_{\tn{emp}}(h, f)$, which is given by
\begin{equation*}
    \calL_{\tn{emp}} (h, f) \coloneqq \frac{1}{N}\sum_{i=1}^N \ell\big(h(\bmx_i),f(\bmx_i)\big).
\end{equation*}

In this paper, we let $\calU$ be the uniform distribution and $\ell(y,y^\prime)=|y-y^\prime|^2$, implying
\begin{equation*}
   \calL(h, f) =  \int_{D}| h(\bmx) -  f(\bmx)|^2 d\bmx\quad \tn{and}\quad \calL_{\tn{emp}} (h, f) = \frac{1}{N}\sum_{i=1}^N |h(\bmx_i) - f(\bmx_i)|^2.
\end{equation*}

A learning/training process is to identify $h^{\ast}\in\calH_n$ or $\hath\in\calH_n$ such that
\begin{equation*}
    h^{\ast}\in \argmin_{h\in\calH_n}\calL(h,f)\quad \tn{or}\quad 
    \hath\in \argmin_{h\in\calH_n}\calL_{\tn{emp}}(h,f).
\end{equation*}
This study investigates the approximation capabilities of two-layer neural networks within the mentioned framework. We aim to address the three basic questions outlined in the abstract that commonly arise in practical settings.


We start with a study on approximation properties of a two-layer neural network as a linear representation, i.e., where the weights and biases in the hidden neurons are fixed, using least squares. In this setting, the solution can be found by solving a linear system involving the Gram matrix, the normal equation. The most fundamental question is the basis of the representation, i.e., $\{\sigma(\bmw_i \cdot \bmx - b_i), i\in[n]\}$: space the basis span and the correlation among the basis. Desirable features of a good basis for computational efficiency, accuracy, and stability in practice are sparsity and well-conditioning of the Gram matrix. In other words, global interactions and strong correlations should be avoided. 

We start with the most used activation function in neural networks is \texttt{ReLU}: $\sigma(x) \coloneqq \max(x, 0)$. The general form of a shallow network in \eqref{eq:def:shallow:net} can be simplified to 
\begin{equation}
\label{eq:def:shallow:net:sphere}
    h(\bmx) = \sum_{i=1}^n a_i \sigma(\bmw_i \cdot \bmx - b_i)+c, \quad \bmx\in D\subset\bbR^d,~\bmw_i\in\bbS^{d-1}, ~a_i, b_i\in \bbR.
\end{equation}
\subsection{One-dimensional case}
In one dimension, we let $D = [-1, 1]$. Due to the affinity of \texttt{ReLU} and the transform $\sigma(x) = x - \sigma(-x)$, the 
 form of a shallow network in \eqref{eq:def:shallow:net:sphere} can be further simplified to
\begin{equation*}
    h(x) = c + vx+ \sum_{i=1}^n a_i \sigma(x  - b_i), \quad x\in D,~c, v,a_i, b_i\in\bbR.
\end{equation*}
Since we only consider the approximation inside $D$, we may further reduce the network into $ h(x) = c + \sum_{i=1}^n a_i \sigma(x  - b_i) $ by setting
$c = f(-1)$ and $b_i\in D$ as well.
With fixed $b_i\in D$, \texttt{ReLU} functions $\sigma( x  -  b_i )$ span the same continuous piecewise linear (in each sub-intervals between $b_i$'s) function space as the linear finite element basis (hat functions) with nodes $\{b_i\}_{i=1}^n$. Mathematically, they are the same when used in approximating a function in the domain $D$ in the least square setting. The minimizer is the $L^2$ projection of $f(x)$ onto the continuous piecewise linear function space. However, the major difference in practice is the Gram matrix (mass matrix in finite element terminology or normal matrix in linear algebra terminology) of the basis, which defines the linear system one needs to solve numerically to find the best approximation. Using the finite element basis, which is local and decorrelated, the Gram matrix is sparse and well-conditioned. The condition number is proportional to the ratio between the sizes of the maximal sub-interval and the minimal sub-interval \cite{FEM-conditioning}. While using \texttt{ReLU} functions, which are non-local and can be highly correlated, the Gram matrix is dense and ill-conditioned, as we will show below. As a consequence, 1) computation and memory costs involving the Gram matrix can be very expensive, and 2) only those functions close to the linear space spanned by the leading eigenvectors of the Gram matrix can be approximated well. The number of the leading eigenvectors that can be used stably and accurately depends on the decay rate of the eigenvalues, machine precision, and/or noise level. Now we present a spectral analysis of the Gram matrix for a set of \texttt{ReLU} functions.

Denote the Gram matrix $\bmG \coloneqq (\bmG_{i,j})\in\bbR^{n\times n}$, where
\begin{equation*}
   \bmG_{i,j} \coloneqq  \int_{D} \sigma(x - b_i) \sigma(x - b_j) dx
   =\frac{1}{24}(2 - b_i - b_j - |b_i - b_j|)^2 (2 - b_i - b_j + 2 |b_i-b_j|).
\end{equation*}
The correlation between two \texttt{ReLU} functions with close biases $b_i, b_j$ is $1-O(|b_i-b_j|^2)$ and this strong correlation suggests ill-conditioning of the Gram matrix. To fully understand the spectrum property of $\bmG$, we define the corresponding Gram kernel function $\calG: \bbR\times \bbR\mapsto\bbR$ as
\begin{equation}\label{EQ: GRAM KERNEL}
    \calG(x, y) \coloneqq \int_{D}\sigma( z- x)\sigma(z - y) dz.
\end{equation}
In particular, if we restrict $x, y\in D$, 
\begin{equation}\label{eq:explicit}
\begin{aligned}
        \calG(x, y) &\coloneqq\frac{1}{24}(2 - x - y - |x - y|)^2 (2 - x - y + 2 |x-y|) \\
        &= \frac{1}{12}|x-y|^3 + \frac{1}{12}(2-x-y)\left(2(1-x)(1-y) - (x-y)^2\right).
    \end{aligned}
\end{equation}
First, we provide an explicit spectral characterization of the Gram kernel \eqref{eq:explicit}. Define the operator $K: L^2[-1, 1]\to L^2[-1, 1]$ 
    \begin{equation*}
        K h(x) = \int_{-1}^1 \calG(x, y) h(y) dy. 
    \end{equation*}
Let $\mu_k,\ k=1, 2, \ldots$, be the eigenvalue of $K$ in descending order and $\phi_k$ be the corresponding eigenfunction which satisfies 
    \begin{equation}\label{EQ: EIGEN}
        \int_{-1}^1 \calG(x, y) \phi_k(y) dy = \mu_k \phi_k(x).
    \end{equation}
    Taking derivatives four times on both sides, using $\frac{\partial^4}{\partial x^4}\calG(x,y)=\delta(x-y)$ we obtain the following differential equation, 
    \begin{equation}\label{eq:ODE}
        \phi_k^{(4)}(x) = \frac{1}{\mu_k} \phi_k(x),
    \end{equation}
   which implies that there are constants $A_k, B_k, C_k, D_k \in \bbC$ such that
     \begin{equation*}
         \phi_k(x) = A_k \cosh(w_k x) + B_k \sinh(w_k x) + C_k \cos(w_k x) + D_k \sin(w_k x)
     \end{equation*} 
     for certain $w_k = \mu_k^{-\frac{1}{4}}> 0$. Furthermore, one can easily check that $\calG(1, y)=\calG_x(1, y)=\calG_{xx}(-1, y)=\calG_{xxx}(-1, y)=0$, which implies the following boundary conditions for $\phi_k$, 
\begin{equation*}
    \phi_k(1)=\phi_k'(1)=\phi_k''(-1)=\phi_k'''(-1)=0,
\end{equation*}
which determine $A_k, B_k, C_k, D_k$ explicitly. We show that $\phi_k$ are asymptotically Fourier modes from low frequencies to high frequencies. Here we summarize the results while the detailed calculations and proofs can be found in Appendix~\ref{sec:GF}. 
\begin{itemize}
 \item 
$w_{2j+1}\in ((j+\frac{1}{4})\pi, (j+\frac{1}{2})\pi)$,  $w_{2j+2}\in ((j + \frac{1}{2})\pi, (j+\frac{3}{4})\pi)$, $j\ge 0$, and $\lambda_k=w_k^{-4}\sim (\frac{k\pi}{2})^{-4}$,
\item
if $k=2j+1$, $\phi_k(x) = C_k(-\frac{\cos(w_k)}{\sinh(w_k)} \sinh(w_k x)  +  \cos(w_k x))$, $C_k=\cO(1)$,\\  
if $k=2j+2$, $\phi_k(x) = D_k ( - \frac{\sin(w_k)}{\cosh(w_k)}\cosh(w_k x) + \sin(w_k x) )$, $D_k=\cO(1)$,
\item 
$\{\phi_k\}_{k\ge1}$ forms an orthonormal basis of $L^2(D)$ and
\[
\|\phi_k\|_{L^{\infty}(D)}=\cO(1), ~\|\phi^{'}_k\|_{L^{\infty}(D)}=\cO(k), ~\|\phi^{''}_k\|_{L^{\infty}(D)}=\cO(k^2),
\]
\[
\|\phi_{2j+1}(x)-\cos (w_{2j+1}x)\|_{L^2(D)}=\cO(j^{-1/2}), ~\|\phi_{2j+2}(x)-\sin (w_{2j+2}x)\|_{L^2(D)}=\cO(j^{-1/2}).
\]
\end{itemize}

Next, we study the spectral properties of the discrete Gram matrix. Earlier work \cite{hong2022activation} studied discrete Gram matrix on the uniform grid in one dimension. We will prove results in more general settings and higher dimensions. Denote the vectors $\bma \coloneqq (a_i)_{i=1}^n$ and $\bmf \coloneqq (f_i)_{i=1}^n$ that $f_i  = \int_{D} f(x) \sigma(x-b_i) dx $. The least-square solution $\bma\in \bbR^n$, when the 
 biases are fixed,  is $\bma = \bmG^{\dagger} \bmf$, where $\bmG^{\dagger}$ is the pseudo-inverse of $\bmG$. Without loss of generality, we assume that $b_i\neq b_j$ for all $i\neq j$ and the biases are sorted in ascending order, that is, $b_1<b_2 <\cdots < b_n$. We first provide an estimate for the eigenvalue estimates of the Gram matrix $\bmG$. The rescaled matrix $\bmG_n = \frac{1}{n}\bmG$ is the so-called \emph{kernel matrix} for $\calG$, which plays an important role in kernel methods~\cite{10.2307/3318636}. 
\begin{theorem}\label{thm:spectrum1D}
    Suppose $\{b_i\}_{i=1}^n$ are quasi-evenly spaced on $D$, $b_i = -1 + \frac{2(i -1)}{n}+ o\left(\frac{1}{n}\right)$. Let $\lambda_1 \ge \lambda_2 \ge \cdots \ge \lambda_n \ge 0$ be the eigenvalues of the Gram matrix $\bmG$, then $|\lambda_k - \frac{n}{2}\mu_k |\le C$ for some constant $C = \calO(1)$, where $\mu_k = \Theta(k^{-4})$ is the $k$-th eigenvalue of $\calG$.
\end{theorem}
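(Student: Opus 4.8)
The statement says that, after rescaling by $\tfrac n2$, the discrete Gram matrix $\bmG$ is a spectrally faithful discretization of the integral operator $K$. The plan is to read $\bmG$ as a Nystr\"om/quadrature discretization of $K$ on the (slightly perturbed) uniform grid $\{b_i\}$ of spacing $h\coloneqq\tfrac 2n$, and to transfer eigenvalue information in both directions via the Courant--Fischer min--max principle, quantifying the discrepancies through quadrature errors that feed on the regularity of $\calG$ and on the eigenfunction information recalled above: $\|\phi_k^{(j)}\|_{L^\infty(D)}=\cO(k^j)$ for $j\le 3$, $\phi_k^{(4)}=\mu_k^{-1}\phi_k$, $\mu_k=\Theta(k^{-4})$, and $\{\phi_k\}$ a complete orthonormal system of $L^2(D)$. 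A preliminary observation from \eqref{eq:explicit} is that $\calG$ is a piecewise cubic which is globally $C^2$ on $D\times D$ with $\|\calG\|_{C^2(D\times D)}=\cO(1)$ (only $\partial_x^3\calG$ jumps, by $1$ across the diagonal), hence in particular Lipschitz on $D\times D$. Throughout, the quasi-even spacing $b_i=-1+(i-1)h+o(h)$ moves quadrature nodes and cells only by $o(h)$ and is absorbed into the error terms below.

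For the upper bound $\lambda_k\le\tfrac n2\mu_k+C$ I would use $\lambda_k=\min_{\dim S=n-k+1}\ \max_{\bmv\in S\setminus\{\bmzero\}}\tfrac{\bmv^\top\bmG\bmv}{\|\bmv\|^2}$. Set $c_i\coloneqq[-1+(i-1)h,\,-1+ih]$ and let $\Pi_n\bmv\coloneqq\sum_i h^{-1/2}v_i\one_{c_i}\in L^2(D)$ be the piecewise-constant embedding, so that $\|\Pi_n\bmv\|_{L^2}=\|\bmv\|$. I would take $S\coloneqq\{\bmv:\langle\Pi_n\bmv,\phi_l\rangle_{L^2}=0,\ l=1,\dots,k-1\}$, which has $\dim S\ge n-k+1$. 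A direct computation gives $\bmv^\top\bmG\bmv=\tfrac n2\langle K\Pi_n\bmv,\Pi_n\bmv\rangle+\tfrac{n^2}{4}\bmv^\top\bmE\bmv$, where $\bmE_{ij}\coloneqq h^2\calG(b_i,b_j)-\iint_{c_i\times c_j}\calG$; Lipschitz continuity of $\calG$ (with $|x-b_i|\le h+o(h)$ on $c_i$) gives $|\bmE_{ij}|=\cO(h^3)$, hence $\|\bmE\|_2\le\|\bmE\|_F=\cO(nh^3)=\cO(n^{-2})$ and $\tfrac{n^2}{4}|\bmv^\top\bmE\bmv|=\cO(1)\|\bmv\|^2$. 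Since $\Pi_n\bmv\perp\phi_1,\dots,\phi_{k-1}$ and $K$ is positive semidefinite with eigenpairs $(\mu_l,\phi_l)$, one gets $\langle K\Pi_n\bmv,\Pi_n\bmv\rangle\le\mu_k\|\Pi_n\bmv\|^2=\mu_k\|\bmv\|^2$, whence $\bmv^\top\bmG\bmv\le(\tfrac n2\mu_k+\cO(1))\|\bmv\|^2$ on $S$. The residual is genuinely $\cO(1)$ --- it is the price of bounding $\|\bmE\|_2$ by $\|\bmE\|_F$.

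For the lower bound $\lambda_k\ge\tfrac n2\mu_k-C$, note first that since $\mu_k=\Theta(k^{-4})$, for $k$ beyond a fixed multiple of $n^{1/4}$ one has $\tfrac n2\mu_k=\cO(1)$ while $\lambda_k\ge 0$, so there is nothing to prove; I may thus assume $k=\cO(n^{1/4})$. I would use $\lambda_k=\max_{\dim V=k}\ \min_{\bmv\in V\setminus\{\bmzero\}}\tfrac{\bmv^\top\bmG\bmv}{\|\bmv\|^2}$ with $V\coloneqq\operatorname{span}\{\bmpsi_l\}_{l=1}^k$, $\bmpsi_l\coloneqq(\phi_l(b_i))_{i=1}^n$ (linearly independent for such $k$, the $\phi_l$ being well resolved on the grid). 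For $\bmv=\sum_{l\le k}c_l\bmpsi_l$, i.e.\ $v_i=f(b_i)$ with $f\coloneqq\sum_{l\le k}c_l\phi_l$, one has $\langle Kf,f\rangle=\sum_l\mu_lc_l^2$ and $\|f\|_{L^2}^2=\|\bmc\|^2$. Composite trapezoidal estimates --- using $\|\calG\|_{C^2}=\cO(1)$, the bounds on $\phi_l^{(j)}$, and $\|f''\|_{L^2}=\cO(k^2)\|\bmc\|$ (from $f^{(4)}=\sum_l\mu_l^{-1}c_l\phi_l$ and integration by parts with the boundary conditions on the $\phi_l$) --- then yield $\|\bmv\|^2=h^{-1}\|\bmc\|^2(1+\cO(h^2k^2))$ and $\bmv^\top\bmG\bmv=h^{-2}\langle Kf,f\rangle+\cO(k^2)\|\bmc\|^2\ge(h^{-2}\mu_k-\cO(k^2))\|\bmc\|^2$. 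Dividing, and using $k=\cO(n^{1/4})$ so that $\cO(hk^2)=o(1)$ and $\tfrac n2\mu_k\cdot\cO(h^2k^2)=o(1)$, gives $\tfrac{\bmv^\top\bmG\bmv}{\|\bmv\|^2}\ge\tfrac n2\mu_k-o(1)$ for every $\bmv\in V$, hence $\lambda_k\ge\tfrac n2\mu_k-C$.

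I expect the upper bound to be essentially mechanical once $\bmG$ is rewritten through the cell-averaged kernel. The real work is the lower bound: I would need to control the one- and two-dimensional quadrature errors with exactly the right powers of $k$ and $h=2/n$, so that after dividing by the near-isometry factor $\sim 1/h=n/2$ the leftover is $o(1)$ \emph{uniformly} in $k$. This is precisely where $\mu_k=\Theta(k^{-4})$ is indispensable: it confines the nontrivial range to $k=\cO(n^{1/4})$, small enough for the eigenfunction bounds to absorb everything. A minor point to check carefully is that the jump of $\partial_x^3\calG$ along the diagonal does not spoil the trapezoidal estimates; it does not, because the diagonal part of $\calG$ is the $C^2$ function $\tfrac1{12}|x-y|^3$ (only its third derivative jumps) and the singular line meets the grid at nodes, so its contribution is of lower order.
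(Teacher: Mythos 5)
Your upper bound is sound, and it is really the paper's own mechanism in disguise: comparing $\bmG$ entrywise with the cell-averaged kernel and bounding the discrepancy $\tfrac{n^2}{4}\|\bmE\|_F=\cO(1)$ is the discrete counterpart of the paper's step-kernel operator $K^\ast$ together with the Hilbert--Schmidt bound $\|K-K^\ast\|=\cO(n^{-1})$, and the $o(1/n)$ node perturbation is absorbed exactly as in the paper (Lipschitz continuity of $\calG$ plus a Frobenius-norm Weyl estimate). The paper simply runs this comparison once, through Weyl's inequality for compact self-adjoint operators, and gets \emph{both} directions simultaneously, because the nonzero spectrum of $K^\ast$ is \emph{exactly} $\tfrac{2}{n}$ times that of the equispaced Gram matrix --- no quadrature on eigenfunctions is ever needed.

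The lower bound is where your argument has a genuine gap. The claim $\bmv^\top\bmG\bmv=h^{-2}\langle Kf,f\rangle+\cO(k^2)\|\bmc\|^2$ presumes a second-order quadrature rule, but the nodes $b_i=-1+(i-1)h+o(h)$ form a perturbed \emph{left-endpoint} rule: Euler--Maclaurin leaves a first-order boundary term $\tfrac{h}{2}\bigl(g(-1,\cdot)-g(1,\cdot)\bigr)$, and while $\calG(1,y)=0$ kills the contribution at $x=1$, neither $\calG(-1,y)$ nor $f(-1)$ vanishes, so the double sum carries an error of order $h^{-2}\cdot h\,|f(-1)|\,|(Kf)(-1)|=\cO(n\sqrt{k})\|\bmc\|^2$ (and the $o(h)$ node shifts alone contribute $o(n)\,\cO(k^{2})\|\bmc\|^2$ via $\|\nabla(\calG ff)\|_\infty$). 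After dividing by $\|\bmv\|^2\sim\tfrac n2\|\bmc\|^2$ this leaves a deficit of order $\sqrt{k}$, not $\cO(1)$; equivalently, writing $\bmv=\Phi\bmc$ and using the paper's own quadrature estimate $\Phi^\top\Phi=\tfrac n2\mathrm{Id}+\cO(\text{derivative bounds})$ yields only $\lambda_k\ge\tfrac n2\mu_k-\cO(\sqrt{k})$. This is precisely why the paper's Theorem~\ref{THM: Cond Num}, which does use sampled eigenfunctions, caps its error by invoking Theorem~\ref{thm:spectrum1D} rather than reproving it. To close the gap, do not sample eigenfunctions at the nodes: either follow the paper's route, or complete your own setup symmetrically by noting that $\bmv^\top\bmG\bmv\ge\tfrac n2\langle K\Pi_n\bmv,\Pi_n\bmv\rangle-\cO(1)\|\bmv\|^2$ for \emph{all} $\bmv$, that the form $\tfrac n2\langle K\Pi_n\cdot,\Pi_n\cdot\rangle$ has eigenvalues $\tfrac n2\mu_k(P_nKP_n)$ with $P_n$ the $L^2$ projection onto piecewise constants, and that $\|K-P_nKP_n\|_{\rm op}\le 2\|(I-P_n)K\|_{\rm op}=\cO(h)$ since $\|(I-P_n)\calG(x,\cdot)\|_{L^2}=\cO(h)$ uniformly in $x$; Weyl then gives $\mu_k(P_nKP_n)\ge\mu_k-\cO(1/n)$ and hence the missing direction.
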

\begin{proof}
The idea of proof comes from~\cite{widom1958eigenvalues}. Define the operator ${K}^{\ast}$ by the kernel
\begin{equation*}
    {\calG^{\ast}}(x, y) = \calG\left(-1 + \frac{2}{n}\left\lfloor\frac{x+1}{2}\right\rfloor, -1 + \frac{2}{n}\left\lfloor\frac{y+1}{2}\right\rfloor\right)
\end{equation*}
where $\lfloor\cdot \rfloor$ is the floor function. We denote the eigenvalues of $K^{\ast}$ as $\mu^{\ast}_1\ge \mu^{\ast}_2 \ge \cdots$, 
then for the equispaced biases $b_i^{\ast} = -1 + \frac{2(i-1)}{n}$, the corresponding Gram matrix $\bmG^{\ast}$ has the eigenvalues exactly $\lambda_i^{\ast} = \frac{n}{2} \mu^{\ast}_i$. Using Weyl's inequality for self-adjoint compact operators
\begin{equation}\label{EQ: WEYL BOUND}
    |\mu_i - \mu_i^{\ast}|\le \|K - K^{\ast}\| \le \sqrt{\int_{-1}^1\int_{-1}^1 \left|\calG(x, y) - \calG^{\ast}(x, y)\right|^2 dx dy} = \calO(n^{-1}).
\end{equation}
Now we consider perturbed $\tilde{b}_i = b_i^{\ast} + o(\frac{1}{n})$ as mentioned in Theorem~\ref{thm:spectrum1D}, let the corresponding Gram matrix be $\Tilde{\bmG}$, then by Weyl's inequality for Hermitian matrices
\begin{equation*}
    \| \lambda_i(\tilde{\bmG}) - \lambda_i(\bmG^{\ast}) \| \le \|\tilde{\bmG} - \bmG^{\ast}\| \le \sqrt{\sum_{i=1}^n \sum_{j=1}^n \left|\calG(\tilde{b}_i, \tilde{b}_j) - \calG(b_i^{\ast}, b_j^{\ast}) \right|^2} = o(1).
\end{equation*}
Therefore $|\lambda_i(\tilde{\bmG}) - \frac{n}{2} \mu_i| \le C$ for some positive constant $C > 0$.
\end{proof}

\begin{theorem}\label{THM: Cond Num}
    Suppose $\{b_i\}_{i=1}^n$ are chosen as Theorem~\ref{thm:spectrum1D}, then the condition number of the Gram matrix $\bmG$ satisfies $$\kappa = \lambda_1/\lambda_n = \Omega(n^3),$$
    where $\Omega$ is the big Omega notation.
\end{theorem}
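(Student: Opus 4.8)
The plan is to bound $\lambda_1$ from below and $\lambda_n$ from above separately: $\lambda_1$ via Theorem~\ref{thm:spectrum1D}, and $\lambda_n$ via the Courant--Fischer characterization $\lambda_n=\min_{\bmv\neq\bmzero}\bmv^{\top}\bmG\bmv/\norm{\bmv}^2$ evaluated at one well-chosen localized test vector. The large eigenvalue is the easy part: Theorem~\ref{thm:spectrum1D} gives $\lambda_1\ge\tfrac{n}{2}\mu_1-C$, and since $\mu_1=w_1^{-4}$ with $w_1\in(\tfrac{\pi}{4},\tfrac{\pi}{2})$ is a fixed positive constant, $\lambda_1\ge c_1 n$ for all large $n$. (Even the trivial bound $\lambda_1\ge\bmG_{1,1}=\tfrac13(1-b_1)^3=\Theta(1)$ would suffice together with the estimate on $\lambda_n$ below, but invoking Theorem~\ref{thm:spectrum1D} is cleaner.)

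For the small eigenvalue, note first that Theorem~\ref{thm:spectrum1D} by itself is insufficient here: its $\calO(1)$ additive error swamps $\tfrac{n}{2}\mu_n=\Theta(n^{-3})$, so it only yields $\lambda_n=\calO(1)$. Instead I would use the identity
\[
\bmv^{\top}\bmG\bmv=\int_{-1}^{1}\Big(\sum_{i=1}^{n}v_i\,\sigma(x-b_i)\Big)^{2}\,dx ,
\]
which holds because $\bmG_{i,j}=\int_D\sigma(x-b_i)\sigma(x-b_j)\,dx$, and pick $\bmv$ so that the piecewise-linear function $g_{\bmv}(x)\coloneqq\sum_i v_i\sigma(x-b_i)$ is supported on a single mesh cell. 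Concretely, fix an interior index $j$ (say $j=\lfloor n/2\rfloor$, so that $b_{j-1},b_j,b_{j+1}$ lie well inside $(-1,1)$) and set $v_i=0$ for $i\notin\{j-1,j,j+1\}$ with $v_{j-1}=\tfrac{1}{b_j-b_{j-1}}$, $v_j=-\tfrac{1}{b_j-b_{j-1}}-\tfrac{1}{b_{j+1}-b_j}$, $v_{j+1}=\tfrac{1}{b_{j+1}-b_j}$. Using that the slope of $g_{\bmv}$ on $(b_k,b_{k+1})$ equals the partial sum $\sum_{i\le k}v_i$, one checks that $g_{\bmv}$ is exactly the ``hat'' that vanishes outside $[b_{j-1},b_{j+1}]$, rises linearly from $0$ at $b_{j-1}$ to $1$ at $b_j$, and drops back to $0$ at $b_{j+1}$. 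Then $\int_{-1}^1 g_{\bmv}^2\,dx=\tfrac13(b_{j+1}-b_{j-1})$ and $\norm{\bmv}^2=v_{j-1}^2+v_j^2+v_{j+1}^2$; under the quasi-even spacing $b_{i+1}-b_i=\tfrac{2}{n}(1+o(1))$ these become $\tfrac{4}{3n}(1+o(1))$ and $\tfrac{3}{2}n^2(1+o(1))$, so $\lambda_n\le \bmv^{\top}\bmG\bmv/\norm{\bmv}^2=\tfrac{8}{9}n^{-3}(1+o(1))=\calO(n^{-3})$.

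Combining the two bounds gives $\kappa=\lambda_1/\lambda_n\ge c_1 n\big/\big(\tfrac{8}{9}n^{-3}(1+o(1))\big)=\Omega(n^{4})$, hence in particular $\Omega(n^3)$, and the argument is insensitive to the $o(1/n)$ perturbation of the biases (if the biases are so placed that $\bmG$ is singular, the statement is vacuous). I do not foresee a real obstacle: the only work is the routine check that the three-term vector produces precisely the localized hat, and the bookkeeping of constants through the spacing asymptotics. The mechanism behind the large exponent is exactly the near-collinearity of neighbouring \texttt{ReLU} atoms noted before the statement (correlation $1-O(|b_i-b_j|^2)$): localizing $g_{\bmv}$ to one cell drives its $L^2$ mass down like $n^{-1}$ while forcing $\norm{\bmv}$ up to order $n$.
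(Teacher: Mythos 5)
Your proof is correct, and it takes a genuinely different route from the paper's. The paper works through the Mercer expansion of the kernel: it writes $\bmG=\Phi_m\Lambda_m\Phi_m^T+\bmE_m$, controls $\|\bmE_m\|_{\rm op}=\cO(n/m^3)$ via the tail of the eigenvalue series, controls $\|\Phi_m^T\Phi_m-\tfrac{n}{2}\mathrm{Id}_m\|_{\rm op}=\cO(m^2)$ by quadrature error analysis, and combines Ostrowski's theorem with Weyl's inequality and an optimized truncation level $m$ to get $|\lambda_i-\tfrac{n}{2}\mu_i|\le C\min(1,\,n/m^3+m^2/i^4)$; this yields only $\lambda_n=\cO(n^{-2})$ and hence $\kappa=\Omega(n^3)$. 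You instead bound $\lambda_n$ by exhibiting a single localized test vector in the Courant--Fischer quotient --- the second-difference coefficients that synthesize a hat function on one mesh cell --- and the identity $\bmv^{\top}\bmG\bmv=\int_D\big(\sum_i v_i\sigma(x-b_i)\big)^2dx$ then gives $\lambda_n\le\tfrac{8}{9}n^{-3}(1+o(1))$ directly; your bookkeeping of $\int g_{\bmv}^2=\tfrac13(b_{j+1}-b_{j-1})$ and $\|\bmv\|^2=\tfrac32 n^2(1+o(1))$ checks out under the quasi-even spacing hypothesis, since $b_{i+1}-b_i=\tfrac2n+o(\tfrac1n)$. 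Your approach is more elementary and in fact proves the stronger bound $\kappa=\Omega(n^4)$ in the quasi-evenly spaced setting, which matches the $\Omega(n^4)$ that the paper's Remark cites for exactly evenly spaced biases from explicit computation and which the authors state they could not establish for perturbed grids; what the paper's heavier machinery buys in exchange is simultaneous two-sided control of \emph{all} eigenvalues $\lambda_i$, which is reused for the probabilistic version with i.i.d.\ biases (Theorem~\ref{COR: EIGEN 1D}), whereas your test-vector argument only addresses the extreme eigenvalues. The one cosmetic point is that $\bmG$ cannot actually be singular for distinct biases (the shifted \texttt{ReLU} atoms are linearly independent on $D$), so your parenthetical escape clause is unnecessary.
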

\begin{proof}
    The kernel $\cG(x, y)$ permits the expansion
    \begin{equation}
        \cG(x, y) = \sum_{k=1}^{\infty} \mu_{k}\phi_k(x)\phi_k(y).
    \end{equation}
    Let $\cG_m(x, y): = \sum_{k=1}^m \mu_k \phi_k(x)\phi_k(y)$ be the truncated expansion, where $m\ge 1$ is the truncation parameter. Then the Gram matrix can be decomposed into
    \begin{equation}
       \bmG_{i,j} = \cG_m(b_i, b_j) + (\cG(b_i, b_j) - \cG_m(b_i, b_j)).
    \end{equation}
    Define the matrix $\Phi_m\in\bbR^{n\times m}$ with entries $(\Phi_m)_{i,j} = \phi_j(b_i)$, $1\le j\le m$, then 
    \begin{equation}
        \bmG = \Phi_m \Lambda_m \Phi_m^T + \bmE_m, 
    \end{equation} 
    where $(\Lambda_m)_{i,j} = \delta_{i,j} \mu_i$ and $(\bmE_m)_{i,j}= \sum_{k > m} \mu_k \phi_k(b_i) \phi_k(b_j)$. 
    Let $\sigma_{i}(\Phi_m \Lambda_m \Phi_m^T)$ denote the $i$-th eigenvalue of $\Phi_m \Lambda_m \Phi_m^T$, then by Ostrowski's theorem~\cite{horn2012matrix}, 
    \begin{equation}
        \left|\sigma_{i}(\Phi_m \Lambda_m \Phi_m^T) - \frac{n}{2}\mu_i\right| \le |\mu_i| \left\| \Phi_m^T\Phi_m - \frac{n}{2}\mathrm{Id}_m\right\|_{{\rm  op}}, \quad 1\le i\le m \le n.
    \end{equation}
    Therefore, using Weyl's inequality
    \begin{equation}\label{EQ: EIGEN GAP}
    \begin{aligned}
        \left|\lambda_i - \frac{n}{2}\mu_i\right| &\le |\lambda_i - \sigma_{i}(\Phi_m \Lambda_m \Phi_m^T) | + \left| \sigma_{i}(\Phi_m \Lambda_m \Phi_m^T)  - \frac{n}{2}\mu_i\right| \\
        &\le  \|\bmE_m\|_{{\rm  op}} + |\mu_i| \left\| \Phi_m^T\Phi_m - \frac{n}{2}\mathrm{Id}_m\right\|_{{\rm  op}},\quad \quad 1\le i\le m \le n.
    \end{aligned}
    \end{equation}
  Since the eigenfunctions $\phi_k$ are uniformly bounded, see Theorem~\ref{lem:B4} in Appendix~\ref{sec:GF}, then 
  \begin{equation}\label{EQ: Uni Bound}
       \|\bmE_m\|_{{\rm  op}}\le C n \sum_{k > m} \mu_k = \cO\left(\frac{n}{m^3}\right). 
  \end{equation}
The entries in $\Phi_m\Phi_m^T - \frac{n}{2}\mathrm{Id}_m$ can be estimated by the standard numerical quadrature analysis on abscissas $\{b_i\}_{i=1}^n$. Indeed,
    \begin{equation}\label{EQ: Quadrature}
        \frac{2}{n}\sum_{i=1}^n \phi_j(b_i)\phi_k(b_i) - \int_{-1}^1 \phi_j(x) \phi_k(x) dx  = \cO\left(\frac{1}{n} \sup_{[-1, 1]} (\phi_j\phi_k)'\right).
    \end{equation}
    Hence using the estimate $\|\phi'_k\|_{\infty} = \cO(k)$, $\left\| \Phi_m^T\Phi_m - \frac{n}{2}\mathrm{Id}_m\right\|_{{\rm  op}} = \cO(m^2)$.
    Combine the above estimates into~\eqref{EQ: EIGEN GAP} and reuse Theorem~\ref{thm:spectrum1D}, by selecting $m = n^{1/5}i^{4/5}\ge i$, 
    \begin{equation}
         \left|\lambda_i - \frac{n}{2}\mu_i\right| \le C\min \left( 1, \frac{n}{m^3} + \frac{m^2}{i^4}\right) = \begin{cases}
            \cO(1), & i < n^{\frac{1}{6}}, \\
             \cO(n^{2/5} i^{-12/5}), & n^{\frac{1}{6}}\le  i, \le n.
         \end{cases}
    \end{equation}
    It implies that $\lambda_1 = \Theta(n)$ and $\lambda_n = \cO(n^{-2})$, which leads to the condition number estimate $\kappa = \Omega(n^3)$. 
\end{proof}

\begin{remark}
For evenly spaced biases, explicit computations for the eigenvalues in~\cite{hong2022activation} show that the condition number is $\Omega(n^4)$. However, a sharp lower bound for unevenly distributed biases (grid points) is difficult. Here we use 1) Ostrowski’s theorem to relate the eigenvalues between two symmetric positive definite matrices, and 2) random quadrature points for integral estimation. However, we believe that for a fixed number of points in an interval, non-evenly spaced points will result in a larger condition number for the Gram matrix than equally spaced points, which seems also suggested by our numerical tests, see Figures~\ref{fig:spectrum:uniform} and~\ref{fig:spectrum:adaptive} in Section~\ref{sec:numerics}.
\end{remark}


Generally speaking, if $\{b_i\}_{i=1}^n$ are distributed i.i.d with probability density function $\rho: D\mapsto \bbR$, one can reformulate the matrix-vector multiplication as a $\rho$ weighted integral as the continuous limit. The discrete eigen system in the limit corresponds to that of the modified continuous kernel $\calG_{\rho}(x, y) \coloneqq \sqrt{\rho(x) }\calG(x, y) \sqrt{\rho(y)}$. For this case, we have a similar estimate of eigenvalues if $\rho$ is bounded from below and above by positive constants.
\begin{lemma}\label{LEM: A2}
    Suppose $\rho(x)$ is bounded from below and above by  positive constants and define
    \begin{equation*}
        \calG_{\rho}(x, y)\coloneqq \sqrt{\rho(x)}\calG(x, y) \sqrt{\rho(y)}.
    \end{equation*}
    Let $\widetilde{\mu}_k$ be the $k$-th eigenvalue of $\calG_{\rho}$ in descending order, then $\inf_{[-1,1]} \sqrt\rho \le \widetilde{\mu}_k / \mu_k \le \sup_{[-1,1]}\sqrt\rho$. 
\end{lemma}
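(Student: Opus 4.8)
The plan is to recognize the operator with kernel $\calG_\rho$ as a multiplication-conjugated copy of $K$ and then invoke monotonicity of eigenvalues under the Loewner order. Write $\psi\coloneqq\sqrt\rho$ and let $M_\psi$, $M_\rho$ denote multiplication by $\psi$ and by $\rho$ on $L^2[-1,1]$; since $\rho$ is pinched between two positive constants, both are bounded, self-adjoint, and boundedly invertible. The kernel $\calG_\rho(x,y)=\psi(x)\calG(x,y)\psi(y)$ represents exactly $K_\rho=M_\psi K M_\psi$, which is compact, self-adjoint, and positive semidefinite because $\calG$ is the Gram kernel \eqref{EQ: GRAM KERNEL}. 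Now $M_\psi^{-1}K_\rho M_\psi=KM_\rho$, and $KM_\rho$, $M_\rho K$, and $K^{1/2}M_\rho K^{1/2}=(M_\psi K^{1/2})^{*}(M_\psi K^{1/2})$ all share the same nonzero spectrum (the $AB$/$BA$ identity), the last being self-adjoint and PSD. As $K$ has trivial kernel (its eigenfunctions form an orthonormal basis) and $M_\rho$ is invertible, none of these has $0$ as an eigenvalue, so $\{\widetilde\mu_k\}_{k\ge1}$, in descending order, is exactly the eigenvalue sequence of $K^{1/2}M_\rho K^{1/2}$.

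Next I would use the elementary operator inequality $(\inf_{[-1,1]}\rho)\,\mathrm{Id}\preceq M_\rho\preceq(\sup_{[-1,1]}\rho)\,\mathrm{Id}$, valid because $\inf\rho\,\|h\|^2\le\int_{-1}^1\rho\,|h|^2\,dx\le\sup\rho\,\|h\|^2$ for every $h\in L^2[-1,1]$. Since $T\mapsto K^{1/2}TK^{1/2}$ preserves $\preceq$,
\begin{equation*}
  \Big(\inf_{[-1,1]}\rho\Big)\,K\ \preceq\ K^{1/2}M_\rho K^{1/2}\ \preceq\ \Big(\sup_{[-1,1]}\rho\Big)\,K .
\end{equation*}
By the Courant--Fischer min--max principle, $A\preceq B$ implies $\lambda_k(A)\le\lambda_k(B)$ for every $k$; applying this to both inequalities and using $\lambda_k(cK)=c\mu_k$ yields $(\inf_{[-1,1]}\rho)\,\mu_k\le\widetilde\mu_k\le(\sup_{[-1,1]}\rho)\,\mu_k$. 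An equivalent route applies Courant--Fischer directly to $K_\rho$ and substitutes $g=M_\psi h$ in the Rayleigh quotient $\langle K_\rho h,h\rangle/\langle h,h\rangle=\langle Kg,g\rangle/\int_{-1}^1\rho^{-1}|g|^2\,dx$: as $h$ runs over the $k$-dimensional subspaces of $L^2$ so does $g=M_\psi h$, and $\int_{-1}^1\rho^{-1}|g|^2\,dx$ is sandwiched between $(\sup\rho)^{-1}\|g\|^2$ and $(\inf\rho)^{-1}\|g\|^2$.

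Both routes are routine spectral theory, so the only point that genuinely requires attention---and where the displayed claim needs adjustment---is the appearance of $\rho$ rather than $\sqrt\rho$ in the bound. The symmetric weight relating $\calG_\rho$ to $\calG$ is $\sqrt{\rho(x)\rho(y)}$, which ranges over $[\inf_{[-1,1]}\rho,\,\sup_{[-1,1]}\rho]$, and the conjugation above absorbs both copies of $\psi=\sqrt\rho$ into the single multiplier $M_\rho$. This is already visible in the constant-density case $\rho\equiv c$, where $K_\rho=cK$ and $\widetilde\mu_k/\mu_k\equiv c$; hence the sharp form of the lemma, attained at constant $\rho$, is
\begin{equation*}
  \inf_{[-1,1]}\rho\ \le\ \frac{\widetilde\mu_k}{\mu_k}\ \le\ \sup_{[-1,1]}\rho ,
\end{equation*}
and the square roots should be dropped. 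No step beyond this presents an obstacle.
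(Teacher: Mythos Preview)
Your argument is correct, and you are right that the lemma as stated is off by a square root: the constant case $\rho\equiv c$ gives $K_\rho=cK$ and $\widetilde\mu_k/\mu_k=c$, so the bounds must be $\inf\rho$ and $\sup\rho$, not their square roots. The paper's own proof in fact contains the same slip: it reaches $\sum_j c_j^2=\|\sqrt\rho\,\hat z\|$ where the right-hand side should be $\|\sqrt\rho\,\hat z\|^2$, and then bounds this below by $\inf\sqrt\rho$ rather than $\inf\rho$.

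On method, the paper takes exactly your ``alternative route'': it applies the min--max and max--min formulas for $K_\rho$ directly, using the explicit test subspaces $S_k=\operatorname{span}(\phi_1/\sqrt\rho,\dots,\phi_k/\sqrt\rho)$ and $S_{k-1}=\operatorname{span}(\sqrt\rho\,\phi_1,\dots,\sqrt\rho\,\phi_{k-1})$, which is precisely the substitution $g=M_\psi h$ you describe. Your main route via the $AB$/$BA$ spectrum identity and the Loewner sandwich $(\inf\rho)K\preceq K^{1/2}M_\rho K^{1/2}\preceq(\sup\rho)K$ is a cleaner, more operator-theoretic packaging of the same idea; it avoids hand-picking test subspaces and makes the dependence on $\rho$ (not $\sqrt\rho$) transparent from the outset. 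The paper's approach is slightly more elementary in that it needs no square root of $K$, but yours is less error-prone, as the paper's own typo demonstrates.
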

\begin{proof}
   By Min-Max theorem for the eigenvalues of the integral kernel $\calG_{\rho}$, 
   \begin{equation*}
       \begin{aligned}
           \widetilde{\mu}_k = \max_{S_k}\min_{z\in S_k, \|z\|=1} \int_{-1}^1\int_{-1}^1 \calG_{\rho}(x, y) z(x) z(y) dx d y, \\
            \widetilde{\mu}_k = \min_{S_{k-1}}\max_{z\in S_{k-1}^{\perp}, \|z\|=1} \int_{-1}^1\int_{-1}^1 \calG_{\rho}(x, y) z(x) z(y) dx d y,
       \end{aligned}
   \end{equation*}
   where $S_k$ is a $k$ dimensional subspace of $L^2[-1,1]$. In the first equation, we choose the space $S_k = \text{span}(\frac{\phi_1}{\sqrt\rho}, \cdots, \frac{\phi_k}{\sqrt\rho})$, where $(\mu_j, \phi_j)$ denotes the $j$th eigenpair of the kernel $\calG$. Let 
   \[
   \hat{z}=\argmin_{z\in S_k, \|z\|=1} \int_{-1}^1\int_{-1}^1 \calG_{\rho}(x, y) z(x) z(y) dx d y, \quad
   \hat{z}= \frac{1}{\sqrt\rho} \sum_{j=1}^k c_j \phi_j, \quad \|\hat{z}\|=1.
   \]
   We have
   \begin{equation*}
       \widetilde{\mu}_k \ge \sum_{j=1}^k \mu_j c_j^2 \ge \mu_k \sum_{j=1}^k c_j^2 = \mu_k \|\sqrt\rho \hat{z}\| \ge \mu_k \inf\sqrt\rho.
   \end{equation*}
   In the second equation, we choose the space $S_{k-1} = \text{span}(\sqrt\rho\phi_1,\cdots, \sqrt\rho\phi_{k-1})$ and let
   \[
   \tilde{z}= \argmax_{z\in S_{k-1}^{\perp}, \|z\|=1} \int_{-1}^1\int_{-1}^1 \calG_{\rho}(x, y) z(x) z(y) dx d y, \quad \tilde{z} = \frac{1}{\sqrt\rho}\sum_{j=k}^{\infty} c_j \phi_j\in S_{k-1}^{\perp}, \quad \|\tilde{z}\|=1,
   \] 
   then
   \begin{equation*}
       \widetilde{\mu}_k \le \sum_{j=k}^{\infty} \mu_j c_j^2 \le \mu_k \sum_{j=k}^{\infty} c_j^2 = \mu_k \|\sqrt\rho \tilde{z}\| \le \mu_k \sup \sqrt\rho.
   \end{equation*}
\end{proof}

If the density function $\rho$ is regular enough, we show a more precise characterization of the eigenvalues and that the eigenfunctions are asymptotically Fourier series. Moreover, the following probabilistic estimate for the eigenvalues of the Gram matrix can be derived.
\begin{theorem}\label{COR: EIGEN 1D}
     Suppose $\{b_i\}_{i=1}^n$ are i.i.d with probability density function $\rho\in C^3[-1,1]$ on $D$ such that $0< \underline{c} \le \rho(x) \le \bar{c} <\infty $. Let $\tilde\lambda_1 \ge \tilde\lambda_2 \ge \cdots \ge \tilde\lambda_n \ge 0$ be the eigenvalues of the corresponding Gram matrix $\bmG := (\cG(b_i,b_j))_{1\le i,j\le n}$, then for sufficiently large $n$,
    \begin{equation}
    \begin{aligned}
    \left|\tilde\lambda_i - \frac{n}{2}\tilde\mu_i\right| = \begin{cases}
             \cO\left(n^{\frac{5}{8}}i^{-3} \sqrt{\log \frac{n}{p}}\right), & i < n^{\frac{7}{8}},\\
             \cO\left(n^{-2}\sqrt{\log\frac{n}{p}}\right), & n^{\frac{7}{8}} \le  i \le n,
         \end{cases}      
    \end{aligned}
    \end{equation}
     with probability $1-p$,  where $\tilde\mu_i = \Theta(i^{-4})$ is the $k$-th eigenvalue of $\calG_{\rho}$.
\end{theorem}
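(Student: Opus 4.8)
The plan is to rerun the proof of Theorem~\ref{THM: Cond Num} with two changes: expand $\calG$ in the eigenbasis adapted to the sampling measure (so the limiting matrix is diagonal), and replace the deterministic quadrature estimate~\eqref{EQ: Quadrature} by a concentration inequality for the random design. Since $\calG$ is continuous and positive semidefinite, Mercer's theorem yields
\[
\calG(x,y)=\sum_{k\ge1}\tilde\mu_k\,\varphi_k(x)\,\varphi_k(y),\qquad\varphi_k:=\psi_k/\sqrt{\rho},
\]
where $(\tilde\mu_k,\psi_k)$ are the eigenpairs of $\calG_\rho=\sqrt\rho\,\calG\,\sqrt\rho$ and the $\varphi_k$ are orthogonal for the weight $\rho$, normalized (as in Lemma~\ref{LEM: A2} and Theorem~\ref{thm:spectrum1D}, which is where the factor $\tfrac12$ below originates) so that $\bbE[\varphi_k(b_i)\varphi_\ell(b_i)]=\tfrac12\delta_{k\ell}$. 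From the spectral theory of $\calG_\rho$ we need only $\tilde\mu_k=\Theta(k^{-4})$ and the uniform bound $\|\varphi_k\|_{L^\infty[-1,1]}=\cO(1)$; this is where the hypothesis $\rho\in C^3$ enters. Note the probabilistic argument, unlike the deterministic one, will not need the derivative bounds $\|\varphi_k'\|_\infty=\cO(k)$.

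Fix a truncation level $m\le n$ and split $\bmG=\Psi_m\,\tilde\Lambda_m\,\Psi_m^{T}+\bmE_m$ with $(\Psi_m)_{ik}=\varphi_k(b_i)$, $\tilde\Lambda_m=\diag(\tilde\mu_1,\dots,\tilde\mu_m)$, and $(\bmE_m)_{ij}=\sum_{k>m}\tilde\mu_k\varphi_k(b_i)\varphi_k(b_j)$. The tail is controlled deterministically: $\bmE_m=\sum_{k>m}\tilde\mu_k v_kv_k^{T}\succeq 0$ with $v_k=(\varphi_k(b_i))_i$ and $\|v_k\|^2\le n\|\varphi_k\|_\infty^2=\cO(n)$, hence $\|\bmE_m\|_{\rm op}\le\sum_{k>m}\tilde\mu_k\|v_k\|^2=\cO(n/m^3)$. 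The only random ingredient is $\Psi_m^{T}\Psi_m$, whose $(k,\ell)$ entry $\sum_{i=1}^{n}\varphi_k(b_i)\varphi_\ell(b_i)$ is a sum of $n$ i.i.d.\ variables bounded by $\cO(1)$ with mean $\tfrac n2\delta_{k\ell}$; Hoeffding's inequality with a union bound over the $\le m(m+1)/2$ entries (each at confidence $1-p/n^2$) gives, with probability at least $1-p$,
\[
\Bigl\|\Psi_m^{T}\Psi_m-\tfrac n2\mathrm{Id}_m\Bigr\|_{\rm op}\le\Bigl\|\Psi_m^{T}\Psi_m-\tfrac n2\mathrm{Id}_m\Bigr\|_F=\cO\!\bigl(m\sqrt{n\log(n/p)}\bigr).
\]
Matrix Bernstein would replace $m\sqrt n$ by $\sqrt{nm}$ and sharpen the exponents below, but this cruder bound already suffices.

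On that event, combine the two bounds exactly as in~\eqref{EQ: EIGEN GAP}, via Weyl's inequality and Ostrowski's theorem in the form used there: for $1\le i\le m$,
\[
\Bigl|\tilde\lambda_i-\tfrac n2\tilde\mu_i\Bigr|\le\|\bmE_m\|_{\rm op}+\tilde\mu_i\Bigl\|\Psi_m^{T}\Psi_m-\tfrac n2\mathrm{Id}_m\Bigr\|_{\rm op}=\cO\!\Bigl(\tfrac{n}{m^{3}}+i^{-4}m\sqrt{n\log(n/p)}\Bigr),
\]
uniformly in $i\le m$. Choosing $m=\min\bigl(n,\lceil n^{1/8}i\rceil\bigr)$ balances the two contributions: if $i<n^{7/8}$ then $m\asymp n^{1/8}i\le n$ and both terms are $\cO\!\bigl(n^{5/8}i^{-3}\sqrt{\log(n/p)}\bigr)$; if $n^{7/8}\le i\le n$ then $m=n$, the first term is $n^{-2}$ and the second is $i^{-4}n^{3/2}\sqrt{\log(n/p)}\le n^{-2}\sqrt{\log(n/p)}$. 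For $n$ large enough this is the claimed dichotomy, and combined with $\tfrac n2\tilde\mu_i=\Theta(ni^{-4})$ it also gives $\tilde\lambda_1=\Theta(n)$ and $\tilde\lambda_n=\cO(n^{-2}\sqrt{\log(n/p)})$.

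The hard part is neither the linear algebra nor the concentration — both are routine once the spectral data is in hand — but the first step: proving $\tilde\mu_k=\Theta(k^{-4})$ and $\|\varphi_k\|_\infty=\cO(1)$ for a genuinely variable, merely $C^3$ density. For constant $\rho$ this comes from the explicit solution of~\eqref{eq:ODE} subject to its four boundary conditions; for general $\rho$, differentiating the eigen-equation $\int_{-1}^{1}\calG(x,y)\rho(y)\varphi_k(y)\,dy=\tilde\mu_k\varphi_k(x)$ four times and using $\partial_x^{4}\calG(x,y)=\delta(x-y)$ converts it into the variable-coefficient equation $\varphi_k^{(4)}=\tilde\mu_k^{-1}\rho\,\varphi_k$ with the same boundary data, and the eigenvalue asymptotics together with uniform amplitude control must be extracted by a Liouville--Green/WKB transformation; this is where the $C^3$ hypothesis is consumed, and it is carried out in Appendix~\ref{sec:GF}.
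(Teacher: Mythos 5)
Your proposal is correct and follows essentially the same route as the paper's proof: the same decomposition $\bmG=\Phi_m\Lambda_m\Phi_m^T+\bmE_m$ in the eigenbasis of $\calG_\rho$ divided by $\sqrt{\rho}$, the same deterministic tail bound $\cO(n/m^3)$ from uniform boundedness of the eigenfunctions, Hoeffding plus a union bound in place of the deterministic quadrature estimate, the Ostrowski--Weyl combination, and the identical choice $m=\min(n^{1/8}i,n)$ yielding the two regimes; the $\Theta(k^{-4})$ asymptotics and amplitude control for variable $\rho\in C^3$ are likewise obtained, as in the paper, by reducing to the fourth-order variable-coefficient eigenvalue problem via a Liouville transform and invoking classical (Birkhoff/Naimark) asymptotics. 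The only differences are cosmetic (your explicit handling of the normalization constant and your remark that the derivative bounds $\|\varphi_k'\|_\infty=\cO(k)$ are not needed in the random-design setting, both of which are accurate).
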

\begin{proof}
Without loss of generality, we assume $b_1\le b_2\le\cdots\le b_n$. Let $\phi_{\rho,k}$ be eigenfunction for the $k$th eigenvalue of $\cG_{\rho}$. Using similar derivation as before, we obtain the differential equation
\begin{equation}\label{EQ: Nonuniform ODE}
    \frac{1}{\tilde\mu_k}\sqrt{\rho(x)}\phi_{\rho,k}(x)=\frac{d^4}{dx^4}\left[\frac{1}{\sqrt{\rho(x)}}\phi_{\rho,k}(x)\right]
\end{equation}
with boundary conditions $\phi_{\rho,k}(1)=\phi_{\rho,k}'(1)=\phi_{\rho,k}''(-1)=\phi_{\rho,k}'''(-1)=0$.
Denote $\psi_{\rho,k}(x):=\frac{1}{\sqrt{\rho(x)}}\phi_{\rho,k}(x)$, then the above equation~\eqref{EQ: Nonuniform ODE} becomes 
\begin{equation}\label{EQ: Nonuniform ODE 2}
\psi_{\rho,k}^{(4)}= \frac{1}{\tilde\mu_k}\rho(x)\psi_{\rho,k}.
\end{equation}
Using a change of variable in the spirit of the Liouville transform,
$$t = -1 + \frac{2}{H}\int_{-1}^{x} {\rho(s)}^{1/4}ds,\quad H=\int_{-1}^{1} {\rho(s)}^{1/4}ds.$$
One can find that the differential equation~\eqref{EQ: Nonuniform ODE 2} reduces to the form
\begin{equation}\label{EQ: Nonuniform ODE 3}
\frac{d^4}{dt^4}\psi_{\rho,k} + p_1(t)\frac{d^3}{dt^3}\psi_{\rho,k} + p_2(t)\frac{d^2}{dt^2}\psi_{\rho,k}  + p_3(t)\frac{d}{dt}\psi_{\rho,k} + p_4(t)\psi_{\rho,k} = \frac{H^4}{\tilde\mu_k} \psi_{\rho,k},
\end{equation}
and the functions $p_k(t)$ are continuous over $[-1,1]$ since 
 $\rho\in C^3[-1,1]$. For $k$ sufficiently large, the asymptotic behaviors of eigenvalues $\tilde\mu_k^{-1}=(\frac{k}{H}\pi)^4(1+\cO(k^{-1}))$ can be derived based on Stone's estimate of linearly independent basis~\cite{stone1926comparison} and Birkhoff's method~\cite{birkhoff1908boundary,naimark1967}. Furthermore, the eigenfunction $\psi_{\rho, k}$ is asymptotically equivalent to Fourier modes in $t$ for sufficiently large $k$  and can be shown uniformly bounded, see detailed discussions in \S 4.10 of~\cite{naimark1967}. 
 
 Similar to Theorem~\ref{THM: Cond Num}, we define the matrix $\Phi_m\in\bbR^{n\times m}$ with entries $(\Phi_m)_{i,j} = \phi_{\rho,j}(b_i)/\sqrt{\rho(b_i)}$, $1\le j\le m$, then the Gram matrix with entry $\bmG_{i,j}= \cG(b_i, b_j)$ equals to
    \begin{equation}
        \bmG = \Phi_m \Lambda_m \Phi_m^T + \bmE_m, 
    \end{equation} 
    where $(\Lambda_m)_{i,j} = \delta_{i,j} \widetilde{\mu_i}$, $(\bmE_m)_{i,j}= \sum_{k > m} \widetilde{\mu_k}  \phi_{\rho,k}(b_i)  \phi_{\rho,k}(b_j)/\sqrt{\rho(b_i)\rho(b_j)}$, and the estimate~\eqref{EQ: Uni Bound} still holds. For each pair of $k,j$, applying the Hoeffding's inequality to~\eqref{EQ: Quadrature}, we get 
 \[
        \left|\sum_{i=1}^n \frac{ \phi_{\rho,j}(b_i) \phi_{\rho,k}(b_i)}{{\rho(b_i)}} - \frac{n}{2}\int_{-1}^1  \phi_{\rho,j}(x) \phi_{\rho,k}(x) dx\right|  = \cO\left(\sqrt{n\log \frac{m^2}{p}}\right)
 \]
 with probability $1-\frac{p}{m^2}$, due to the uniform boundedness of the eigenfunctions $ \phi_{\rho,k}$. Then with probability at least $1-p$, we have $\|\Phi^T_m\Phi_m-\frac{n}{2}\mbox{Id}_m\|_{{\rm  op}}=\cO\left(m\sqrt{n\log \frac{m^2}{p}}\right)$ and 
    \begin{equation}
    \begin{aligned}
    \left|\tilde\lambda_i - \frac{n}{2}\tilde\mu_i\right| &\le C \min_{i\le m \le n}\left(\frac{n}{m^3} + \frac{m}{i^4}\sqrt{n\log \frac{m^2}{p}}\right)
    = \begin{cases}
            \cO\left(n^{\frac{5}{8}}i^{-3} \sqrt{\log \frac{n}{p}}\right), & i < n^{\frac{7}{8}}, \\
\cO\left(n^{-2}\sqrt{\log\frac{n}{p}}\right), & n^{\frac{7}{8}}\le  i \le n,
         \end{cases}      
    \end{aligned}
    \end{equation}
    where $  m = \min(n^{\frac{1}{8}} i, n)$.
\end{proof}

\begin{corollary}
    Under the same assumption of Corollary~\ref{COR: EIGEN 1D}, with at probability $1 -p$, $1>p > ne^{-cn^{3/4}}$ for some $0<c=\cO(1)$, the condition number of Gram matrix $\bmG$ satisfies 
    \begin{equation*}
        \kappa = \lambda_1/\lambda_n = \Omega \left(n^{3}(\log\frac{n}{p})^{-\frac{1}{2}}\right) .
    \end{equation*}
\end{corollary}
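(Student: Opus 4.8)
The plan is to extract the extremal eigenvalues $\lambda_1=\tilde\lambda_1$ and $\lambda_n=\tilde\lambda_n$ of $\bmG$ directly from the two-regime estimate of Theorem~\ref{COR: EIGEN 1D}, using the asymptotics $\tilde\mu_i=\Theta(i^{-4})$ of the kernel $\calG_\rho$, and then to form the ratio. Everything happens on the single probability-$1-p$ event furnished by that theorem, so no further union bound is needed.

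First I would handle the largest eigenvalue. Taking $i=1$, which lies in the regime $i<n^{7/8}$, Theorem~\ref{COR: EIGEN 1D} gives
$$\left|\tilde\lambda_1-\tfrac{n}{2}\tilde\mu_1\right|=\cO\!\left(n^{5/8}\sqrt{\log\tfrac{n}{p}}\right).$$
Since $\tilde\mu_1=\Theta(1)$ is bounded below by a positive constant, $\tfrac{n}{2}\tilde\mu_1=\Theta(n)$. The hypothesis $p>n e^{-cn^{3/4}}$ forces $\log\tfrac{n}{p}<c\,n^{3/4}$, so the perturbation term is at most $\cO(\sqrt{c}\,n)$; choosing the absolute constant $c$ small enough relative to the implied constants and to $\tilde\mu_1$, this term is at most half of $\tfrac{n}{2}\tilde\mu_1$, whence $\lambda_1=\Omega(n)$.

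Next I would bound the smallest eigenvalue from above. Taking $i=n$, which lies in the regime $n^{7/8}\le i\le n$, the same theorem gives
$$\lambda_n=\tilde\lambda_n\le\tfrac{n}{2}\tilde\mu_n+\cO\!\left(n^{-2}\sqrt{\log\tfrac{n}{p}}\right).$$
Since $\tilde\mu_n=\Theta(n^{-4})$, the first term is $\Theta(n^{-3})$, which is dominated by the second (because $n\sqrt{\log(n/p)}\to\infty$), so $\lambda_n=\cO\!\left(n^{-2}\sqrt{\log\tfrac{n}{p}}\right)$. Dividing the two bounds,
$$\kappa=\frac{\lambda_1}{\lambda_n}=\Omega\!\left(\frac{n}{n^{-2}\sqrt{\log(n/p)}}\right)=\Omega\!\left(n^{3}\Big(\log\tfrac{n}{p}\Big)^{-1/2}\right),$$
which is the claim.

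The only delicate point -- and precisely the reason the hypothesis reads $p>n e^{-cn^{3/4}}$ for \emph{some} $0<c=\cO(1)$ -- is ensuring the perturbation term in the $\lambda_1$ estimate stays strictly below the main term $\tfrac{n}{2}\tilde\mu_1=\Theta(n)$; this is pure bookkeeping of the implicit constants coming out of Theorem~\ref{COR: EIGEN 1D}, and no new idea is involved. I would also remark that $\bmG$ is positive definite since the $b_i$ are almost surely distinct, so $\lambda_n>0$ and $\kappa$ is finite; in the measure-zero degenerate case the stated lower bound holds vacuously.
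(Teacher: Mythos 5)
Your proposal is correct and follows essentially the same route as the paper: apply the two-regime eigenvalue estimate at $i=1$ and $i=n$, use the hypothesis $p>ne^{-cn^{3/4}}$ (equivalently $\log\frac{n}{p}<cn^{3/4}$) with $c$ small to ensure the $\cO(n^{5/8}\sqrt{\log\frac{n}{p}})$ perturbation does not swamp the $\Theta(n)$ main term for $\lambda_1$, note that $\frac{n}{2}\tilde\mu_n=\Theta(n^{-3})$ is dominated by the $\cO(n^{-2}\sqrt{\log\frac{n}{p}})$ error for $\lambda_n$, and take the ratio. Your write-up is in fact more explicit than the paper's about where the constant $c$ enters, and the extra remark on positive definiteness is harmless.
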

\begin{proof}
There exist positive constants $C_1, C_2, C_n$ of $\cO(1)$ such that 
\[
\lambda_1 \ge  C_1 n \left(1 - C_2n^{-3/8}\sqrt{\log\frac{n}{p}}\right), \quad
\lambda_n \le  C_n n^{-2}\sqrt{\log\frac{n}{p}}.
\]
Choose $c=\frac{1}{2C_2}=\cO(1)$, then $\lambda_1 >\frac{C_1 n}{2}$.
\end{proof}

\subsection{Multi-dimensional case}\label{sec:multiD}

Now we provide the spectral analysis for \texttt{ReLU} functions in arbitrary dimensions and give a spectral estimate, although we cannot compute the eigenvalues and eigenfunctions explicitly. In this section, we consider domain $D = B_d(1)$ which is the unit ball in $d$-dimension. The class of neural network $\calH_n$ is 
\begin{equation*}
    h(\bmx) = c+\sum_{i=1}^n a_i \sigma(\bmw_i\cdot \bmx - b_i), \quad
    \bmw_i\in \bbS^{d-1}, ~b_i\in [-1, 1].
\end{equation*}
Denote $V = \bbS^{d-1}\times [-1, 1]$, similar to the one-dimensional setting, we consider the corresponding continuous kernel $G: L^2(V)\mapsto L^2(V)$
\begin{equation*}
    G(\bmw, b, \bmw',b') = \int_{D} \sigma(\bmw\cdot \bmx - b) \sigma(\bmw'\cdot \bmx - b') d\bmx .
\end{equation*}
Let $\phi_k(\bmw, b)$ be an eigenfunction for eigenvalue $\lambda_k$ which satisfies
\begin{equation}\label{EQ: GRAM HIGH DIM}
    \int_{V} G(\bmw, b, \bmw', b') \phi_k(\bmw', b') d\bmw' db' = \lambda_k \phi_k(\bmw, b).
\end{equation}
It is not hard to see that $\phi_k(\bmw, b)$ is supported on $V$ and $\phi_k(\bmw, 1)\!=\!\partial_b\phi_k(\bmw, 1)\!=\!\partial_b^2\phi_k(\bmw, 1)\!=\!0$. 

One of the useful tools to study two-layer \texttt{ReLU} networks of infinite width is the Radon transform~\cite{ongie2019function,savarese2019infinite}. Next, we construct the theory for the Gram matrix in high dimensions using the properties of the Radon transform.
\begin{definition}
    Let $f:\bbR^d\to \bbR$ be an integrable function over all hyperplanes, the Radon transform 
    \begin{equation*}
        \mathcal{R} f(\bmw, b) = \int_{\{\bmx\mid \bmw\cdot \bmx - b = 0\}} f(\bmx) d H_{d-1}(\bmx),\quad\forall (\bmw, b)\in\bbS^{d-1}\times\bbR,
    \end{equation*}
    where
  $dH_{d-1}$ denotes the $(d-1)$ dimensional Lebesgue measure. The adjoint transform $\mathcal{R}^{\ast}:\bbS^{d-1}\times\bbR\to\bbR^d$ is 
\begin{equation*}
    \mathcal{R}^{\ast} \Phi(\bmx) \coloneqq \int_{\bbS^{d-1}} \Phi(\bmw, \bmw\cdot \bmx) d\bmw,\quad \forall \bmx\in\bbR^d.
\end{equation*}
\end{definition}
\begin{theorem}[Helgason~\cite{helgason2011integral}]\label{THM: INV RAD}
    The inversion formula of the Radon transform is
\begin{equation*}
    c_d f = (-\Delta)^{(d-1)/2} \mathcal{R}^{\ast}\mathcal{R} f,
\end{equation*}
where $c_d = (4\pi)^{(d-1)/2}\frac{\Gamma(d/2)}{\Gamma(1/2)}$.
\end{theorem}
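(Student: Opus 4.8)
The plan is to derive the inversion formula from the Fourier slice (projection--slice) theorem, which converts both $\mathcal{R}$ and its dual $\mathcal{R}^{\ast}$ into one-dimensional Fourier transforms along rays, and then to identify $(-\Delta)^{(d-1)/2}$ with the Fourier multiplier of symbol $|\bmxi|^{d-1}$. I would work first with Schwartz $f$ and extend by density afterward. Fixing the convention $\hat f(\bmxi)=\int_{\bbR^d}f(\bmx)e^{-i\bmx\cdot\bmxi}\,d\bmx$, slicing $\bbR^d$ by the hyperplanes $\{\bmw\cdot\bmx=t\}$ and applying Fubini gives, for each $\bmw\in\bbS^{d-1}$,
\[
\int_{\bbR}\mathcal{R}f(\bmw,t)\,e^{-ist}\,dt=\int_{\bbR^d}f(\bmx)\,e^{-is\,\bmw\cdot\bmx}\,d\bmx=\hat f(s\bmw),\qquad s\in\bbR,
\]
so $\mathcal{R}f(\bmw,t)=\frac{1}{2\pi}\int_{\bbR}\hat f(s\bmw)e^{ist}\,ds$ by one-dimensional Fourier inversion.

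Next I would substitute this into $\mathcal{R}^{\ast}\mathcal{R}f(\bmx)=\int_{\bbS^{d-1}}\mathcal{R}f(\bmw,\bmw\cdot\bmx)\,d\bmw$ and fold the $s<0$ half onto $s>0$ via $(s,\bmw)\mapsto(-s,-\bmw)$ --- legitimate since $\mathcal{R}f(-\bmw,-t)=\mathcal{R}f(\bmw,t)$ --- to obtain
\[
\mathcal{R}^{\ast}\mathcal{R}f(\bmx)=\frac{1}{\pi}\int_{\bbS^{d-1}}\int_0^{\infty}\hat f(s\bmw)\,e^{is\,\bmw\cdot\bmx}\,ds\,d\bmw=\frac{1}{\pi}\int_{\bbR^d}|\bmxi|^{-(d-1)}\hat f(\bmxi)\,e^{i\bmxi\cdot\bmx}\,d\bmxi,
\]
where the last equality is the polar substitution $\bmxi=s\bmw$, $s=|\bmxi|$, with $ds\,d\bmw=|\bmxi|^{-(d-1)}\,d\bmxi$. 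Thus $\mathcal{R}^{\ast}\mathcal{R}f$ is, up to an explicit dimensional constant, the inverse Fourier transform of $|\bmxi|^{-(d-1)}\hat f$; equivalently, since the Fourier transform of $|\bmxi|^{1-d}$ is a constant multiple of $|\bmx|^{-1}$ for $d\ge 2$, one has $\mathcal{R}^{\ast}\mathcal{R}f=\kappa_d\,(|\cdot|^{-1}\ast f)$ for an explicit $\kappa_d$, i.e.\ a constant multiple of the Riesz potential $I_{d-1}f$. The same conclusion can be reached without Fourier analysis by translating to $\bmx=\bm{0}$ and integrating over the great-subsphere fibration $\{\bmw\in\bbS^{d-1}:\bmw\perp\bmz\}$.

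It then remains to invert the potential and collect constants. By definition $(-\Delta)^{(d-1)/2}$ is the Fourier multiplier with symbol $|\bmxi|^{d-1}$, a local differential operator (an iterated Laplacian) when $d$ is odd and a Riesz-type singular integral when $d$ is even; applying it to the displayed expression cancels the factor $|\bmxi|^{-(d-1)}$ and recovers $f$ up to a multiplicative constant. That constant is assembled from the $2\pi$ of one-dimensional inversion, the factor $2$ from the $s$-folding, the surface area $|\bbS^{d-2}|$ appearing in the Riesz-potential form, and the normalization in $(-\Delta)^{(d-1)/2}I_{d-1}=\mathrm{Id}$; simplifying with the Legendre duplication formula and $\Gamma(1/2)=\sqrt{\pi}$ should reproduce $c_d=(4\pi)^{(d-1)/2}\Gamma(d/2)/\Gamma(1/2)$, with Helgason's normalizations of the spherical measure and of the Fourier transform.

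The main obstacle I expect is not the computation but its legitimacy: the interchanges of integration, and above all the singular weight $|\bmxi|^{-(d-1)}$ near the origin, force one to restrict to a function class on which $I_{d-1}$ and $(-\Delta)^{(d-1)/2}$ are simultaneously well defined and genuinely inverse to one another --- Schwartz functions, or functions with sufficient decay and enough vanishing moments --- and this is most delicate precisely when $d$ is even and $(-\Delta)^{(d-1)/2}$ is nonlocal. The secondary, purely arithmetic, difficulty is tracking the normalization constants carefully enough to land on the stated value of $c_d$.
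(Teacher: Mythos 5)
The paper offers no proof of this statement --- it is quoted from Helgason with a citation --- so there is nothing internal to compare against; I can only assess your argument on its own terms. Your Fourier-slice derivation is the standard textbook proof and is structurally sound: the projection--slice identity, the folding of the $s<0$ half-line via $(s,\bmw)\mapsto(-s,-\bmw)$, the polar-coordinate identification of $\mathcal{R}^{\ast}\mathcal{R}$ with the Fourier multiplier $|\bmxi|^{-(d-1)}$ (equivalently the Riesz potential $I_{d-1}$), and the cancellation upon applying $(-\Delta)^{(d-1)/2}$ are all correct, and your caveats about the function class (Schwartz, or enough decay for $I_{d-1}$ and $(-\Delta)^{(d-1)/2}$ to be mutually inverse, most delicately for even $d$) are the right ones. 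The one genuine gap is precisely the piece you defer: the constant, which is the only nontrivial quantitative content of the theorem. Carrying your computation to the end with the \emph{unnormalized} surface measure $d\bmw$ appearing in the paper's definition of $\mathcal{R}^{\ast}$ and with your Fourier convention yields $(-\Delta)^{(d-1)/2}\mathcal{R}^{\ast}\mathcal{R}f=2^{d}\pi^{d-1}f$, which differs from the stated $c_d=(4\pi)^{(d-1)/2}\Gamma(d/2)/\Gamma(1/2)=2^{d-1}\pi^{(d-2)/2}\Gamma(d/2)$ by exactly the factor $|\bbS^{d-1}|=2\pi^{d/2}/\Gamma(d/2)$; the stated $c_d$ is correct under Helgason's convention that $d\bmw$ in the dual transform is the normalized (unit-mass) spherical measure. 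So ``simplifying with the duplication formula'' cannot by itself land on $c_d$ --- you must first commit to a normalization of the spherical measure, and the appearance of $\Gamma(d/2)$ in $c_d$ is the footprint of that choice, not of the duplication formula. Pinning this down would close the only real hole in the proposal.
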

\begin{lemma}[Helgason~\cite{helgason2022groups}, Lemma 2.1]\label{LEM: INTERWINE}
    These intertwining relations hold: $\mathcal{R}\Delta = \partial_b^2 \mathcal{R}$ and $\mathcal{R}^{\ast}\partial_b^2 = \Delta \mathcal{R}^{\ast}$. 
\end{lemma}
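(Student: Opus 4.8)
The plan is to verify each intertwining relation by a direct differentiation argument, with the (formal) self-adjointness of $\Delta$ and $\partial_b^2$ providing a shortcut between the two. I would start with $\mathcal{R}^{\ast}\partial_b^2 = \Delta\mathcal{R}^{\ast}$, which is the more elementary identity. For $\Phi$ smooth and rapidly decaying on $\bbS^{d-1}\times\bbR$, differentiate $\mathcal{R}^{\ast}\Phi(\bmx) = \int_{\bbS^{d-1}}\Phi(\bmw,\bmw\cdot\bmx)\,d\bmw$ under the integral sign. For each fixed $\bmw$, the chain rule gives $\partial_{x_j}[\Phi(\bmw,\bmw\cdot\bmx)] = w_j\,(\partial_b\Phi)(\bmw,\bmw\cdot\bmx)$, hence $\Delta_{\bmx}[\Phi(\bmw,\bmw\cdot\bmx)] = |\bmw|^2\,(\partial_b^2\Phi)(\bmw,\bmw\cdot\bmx) = (\partial_b^2\Phi)(\bmw,\bmw\cdot\bmx)$ since $\bmw\in\bbS^{d-1}$. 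Integrating over the sphere yields $\Delta\mathcal{R}^{\ast}\Phi = \mathcal{R}^{\ast}(\partial_b^2\Phi)$.

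For $\mathcal{R}\Delta = \partial_b^2\mathcal{R}$ I would use a direct computation on hyperplanes: parametrize $\{\bmw\cdot\bmx = b\}$ by $\bmx = b\bmw + \bmy$ with $\bmy\in\bmw^{\perp}$, so that $\mathcal{R}f(\bmw,b) = \int_{\bmw^{\perp}}f(b\bmw+\bmy)\,d\bmy$. Differentiating twice in $b$ commutes with the integral and produces $\partial_{\bmw}^2 f$ in the integrand, where $\partial_{\bmw} = \bmw\cdot\nabla$. Splitting the Laplacian orthogonally as $\Delta = \partial_{\bmw}^2 + \Delta_{\bmw^{\perp}}$, the remaining term $\int_{\bmw^{\perp}}\Delta_{\bmw^{\perp}}f\,d\bmy$ vanishes by the divergence theorem on $\bmw^{\perp}\cong\bbR^{d-1}$ (using the decay of $f$ and $\nabla f$), which gives $\mathcal{R}\Delta f = \partial_b^2\mathcal{R}f$. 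Alternatively — and more economically — this relation follows by taking formal $L^2$ adjoints of the first identity: since $\partial_b^2$ and $\Delta$ are formally self-adjoint and $\mathcal{R}^{\ast}$ is the adjoint of $\mathcal{R}$, we get $(\mathcal{R}^{\ast}\partial_b^2)^{\ast} = \partial_b^2\mathcal{R}$ and $(\Delta\mathcal{R}^{\ast})^{\ast} = \mathcal{R}\Delta$, so the two statements are equivalent.

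The only care points are the justification for differentiating under the integral sign and for discarding the boundary term in the divergence theorem; both are routine once $f$ (respectively $\Phi$) is taken in the Schwartz class or smooth with compact support, which is the natural functional setting here. I do not anticipate a genuine obstacle: the entire content is the chain-rule identity $\Delta_{\bmx}[\Phi(\bmw,\bmw\cdot\bmx)] = |\bmw|^2\,\partial_b^2\Phi$ on the adjoint side and the vanishing of the tangential-Laplacian integral on the forward side.
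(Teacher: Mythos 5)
Your argument is correct, and it is essentially the standard proof: the paper itself supplies no proof for this lemma but simply cites Helgason, and the computation you give (chain rule plus $|\bmw|^2=1$ for $\mathcal{R}^{\ast}\partial_b^2=\Delta\mathcal{R}^{\ast}$; the orthogonal splitting $\Delta=(\bmw\cdot\nabla)^2+\Delta_{\bmw^{\perp}}$ with the divergence theorem for $\mathcal{R}\Delta=\partial_b^2\mathcal{R}$) is exactly the textbook derivation being referenced. Your adjoint shortcut and your care points about differentiating under the integral and discarding boundary terms are both valid in the Schwartz/compactly supported setting, which is the natural hypothesis for the lemma as stated.
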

Then we have the following lemma for the eigenvalues.
\begin{lemma}\label{le:Radon1}
Let $u_k(\bmx) = \int_{V} \sigma(\bmw\cdot \bmx - b) \phi_k(\bmw, b) d\bmw db$ and $\chi_D(\bmx)$ be the characteristic function of $D$, then if $d$ is odd, $$c_d u_k(\bmx) = \lambda_k (-\Delta)^{(d+3)/2} u_k(\bmx) ,\quad \forall \bmx\in D.$$
If $d$ is even 
$$c_d (-\Delta)^{-1/2} \chi_D u_k(\bmx) = \lambda_k (-\Delta)^{(d+3)/2} (-\Delta)^{-1/2}\chi_D u_k(\bmx) ,\quad \forall \bmx\in D.$$
\end{lemma}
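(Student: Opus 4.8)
The plan is to convert the integral eigenvalue equation \eqref{EQ: GRAM HIGH DIM} into a (pseudo)differential equation for $u_k$, using the Radon inversion formula of Theorem~\ref{THM: INV RAD} and the intertwining relations of Lemma~\ref{LEM: INTERWINE}. First I would substitute $G(\bmw,b,\bmw',b')=\int_D\sigma(\bmw\cdot\bmx-b)\sigma(\bmw'\cdot\bmx-b')\,d\bmx$ into \eqref{EQ: GRAM HIGH DIM} and interchange the order of integration (legitimate since $G$ is continuous and $\phi_k\in L^2(V)$), obtaining
\begin{equation*}
\int_D\sigma(\bmw\cdot\bmx-b)\,u_k(\bmx)\,d\bmx=\lambda_k\,\phi_k(\bmw,b)\qquad\text{on }V.
\end{equation*}
Applying $\partial_b^2$ and using the distributional identity $\partial_b^2\sigma(t-b)=\delta(t-b)$ together with $|\bmw|=1$, the left-hand side becomes $\int_D\delta(\bmw\cdot\bmx-b)u_k(\bmx)\,d\bmx=\mathcal{R}(\chi_D u_k)(\bmw,b)$, so $\lambda_k\partial_b^2\phi_k=\mathcal{R}(\chi_D u_k)$. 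In parallel, differentiating the definition of $u_k$ in $\bmx$ and using $\Delta_{\bmx}\sigma(\bmw\cdot\bmx-b)=|\bmw|^2\delta(\bmw\cdot\bmx-b)=\delta(\bmw\cdot\bmx-b)$ gives $\Delta u_k(\bmx)=\int_{\bbS^{d-1}}\phi_k(\bmw,\bmw\cdot\bmx)\,d\bmw=\mathcal{R}^{\ast}\phi_k(\bmx)$.

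Next I would apply $\mathcal{R}^{\ast}$ to $\lambda_k\partial_b^2\phi_k=\mathcal{R}(\chi_D u_k)$ and combine $\mathcal{R}^{\ast}\partial_b^2=\Delta\mathcal{R}^{\ast}$ (Lemma~\ref{LEM: INTERWINE}) with $\mathcal{R}^{\ast}\phi_k=\Delta u_k$ to reach the master equation
\begin{equation*}
\lambda_k\,\Delta^2 u_k=\mathcal{R}^{\ast}\mathcal{R}(\chi_D u_k)\qquad\text{on }\bbR^d.
\end{equation*}
For odd $d$, $(-\Delta)^{(d-1)/2}$ is a genuine differential operator, so applying it to both sides and invoking $c_d f=(-\Delta)^{(d-1)/2}\mathcal{R}^{\ast}\mathcal{R}f$ together with $(-\Delta)^{(d-1)/2}\Delta^2=(-\Delta)^{(d+3)/2}$ yields $\lambda_k(-\Delta)^{(d+3)/2}u_k=c_d\chi_D u_k$; restricting to $D$, where $\chi_D u_k=u_k$, gives the claimed identity. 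For even $d$, $(d-1)/2$ is a half-integer, so I would instead read the inversion formula as $\mathcal{R}^{\ast}\mathcal{R}(\chi_D u_k)=c_d(-\Delta)^{-(d-1)/2}(\chi_D u_k)$, a Riesz potential of the compactly supported function $\chi_D u_k$, apply the local operator $(-\Delta)^{(d-2)/2}$ to the master equation, and use the composition rule $(-\Delta)^{(d-2)/2}(-\Delta)^{-(d-1)/2}=(-\Delta)^{-1/2}$ to obtain $\lambda_k(-\Delta)^{(d+2)/2}u_k=c_d(-\Delta)^{-1/2}(\chi_D u_k)$ on $\bbR^d$; since $(-\Delta)^{(d+2)/2}$ is local, on the interior of $D$ one may replace $u_k$ by $\chi_D u_k$ and rewrite $(-\Delta)^{(d+2)/2}=(-\Delta)^{(d+3)/2}(-\Delta)^{-1/2}$, which is the stated even-dimensional identity.

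The hard part will be making the formal distributional steps rigorous: justifying differentiation under the integral sign, the identities $\partial_b^2\sigma(\cdot-b)=\delta$ and $\Delta_{\bmx}\sigma(\bmw\cdot\bmx-b)=\delta(\bmw\cdot\bmx-b)$, and the identification of $\int_D\delta(\bmw\cdot\bmx-b)(\cdot)\,d\bmx$ with the Radon transform, all of which need enough regularity of $u_k$; this should follow from the smoothness of $\phi_k$ as an eigenfunction of the smoothing kernel $G$ and the vanishing conditions $\phi_k(\bmw,1)=\partial_b\phi_k(\bmw,1)=\partial_b^2\phi_k(\bmw,1)=0$, which control the zero-extension of $\phi_k$ in the $b$-variable. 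A second delicate point, specific to even $d$, is that $u_k$ does not decay at infinity, so fractional powers of $-\Delta$ cannot be applied to $u_k$ directly — this is precisely why the even case is phrased in terms of $(-\Delta)^{-1/2}\chi_D u_k$ — and one must verify the composition and mapping properties of the Riesz potential of the compactly supported function $\chi_D u_k$, in particular that the local operator $(-\Delta)^{(d+2)/2}$ acts identically on $u_k$ and on $\chi_D u_k$ throughout the interior of $D$.
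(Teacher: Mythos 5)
Your proposal is correct and follows essentially the same route as the paper: the two key identities $\Delta u_k=\mathcal{R}^{\ast}\phi_k$ and $\lambda_k\partial_b^2\phi_k=\mathcal{R}(\chi_D u_k)$, combined via the intertwining relation and the Radon inversion formula, with the odd/even split handled exactly as in the paper (applying $(-\Delta)^{(d-1)/2}$ versus $(-\Delta)^{(d-2)/2}$ and absorbing the leftover half-power into $(-\Delta)^{-1/2}\chi_D u_k$). The only presentational difference is that the paper explicitly introduces the natural extension $\widetilde{\phi}_k$ of $\phi_k$ to all of $\bbS^{d-1}\times\bbR$ before differentiating in $b$, a point you flag informally via the boundary vanishing conditions; your even-dimensional bookkeeping with the Riesz potential is a slightly more detailed spelling-out of the paper's "follow the same procedure."
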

\begin{proof}
The function $u_k(\bmx)$ satisfies
\begin{equation}\label{eq:id1}
    \Delta u_k(\bmx) = \int_V \delta(\bmw\cdot \bmx - b) \phi_k(\bmw, b) d\bmw d b = \cR^{\ast} \phi_k(\bmx),\quad \forall \bmx \in D.
\end{equation} We also define
\begin{equation}\label{EQ: PHI_K}
    \widetilde{\phi}_k(\bmw, b) := \frac{1}{\lambda_k}\int_{D} u_k(\bmx) \sigma(\bmw\cdot \bmx - b) d\bmx ,\quad \forall (\bmw, b)\in \bbS^{d-1}\times \bbR, 
\end{equation}
then $\phi_k(\bmw, b) = \widetilde{\phi}_k(\bmw, b)$ on $V$, hence $\cR^{\ast} \phi_k = \cR^{\ast} \widetilde{\phi}_k$ on $D$.
Differentiate~\eqref{EQ: PHI_K} twice in $b$, 
\begin{equation}\label{EQ: SECOND-DER}
    \lambda_k \partial^2_b \widetilde{\phi}_k = \int_D u_k(\bmx) \delta(\bmw\cdot \bmx - b) d\bmx = \cR \chi_D u_k ,\quad \forall (\bmw, b)\in \bbS^{d-1}\times \bbR.
\end{equation}
For odd $d$, apply $(-\Delta)^{(d-1)/2}\cR^{\ast}$ on both sides of~\eqref{EQ: SECOND-DER} and use Lemma~\ref{LEM: INTERWINE}, we obtain
\begin{equation*}
    \lambda_k (-\Delta)^{(d-1)/2} \cR^{\ast} \partial^2_b \widetilde{\phi}_k =   \lambda_k (-\Delta)^{(d-1)/2} \Delta \cR^{\ast} \widetilde{\phi}_k = (-\Delta)^{(d-1)/2} \cR^{\ast} (\cR \chi_D u_k),
\end{equation*}
then with~\eqref{eq:id1} and Theorem~\ref{THM: INV RAD}, it implies
\begin{equation*}
    c_d \chi_D u_k(\bmx) = \lambda_k (-\Delta)^{(d+3)/2} u_k(\bmx) ,\quad \forall \bmx\in D.
\end{equation*}
For even $d$, apply $(-\Delta)^{(d-2)/2}\cR^{\ast}$ on both sides of~\eqref{EQ: SECOND-DER} and follow the same procedure, we obtain
\begin{equation*}
    c_d \psi_k(\bmx) = \lambda_k (-\Delta)^{(d+3
    )/2} \psi_k(\bmx) ,\quad \forall \bmx\in D,
\end{equation*}
where $\psi_k(\bmx) = (-\Delta)^{-1/2}\chi_D u_k(\bmx) $. The eigenfunctions $\phi_k(\bmw, b)$ can be retrieved from the relation~\eqref{EQ: SECOND-DER} and boundary conditions $\phi_k(\bmw, 1) = \partial_b \phi_k(\bmw, 1) = \partial_b^2 \phi_k (\bmw, 1) = 0$.
\end{proof}

The above Lemma~\ref{le:Radon1} shows that $c_d\lambda_k^{-1}$ is the eigenvalue of $(-\Delta)^{(d+3)/2}$. From the Weyl's law for $-\Delta$, we have $\lambda_k = \Theta(k^{-(d+3)/d})$ (using Landau notation) as $k\to\infty$. Suppose the target function $f(\bmx),\, \bmx\in D\subset \mathbb{R}^d$, can be represented by a superposition of \texttt{ReLU} functions with weight $h(\bmw, b),\, (\bmw, b)\in V$:
\[
f(\bmx)=\int_V\sigma(\bmw\cdot \bmx -b)h(\bmw,b)d\bmw db \quad \Longrightarrow\quad \Delta f(\bmx)=\mathcal{R}^{\ast}h(\bmw, b).
\]
In the one-dimensional case, the relation is further simplified to $\frac{d^2}{dx^2}f(x)=h(x)$. Assume $h(\bmw,b)=\sum_{k=1}^{\infty}\alpha_k\phi_k(\bmw,x)$ and use $u_k$  defined in Lemma~\ref{le:Radon1} and~\eqref{eq:id1}, we have
\begin{equation}\label{eq:expansion}
f(\bmx)=\sum_{k=1}^{\infty}\alpha_k \Delta^{-1}\mathcal{R}^{\ast}\phi_k(\bmx)=\sum_{k=1}^{\infty}\alpha_k u_k(\bmx).
\end{equation}
Now we characterize the asymptotic behavior of $u_k$ when $D$ is a unit ball in $\bbR^d$.
 Due to the symmetry,  $u_k$ can be separated into $u_k(\bmx) = V_k(|\bmx|) Y_{k}(\hat{\bmx})$, where $\hat{\bmx} = \bmx/|\bmx|$ and  $Y_{k}$ denotes a spherical harmonics of order $l$ on $\mathbb{S}^{d-1}$ and $V_k(r)$ satisfies the following by Lemma~\ref{le:Radon1}
$$\left(-\frac{d^2}{dr^2} - \frac{d-1}{r}  \frac{d}{dr} + \frac{1}{r^2} l(l+d-2)\right)^{(d+3)/2} V_k = \frac{c_d}{\lambda_k} V_k. $$
The equation can be reduced to a set of Bessel's differential equations, 

$$ \left(-\frac{d^2}{dr^2} - \frac{d-1}{r} \frac{d}{dr}+ \frac{1}{r^2} l(l+d-2) - \mu_{k,p}^2 \right) V_k =  0,$$
where $\mu_{k,p} = \left[\frac{c_d}{\lambda_k}\right]^{{1/(d+3)}} e^{2p \pi i/(d+3)}$, $p=1,2,\cdots, d+3$. Take a change of variable $s = \mu_{k,p} r $, the above equation becomes the standard Bessel's equation
$$\left( s^2 \frac{d^2}{d s^2} + (d-1) s\frac{d}{ds } + (s^2 - l(l+d-2)) \right) V_k = 0 ,$$
which implies $V_k(r) = \sum_{p=1}^{d+3}  {\beta_{k,p}}{{(\mu_{k,p} r)}^{-\nu}} J_{l+\nu}(\mu_{k,p} r)$, $\nu = \frac{d-2}{2}$ and $\beta_j\in\bbC$ are coefficients.
For sufficiently large $k$ that $|\mu_{k,p}|\gg (l+\nu)^2$ with $\mathrm{Im}(\mu_{k,p})\neq 0$, the asymptotical behavior is $\sqrt{\frac{2}{\pi \mu_{k,p} r}} \cos(\mu_{k,p} r - \frac{l+\nu}{2}\pi - \frac{\pi}{4})$, whose magnitude grows exponentially like $\cO(e^{\mathrm{Im}(\mu_{k,p}) r})$ with different rates. 
Since the $L^2$ norm of $u_k(\bmx)$ is uniformly bounded, the coefficients $\beta_{k,p}\to 0$ if $\mathrm{Im}(\mu_{k,p})\neq 0$. That means, asymptotically, the eigenfunction $u_k$ behaves like a usual Bessel function multiplied with spherical harmonics. For a general compact domain $D\subset \bbR^{d}$, all the arguments above are still valid except it is more difficult to explicitly write out the corresponding $u_k$.

\begin{remark}\label{re:relu-k}For the Gram matrix corresponding to activation function $\texttt{ReLU}^m(x)=\frac{1}{m!}[\max(x,0)]^m$ in $d$-dimension with $m\ge 1$, one can modify the above theorems to formally show: 
\begin{itemize}
    \item If $m$ is odd, $(c_{d}\lambda_k^{-1}, \mathcal{R}^{\ast} \phi_k)$ forms an eigenpair of the operator $ (-\Delta)^{\frac{d-1}{2}+(m+1)} $.
    \item If $m$ is even, $( c_d\lambda_k^{-1}, \mathcal{R}^{\ast}\partial_b \phi_k)$ forms an eigenpair of $ (-\Delta)^{\frac{d-1}{2}+(m+1)} $.
\end{itemize}
 Using the inversion formula for the Radon transform, we have $\lambda_k=\Theta(k^{-\frac{d+2m+1}{d}})$.
\end{remark}

\begin{remark}\label{re:highD}
    Although the decay of eigenvalues seems slower in higher dimensions, the number of Fourier modes less than frequency $\nu$ is $\Theta(\nu^d)$.     
In other words, given a threshold $\mathcal{\epsilon}$ for the leading singular value, no matter how wide a two-layer $\texttt{ReLU}$ neural network is, it can only resolve all Fourier modes up to frequency $\mathcal{O}(\mathcal{\epsilon}^{-\frac{1}{d+3}})$. 
\end{remark}


\subsection{Numerical experiments}\label{sec:numerics}
In this section, various numerical experiments are presented to verify our earlier analysis results: 1) the spectral property of the Gram matrix corresponding to \texttt{ReLU} functions, and 2) the low-pass filter nature of two-layer networks in the least square setting. 

\subsubsection{Spectrum and eigenmodes of Gram matrix in one dimension}

The first two numerical experiments show the spectrum of the Gram matrix for \texttt{ReLU} functions in the one-dimensional case. Figure~\ref{fig:spectrum:uniform} and 
the first row of Figure~\ref{fig:eigenmodes:1D} is the plot for eigenvalues in descending order and selected eigenmodes for uniform biases respectively, i.e., $b_j=b_1+2(j-1)/(n-1)$ for $j\in [n]$ with $b_1=-1$. They agree with our analysis perfectly. Figure~\ref{fig:spectrum:adaptive} and the second row of Figure~\ref{fig:eigenmodes:1D} are for non-uniform biases adaptive to the rate of change, i.e., $|f^\prime(x)|$, of $f(x)=\arctan(25x)$ as an example.
Define $F(x)=\int_{-1}^{x} |f^\prime(t)|dt\big/\int_{-1}^{1} |f^\prime(t)|dt$, which is strictly increasing. 
There exists unique $b_i\in [-1,1]$ such that $F(b_i)=(i-1)/(n-1)$ for $i\in [n]$. We see that the conditioning becomes a little worse. However, the leading eigenmodes are more adaptive to the rapid change of $f(x)$ at 0. This is demonstrated further when using a least square approximation based on \texttt{ReLU} functions with uniform and adaptive biases. In Figure~\ref{fig:projection}, we show the projection of $f$ on the leading eigenmodes corresponding to the Gram matrix.
We observe that fewer leading eigenmodes are needed to represent/approximate the target function for adaptive biases compared to uniform biases.  

\begin{figure}
    \centering	
    \begin{minipage}[c]{0.495\textwidth}
    \begin{subfigure}[c]{0.48\linewidth}
    \centering            \includegraphics[width=0.9985\textwidth]{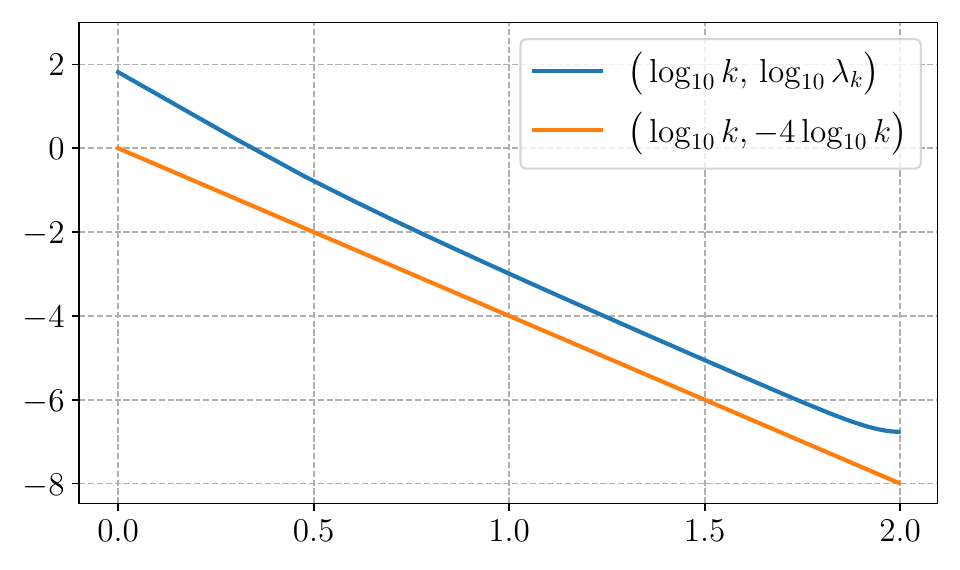}
    \subcaption{$n=100$.}
    \end{subfigure}
    \begin{subfigure}[c]{0.48\linewidth}
    \centering            \includegraphics[width=0.9985\textwidth]{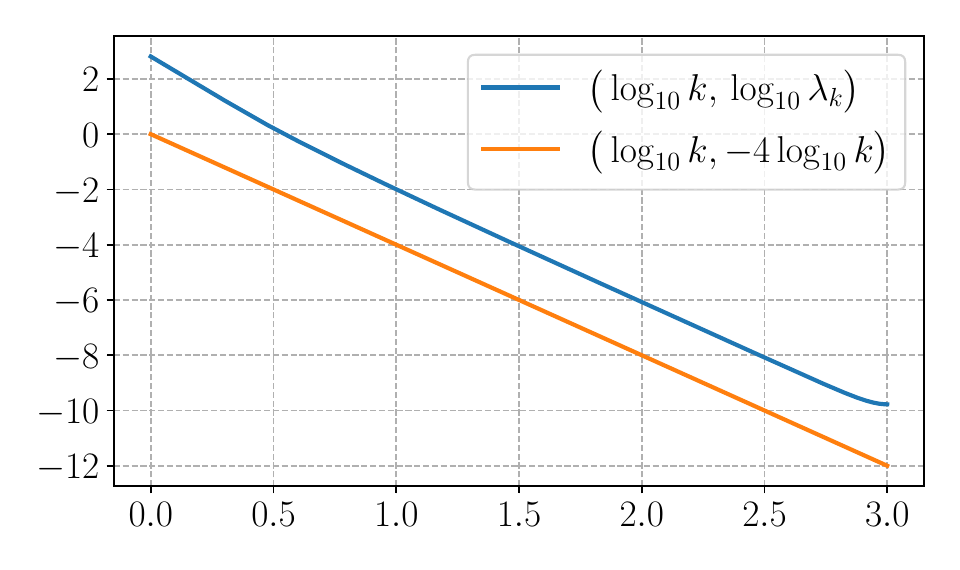}
    \subcaption{$n=1000$.}
    \end{subfigure}
    \caption{Spectrum for uniform $\bmb$.}
    	\label{fig:spectrum:uniform}
    \end{minipage}\hfill
        \begin{minipage}[c]{0.495\textwidth}
    \begin{subfigure}[c]{0.48\linewidth}
    \centering            \includegraphics[width=0.9985\textwidth]{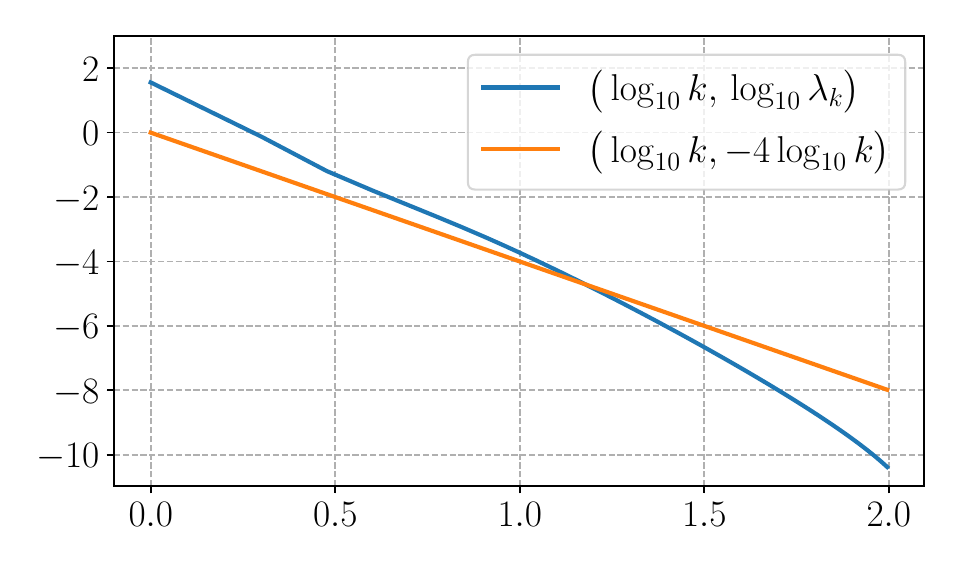}
    \subcaption{$n=100$.}
    \end{subfigure}
    \begin{subfigure}[c]{0.48\linewidth}
    \centering            \includegraphics[width=0.9985\textwidth]{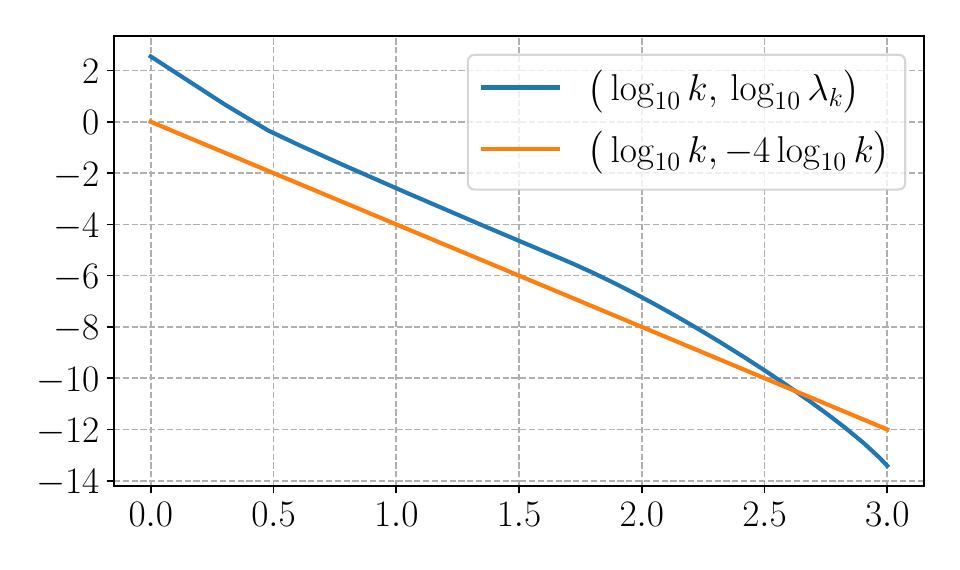}
    \subcaption{$n=1000$.}
    \end{subfigure}
    \caption{Spectrum for adaptive $\bmb$.}
    	\label{fig:spectrum:adaptive}
    \end{minipage}
\end{figure}
\begin{figure}
    \centering	
    \begin{subfigure}[c]{0.245\textwidth}
    \centering            \includegraphics[width=0.9985\textwidth]{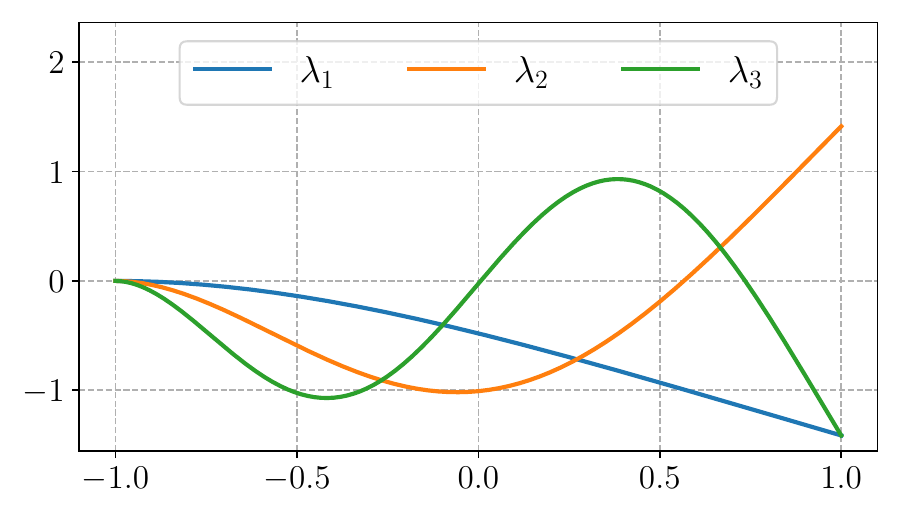}
    \end{subfigure}\hfill
    \begin{subfigure}[c]{0.245\textwidth}
    \centering            \includegraphics[width=0.9985\textwidth]{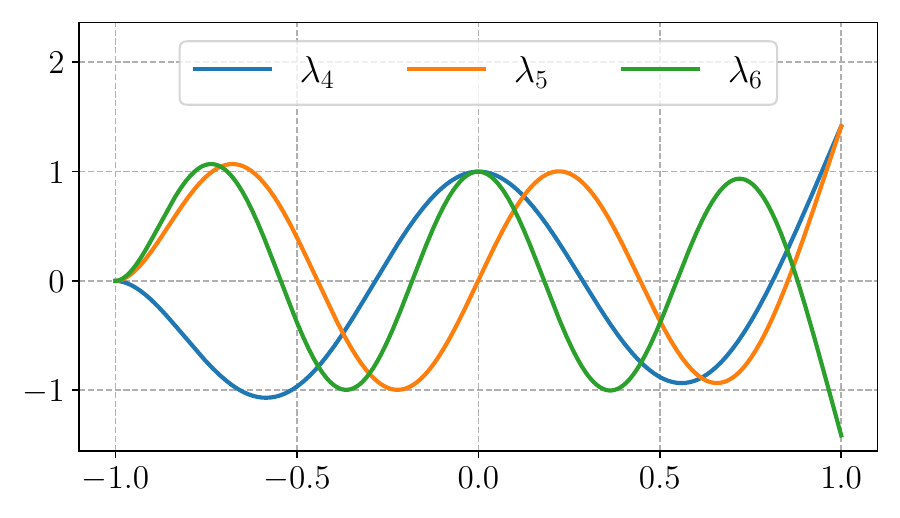}
    \end{subfigure}\hfill
    \begin{subfigure}[c]{0.245\textwidth}
    \centering            \includegraphics[width=0.9985\textwidth]{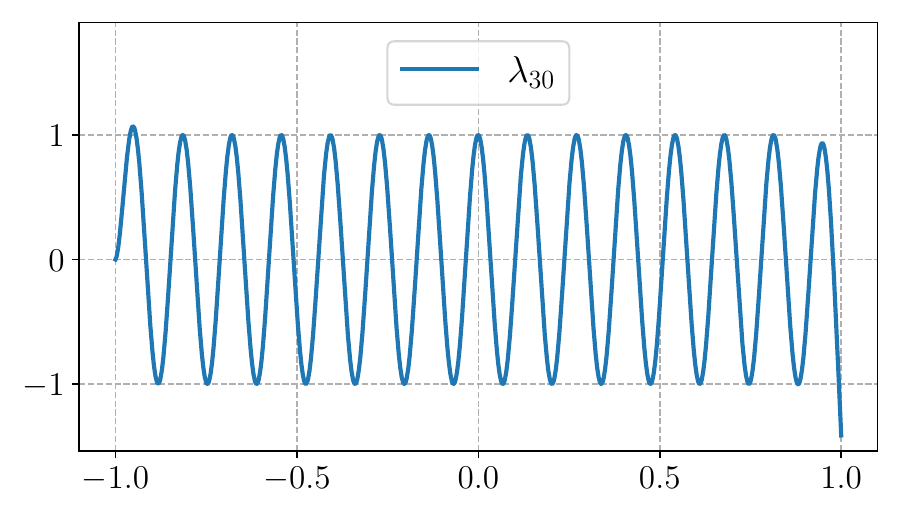}
    \end{subfigure}\hfill
    \begin{subfigure}[c]{0.245\textwidth}
    \centering            \includegraphics[width=0.9985\textwidth]{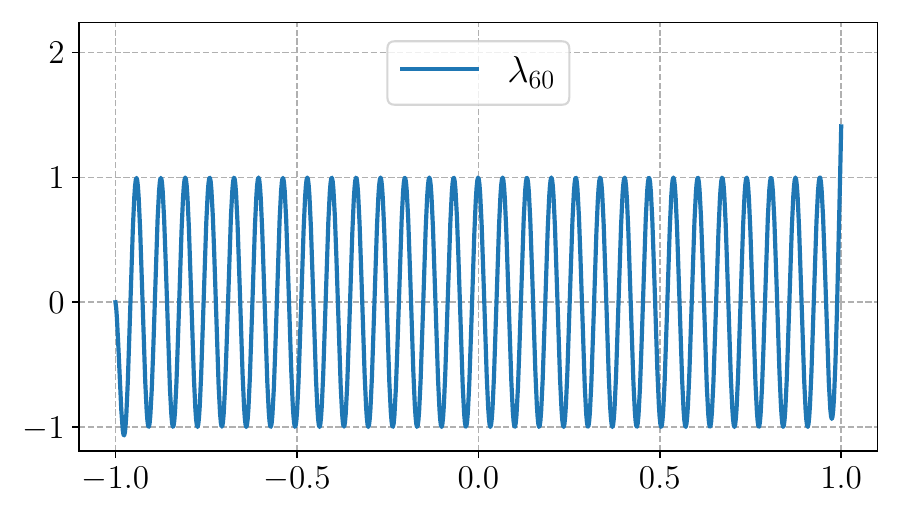}
    \end{subfigure}\\
    \begin{subfigure}[c]{0.245\textwidth}
    \centering            \includegraphics[width=0.9985\textwidth]{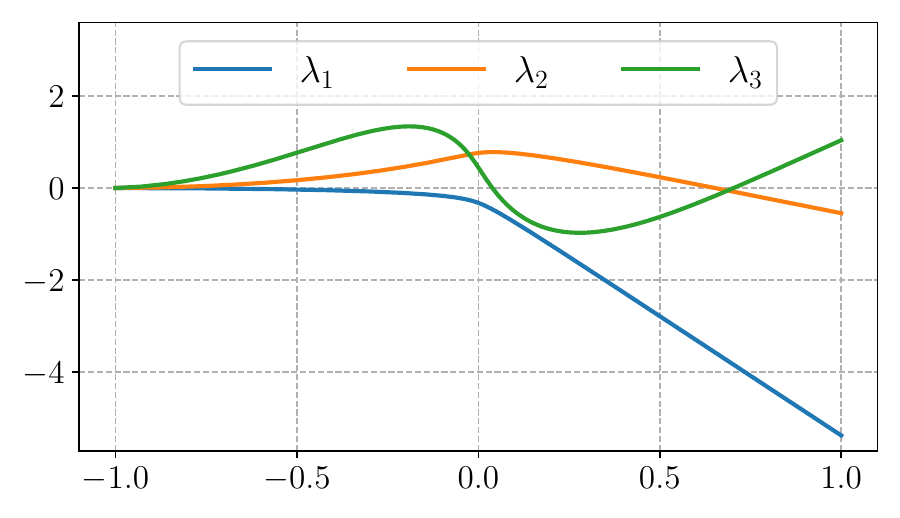}
    \end{subfigure}\hfill
    \begin{subfigure}[c]{0.245\textwidth}
    \centering            \includegraphics[width=0.9985\textwidth]{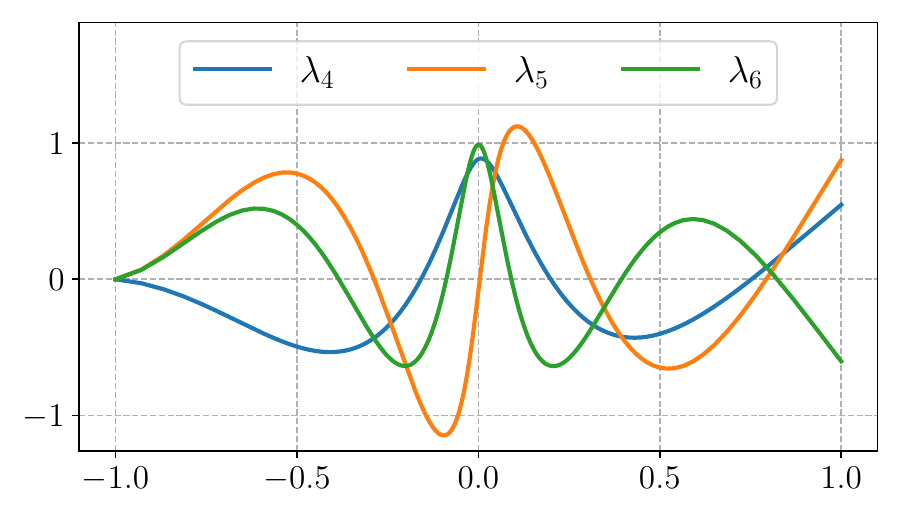}
    \end{subfigure}\hfill
    \begin{subfigure}[c]{0.245\textwidth}
    \centering            \includegraphics[width=0.9985\textwidth]{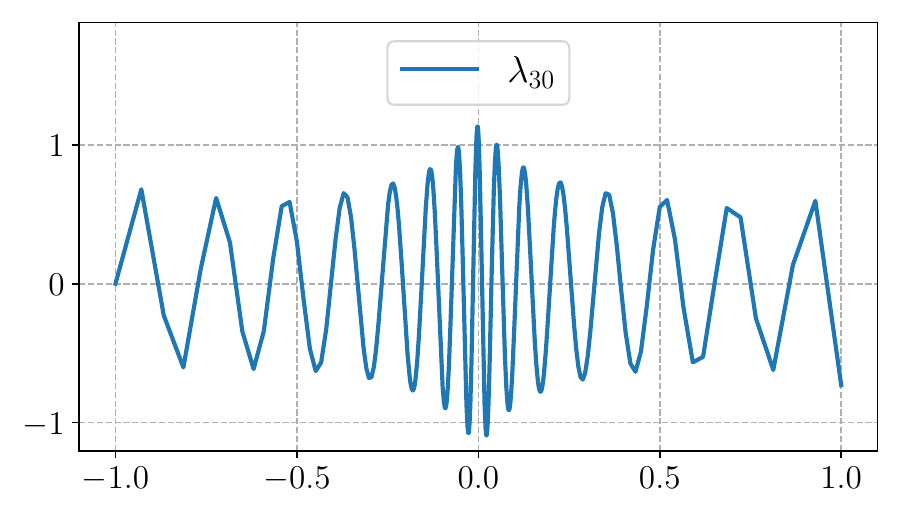}
    \end{subfigure}\hfill
    \begin{subfigure}[c]{0.245\textwidth}
    \centering            \includegraphics[width=0.9985\textwidth]{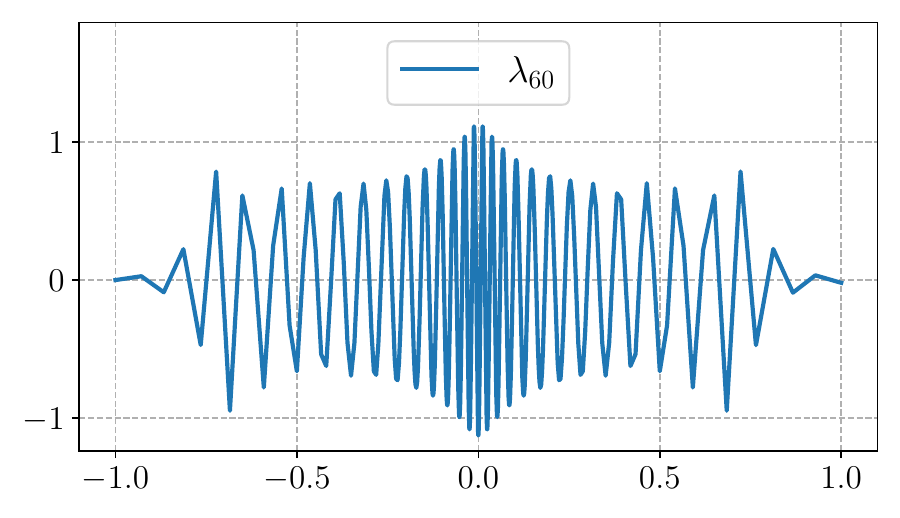}
    \end{subfigure}
    \caption{Eigenmodes of $\lambda_k$ for $k=\{1,2,3\},\{4,5,6\},30, 60$ with $n=1000$. The first and second rows correspond to uniform and adaptive $\bmb$, respectively.}
    	\label{fig:eigenmodes:1D}
\end{figure}
\begin{figure}
    \centering
        \begin{subfigure}[c]{0.2475\textwidth}
    \centering            \includegraphics[width=0.9985\textwidth]{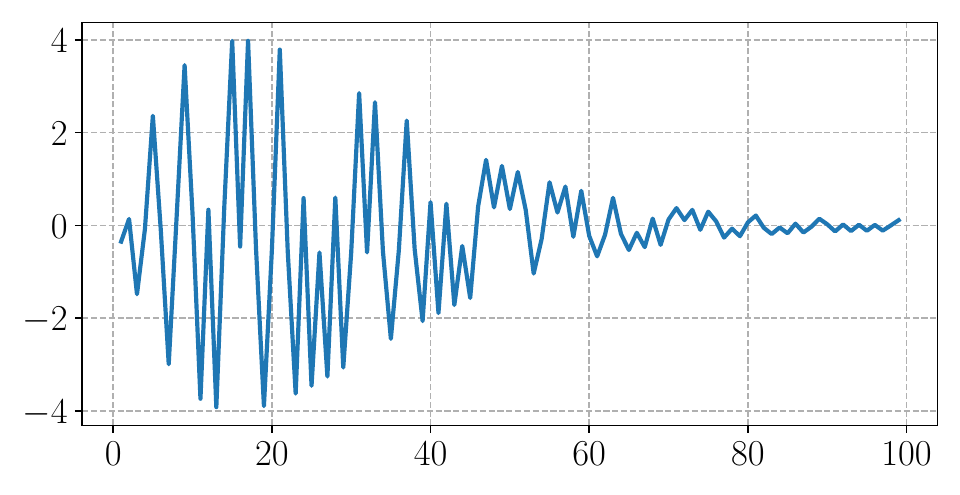}
    \subcaption{Uniform $\bmb$, $n=100$.}
    \end{subfigure}\hfill
    \begin{subfigure}[c]{0.2475\textwidth}
    \centering            \includegraphics[width=0.9985\textwidth]{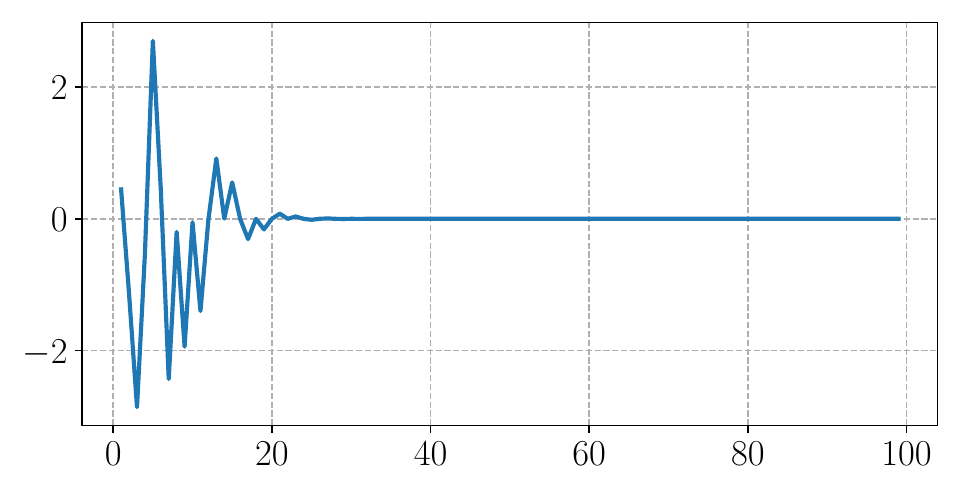}
    \subcaption{Adaptive $\bmb$, $n=100$.}
    \end{subfigure}\hfill
    \begin{subfigure}[c]{0.2475\textwidth}
    \centering            \includegraphics[width=0.9985\textwidth]{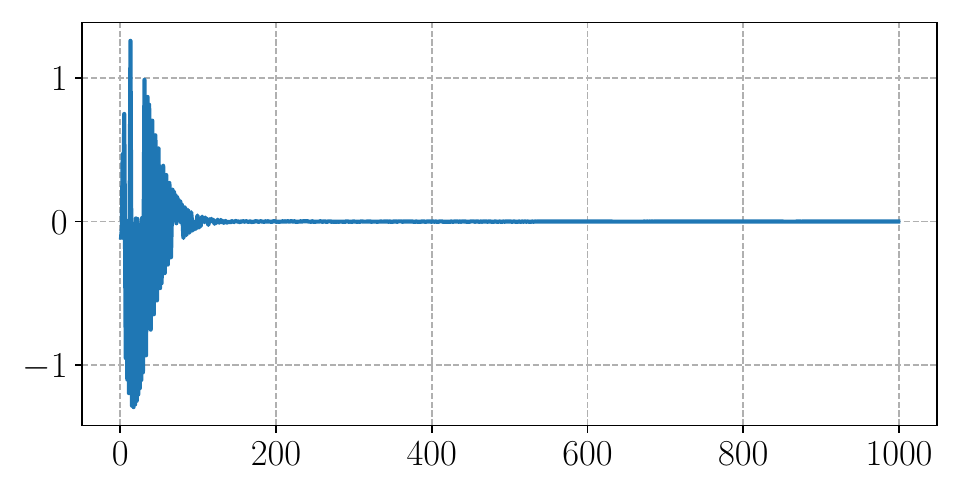}
    \subcaption{Uniform $\bmb$, $n=1000$.}
    \end{subfigure}\hfill
    \begin{subfigure}[c]{0.2475\textwidth}
    \centering            \includegraphics[width=0.9985\textwidth]{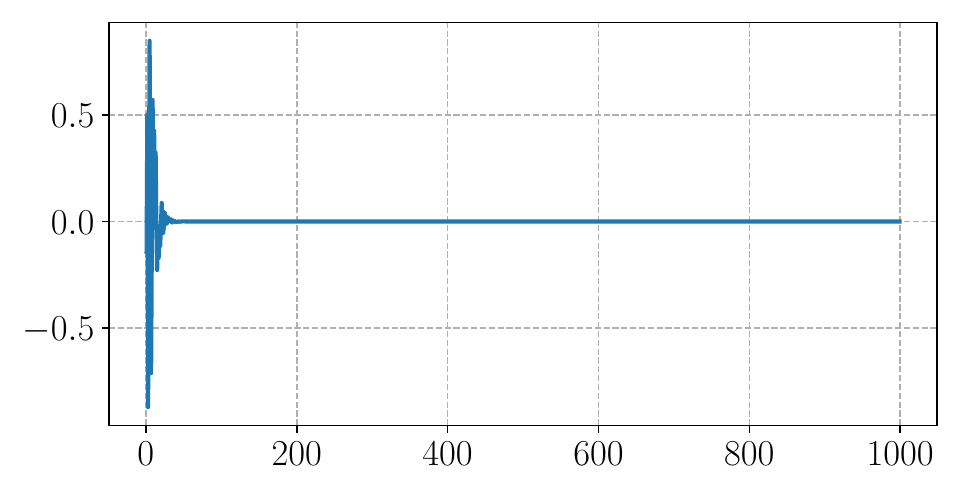}
    \subcaption{Adaptive $\bmb$, $n=1000$.}
    \end{subfigure}
    \caption{Projection of $f$ on the eigenmodes of the Gram matrix: coefficients vs. eigenmode index.}
    	\label{fig:projection}
\end{figure}
\subsubsection{Approximation in one dimension: FEM basis vs. \texttt{ReLU} basis}
Next, we use numerical experiments to show least square approximation using  FEM (finite element methods) basis vs. \texttt{ReLU} basis, two-layer neural networks (NN), in different setups when machine precision, grid resolution, and the conditioning of the Gram matrix all play a role in practice. As discussed before, these two bases are equivalent mathematically. Numerically machine precision and grid resolution are common factors for both representations. The only difference is the Gram matrix. 
Table~\ref{tab:adaptive:b} summarizes the results. In particular, the final numerical mean square error (MSE) is directly related to the least square approximation. 
As we can see from the results, when single precision is used in the computation and $n=100$ evenly distributed grid points (biases) are used, the grid resolution is the bottleneck for numerical accuracy near the rapid change. So the errors for FEM and NN are about the same. Numerical errors are reduced when the grid distribution is adaptive to the rapid change of the target function as described above. However, adaptive grids are much more effective for finite element basis.
When a very fine grid $n=1000$ is used, machine precision and the conditioning of the Gram matrix become the most important factors. Well-conditioned FEM (even for adaptive grid) can reach the machine precision while finite machine precision and ill-conditioned NN limit the total number of leading eigenmodes (low pass filter) and can not approximate a function with rapid change well. Moreover, increasing NN width further does not help. When double precision is used, even the NN has enough leading eigenmodes to approximate the target function well and the grid resolution becomes the limit for both FEM and NN. Hence the numerical errors for FEM and NN are similar.

\begin{table*}
		\caption{Error comparison for approximating $f(x)=\arctan(25x)$ with sufficient samples.} 
		\label{tab:adaptive:b}
		\vskip 0.05in
	\centering  	
	\resizebox{0.999\linewidth}{!}{ 
		\begin{tabular}{cccccccccccccccccccccccccccccc} 
			\toprule
    &
   &\multicolumn{4}{c}{float32}
   &\multicolumn{4}{c}{float64}\\
   			\cmidrule(lr){3-6}
			\cmidrule(lr){7-10}	
   &  & \multicolumn{2}{c}{$n=100$} 
			& \multicolumn{2}{c}{$n=1000$} 
        & \multicolumn{2}{c}{$n=100$} 
			& \multicolumn{2}{c}{$n=1000$} 
			\\

			\cmidrule(lr){3-4}
			\cmidrule(lr){5-6}			
            \cmidrule(lr){7-8}
            \cmidrule(lr){9-10}
			
			 & 
     \hspace{54pt}
    & 
			  MAX & MSE &
                 MAX & MSE &
                 MAX & MSE &
                 MAX & MSE 
			\\
    \midrule
\rowcolor{mygray} NN & Uniform $\bmb$  &  
$6.09\times10^{-2}$  &  $9.58\times10^{-5}$  &  $7.19\times10^{-2}$  &  $1.43\times10^{-4}$  &  $1.37\times10^{-2}$  &  $1.70\times10^{-6}$  &  $1.05\times10^{-4}$  &  $1.33\times10^{-10}$\\
\midrule
FEM & Uniform $\bmb$   & 
$1.37\times10^{-2}$  &  $1.70\times10^{-6}$  &  $1.05\times10^{-4}$  &  $1.33\times10^{-10}$  &  $1.37\times10^{-2}$  &  $1.70\times10^{-6}$  &  $1.05\times10^{-4}$  &  $1.33\times10^{-10}$\\  
\midrule

\rowcolor{mygray} NN & Adaptive $\bmb$   &
$6.83\times10^{-2}$  &  $7.54\times10^{-5}$  &  $1.89\times10^{-2}$  &  $1.06\times10^{-5}$  &  $3.93\times10^{-3}$  &  $1.42\times10^{-6}$  &  $4.74\times10^{-5}$  &  $1.17\times10^{-10}$\\
\midrule
FEM &  Adaptive $\bmb$    &  
$2.92\times10^{-3}$  &  $9.95\times10^{-7}$  &  $3.79\times10^{-5}$  &  $1.02\times10^{-10}$  &  $2.92\times10^{-3}$  &  $9.95\times10^{-7}$  &  $3.77\times10^{-5}$  &  $1.02\times10^{-10}$
 \\ 
			\bottomrule
		\end{tabular} 
	}
\end{table*}

Instead of using a (locally) rapid change function, we perform experiments on more and more oscillatory functions to demonstrate similar conclusions. Our target functions are $f(x)=\cos(6\pi x)-\sin(2\pi x)$ and its rescaled more oscillatory version $f(3x), f(9x)$. In these tests, instead of using the default threshold of the leading singular values of the Gram matrix based on machine precision, we introduce a manual singular value cut-off ratio\footnote{Singular values are treated as zero if they are smaller than $\eta$ times the largest singular value.}, $\eta$, to see the low pass filter effect for NN more clearly. The results are plotted in Figure~\ref{fig:LMH:fre} and the approximation errors are summarized in Table~\ref{tab:LMH:fre}. From both the plots and the MSE error, we can see that different cut-offs do not affect the FEM approximation since the condition number of the Gram matrix corresponding to the FEM basis on the uniform mesh is $\cO(1)$. As a result, all eigenmodes (frequencies) that can be resolved by the grid size can be recovered accurately and stably. The slight decrease in accuracy as the oscillation increases is due to the fact that the least square approximation error is proportional to $h^2\int|f''(x)|dx$, where $h$ is the grid size, for piecewise linear finite element basis by standard approximation theory. The dramatic effect of the cut-offs on NN is due to the fast spectral decay of the Gram matrix corresponding to the \texttt{ReLU} basis. When the cut-off ratio is $\eta=10^{-3}$, the linear space spanned by leading eigenmodes above the threshold can not approximate even the relative smooth $f(x)$ well. When the cut-off ratio is reduced to $\eta=10^{-9}$, there are enough leading modes above the threshold that can approximate $f(x), f(3x)$ well but not $f(9x)$.

\begin{figure}
    \centering
        \begin{subfigure}[c]{0.2401\linewidth}
    \centering            \includegraphics[width=0.9985\textwidth]{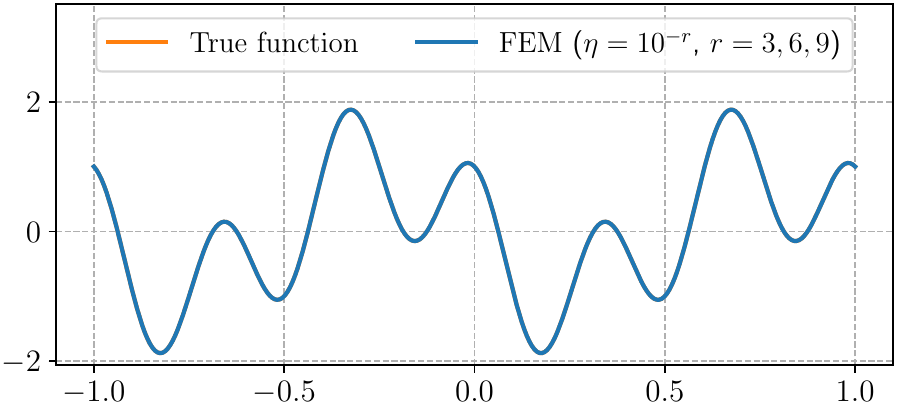}
    \end{subfigure}
    \hfill
    \begin{subfigure}[c]{0.2401\linewidth}
    \centering            \includegraphics[width=0.9985\textwidth]{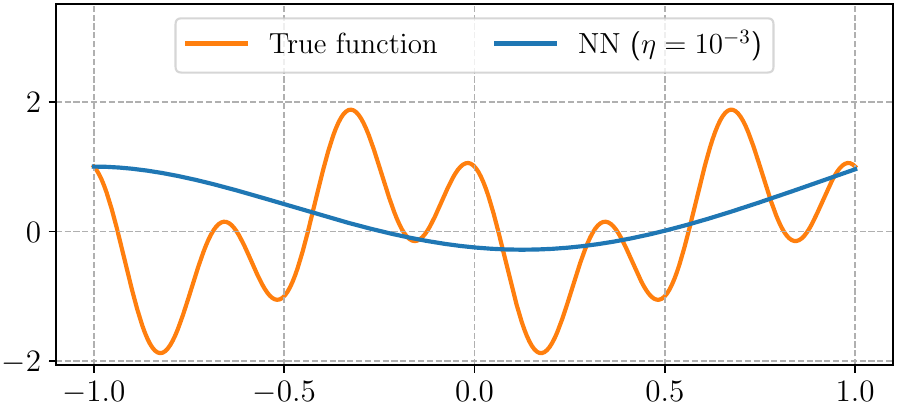}
    \end{subfigure}
    \hfill
    \begin{subfigure}[c]{0.2401\linewidth}
    \centering            \includegraphics[width=0.9985\textwidth]{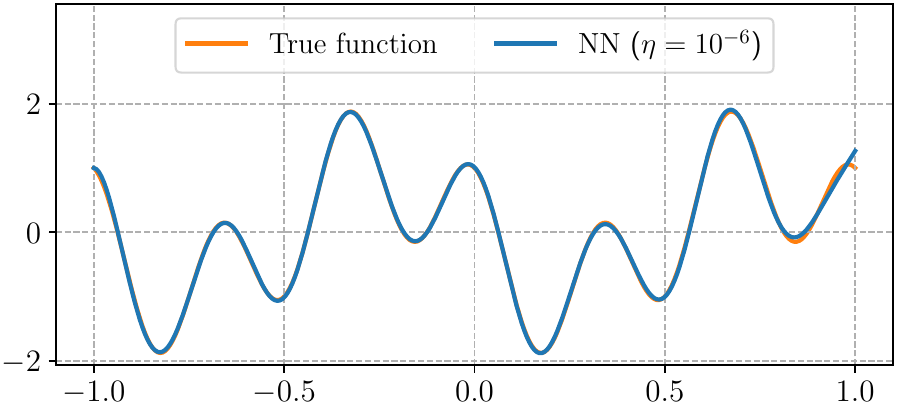}
    \end{subfigure}
    \hfill
    \begin{subfigure}[c]{0.2401\linewidth}
    \centering            \includegraphics[width=0.9985\textwidth]{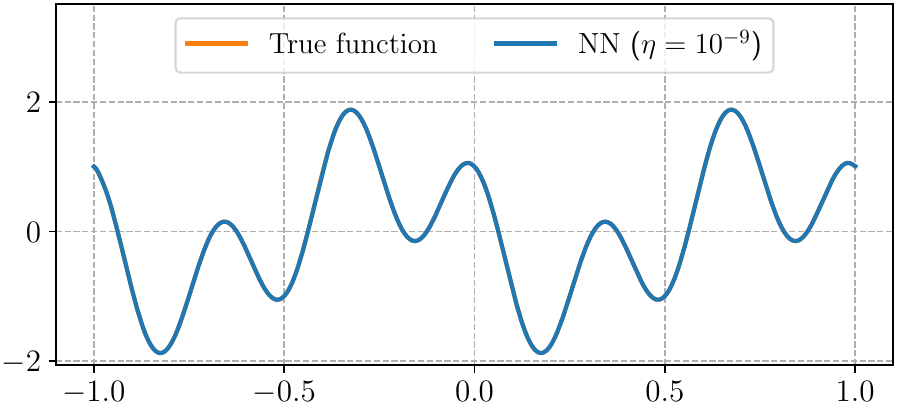}
    \end{subfigure}
    \\
    \vspace{5pt}
            \begin{subfigure}[c]{0.2401\linewidth}
    \centering            \includegraphics[width=0.9985\textwidth]{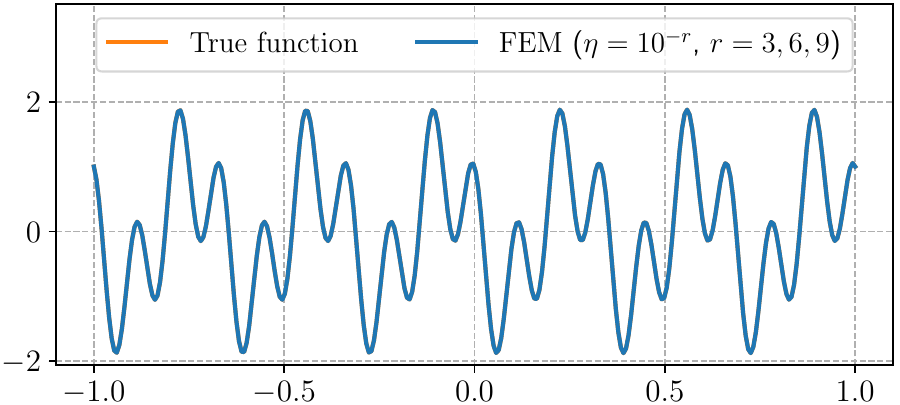}
    \end{subfigure}
    \hfill
    \begin{subfigure}[c]{0.2401\linewidth}
    \centering            \includegraphics[width=0.9985\textwidth]{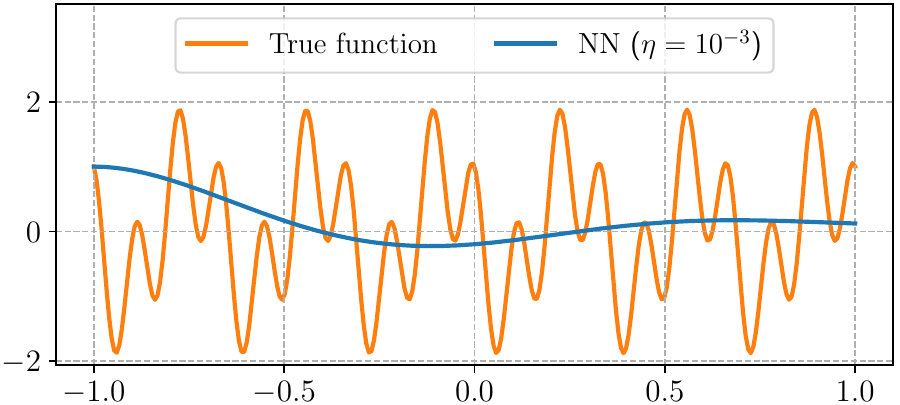}
    \end{subfigure}
    \hfill
    \begin{subfigure}[c]{0.2401\linewidth}
    \centering            \includegraphics[width=0.9985\textwidth]{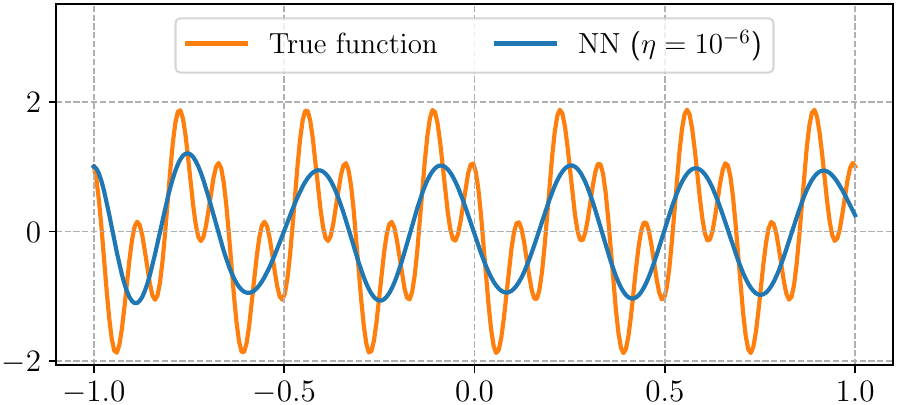}
    \end{subfigure}
    \hfill
            \begin{subfigure}[c]{0.2401\linewidth}
    \centering            \includegraphics[width=0.9985\textwidth]{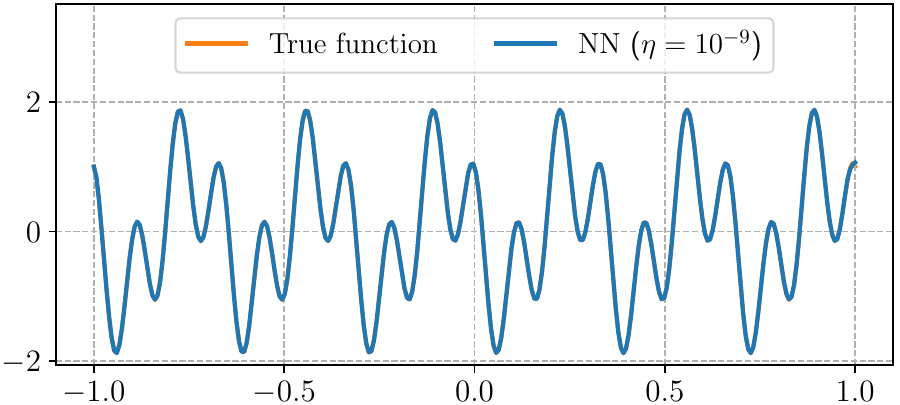}
    \end{subfigure}
    \\
    \vspace{5pt}
            \begin{subfigure}[c]{0.2401\linewidth}
    \centering            \includegraphics[width=0.9985\textwidth]{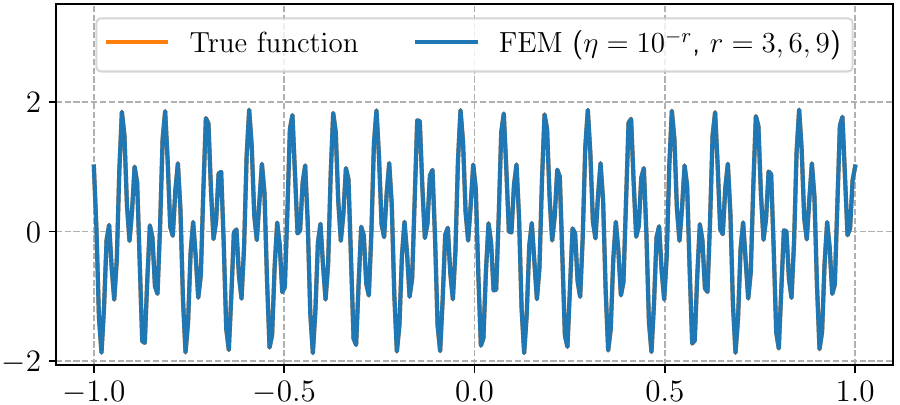}
    \end{subfigure}
    \hfill
    \begin{subfigure}[c]{0.2401\linewidth}
    \centering            \includegraphics[width=0.9985\textwidth]{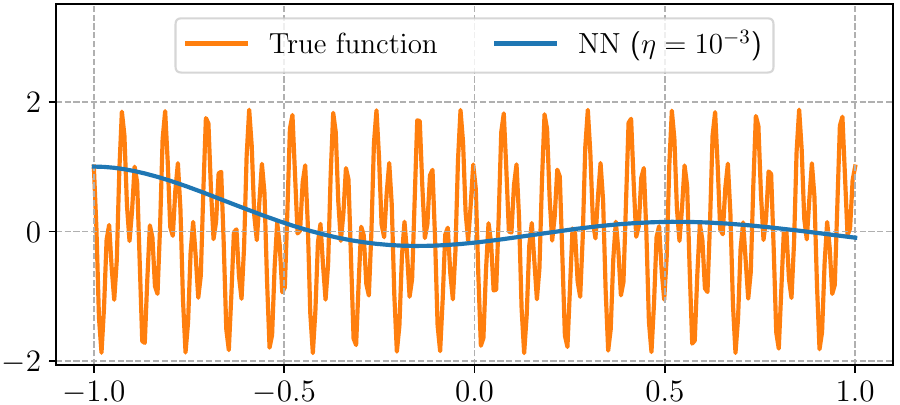}
    \end{subfigure}
    \hfill
    \begin{subfigure}[c]{0.2401\linewidth}
    \centering            \includegraphics[width=0.9985\textwidth]{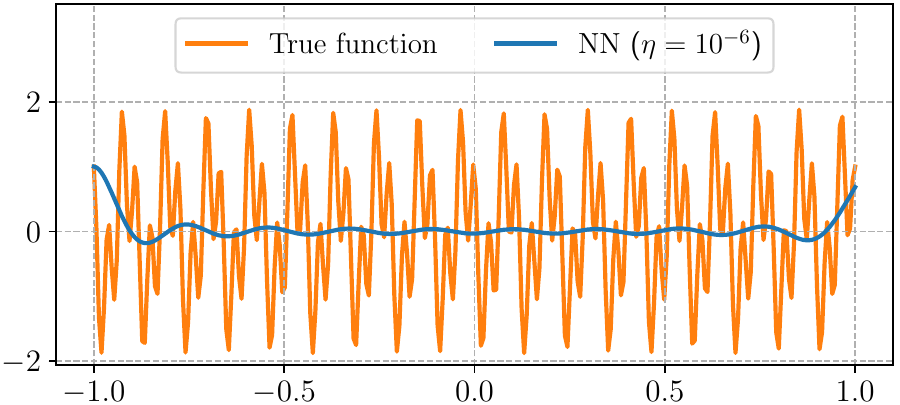}
    \end{subfigure}
    \hfill
            \begin{subfigure}[c]{0.2401\linewidth}
    \centering            \includegraphics[width=0.9985\textwidth]{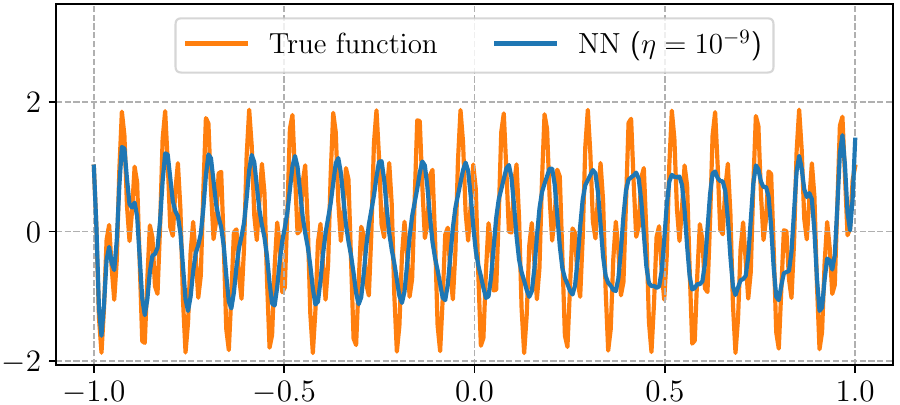}
    \end{subfigure}
    \caption{
    Approximation comparison: FEM vs. NN with $2000$ samples and $n=2000$ equal spaced basis.
    The three rows correspond to $f_1$, $f_2$, and $f_3$, respectively. Here $f_1(x)=f(x)$, $f_2(x)=f(3x)$,  $f_3(x)=f(9x)$, where $f(x)=\cos(6\pi x)-\sin(2\pi x)$.
    }
    	\label{fig:LMH:fre}
\end{figure}

\begin{table*}
		\caption{ Approximation errors in Figure~\ref{fig:LMH:fre}.} 
		\label{tab:LMH:fre}
		\vskip 0.05in
	\centering  	
	\resizebox{0.999\linewidth}{!}{ 
		\begin{tabular}{cccccccccccccccccccccccccccccc} 
			\toprule
  & \multicolumn{2}{c}{$f_1(x)=f(x)$}
   &\multicolumn{2}{c}{$f_2(x)=f(3x)$}
   &\multicolumn{2}{c}{$f_3(x)=f(9x)$}\\
   			\cmidrule(lr){2-3}
      \cmidrule(lr){4-5}
      \cmidrule(lr){6-7}


	
					& MAX	& MSE  & MAX
						& MSE & MAX & MSE
			\\
    \midrule
\rowcolor{mygray} 
  FEM (least square, $\eta=10^{-r}$, $r=3,6,9$)
 & $4.85\times10^{-5}$ & $5.34\times10^{-10}$ & $4.38\times10^{-4}$ & $4.32\times10^{-8}$ &  $3.95\times10^{-3}$ & $3.50\times10^{-6}$ \\
 \midrule
 NN (least square, $\eta=10^{-3}$) & $2.79$ & $1.28$ & $2.86$ & $1.19$ &
 $2.88$ & $1.15$ 
 \\ \midrule
\rowcolor{mygray}   NN (least square, $\eta=10^{-6}$) & $2.66\times10^{-1}$ & $9.44\times10^{-4}$ & $1.74$ & $5.39\times10^{-1}$ &  $2.79$ & $1.04$ 
 \\ \midrule
  NN (least square, $\eta=10^{-9}$) & $1.52\times10^{-2}$ & $1.45\times10^{-6}$ & $6.38\times10^{-2}$ & $1.92\times10^{-5}$ &
  $1.13$ & $4.65\times10^{-1}$ 
  \\
			\bottomrule
		\end{tabular} 
	}
\end{table*}


In the following experiment, we show the low-pass filter nature of two-layer neural networks (NN) vs. finite element methods (FEM) basis with respect to noise and overfitting (or over-parametrization). The target function is $f(x)=\cos(3\pi x)-\sin(\pi x)$ with noise sampled from $\calU(-0.5, 0.5)$ in our test. We manually select the cut-off ratio, $\eta$,  for small singular values. The numerical results are shown in Figure~\ref{fig:noise} and Table~\ref{tab:noise} for evenly distributed $b_i$. Since FEM has a condition number of $\cO(1)$, all modes resolved by the grid are captured independent of the cut-off ratio $\eta$.
On the other hand, NN only captures the leading eigenmodes, the number of which is determined by $\eta$. We can see that NN captures more modes as $\eta$ becomes smaller (less regularized). It is also interesting to see the low pass filter effect when the Adam optimizer is used to minimize the least square, which is related to the learning dynamics analysis in Section~\ref{sec:dynamics}.
In the case of 1000 data points and 1500 degrees of freedom, Figure~\ref{fig:overfitting} shows that a two-layer \texttt{ReLU} network is significantly more stable with respect to over-parametrization due to its low-pass filter nature.

\begin{figure}
    \centering
    \begin{subfigure}[c]{0.32488\textwidth}
    \centering            \includegraphics[width=0.9985\textwidth]{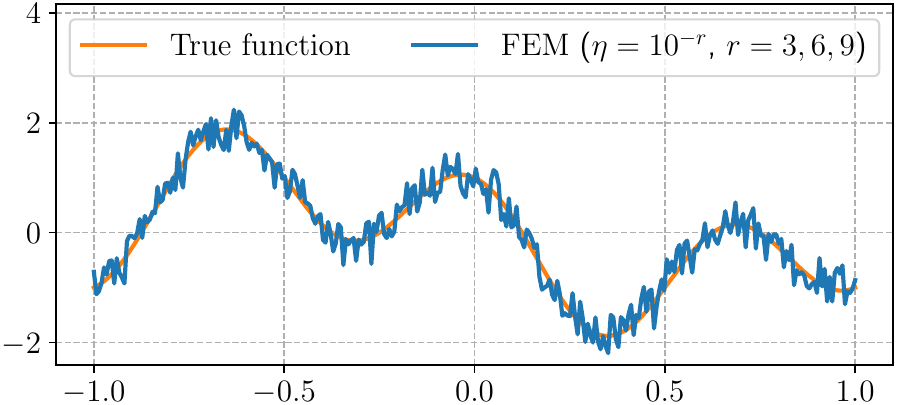}
    \end{subfigure}\hfill
    \begin{subfigure}[c]{0.32488\textwidth}
    \centering            \includegraphics[width=0.9985\textwidth]{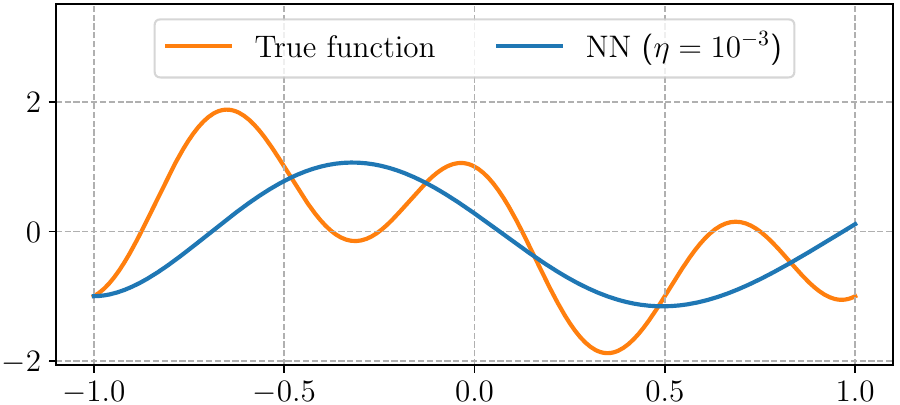}
    \end{subfigure}\hfill
    \begin{subfigure}[c]{0.32488\textwidth}
    \centering            \includegraphics[width=0.9985\textwidth]{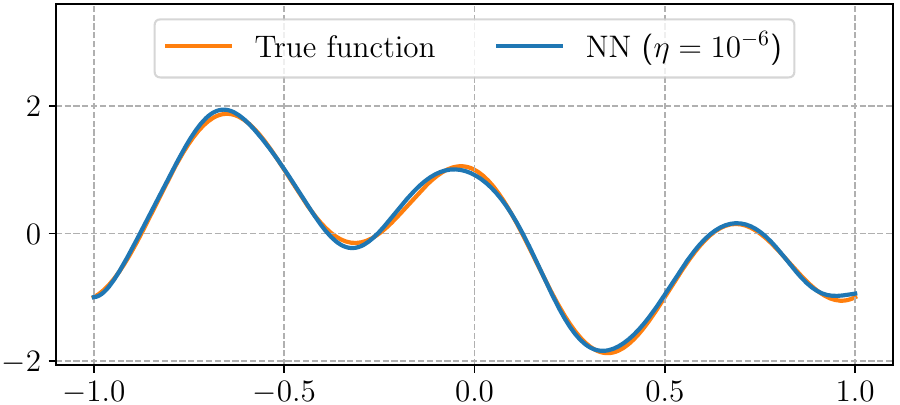}
    \end{subfigure}\\
    \vspace{3pt}
    \begin{subfigure}[c]{0.32488\textwidth}
    \centering            \includegraphics[width=0.9985\textwidth]{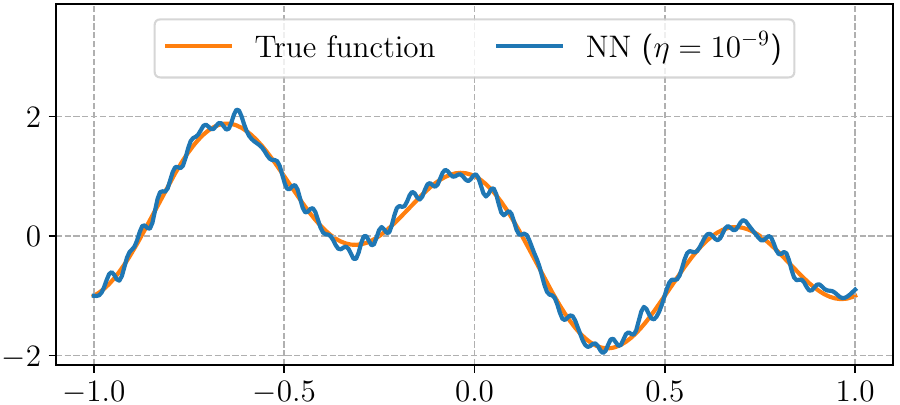}
    \end{subfigure}\hfill
    \begin{subfigure}[c]{0.32488\textwidth}
    \centering            \includegraphics[width=0.9985\textwidth]{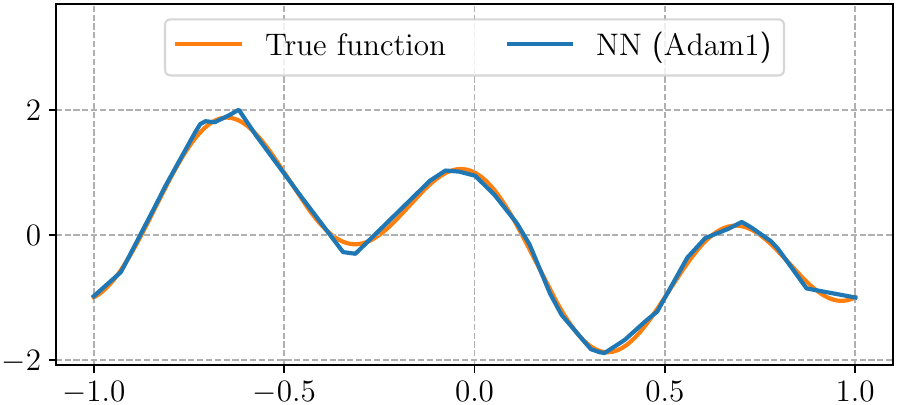}
    \end{subfigure}\hfill
    \begin{subfigure}[c]{0.32488\textwidth}
    \centering            \includegraphics[width=0.9985\textwidth]{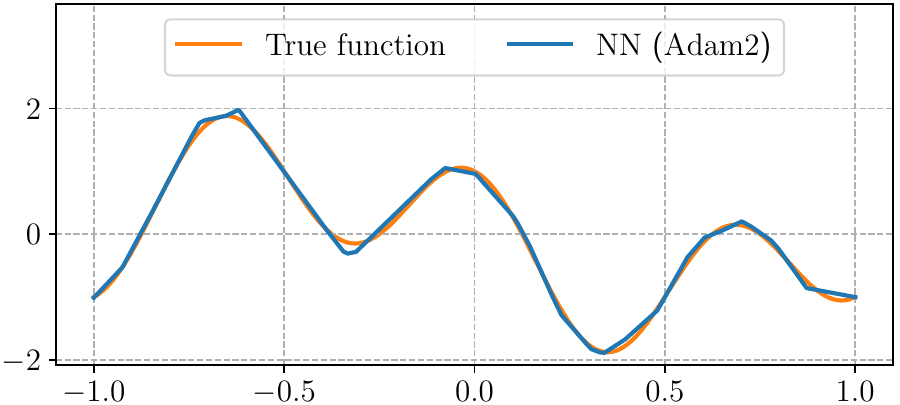}
    \end{subfigure}
    \caption{
    Approximation stability: FEM vs. NN subject to uniform noise $\calU(-0.5,0.5)$ on 1000 samples and $n=1000$ basis.
    The first four plots are results from the least square with different cut-off ratio 
    $\eta$ for singular values. 
    The last two plots are the results trained by the Adam optimizer, where ``Adam1'' and ``Adam2'' use single and double precision, respectively.    
    }
    	\label{fig:noise}
\end{figure}


\begin{table*}
		\caption{Approximation errors in Figure~\ref{fig:noise}.} 
		\label{tab:noise}
		\vskip 0.05in
	\centering  	
	\resizebox{0.999\linewidth}{!}{ 
		\begin{tabular}{cccccccccccccccccccccccccccccc} 
			\toprule
   &\multicolumn{4}{c}{least square}
   &\multicolumn{2}{c}{\hspace*{25pt}Adam optimizer\hspace*{25pt}}\\
   			\cmidrule(lr){2-5}
			\cmidrule(lr){6-7}	


    & FEM ($\eta=10^{-r}$, $r=3,6,9$) 
    &
    NN ($\eta=10^{-3}$) & NN ($\eta=10^{-6}$) &
    NN ($\eta=10^{-9}$) &  NN (float32)  &
       NN (float64)
	
			\\
    \midrule
\rowcolor{mygray} 
MAX
& 
$4.97\times10^{-1}$ 
&
$1.81$ & $1.15\times10^{-1}$ & $3.00\times10^{-1}$  & 
$1.73\times10^{-1}$  & $1.77\times10^{-1}$\\
\midrule
MSE
&
$5.90\times10^{-2}$
&
$7.60\times10^{-1}$ & $2.23\times10^{-3}$ & $9.48\times10^{-3}$ &
$3.68\times10^{-3}$ & $3.51\times10^{-3}$\\
			\bottomrule
		\end{tabular} 
	}
\end{table*}

\begin{figure}
    \centering	
    \begin{subfigure}[c]{0.3253\linewidth}
    \centering            \includegraphics[width=0.9985\textwidth]{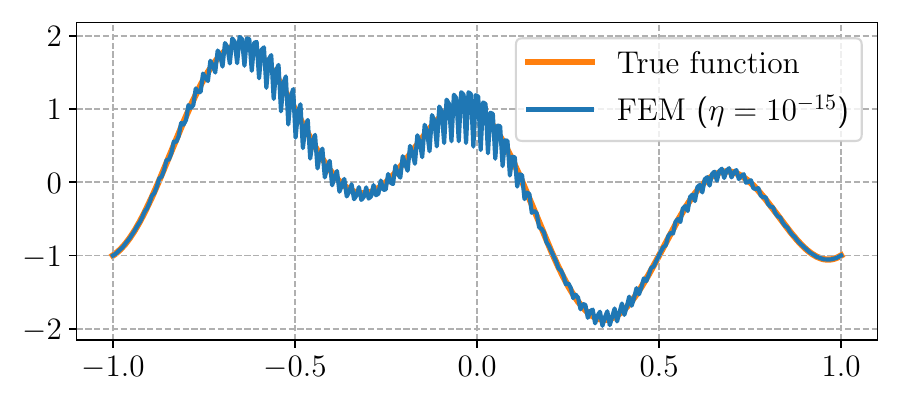}
    \end{subfigure}
    \hfill
    \begin{subfigure}[c]{0.3253\linewidth}
    \centering            \includegraphics[width=0.9985\textwidth]{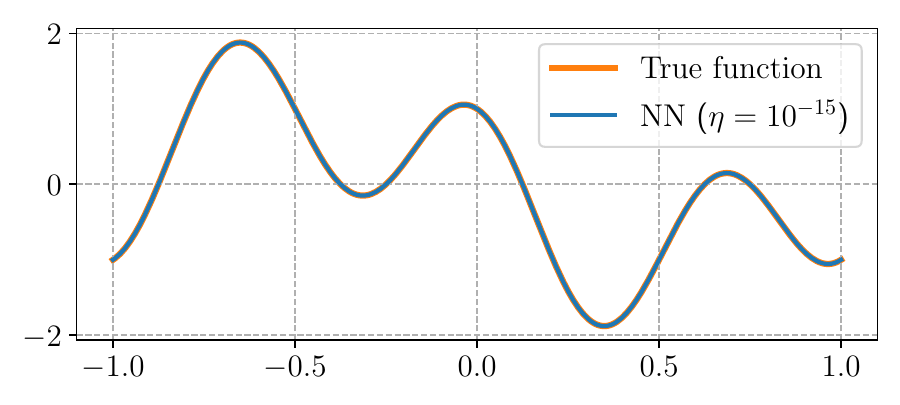}
    \end{subfigure}
    \hfill
    \begin{subfigure}[c]{0.3253\linewidth}
    \centering            \includegraphics[width=0.9985\textwidth]{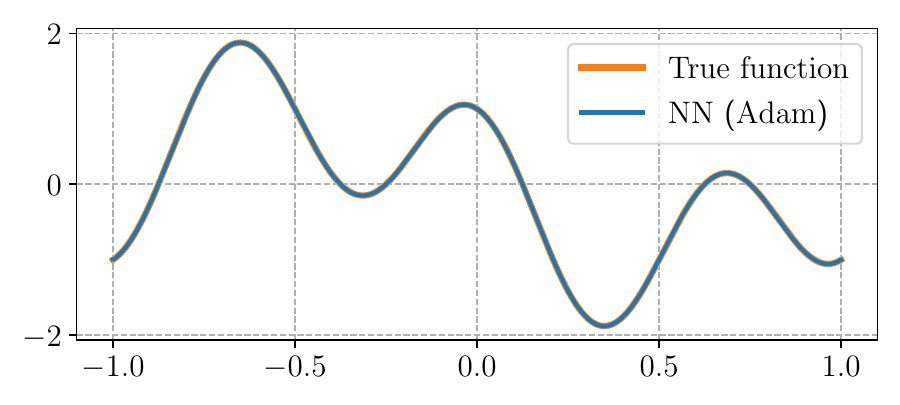}
    \end{subfigure}
    \caption{Test of overfitting: FEM vs. NN with $1000$ samples, $n=1500$ basis, and double precision. The first two plots are results from least square, where $\eta$ is the cut-off ratio for small singular values. The last plot is the result trained by the Adam optimizer.}
    	\label{fig:overfitting}
\end{figure}



\subsubsection{Spectrum and eigenmodes of Gram matrix in two dimensions}

We use a numerical example to show, Figure~\ref{fig:spectrum:2D}, the spectrum of the discrete Gram matrix in two dimensions with $25600$ evenly spaced samples for $( \bmw, b)\in \bbS^{1}\times[-1,1]$. The numerical experiments agree with our analysis in Section~\ref{sec:multiD} very well.
Several eigenmodes are presented in Figure~\ref{fig:eigenmodes:2D}.
In practice, those high-frequency modes whose corresponding eigenvalues are smaller than the machine precision threshold can not be captured.

\begin{figure}
            \centering    
        \includegraphics[width=0.8\textwidth]{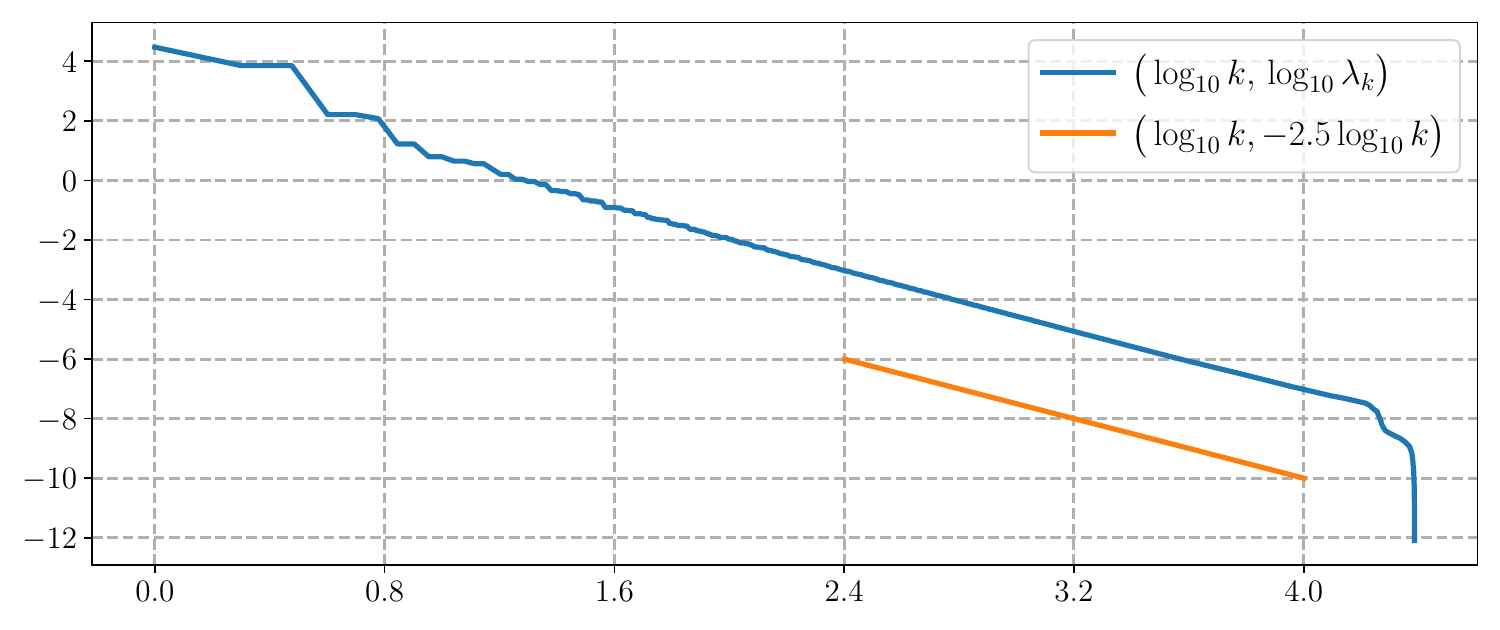}
            \caption{Spectrum of the discrete Gram matrix in two dimensions with $25600$  evenly spaced samples.}
    	\label{fig:spectrum:2D}
\end{figure}

\begin{figure}
    \centering	
    \begin{subfigure}[c]{0.1666\textwidth}
    \centering            \includegraphics[width=0.985\textwidth]{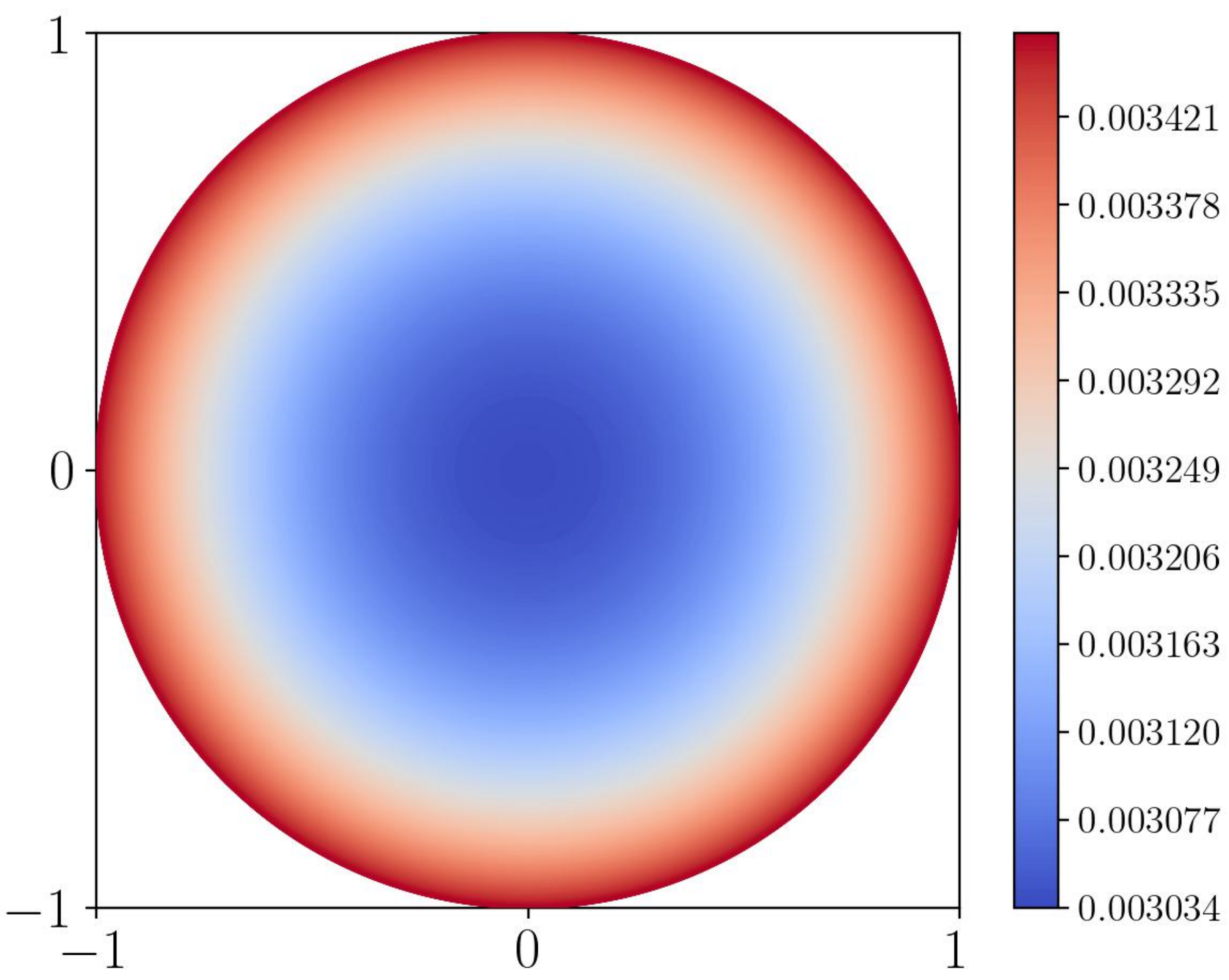}
    \subcaption{$\lambda_1$.}
    \end{subfigure}\hfill
    \begin{subfigure}[c]{0.1666\textwidth}
    \centering            \includegraphics[width=0.985\textwidth]{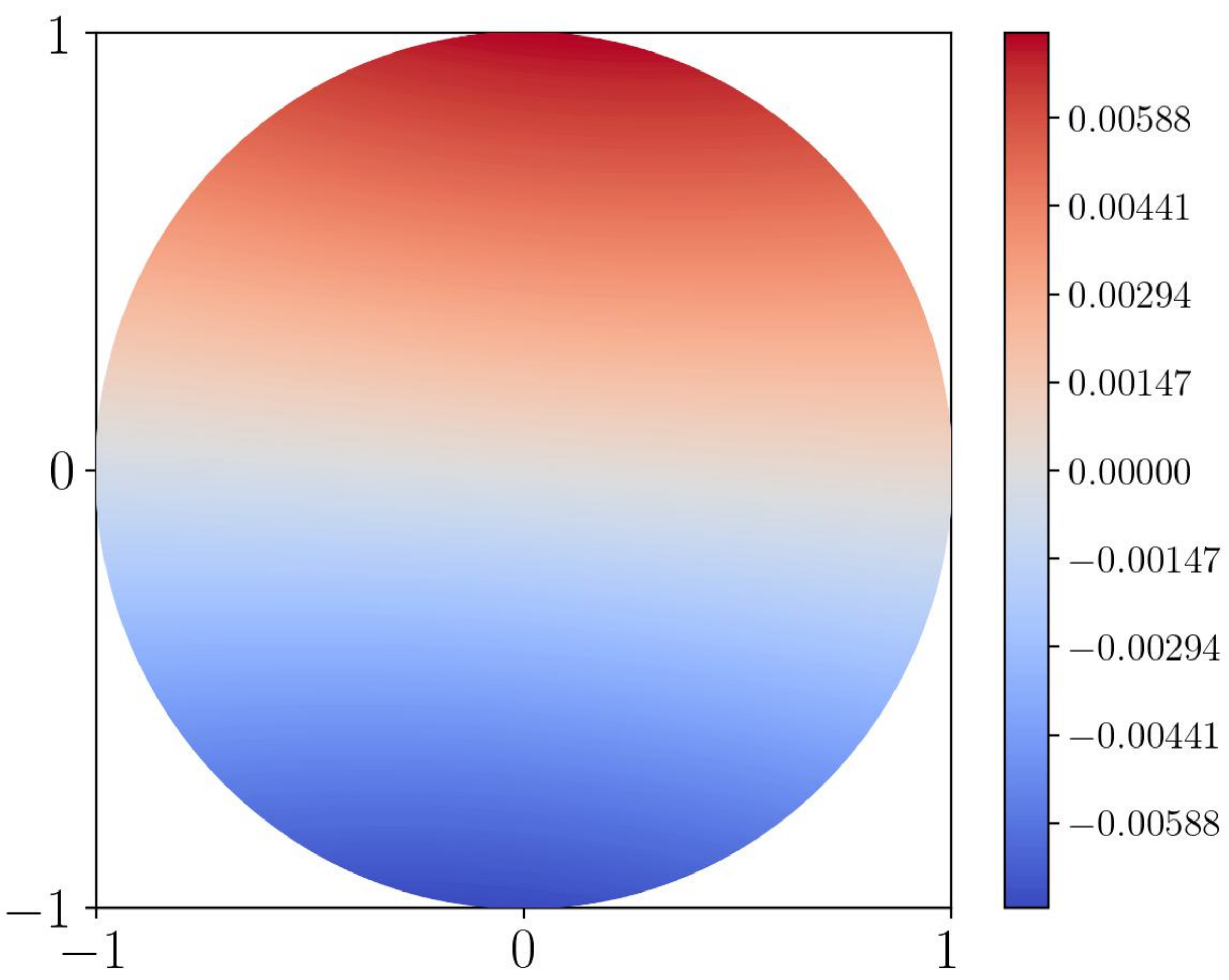}
    \subcaption{$\lambda_2$.}
    \end{subfigure}\hfill
    \begin{subfigure}[c]{0.1666\textwidth}
    \centering            \includegraphics[width=0.985\textwidth]{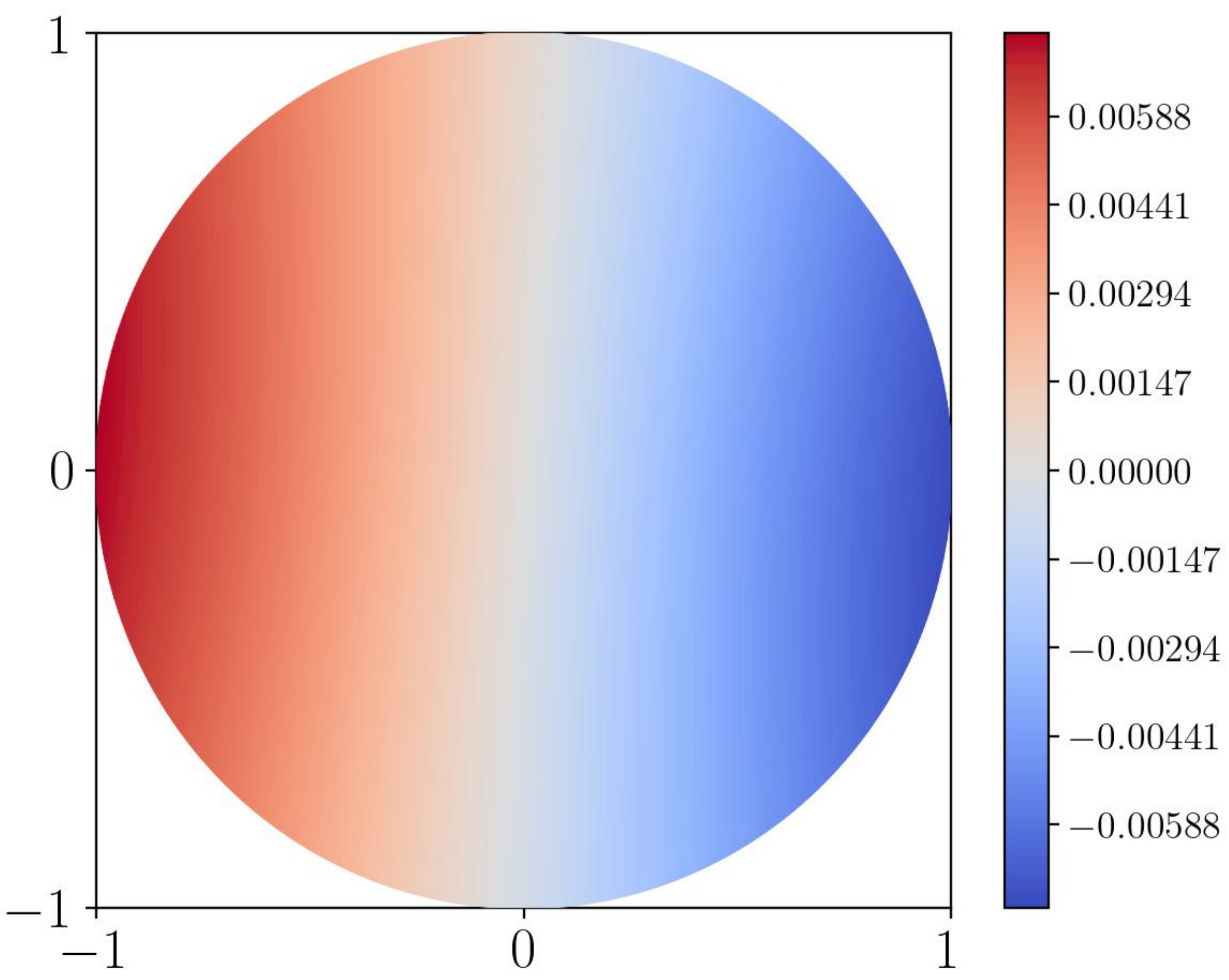}
    \subcaption{$\lambda_{3}$.}
    \end{subfigure}\hfill
    \begin{subfigure}[c]{0.1666\textwidth}
    \centering            \includegraphics[width=0.985\textwidth]{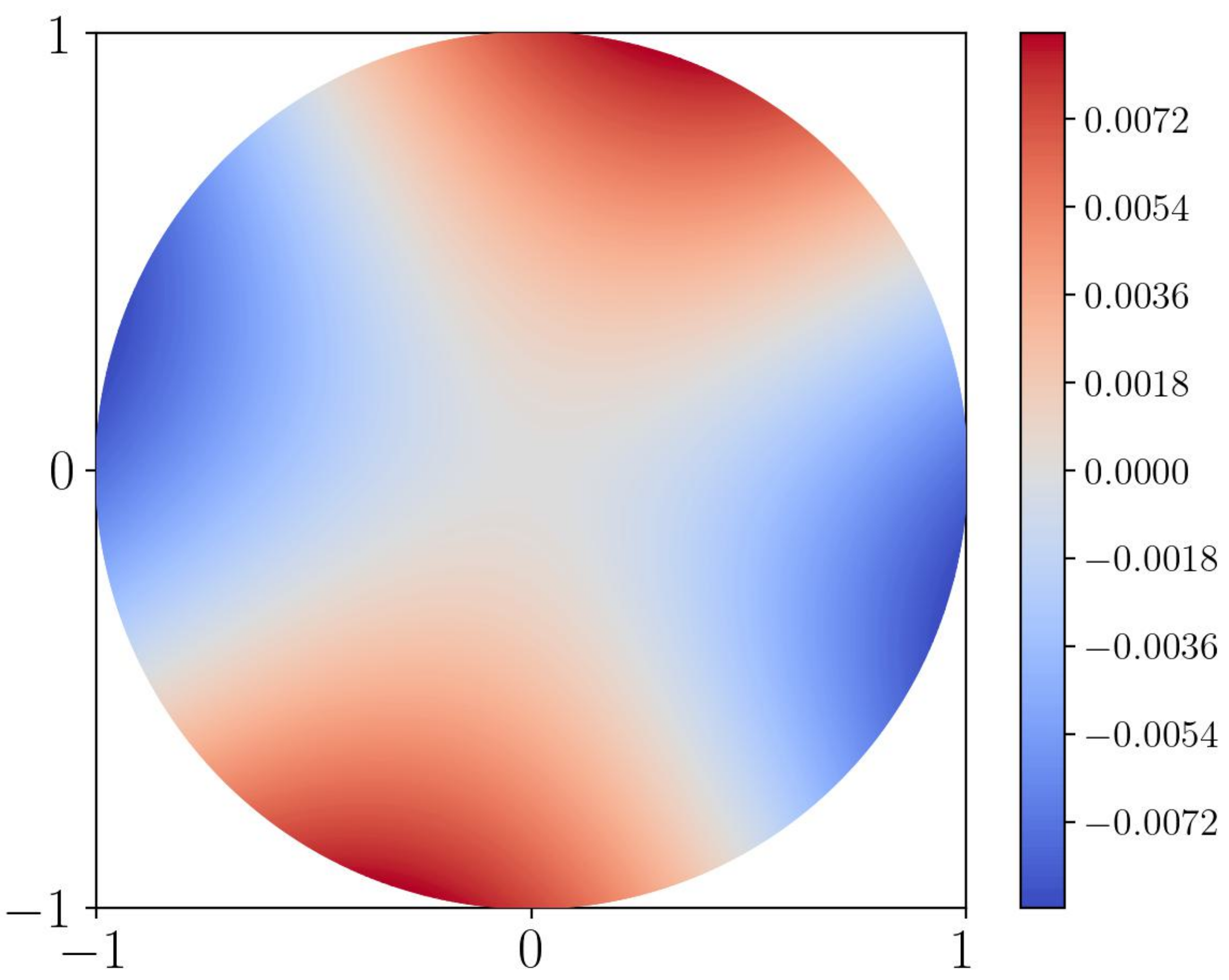}
    \subcaption{$\lambda_{4}$.}
    \end{subfigure}\hfill
    \begin{subfigure}[c]{0.1666\textwidth}
    \centering            \includegraphics[width=0.985\textwidth]{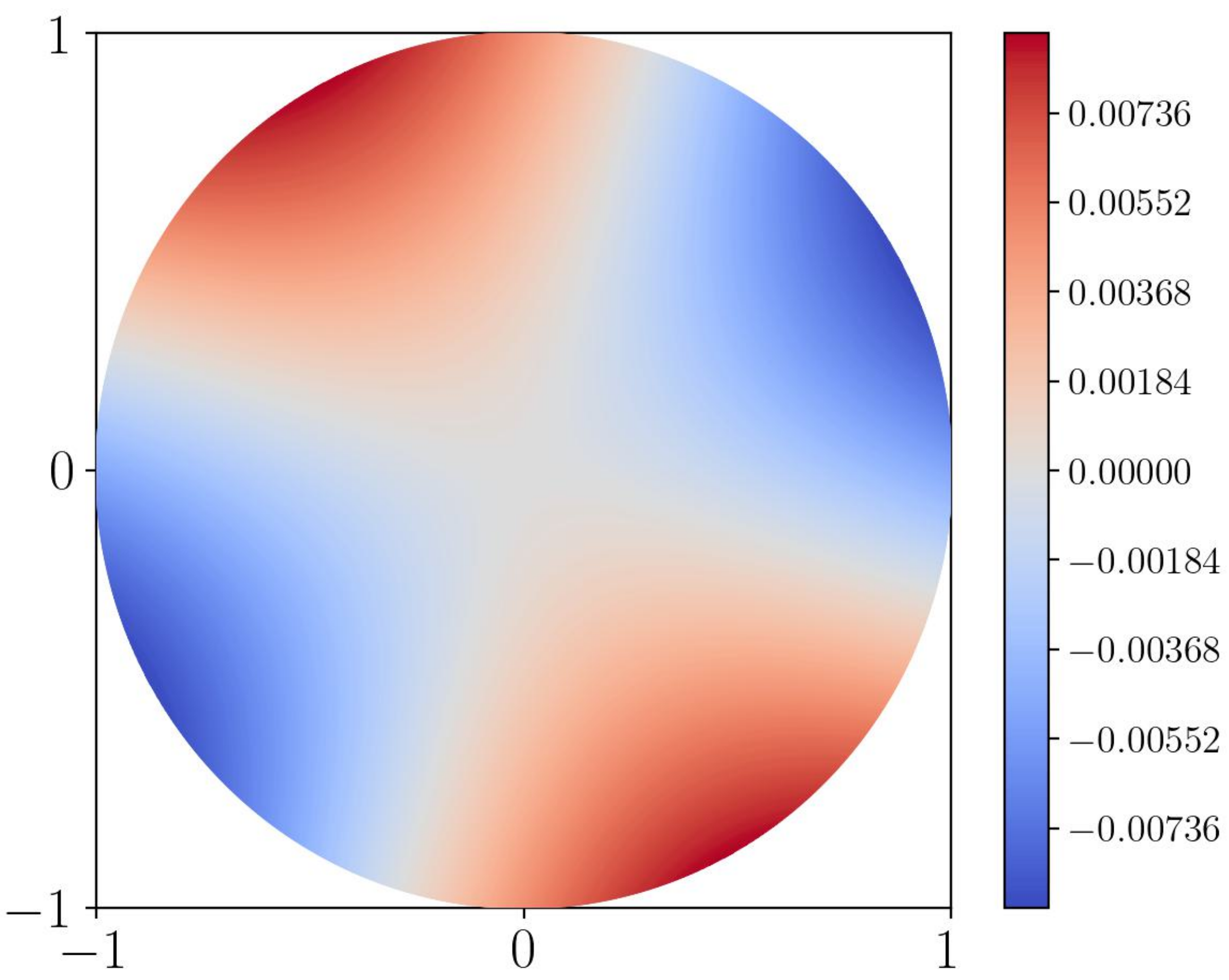}
    \subcaption{$\lambda_{5}$.}
    \end{subfigure}\hfill
    \begin{subfigure}[c]{0.1666\textwidth}
    \centering            \includegraphics[width=0.985\textwidth]{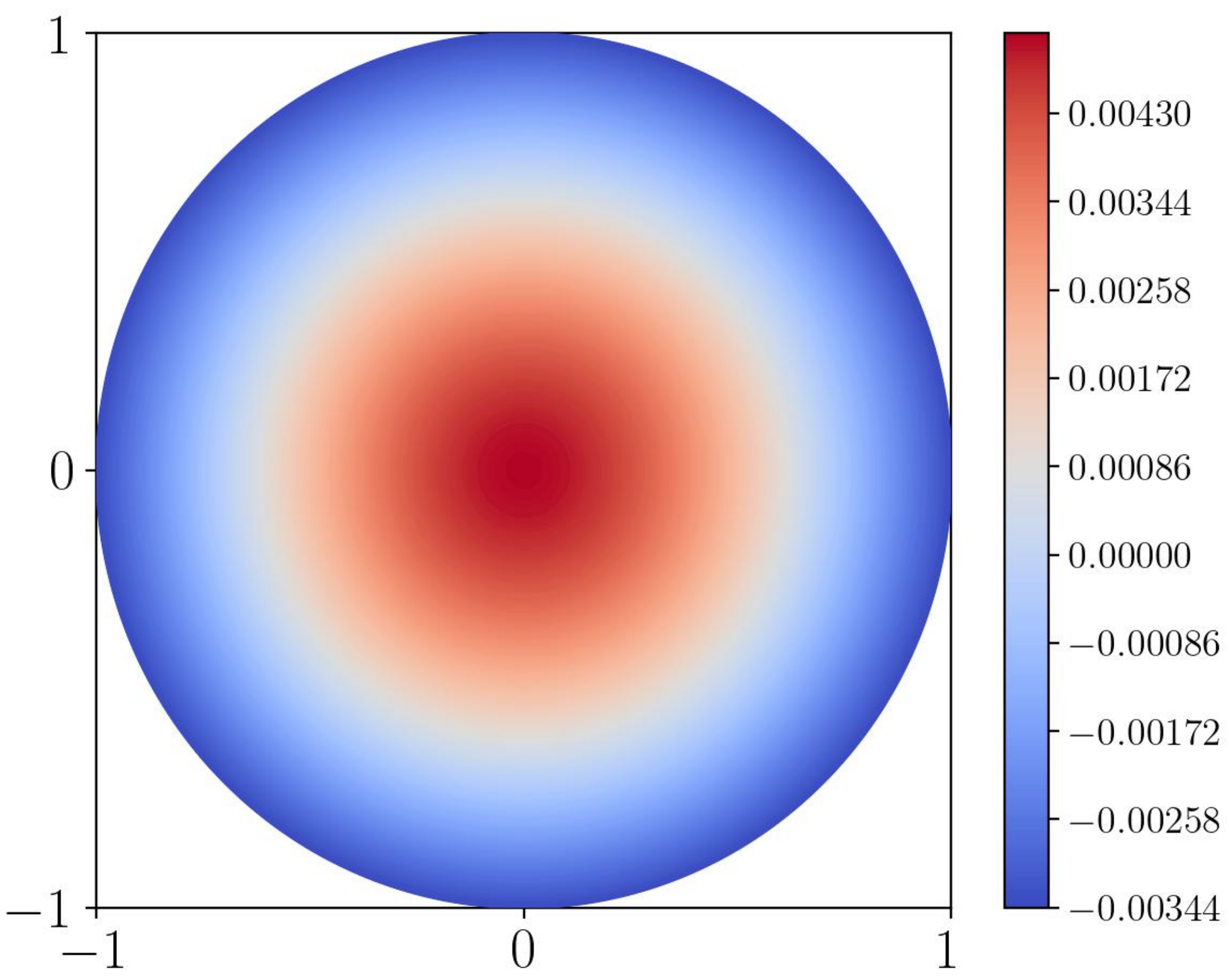}
    \subcaption{$\lambda_{6}$.}
    \end{subfigure}\\
     \begin{subfigure}[c]{0.1666\textwidth}
    \centering            \includegraphics[width=0.985\textwidth]{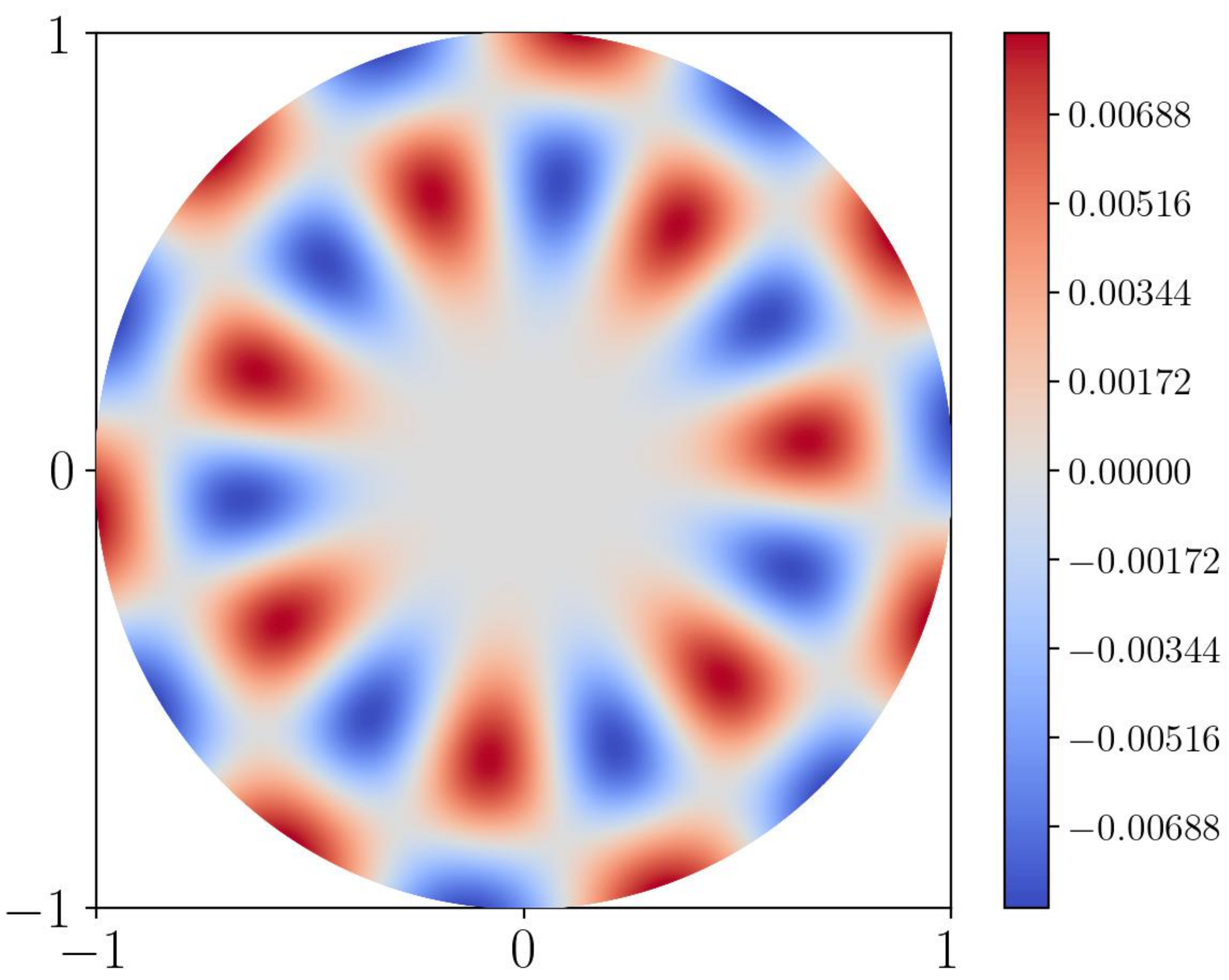}
    \subcaption{$\lambda_{50}$.}
    \end{subfigure}\hfill
    \begin{subfigure}[c]{0.1666\textwidth}
    \centering            \includegraphics[width=0.985\textwidth]{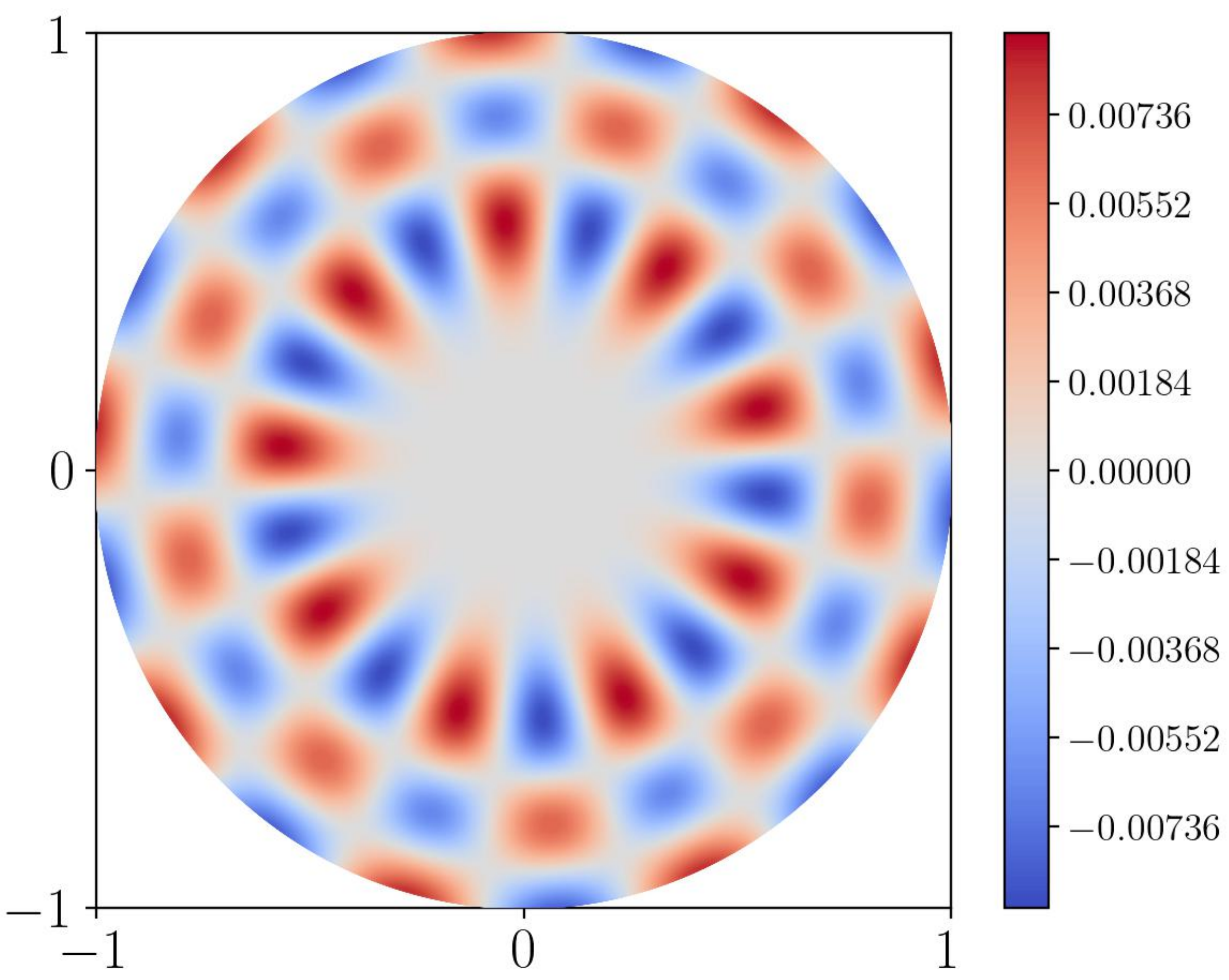}
    \subcaption{$\lambda_{100}$.}
    \end{subfigure}\hfill
    \begin{subfigure}[c]{0.1666\textwidth}
    \centering            \includegraphics[width=0.985\textwidth]{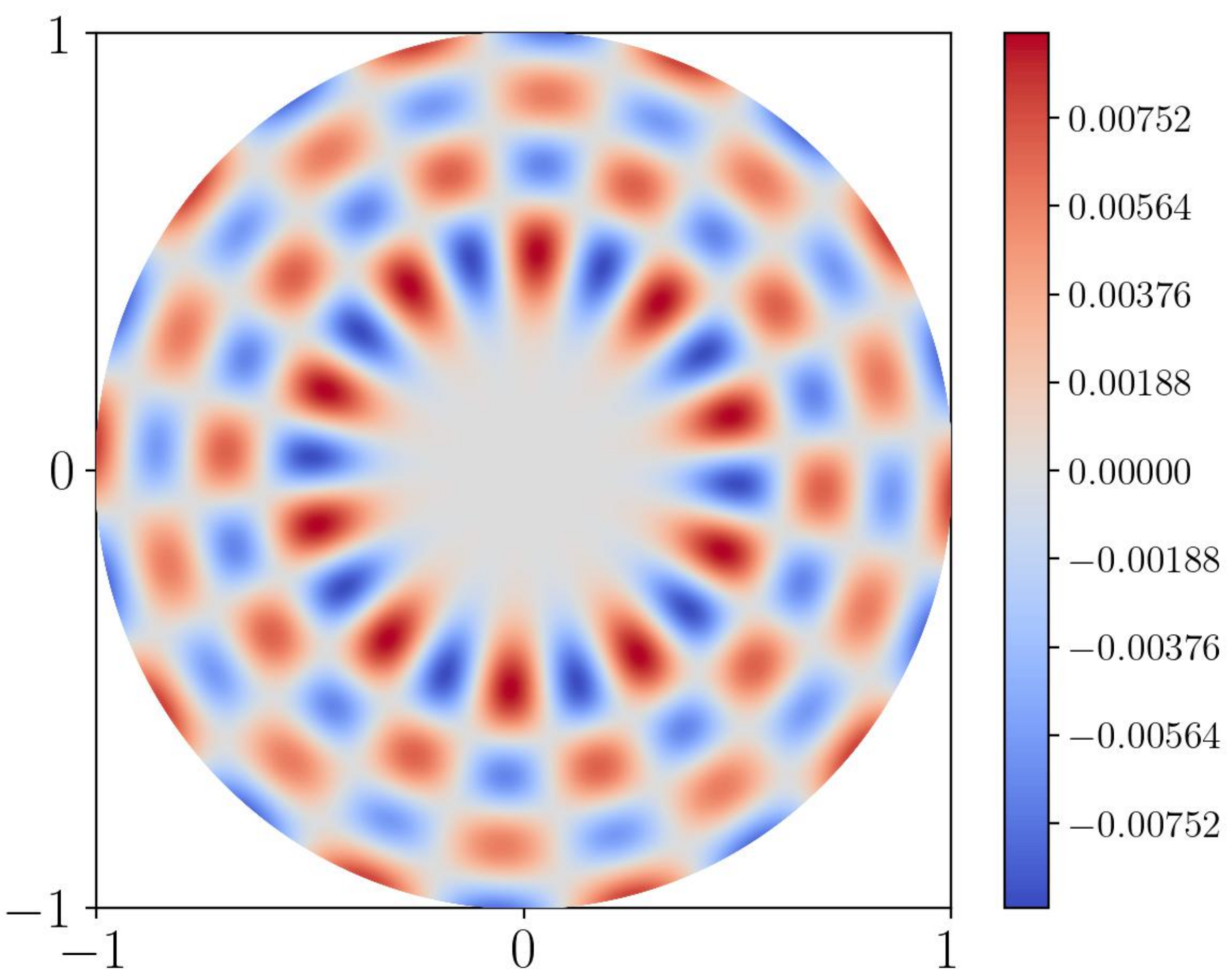}
    \subcaption{$\lambda_{150}$.}
    \end{subfigure}\hfill
    \begin{subfigure}[c]{0.1666\textwidth}
    \centering            \includegraphics[width=0.985\textwidth]{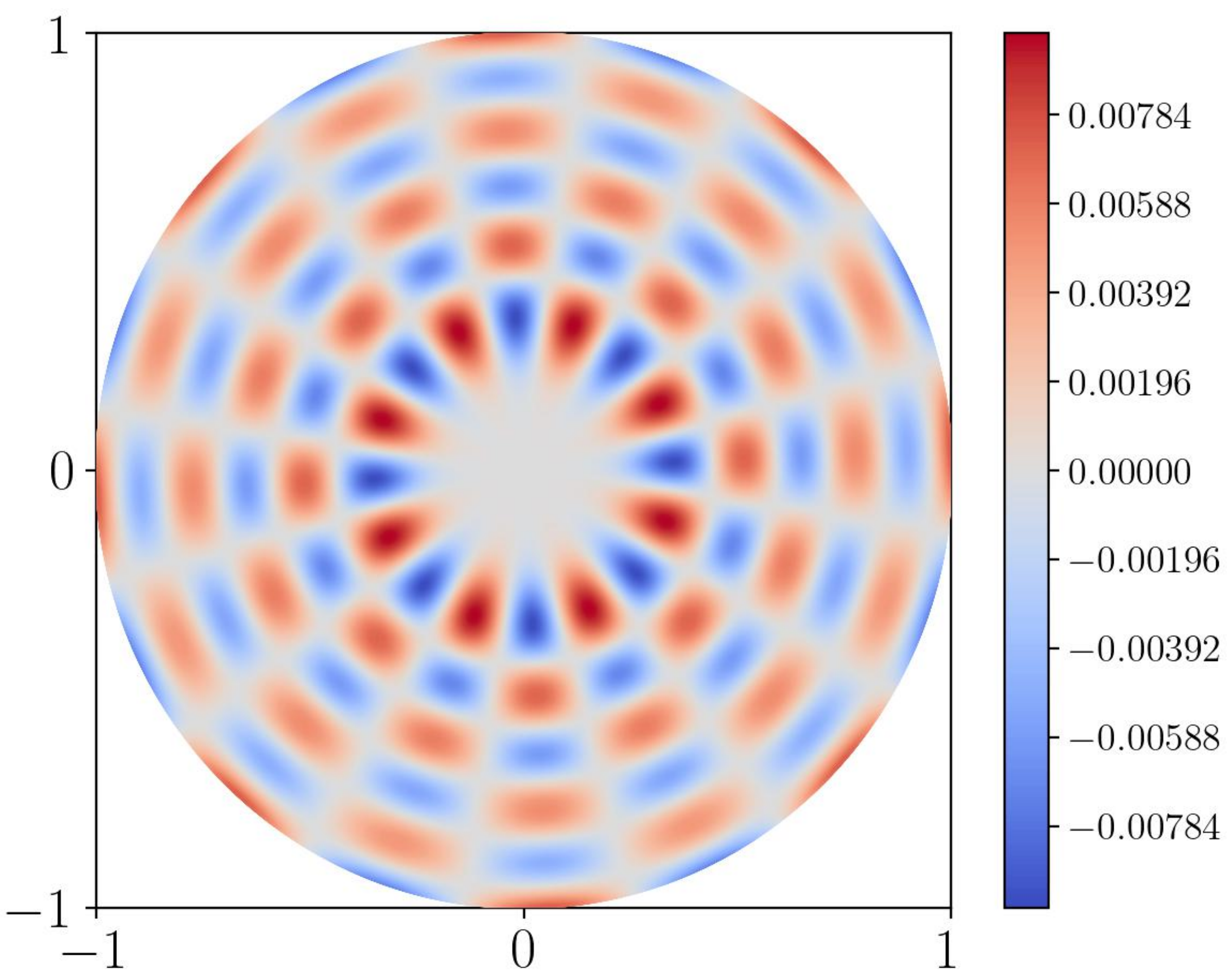}
    \subcaption{$\lambda_{200}$.}
    \end{subfigure}\hfill
    \begin{subfigure}[c]{0.1666\textwidth}
    \centering            \includegraphics[width=0.985\textwidth]{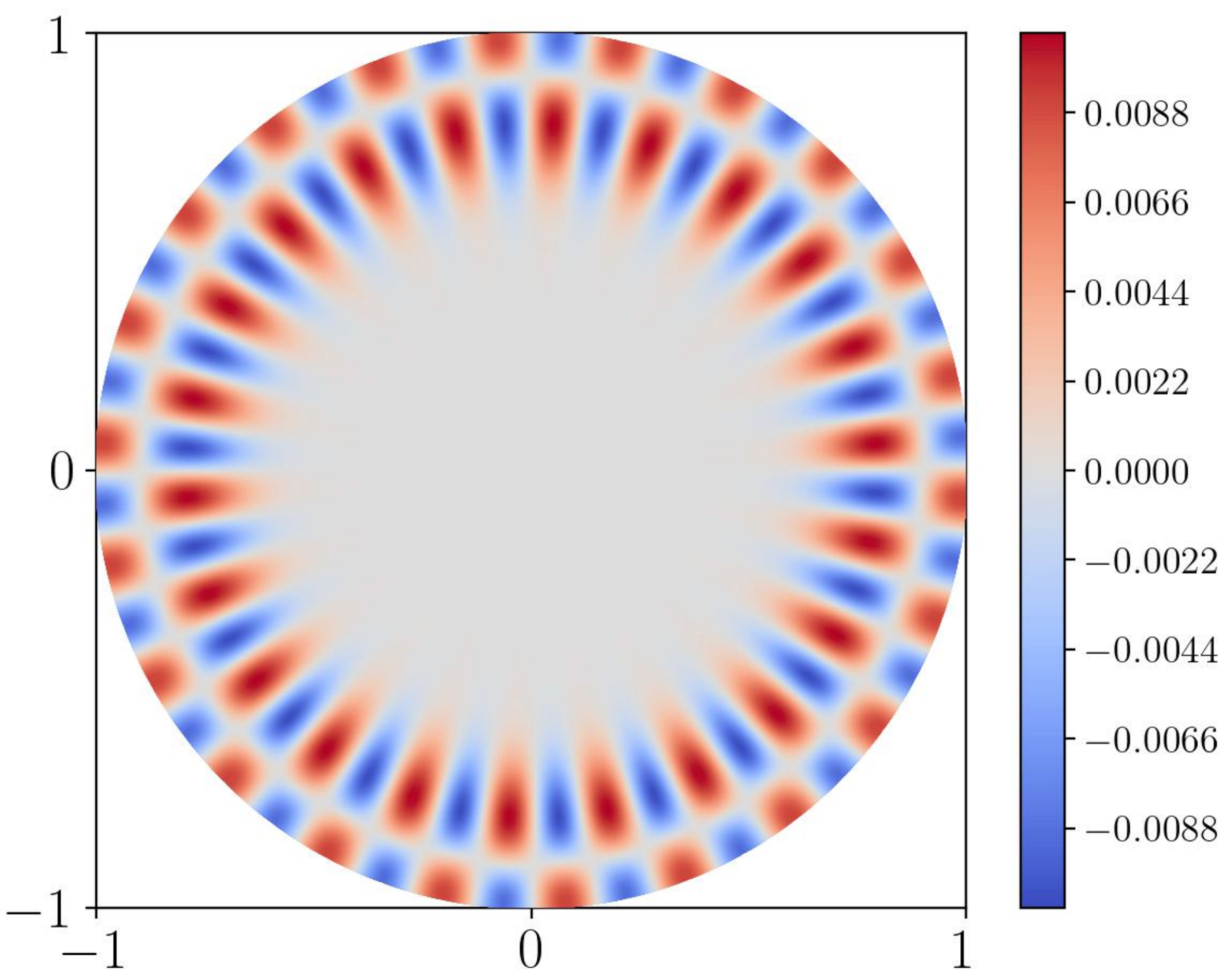}
    \subcaption{$\lambda_{250}$.}
    \end{subfigure}\hfill
    \begin{subfigure}[c]{0.1666\textwidth}
    \centering            \includegraphics[width=0.985\textwidth]{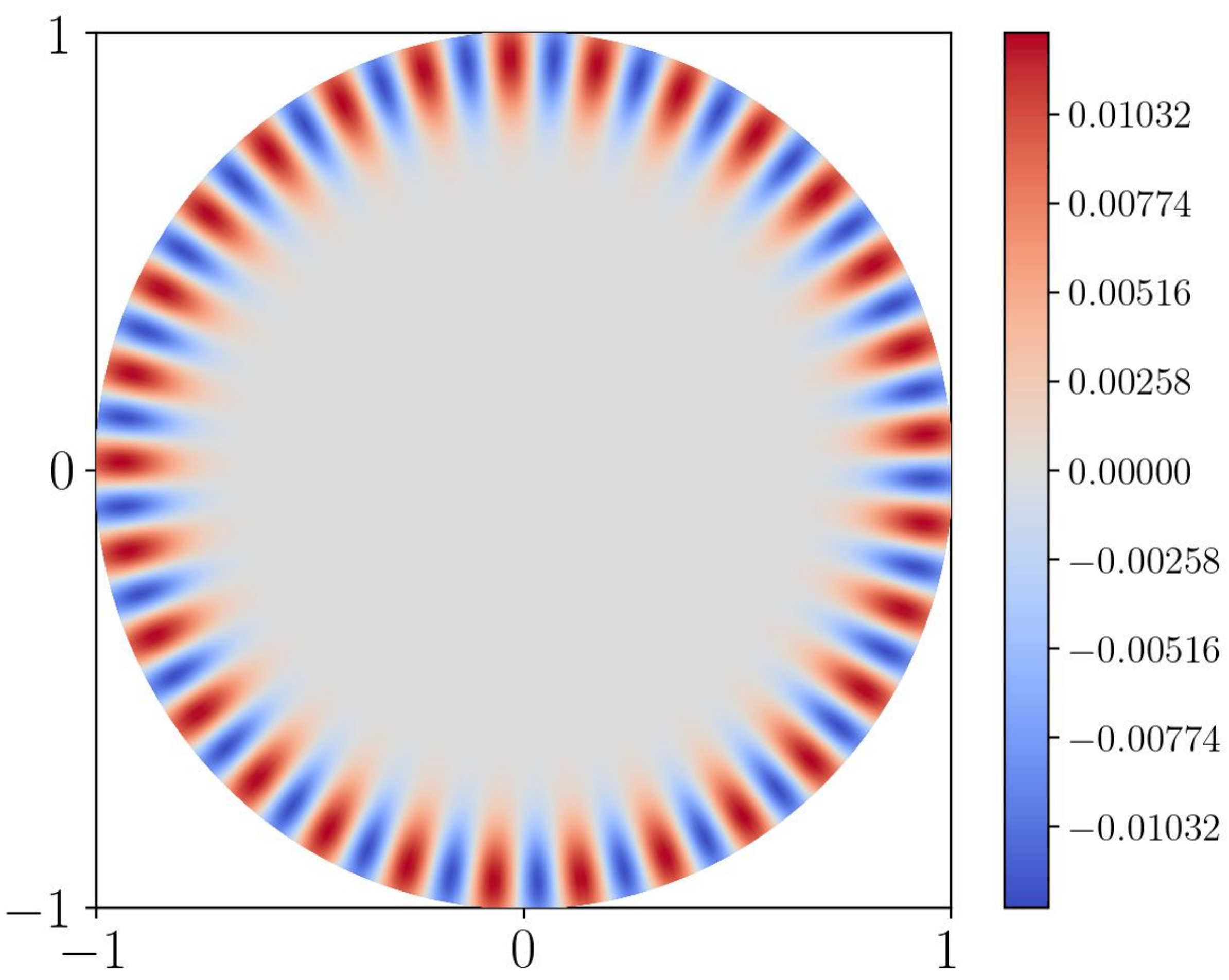}
    \subcaption{$\lambda_{300}$.}
    \end{subfigure}
    \caption{Eigenmodes of $\lambda_k$ 
    for $k\in\{1,2,3,4,5,6,50,100,150,200,250,300\}$ 
    with $n=25600$.}
    	\label{fig:eigenmodes:2D}
\end{figure}

\subsection{Different activation functions and scaled parameter initialization}
\label{sec:act-comp}

Here we demonstrate the behaviors and their comparison for two-layer neural networks using different activation functions, denoted generically by $\sigma$. The activation functions compared here are \texttt{ReLU, sine}, and \texttt{tanh} and its first and second derivatives $\texttt{tanh}'$, $\texttt{tanh}''$. We limit our discussions to two-layer networks in one dimension. Figure~\ref{fig:acts}(a) plots these activation functions. 

A general two-layer network can be regarded as a parametrized function, denoted by $h(x;\bma,\bmw,\bmb)$, represented as the linear combination of a set of activation (or basis) functions parametrized by $\bmw$ and $\bmb$:
\begin{equation}
\label{eq:standard}
h(x;\bma,\bmw,\bmb) = \sum_{i=1}^n a_i \sigma(w_i x + b_i).
\end{equation}
As the default setting, we set $x\in [-1,1]$ and initialize  $w_i, b_i \sim \mathcal{U}(-1,1)$ and $a_i \sim \mathcal{U}(-1/\sqrt{n}  1/\sqrt{n})$ uniformly distributed. 

As discussed above, the conditioning of a two-layer network representation~\eqref{eq:standard} is determined by the spectrum of the Gram Matrix $\bmG$, where
\[
\bmG_{i,j}=\int_{-1}^1\sigma(w_ix+b_i)\sigma(w_jx+b_j) dx, \quad w_i, b_i \sim \mathcal{U}(-1,1).
\]
The logarithmic plot of the spectra of the Gram matrices corresponding to different activation functions are shown in Figure~\ref{fig:acts}(b).
Except \texttt{ReLU}, the Gram matrix of which has a polynomial decay as shown above,  all other activation functions are analytic and hence their corresponding Gram matrices have exponential spectral decay \cite{reade1983eigenvalues,reade1984eigenvalues}.  
\begin{figure}
    \centering	
        \begin{subfigure}[c]{0.4833\linewidth}
    \centering            \includegraphics[width=0.9985\textwidth]{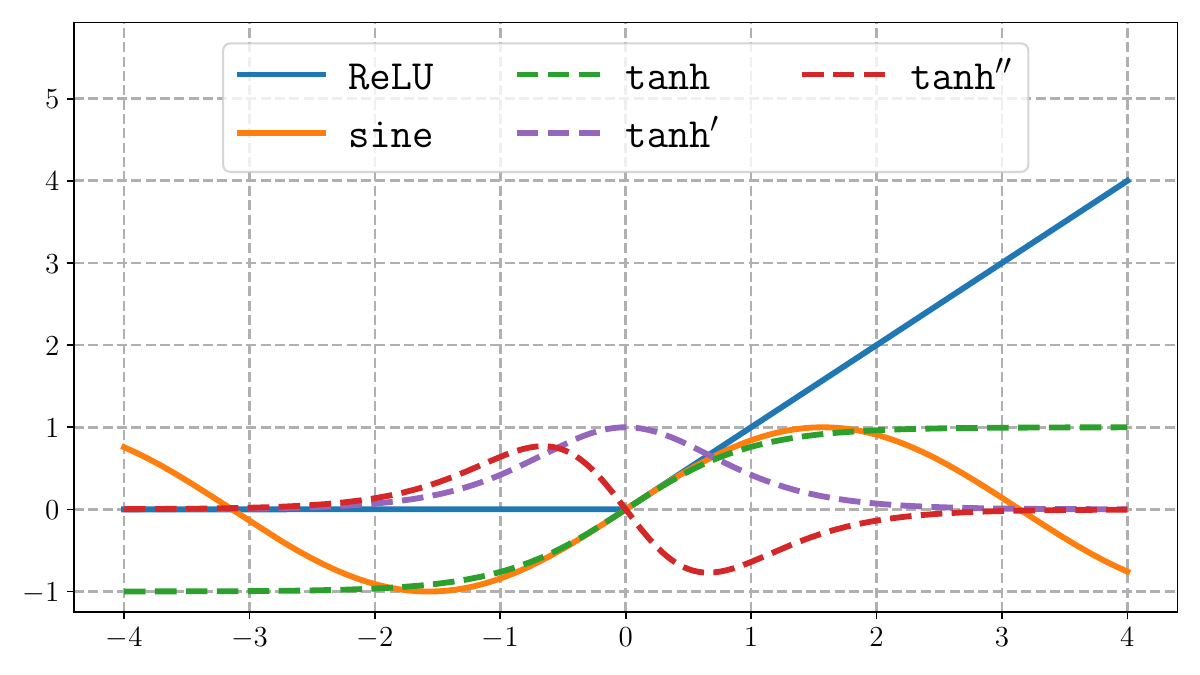}
    \subcaption{Activation functions.}
    \end{subfigure}
    \hfill
    \begin{subfigure}[c]{0.4833\linewidth}
    \centering            \includegraphics[width=0.9985\textwidth]{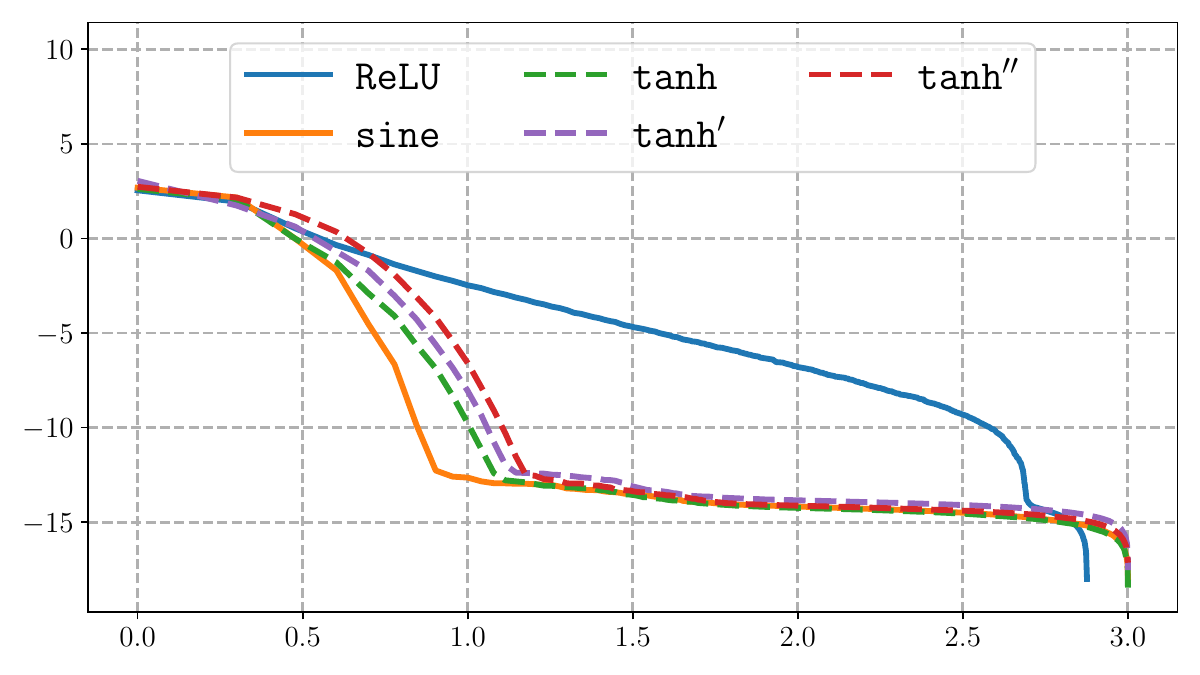}
    \subcaption{Spectrum of
    $\bmG$
    with $\bmw,\bmb\in \calU(-1,1)$.}
    \end{subfigure}\hfill  
    \caption{Illustrations of different activation functions and the corresponding sepctrum.}
    	\label{fig:acts}
\end{figure}

Figure~\ref{fig:act-comp} shows the approximation of $f(x)=\frac{1}{1+3600(x-0.2)^2}$ on $[-1,1]$ using two-layer networks~\eqref{eq:standard} 
with different activation functions. The networks have a width $n=512$ and computations are implemented in double precision. The first row shows the least square approximation with fixed $\bmw,\bmb \in \mathcal{U}(-1,1)$ and solving the linear system for $\bma$ with Gram matrix $\bmG$. 
The second row presents the approximation results obtained by training with Adam, where the parameters \((a_i, w_i, b_i)\) are optimized starting from uniform initialization: \(\bma \sim \mathcal{U}(-1/\sqrt{n}, 1/\sqrt{n})\), and \(\bmw, \bmb \sim \mathcal{U}(-1, 1)\).
Based on the spectral analysis, \texttt{ReLU} provides the best approximation due to its slowest spectral decay, or equivalently, its least bias against high frequencies among this group of activation functions. Among the remaining activation functions, $\texttt{tanh}''$ yields better results owing to its relatively slower spectral decay.


\begin{figure}
    \centering	
        \begin{subfigure}[c]{0.192555\linewidth}
    \centering            \includegraphics[width=0.9985\textwidth]{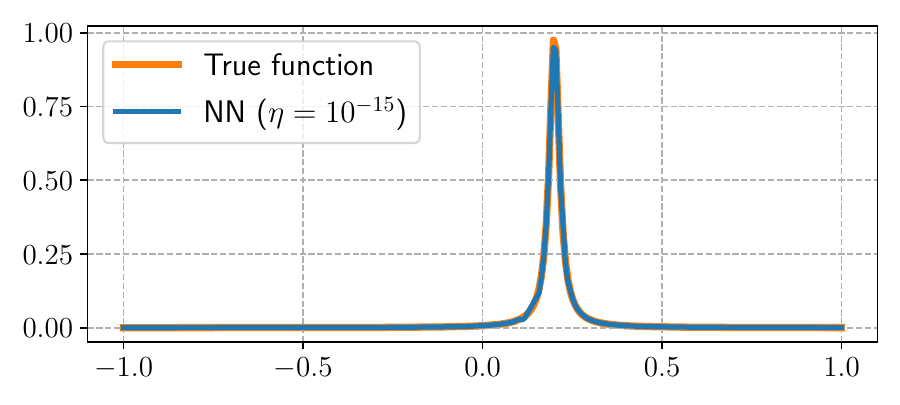}
    \subcaption{\texttt{ReLU}.}
    \end{subfigure}
    \hfill
        \begin{subfigure}[c]{0.192555\linewidth}
    \centering            \includegraphics[width=0.9985\textwidth]{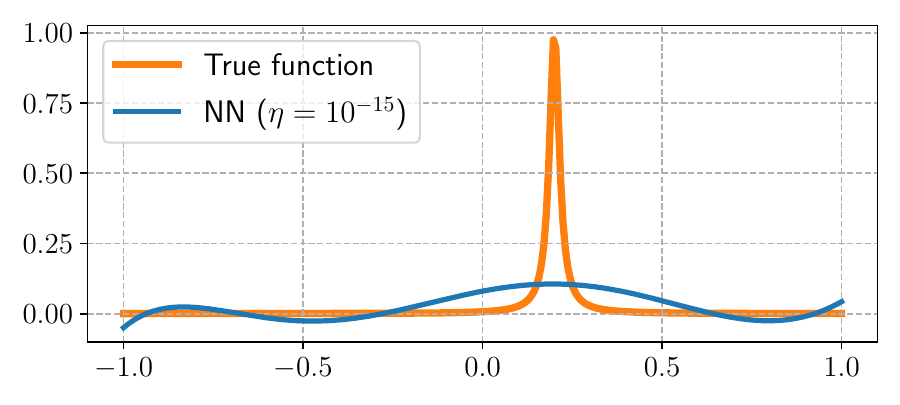}
    \subcaption{\texttt{sin}.}
    \end{subfigure}
    \hfill        \begin{subfigure}[c]{0.192555\linewidth}
    \centering            \includegraphics[width=0.9985\textwidth]{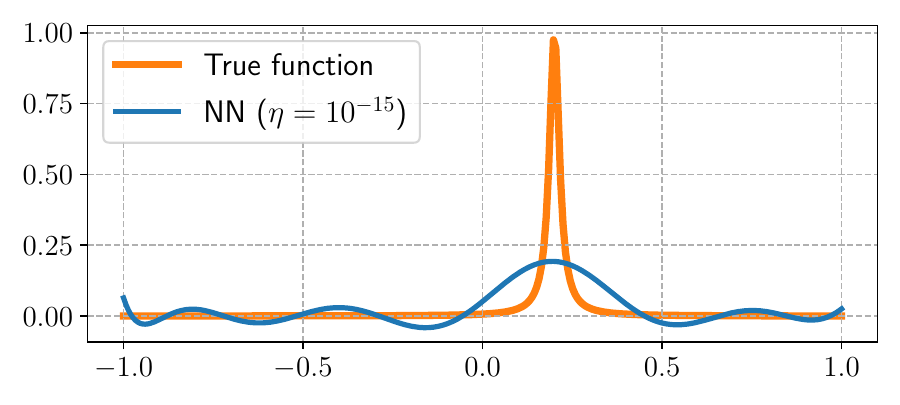}
    \subcaption{\texttt{tanh}.}
    \end{subfigure}
    \hfill        \begin{subfigure}[c]{0.192555\linewidth}
    \centering            \includegraphics[width=0.9985\textwidth]{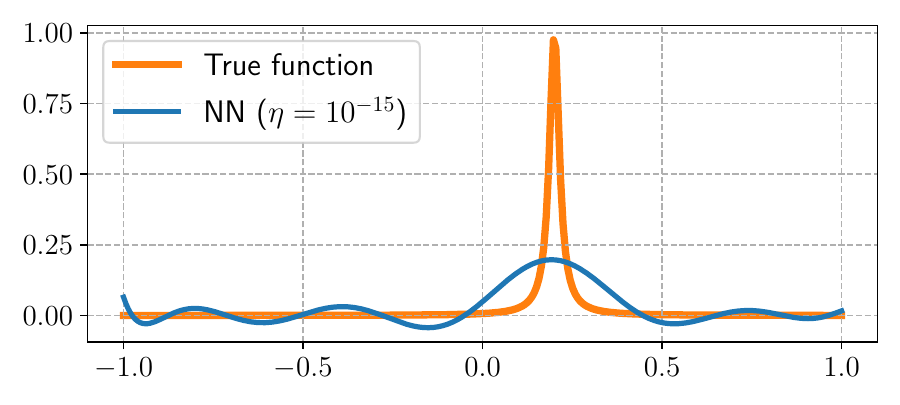}
    \subcaption{$\texttt{tanh}^{\prime}$.}
    \end{subfigure}
    \hfill        \begin{subfigure}[c]{0.192555\linewidth}
    \centering            \includegraphics[width=0.9985\textwidth]{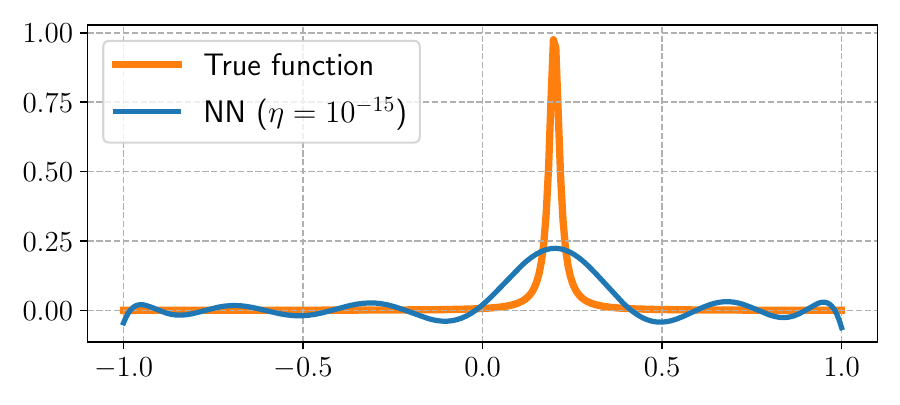}
     \subcaption{$\texttt{tanh}^{\prime\prime}$.}
    \end{subfigure}
\\
        \begin{subfigure}[c]{0.192555\linewidth}
    \centering            \includegraphics[width=0.9985\textwidth]{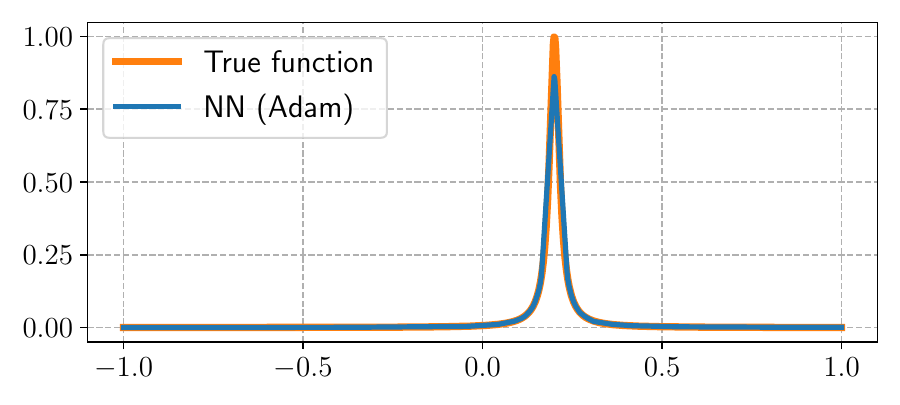}
    \subcaption{\texttt{ReLU}.}
    \end{subfigure}
    \hfill
        \begin{subfigure}[c]{0.192555\linewidth}
    \centering            \includegraphics[width=0.9985\textwidth]{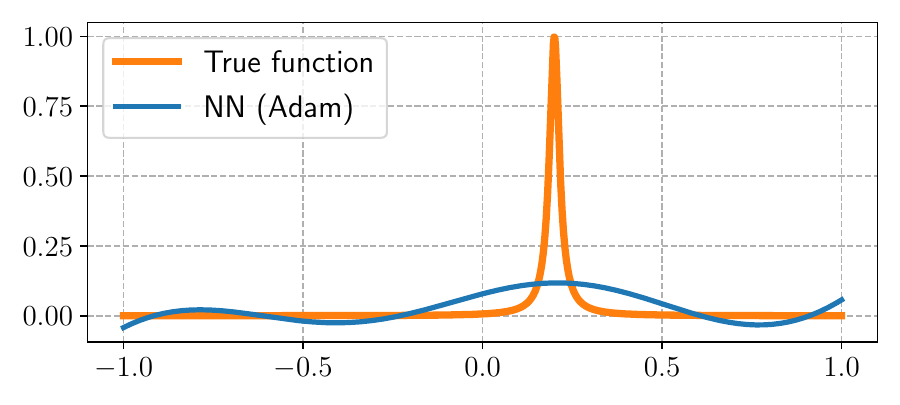}
    \subcaption{\texttt{sin}.}
    \end{subfigure}
    \hfill        \begin{subfigure}[c]{0.192555\linewidth}
    \centering            \includegraphics[width=0.9985\textwidth]{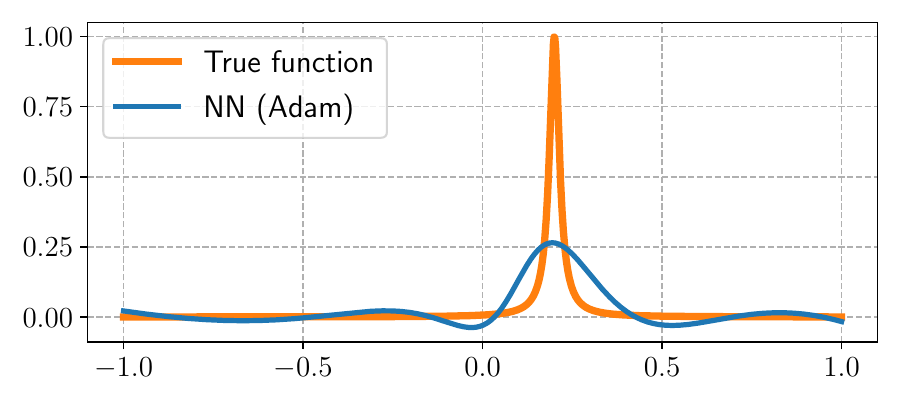}
    \subcaption{\texttt{tanh}.}
    \end{subfigure}
    \hfill        \begin{subfigure}[c]{0.192555\linewidth}
    \centering            \includegraphics[width=0.9985\textwidth]{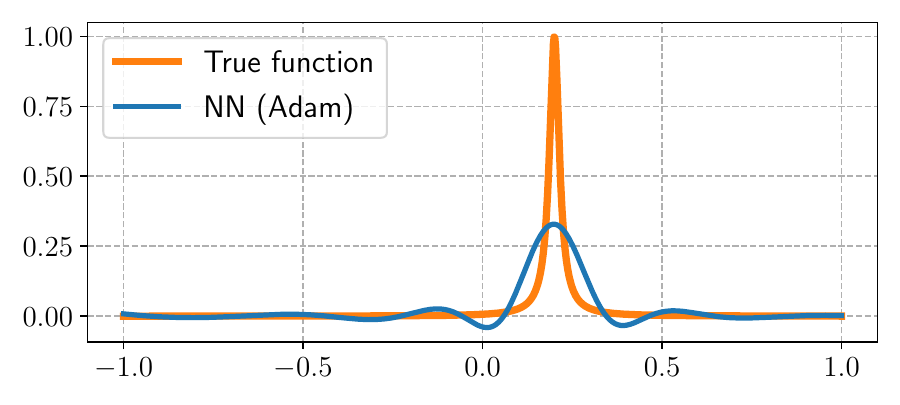}
    \subcaption{$\texttt{tanh}^{\prime}$.}
    \end{subfigure}
    \hfill        \begin{subfigure}[c]{0.192555\linewidth}
    \centering            \includegraphics[width=0.9985\textwidth]{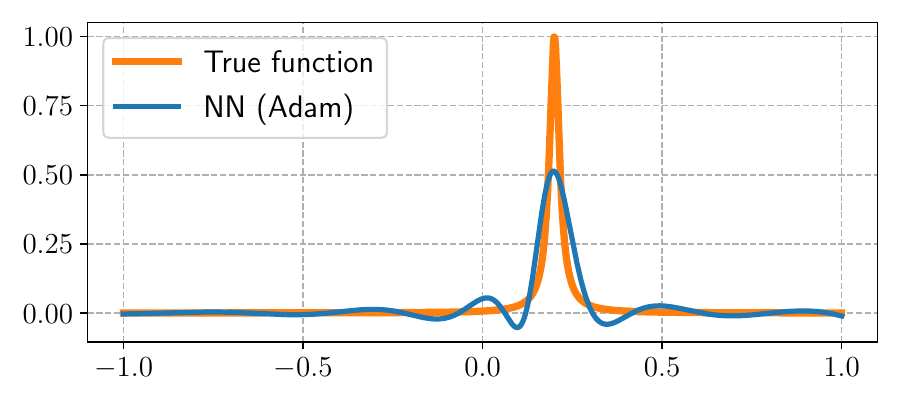}
     \subcaption{$\texttt{tanh}^{\prime\prime}$.}
    \end{subfigure}
    \caption{First row: using least square approximation with $n=512$, fixing $\bmw,\bmb\in \calU(-1,1)$.
    Second row: optimization using Adam with $n=512$, initialization $\bmw,\bmb\in \calU(-1,1)$. 
    All tests were conducted using double precision.
    }
    	\label{fig:act-comp}
\end{figure}

Actually, for two-layer neural networks using activation functions that are not homogeneous of degree one such as \texttt{ReLU}, instead of initializing $w_i, b_i$ uniformly distributed in $(-1,1)$, one can scale the range to be $(-s,s)$. By choosing a larger $s$, one can improve the representation capability of a two-layer neural network~\eqref{eq:standard1}. One way to see this is again through spectral analysis. The introduction of larger $w_i$ and hence basis functions with more rapid changes (larger derivatives) leads to a slower spectral decay of the corresponding Gram matrix, $\bmG_{i,j}=\int_{-1}^1\sigma\big(w_i(x+b_i)\big)\sigma\big(w_j(x+b_j)\big) dx$ with $\bmw \in \calU(-s,s)$ and $ \bmb\in \calU(-1,1)$, as shown in Figure~\ref{fig:specturm:scalew}.

\begin{figure}
    \centering	
        \begin{subfigure}[c]{0.3283\linewidth}
    \centering            \includegraphics[width=0.9985\textwidth]{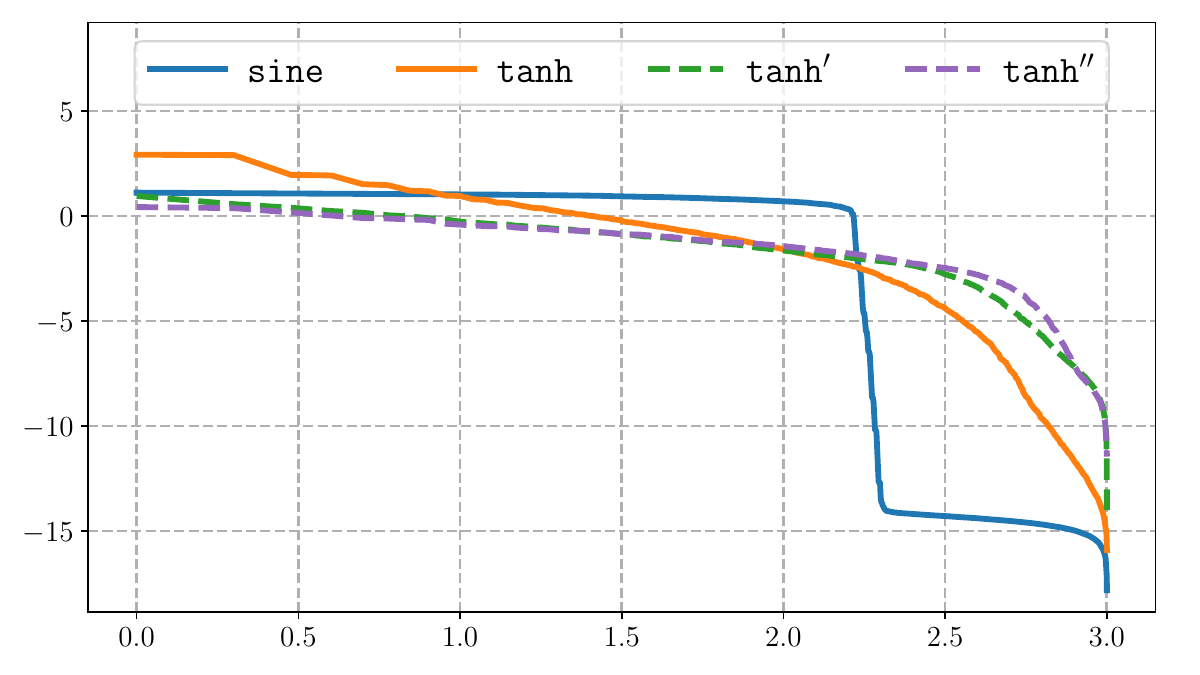}
    \subcaption{$s=256$.}
    \end{subfigure}
    \hfill
             \begin{subfigure}[c]{0.3283\linewidth}
    \centering            \includegraphics[width=0.9985\textwidth]{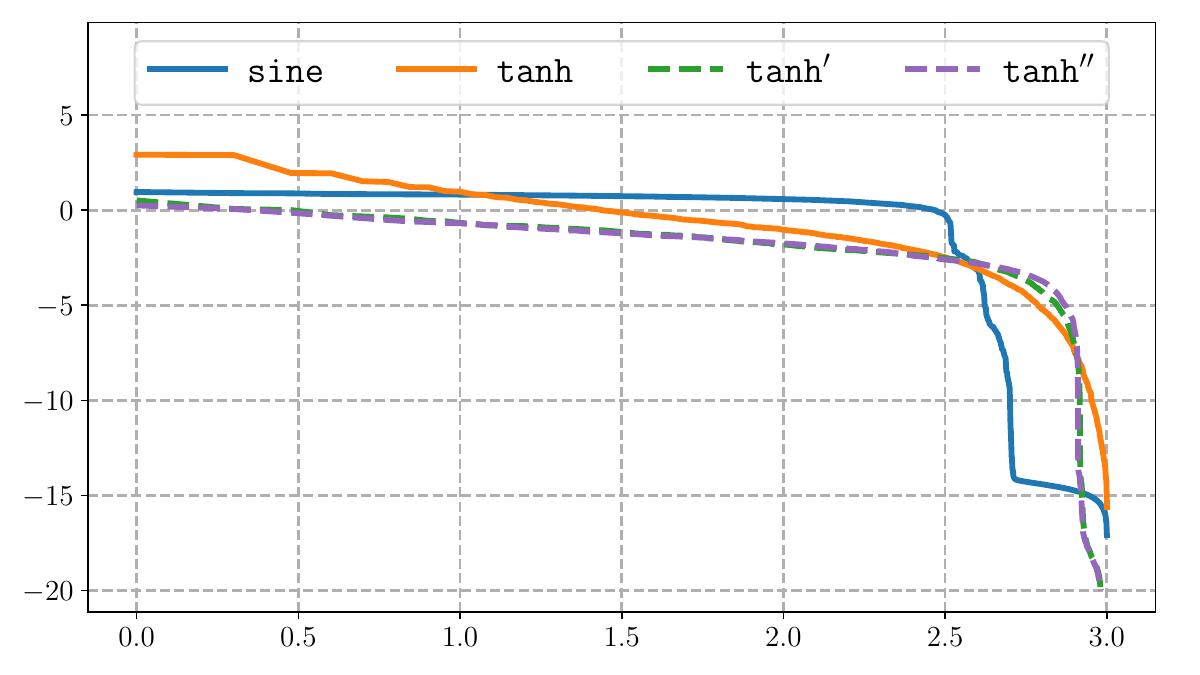}
    \subcaption{$s=512$.}
    \end{subfigure}
    \hfill
    \begin{subfigure}[c]{0.3283\linewidth}
    \centering            \includegraphics[width=0.9985\textwidth]{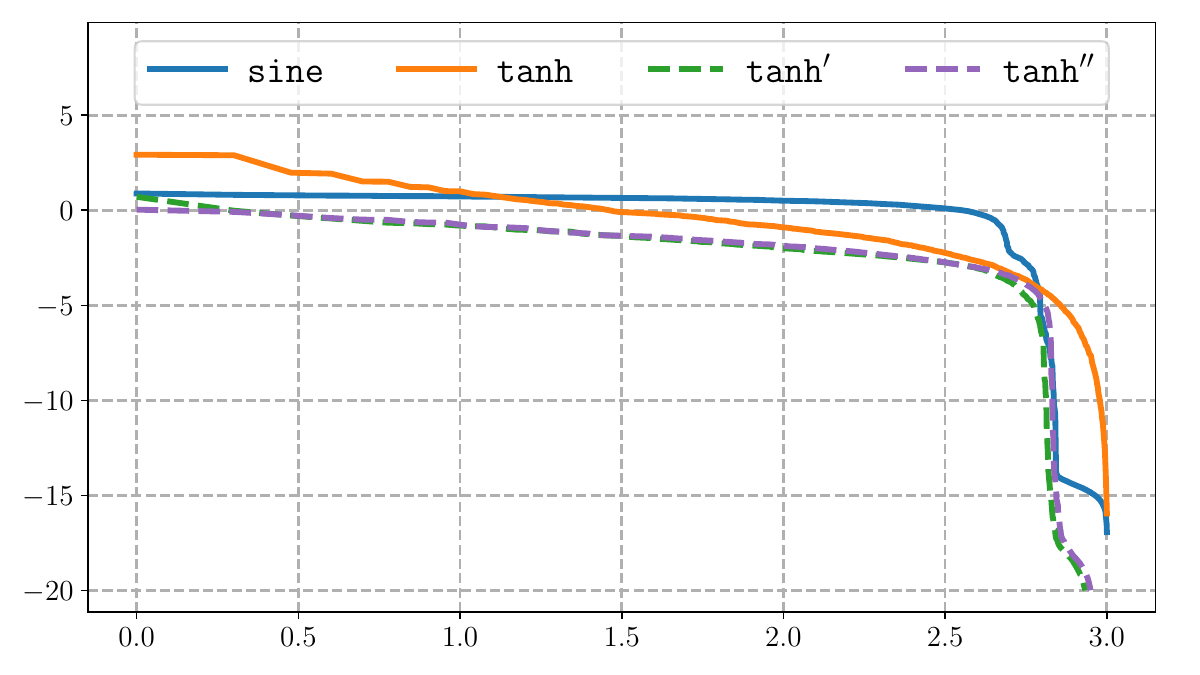}
    \subcaption{$s=768$.}
    \end{subfigure}\hfill  
    \caption{Spectrum of
    $\bmG$
    with $\bmw \in \calU(-s,s)$ and $ \bmb\in \calU(-1,1)$, where $\bmG_{i,j}$ is given by  $\bmG_{i,j}=\int_{-1}^1\sigma\big(w_i(x+b_i)\big)\sigma\big(w_j(x+b_j)\big) dx$.} 	\label{fig:specturm:scalew}
\end{figure}

Another way to look at this is that scaling $\bmw, \bmb$ by $s$ can be regarded as scaling up the target function by $s$, i.e., approximating $f(x)$ on $[-1,1]$ by a two-layer network $h(x)$
\begin{equation}
\label{eq:standard1}
   h(x) = \sum_{i=1}^n a_i \sigma\big(w_i (x + b_i)\big) \quad \text{on} \ [-1,1] \ \text{with} \  \bmw \in \mathcal{U}(-s,s),\  \bmb \in \mathcal{U}(-1,1)
\end{equation}
is equivalent to approximating $\tilde{f}(x)=f(\frac{x}{s})$ on $[-s,s]$ by the two-layer network $\tilde{h}(x)=h(\frac{x}{s})$
\begin{equation*}
   \tilde{h}(x)= h(\frac{x}{s}) = \sum_{i=1}^n a_i \sigma(w_i x + b_i) \quad \text{on} \ [-s,s] \ \text{with} \ \bmw \in \mathcal{U}(-1,1), \ \bmb \in \mathcal{U}(-s,s).
\end{equation*}

However, in order to be able to resolve the rapid change of activation functions with  derivatives proportional to $s$, or to have biases distributed dense enough in the interval $[-s, s]$, $s$ can be at most proportional to the network width $n$. 
For example, if one uses \texttt{sine} as the activation function, then \(s\) can be at most \(\mathcal{O}(n)\) (preferably \(n/2\)) so that the network can provide a set of maximally diverse random Fourier bases without missing intermediate frequencies.
This is because using linear combinations of $\sin(wx+b_i), i=1,2$, for two different $b_i$ can generate both $\sin(wx)$ and $\cos(wx)$. In the case of equally spaced $w$, using this set of parametrized activation functions is equivalent to the Fourier series with basis $\sin(mx), \cos(mx), m=-\frac{n}{2}+1, \ldots, \frac{n}{2}$. 

The experiment results shown in Figure~\ref{fig:LS:Adam:scaling:w} demonstrate how scaling up $\bmw$ in two-layer networks as in \eqref{eq:standard1} can enhance the representation capability and hence the approximation results. We note that different initializations may lead to different results, but the overall outcomes are largely similar. The subtle issue is how large $s$ should be. It is interesting to observe that for activation functions \texttt{sine}, \texttt{tanh}, and $\texttt{tanh}'$, the maximum magnitudes of their first derivatives are all bounded by 1, $s=\frac{n}{2}$ and $s=n$  work well. Once $s=\frac{3n}{2}$, the results degrade. For $\texttt{tanh}''$, the maximum magnitude of its first derivative is bounded by 2. It can be seen that $s=\frac{n}{2}$ works well but $s=n$ and $s=\frac{3n}{2}$ have degraded results. These tests suggest that the network size need to be able to resolve the rapid change of the activation function which is characterized by $s\cdot \sup_x|\sigma'(x)|$.


\begin{figure}
    \centering	
       \begin{subfigure}[c]{0.2432555\linewidth}
    \centering            \includegraphics[width=0.9985\textwidth]{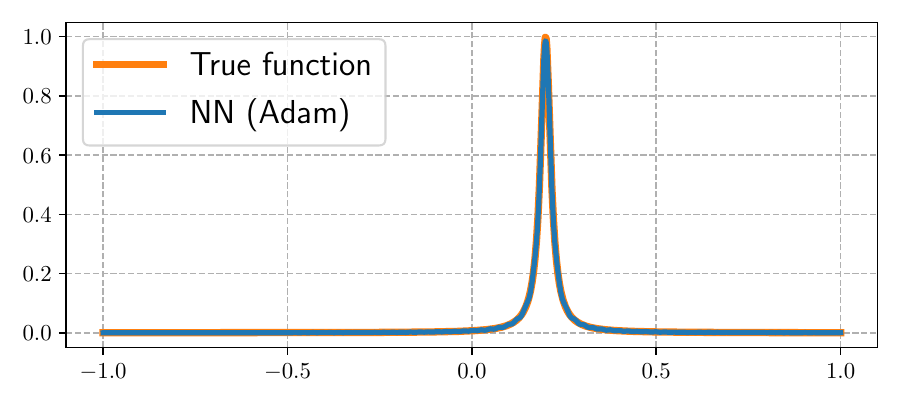}
    \subcaption{\texttt{sin} ($s=n/2$).}
    \end{subfigure}
    \hfill       
    \begin{subfigure}[c]{0.2432555\linewidth}
    \centering            \includegraphics[width=0.9985\textwidth]{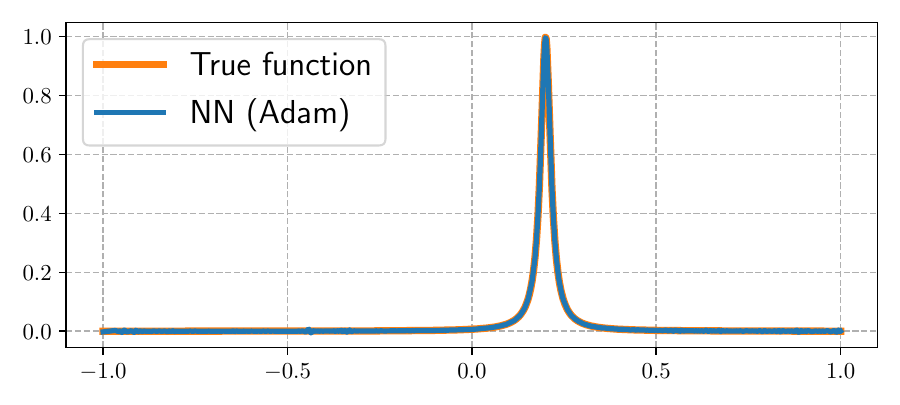}
    \subcaption{$\texttt{tanh} $ ($s=n/2$).}
    \end{subfigure}
           \begin{subfigure}[c]{0.2432555\linewidth}
    \centering            \includegraphics[width=0.9985\textwidth]{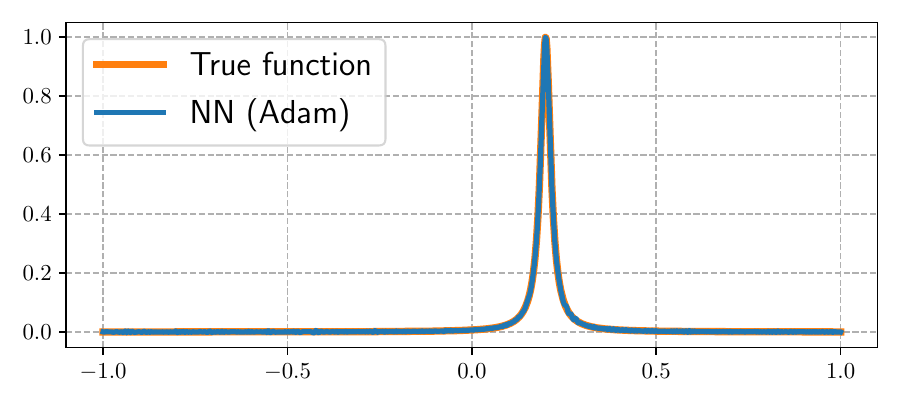}
    \subcaption{$\texttt{tanh}' $  ($s=n/2$).}
    \end{subfigure}
    \hfill        \begin{subfigure}[c]{0.2432555\linewidth}
    \centering            \includegraphics[width=0.9985\textwidth]{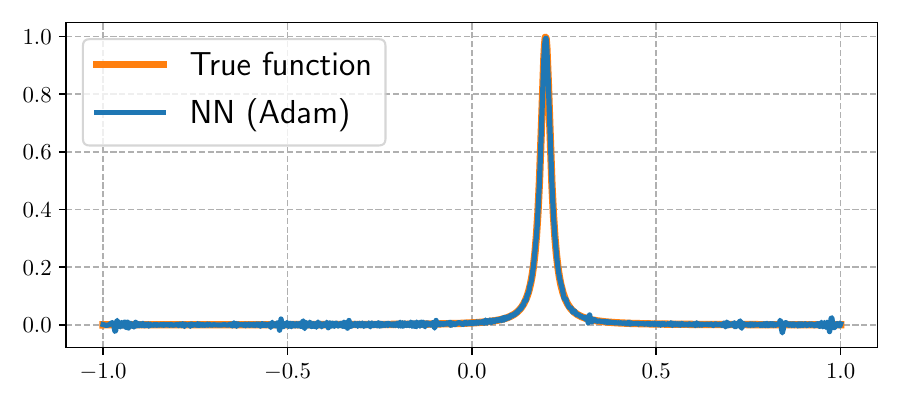}
    \subcaption{$\texttt{tanh}'' $ ($s=n/2$).}
    \end{subfigure}
    \\
           \begin{subfigure}[c]{0.2432555\linewidth}
    \centering            \includegraphics[width=0.9985\textwidth]{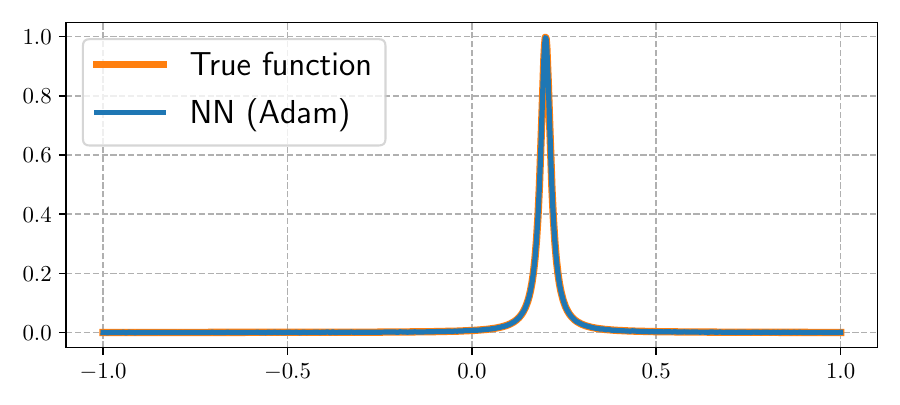}
    \subcaption{\texttt{sin} ($s=n$).}
    \end{subfigure}
    \hfill       
    \begin{subfigure}[c]{0.2432555\linewidth}
    \centering            \includegraphics[width=0.9985\textwidth]{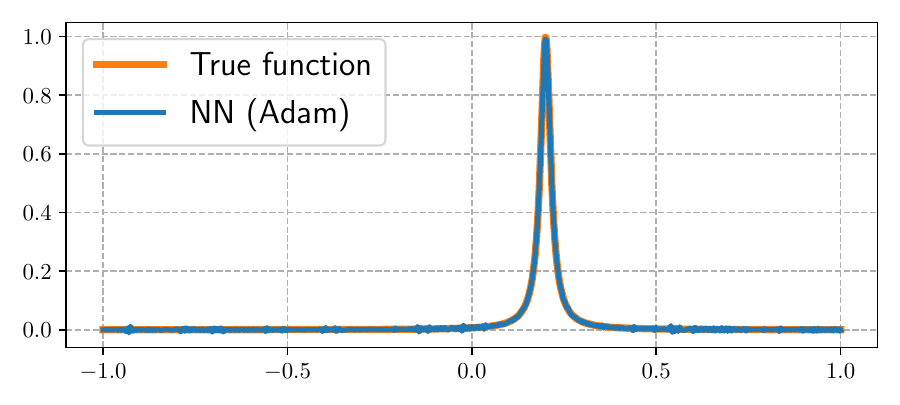}
    \subcaption{$\texttt{tanh} $ ($s=n$).}
    \end{subfigure}
           \begin{subfigure}[c]{0.2432555\linewidth}
    \centering            \includegraphics[width=0.9985\textwidth]{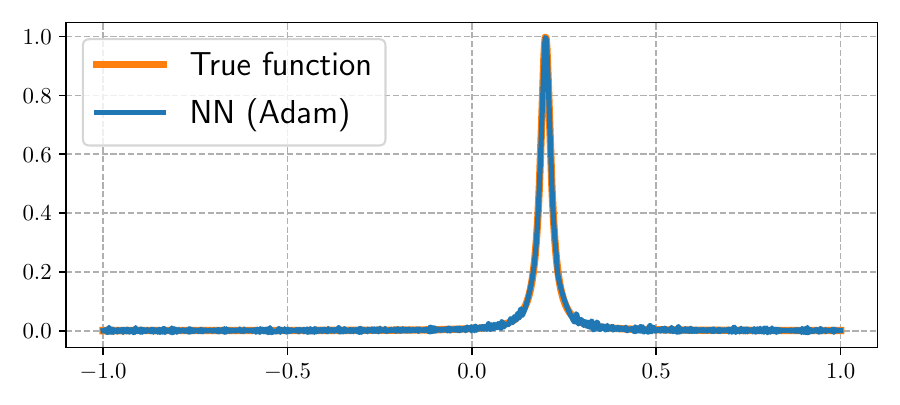}
    \subcaption{$\texttt{tanh}' $  ($s=n$).}
    \end{subfigure}
    \hfill        \begin{subfigure}[c]{0.2432555\linewidth}
    \centering            \includegraphics[width=0.9985\textwidth]{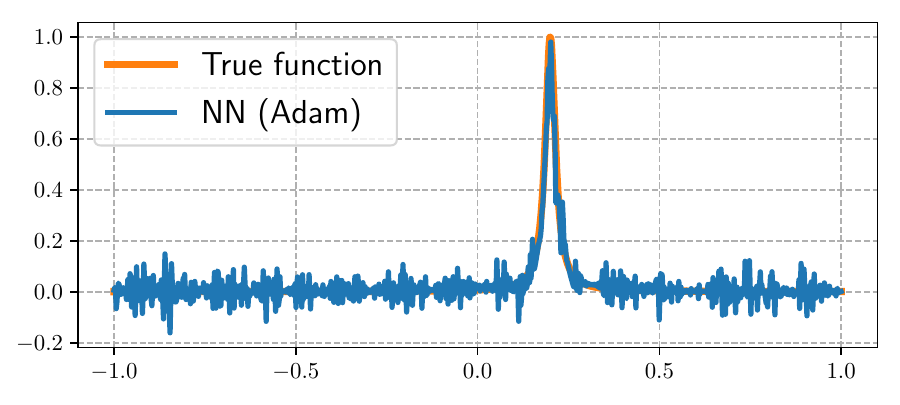}
    \subcaption{$\texttt{tanh}'' $ ($s=n$).}
    \end{subfigure}
    \\
        \hfill  
          \begin{subfigure}[c]{0.2432555\linewidth}
    \centering            \includegraphics[width=0.9985\textwidth]{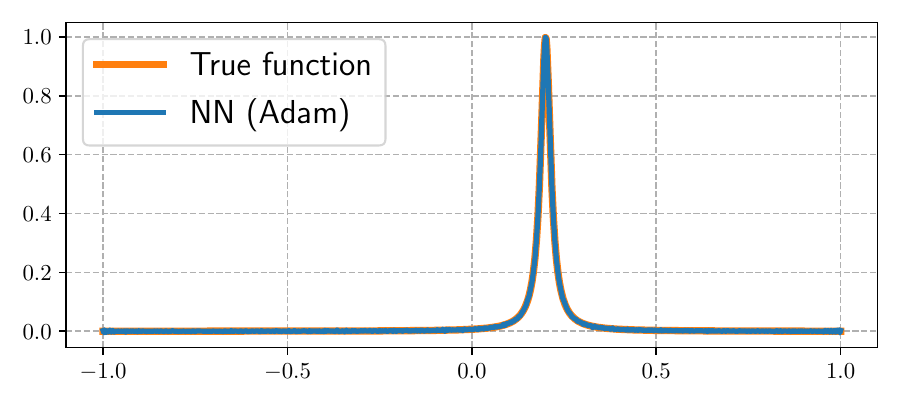}
    \subcaption{\texttt{sin} ($s=3n/2$).}
    \end{subfigure}
    \hfill        
    \begin{subfigure}[c]{0.2432555\linewidth}
    \centering            \includegraphics[width=0.9985\textwidth]{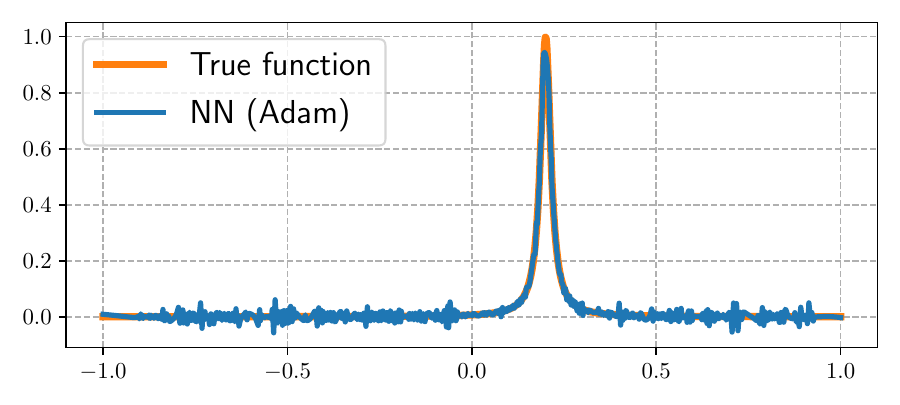}
    \subcaption{$\texttt{tanh} $ ($s=3n/2$).}
    \end{subfigure}
    \hfill  
          \begin{subfigure}[c]{0.2432555\linewidth}
    \centering            \includegraphics[width=0.9985\textwidth]{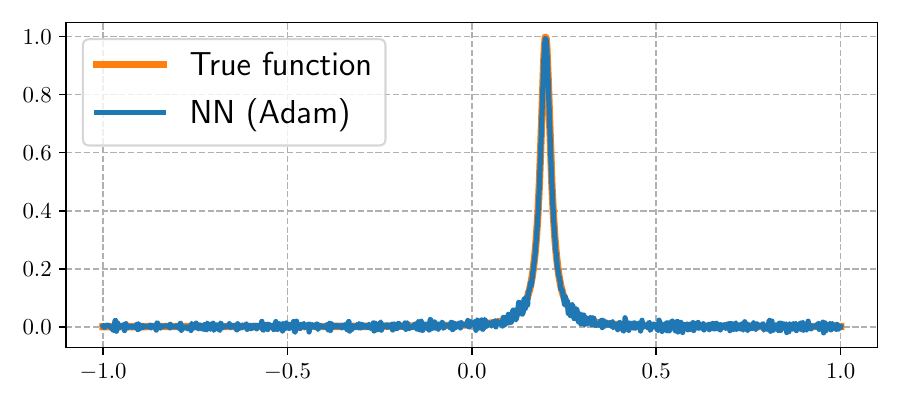}
    \subcaption{$\texttt{tanh}' $  ($s=3n/2$).}
    \end{subfigure}
    \hfill        \begin{subfigure}[c]{0.2432555\linewidth}
    \centering            \includegraphics[width=0.9985\textwidth]{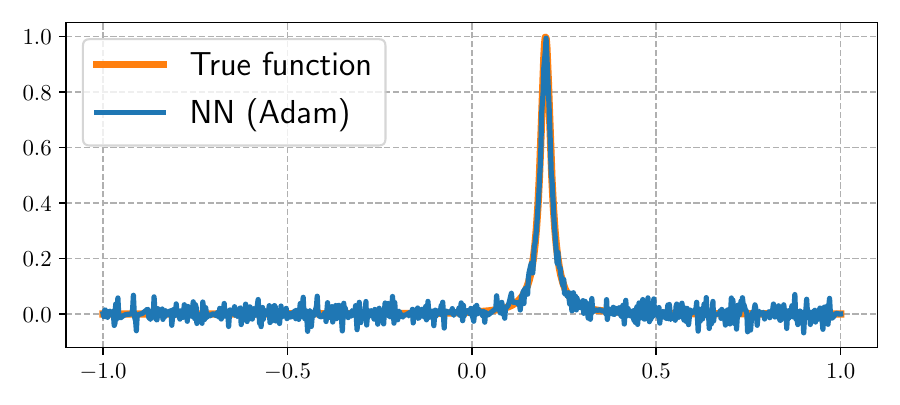}
    \subcaption{$\texttt{tanh}'' $ ($s=3n/2$).}
    \end{subfigure}
    \caption{
   Illustrations of learned networks
   $\sum_{i=1}^n a_i \sigma(w_i (x + b_i))$ optimized using Adam with $n=512$, initialization $\bma \in \calU(-1/\sqrt{n},1/\sqrt{n})$,
    $\bmw \in \calU(-s,s)$, and $ \bmb\in \calU(-1,1)$. 
All tests were conducted in double precision.
    }
    	\label{fig:LS:Adam:scaling:w}
\end{figure}

Although appropriate scaling up $\bmw, \bmb$ can improve approximation significantly over those corresponding results with normalized distribution in Figure~\ref{fig:act-comp}, however, our tests also suggest that the network size needs to be large enough, proportional to $s\cdot \sup_x|\sigma'(x)|$, to resolve the rapid change of the activation function. Moreover, it is difficult for a gradient descent based optimization to effectively learn adaptive $\bmw, \bmb$ to capture those fine features that are biased against in the initial representation. Hence, it implies that the representation capability of a two-layer network is no more than a linear representation which finds the optimal linear combination of a set of a priori given basis functions to approximate a target function, for example, the linear least square approximation. 
In this case, there are already well developed basis functions that provide well-conditioned representations with efficient numerical algorithms (based on solving linear systems), such as finite element bases, Fourier bases, splines, wavelets, which are usually better than using two-layer neural networks when the number of basis functions is equivalent to the network width. On the other hand, the key issue is that the basis functions are given a priori, independent of the target function to be approximated. Hence, to have enough representation capability, the set of basis has to be large and diverse enough and hence suffer from the curse of dimensionality.
We would like to point out that, the scaling of \texttt{ReLU} as the activation function, which is homogeneous degree of degree one, can be absorbed in $\bma$. The distribution of $\bmb$ is equivalent to a grid for piecewise linear approximation in one dimension. Numerically, due to its ill-conditioning, the approximation is worse than using linear representation by finite element basis as discussed in earlier sections.
\begin{remark}
For deep neural networks, scaling up $\bmw, \bmb$ across all layers can cause instability for the training process.
\end{remark}

\subsection{Observations and comments}\label{sec:comments}
Based on the above analysis and numerical experiments, we give a few comments on using shallow neural networks, many of which have been well observed in practice. 
\subsubsection{Low-pass filter nature}
In practice (e.g., MATLAB), solving a linear system is typically approximated/regularized by its {Moore-Penrose} pseudo-inverse by cutting off the small eigenvalues at a threshold of $n\mathcal{\eps}\lambda_1$ to control the computer roundoff error, where $n$ is the matrix size and $\eps$ is the machine precision or noise level in the data. For a two-layer \texttt{ReLU} network with evenly or uniformly distributed biases and width $n\ge k$, given the spectral decay of the Gram matrix, $\lambda_k=\Theta(k^{-\frac{d+3}{d}})$, only leading modes for $k\le m =\cO( \epsilon^{-\frac{d}{2d+3}})$  can be stably recovered in the least-square approximation. If the network is wide enough such that all leading modes up to $m$ can be approximated well, the dominant numerical error is caused by the truncation of those modes higher than $m$. 
From~\eqref{eq:expansion}, we see that truncation of the higher modes in the parameter space leads to a low pass filter in the approximation of the original function since $u_k$ is equivalent to the eigenfunction of the Laplace operator asymptotically.  In other words,  at most all eigenmodes of the Laplace operator up to frequency $\mathcal{O}(\mathcal{\epsilon}^{-\frac{1}{2d+3}})$ can be captured in $d$ dimensions no matter how wide the network is. This is because, 1) before the network width reaches a threshold $n_{\epsilon,d}=\mathcal{O}(\mathcal{\epsilon}^{-\frac{d}{2d+3}})$, the network does not have a grid resolution to resolve the frequency modes of order $\mathcal{O}(\mathcal{\epsilon}^{-\frac{1}{2d+3}})$, 2) even as the network width passes $n_{\epsilon,d}$, the network can only approximate leading modes $k$ such that $\lambda_k\ge n\epsilon \lambda_1 \Rightarrow k^{\frac{1}{d}} \le \mathcal{O}(\mathcal{\epsilon}^{-\frac{1}{2d+3}})$ due to the ill-conditioning of the representation.

The machine precision for single and double precision are: $\mathcal{\epsilon}_1 = 2^{-23}$ and $\mathcal{\epsilon}_2= 2^{-52}$ respectively. Hence, a two-layer neural network can resolve about $2^{23d/(2d+3)}$ and $2^{52d/(2d+3)}$ eigenmodes respectively in $d$-dimensions. The number of modes in each direction that can be resolved is $2^{23/(2d+3)}$ and $2^{52/(2d+3)}$ respectively in $d$-dimensions, which is roughly $24 ~(d=1), 10 ~(d=2), 6 ~(d=3)$, and $2 ~(d=10)$ for single precision, and $1351~(d=1), 172 ~(d=2), 55 ~(d=3)$, and $5 ~(d=10)$ for double precision. 

\begin{remark}
Our analysis applies to the neural tangent kernel (NTK) regime where the network width goes to infinity while the biases are pretty much fixed. 
\end{remark}


\subsubsection{Approximation error}
Given a machine precision $\epsilon$, the number of modes that can be captured by a two-layer $\texttt{ReLU}$ network is at most $\cO(\epsilon^{-\frac{1}{2d+3}})$ (no matter how wide the network is). One can characterize the numerical error for a two-layer $\texttt{ReLU}$ network in two regimes:
\begin{itemize}
    \item \textbf{Small network}\\
    If the network width $n$ is less than $\cO(\epsilon^{-\frac{d}{2d+3}})$, the corresponding grid resolution is less than $h=\cO(n^{-\frac{1}{d}})\le \cO(\epsilon^{\frac{1}{2d+3}})$, which can not resolve the highest mode limited by $\epsilon^{-\frac{1}{2d+3}}$. Hence the numerical error is dominated by the discretization error. Since the resulting approximation is continuous peicewise linear, the $L^2$ approximation error of a function $f$ with bounded Sobolev norm $\|f\|_{H^2}$ is of order $h^2\|f\|_{H^2}\sim n^{-\frac{2}{d}}\|f\|_{H^2}$. In the small network regime, using a smooth activation function may be beneficial when approximating a smooth function due to the reduction of discretization error. 
    \item \textbf{Large network}\\
    When the network is wide enough to resolve the highest mode of order $\cO(\epsilon^{-\frac{1}{2d+3}})$ accurately, then the numerical error is dominated by the truncation of higher modes. For a function $f$ in Sobolev space $H^{p}$, the $L^2$ error due to truncation is $\cO(\epsilon^{\frac{p}{2d+3}}\|f\|_{H^p})$.
\end{itemize}  


\subsubsection{Implications}
The ill-conditioning of two-layer neural network representation and its bias against high frequencies explain why it is widely observed that shallow neural networks can approximate smooth functions well, while for functions with fast transitions or rapid oscillations, one may achieve a numerical accuracy that is far from machine precision even using \emph{wide} shallow networks for which universal approximation is proved in theory. 
On the other hand, the low-pass filter nature of the shallow neural networks also alleviates instability with respect to noise or overfitting/over-parametrization. 

The spectral decay of the Gram matrix for a set of basis depends on the smoothness of the basis. The smoother the activation function is, the faster the spectrum of the corresponding Gram matrix decays (see Remark~\ref{re:relu-k} and Section~\ref{sec:act-comp}), and hence the fewer eigenmodes can be used for approximation in practice and the more bias against high frequencies. 

However, with a fixed network width (or a fixed grid resolution), approximation order induced by the activation function also plays a role. For example, using \texttt{ReLU} results in a piecewise linear approximation, the error of which is proportional to the grid resolution squared (2nd order) if the target function is twice differentiable. If the Heaviside function (the derivative of \texttt{ReLU}) is used as the activation function, although the Gram matrix is better conditioned than using \texttt{ReLU}, the resulting piecewise constant approximation is only 1st order if the target function is differentiable, and hence may limit the numerical accuracy in practice.


In applications, if one can make the target function or map smooth under certain transformation, for instance, a linear transformation with a set of new bases, e.g., Fourier basis with high frequencies, using a neural network in the transformed domain can achieve high accuracy.

\subsubsection{Gradient decent for least squares}
Before we investigate the full nonlinear learning dynamics for two-layer neural networks in the next section, we demonstrate how ill-conditioning in the representation will affect the convergence of a gradient descent based optimization for a simple quadratic convex function corresponding to a linear least square approximation.  Instead of finding the least square solution directly by solving the linear system (normal equation) with the Gram matrix $\bmG$, if one chooses to minimize the least square by using gradient descent, then the dynamics of the coefficients $\bm{a}(t)=[a_1(t), a_2(t), \ldots, a_n(t)]^T$ follow the system of ODEs
\[
\frac{d\bm{a}(t)}{dt}\!=\!-\bm{G}\bm{a}(t)+\bm{f}, \quad \bm{G}_{i,j}\!\!=\!\!\int_D\sigma(\bmw_i\cdot \bmx-b_i)\sigma(\bmw_j\cdot \bmx-b_j)d\bmx, ~\bm{f}_i\!\!=\!\!\int_D f(\bmx)\sigma(\bmw_i\cdot \bmx-b_i)d\bmx.
\]
The system is stiff due to the fast spectral decay of $\bmG$. Let $(\lambda_k, \bm{g}_k)$ be the eigen pairs of $\bmG$ and define $\hat{a}_k(t)=\bm{a}^T(t)\bm{g}_k, \hat{f}_k=\bm{f}^T\bm{g}_k$. We have
\[
\frac{d\hat{a}_k(t)}{dt}=-\lambda_k \hat{a}_k(t)+\hat{f}_k 
\quad \Longrightarrow \quad 
\hat{a}_k(t)=(\hat{a}_k(0)-\frac{\hat{f}_k}{\lambda_k})e^{-\lambda_k t}+\frac{\hat{f}_k}{\lambda_k}.
\]
It takes at least $t>\cO(\lambda_k^{-1})$ for the intial error in $k$-th mode to reduce significantly. Since $\lambda_k\rightarrow 0$ as $k\rightarrow \infty$ and the corresponding eigenmode becomes more and more oscillatory, two-layer neural networks bias against high frequencies in both representation and training. An appropriate stopping time can be used for the sake of computation cost or regularization for noise or machine roundoff error. 




\section{Learning dynamics}
\label{sec:dynamics}
The key feature in machine learning using neural network representation is the training process, which ideally can find the optimal parameters, i.e., an adaptive representation driven by the data. 
The relevant approximation theory has been studied extensively. The best-known approximation error estimates using \texttt{ReLU} as the activation function for two-layer neural network~\cite{barron2018approximation, ongie2019function} are based on the fact that $\sigma''(t) = \delta(t)$, which implies $\Delta f$ can be expressed as a standard Radon transform when viewing the parameters $\{a_i\}_{i=1}^n$ as an empirical measure of $(\bmw, b)\in \bbS^{d-1}\times \bbR$. In practice, gradient-based methods (and the variants) are adopted to seek the optimal parameters. 

Our previous analysis shows that a two-layer neural network with fixed (or randomly sampled) biases will have a low-pass filter nature, which makes it challenging to capture high-frequency components, e.g. rapid changes or fast oscillations. Here we show that the ill-conditioning of the Gram matrix causes difficulties in the learning process as well. Intuitively, without high frequency information, there is no correct guidance in effectively and efficiently optimizing the distribution of the parameters, $(\bmw,b)$, from initial uniform distribution to be non-uniform adaptive to the target function. Often in pracitce, undesirable clustering of parameters, which reduces the effecitve network width, may occur during the training.  



Here we study the following natural question: assuming that a gradient-based optimization can find the optimal solution, which itself is a challenging question in general, what is the training dynamics and its computation cost to attain such a solution from a random initial guess?
In particular, we demonstrate that 
 initial high-frequency component error can take a long time to correct. All these lead to numerical difficulties in achieving the optimal solution even if one assumes the learning process can find the optimal solution in theory. It implies that even if the full learning process is applied, the numerical error can still be far from the machine precision for functions with high-frequency components in practice.

Training a neural network with the gradient flow can be regarded as the gradient descent method with a very small step size for finite time dynamics. We illustrate this using a one-dimensional example. Let $D = [-1,1]$, the gradient flow of $\{a_i\}_{i=1}^n$ and $\{b_i\}_{i=1}^n$ follow 
 \begin{equation*}
 \begin{aligned}
    \frac{d a_i}{d t} &= -\int_{D}  (h(x, t) - f(x)) \sigma(x - b_i) dx\quad\tn{and}\quad
    \frac{d b_i}{d t} &= a_i \int_{D} (h(x, t) - f(x))\sigma'(x - b_i)  dx. 
 \end{aligned}
\end{equation*}
One popular way to analyze the dynamics of the neuron network is the mean-field representation in which the network is written as
\begin{equation*}
    \overline{h}(x, t) = \int_{\bbR^2} a\sigma(x - b) \mu_n(a, b, t) da db 
\end{equation*}
with empirical measure $\mu_n(a, b, t) = \frac{1}{n}\sum_{i=1}^n \delta(a - a_i(t), b- b_i(t))$. 
The analysis of the limiting behavior of mean-field neural networks can be found in~\cite{rotskoff2018trainability,mei2018mean,sirignano2020mean} and the references therein. However, most of the mean-field studies assume the measure $\mu_n(\cdot,\cdot, t)$ converges as $t\to\infty$ and $n\to\infty$. 


Our study works for fully discrete two-layer neural networks with no convergence assumption for the measure $\mu_n$ or requiring $n\to\infty$. The main difficulty for our analysis is the possibility of biases $b_i$ moving out of the bounded domain of interest. Define generalized Fourier modes, $\{\theta_m\}_{m\ge 1}$, which are the eigenfunctions of the Gram kernel $\cG$ in~\eqref{EQ: GRAM KERNEL}, and $\widehat{g}$ as the generalized discrete Fourier transform of $g$, 
\begin{equation*}
    \widehat{g}(m) = \int_D g(x) \theta_m(x) dx.
\end{equation*}
In one dimension, the generalized Fourier transform is asymptotically close to the standard Fourier transform as the mode increases.
 The key to our study is the construction of an auxiliary function $w(x, t)\in C^2(D)$ (defined by \eqref{eq:aux}) that satisfies $\partial^2_x w(x,t) = h(x, t) - f(x)=e(x,t)$, which is the approximating error at time $t$, with boundary conditions $w(1, t) = \partial_x w(1, t) = 0$. By studying the evolution of $\widehat{w}(m,t)$ we show at least how slow the learning dynamics can be in terms of the lower bound for time needed to reduce the initial error in mode $m$ in half. 
We prove the following statement for learning dynamics in one dimension under the following mild assumptions: 1) there exists a constant $M > 0$ that $\sup_{i=1}^n|a_i(t)|^2\le M$; 2) the initialization of biases $\{b_i(0)\}_{i=1}^n$ are equispaced on $D$.

\begin{theorem}\label{THM: PAPER SLOW DECAY}
If $|\widehat{w}(m, 0)| \neq 0$, then it will take at least $\cO(\frac{m^3 |\widehat{w}(m, 0)|}{n})$ time to reduce the {generalized} Fourier coefficient in half, i.e., $|\widehat{w}(m, t)| \le \frac{1}{2} |\widehat{w}(m, 0)|$.
\end{theorem}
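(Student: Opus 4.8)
The plan is to track the single generalized Fourier coefficient $\widehat{w}(m,t)$ of the auxiliary function along the flow and show that $|\tfrac{d}{dt}\widehat{w}(m,t)|$ is too small for $|\widehat{w}(m,t)|$ to halve before time $\cO\!\big(m^3|\widehat{w}(m,0)|/n\big)$. First I would reduce $\widehat{w}(m,t)$ to something transparent. The auxiliary function $w(x,t)=\int_1^x\!\int_1^s e(u,t)\,du\,ds$ lies in $C^2(D)$ (since $e=h-f$ is continuous), satisfies $\partial_x^2 w=e$, and obeys $w(1,t)=\partial_x w(1,t)=0$. Writing $\widehat{w}(m,t)=\int_D w\,\theta_m\,dx=\mu_m\int_D w\,\theta_m^{(4)}\,dx$ via $\theta_m^{(4)}=\mu_m^{-1}\theta_m$ and integrating by parts twice, the endpoint terms at $x=1$ die because $w(1,t)=\partial_x w(1,t)=0$ and those at $x=-1$ die because $\theta_m''(-1)=\theta_m'''(-1)=0$ (the boundary conditions for the Gram eigenfunctions recorded in Section~\ref{sec:gram:and:ls}), leaving $\widehat{w}(m,t)=\mu_m\int_D e(x,t)\,\theta_m''(x)\,dx$. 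Differentiating in $t$ (justified by the Lipschitz dependence of $h$ on $t$), substituting $\partial_t e=\partial_t h=\sum_i\dot a_i\,\sigma(x-b_i)-\sum_i a_i\dot b_i\,\sigma'(x-b_i)$ and integrating by parts once more in each term — where for $b_i\in[-1,1]$ one has $\int_D\sigma(x-b_i)\theta_m''\,dx=\theta_m(b_i)$ and $\int_D\sigma'(x-b_i)\theta_m''\,dx=-\theta_m'(b_i)$ — yields
\begin{equation*}
  \frac{d}{dt}\widehat{w}(m,t)=\mu_m\sum_{i=1}^n\Big(\dot a_i(t)\,\theta_m\big(b_i(t)\big)+a_i(t)\,\dot b_i(t)\,\theta_m'\big(b_i(t)\big)\Big)=\mu_m\,\frac{d}{dt}\sum_{i=1}^n a_i(t)\,\theta_m\big(b_i(t)\big).
\end{equation*}
The crucial point is that exposing the point values $\theta_m(b_i)=\cO(1)$ and $\theta_m'(b_i)=\cO(m)$, rather than the $L^2$ norm $\|\theta_m''\|_{L^2}=\cO(m^2)$ that a crude Cauchy--Schwarz bound $|\tfrac{d}{dt}\widehat{w}|\le\mu_m\|\partial_t h\|_{L^2}\|\theta_m''\|_{L^2}=\cO(n/m^2)$ would use, saves one power of $m$ and produces the sharper $\cO(n/m^3)$ rate.

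For the estimate, along the gradient flow the loss $\calL(h,f)=\|e(\cdot,t)\|_{L^2(D)}^2$ is non-increasing — its time derivative equals $-2\sum_i(\dot a_i^2+\dot b_i^2)$ — so $\|e(\cdot,t)\|_{L^2}\le\|e(\cdot,0)\|_{L^2}$; together with $\|\sigma(\cdot-b_i)\|_{L^2(D)},\|\sigma'(\cdot-b_i)\|_{L^2(D)}=\cO(1)$ and assumption~(1) ($|a_i(t)|\le\sqrt M$) this forces $|\dot a_i(t)|,|\dot b_i(t)|=\cO(1)$ uniformly in $i,t$. Inserting $\|\theta_m\|_{L^\infty(D)}=\cO(1)$, $\|\theta_m'\|_{L^\infty(D)}=\cO(m)$, $\mu_m=\Theta(m^{-4})$ and summing over the $n$ neurons,
\begin{equation*}
  \Big|\tfrac{d}{dt}\widehat{w}(m,t)\Big|\le\mu_m\Big(n\,\|\theta_m\|_\infty\max_i|\dot a_i|+n\sqrt M\,\|\theta_m'\|_\infty\max_i|\dot b_i|\Big)=\cO(n/m^{3}).
\end{equation*}
Integrating in time, $\big|\,|\widehat{w}(m,t)|-|\widehat{w}(m,0)|\,\big|\le|\widehat{w}(m,t)-\widehat{w}(m,0)|\le Cnt/m^3$ for an absolute constant $C$, so $|\widehat{w}(m,t)|\le\tfrac12|\widehat{w}(m,0)|$ is impossible for $t<|\widehat{w}(m,0)|m^3/(2Cn)$, which is the claim. (The window is genuinely short since $|\widehat{w}(m,0)|\le\mu_m\|e(\cdot,0)\|_{L^2}\|\theta_m''\|_{L^2}=\cO(m^{-2})$, so the asserted time is $\cO(m/n)$, which is $\cO(1)$ whenever $m=\cO(n)$.)

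The hard part, already flagged before the statement, is that a bias $b_i(t)$ may leave $[-1,1]$, where $\sigma(x-b_i)$ degenerates and the point-value identities above must be reread. I would resolve this by replacing $\theta_m$ with its $C^1$-extension to $\bbR$ that is affine outside $[-1,1]$: a neuron with $b_i>1$ is frozen ($\dot a_i=\dot b_i=0$) and contributes nothing to either integral, while for $b_i<-1$ the two integrals equal exactly the extended values $\theta_m(b_i)$ and $-\theta_m'(b_i)$. The extended $\theta_m'$ is constant beyond the endpoints, so $\|\theta_m'\|_{L^\infty(\bbR)}=\cO(m)$ still holds; the extended $\theta_m$ grows only linearly, and since $|\dot b_i|=\cO(1)$ the distance of $b_i(t)$ from $[-1,1]$ is $\cO(t)$, which adds merely a lower-order $\cO(nt/m^3)$ term to the derivative bound — harmless on the $\cO(m/n)$ window. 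Assumptions~(1) and~(2) enter, respectively, as the uniform bound on the outer weights and as the control of the initial configuration (equispaced biases together with $\cO(1/\sqrt n)$ outer weights, which keeps $\|e(\cdot,0)\|_{L^2}=\cO(1)$ so that the constants above are genuinely absolute).
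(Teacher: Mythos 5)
Your proposal is correct and follows essentially the same route as the paper: both arguments track $\widehat{w}(m,t)$, reduce its time derivative to $\mu_m\sum_i(\dot a_i\theta_m(b_i)+a_i\dot b_i\theta_m'(b_i))$, and get the $\cO(n/m^3)$ derivative bound from the pointwise estimates $\|\theta_m\|_\infty=\cO(1)$, $\|\theta_m'\|_\infty=\cO(m)$ together with the bounded weights, the non-increasing loss, and the finite propagation speed of the biases. The only presentational difference is that you absorb the escaped biases via a $C^1$ affine extension of $\theta_m$ beyond $[-1,1]$, whereas the paper keeps them as explicit boundary terms $\cH_m,\cJ_m$ in its identity~\eqref{EQ: NEW PDE} and counts them with $|s(t)-1|\le\tfrac12 nKt$ --- these are the same contribution computed in two equivalent ways, and your accounting of it as a lower-order correction on the relevant time window is sound.
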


In numerical computation, to follow the gradient flow closely, the discrete time-step is $\cO(\frac{1}{n})$. From the relation $|\widehat{w}(m,t)|\simeq m^{-2}|\widehat{e}(m,t)|$, Theorem~\ref{THM: PAPER SLOW DECAY} says that if the initial error in mode $m$, $|\widehat{e}(m,0)|\ne 0$, (which means $|\widehat{w}(m,0)|> cm^{-2}$ for some $c>0$,) it takes at least $\cO(m)$ steps to reduce the error in mode $m$ by half. Note that the result does not depend on the convergence of the optimization algorithm and the above estimate is a lower bound. In practice, the gradient-based training process could have an even slower learning rate. 
In the next theorem, the lower bound is improved in a more specific scaling regime (in terms of network width vs. frequency mode) and using better estimates (see Appendix~\ref{sec:improved}). It implies that it takes at least $\cO(m^2)$ steps to reduce the error in mode $m$ by half in numerical computation. In the special case of fixed biases (i.e., least square problem), the number of steps needed is at least $\cO(m^4)$ if $n = \Omega( m^3 )$ (see Remark~\ref{rem: fix bias}).

\begin{theorem}\label{THM: PAPER SLOW DECAY 2}
If the total variation of the sequence $\{a_i^2(t)\}_{i=1}^n$ is bounded by $M'$. Let $n \ge m^4$ be sufficiently large, then it will take at least $\cO(\frac{m^4 |\widehat{w}(m, 0)|}{n})$ time to reduce the {generalized} Fourier coefficient by half, i.e., $|\widehat{w}(m, t)| \le \frac{1}{2} |\widehat{w}(m, 0)|$. 
\end{theorem}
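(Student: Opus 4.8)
The plan is to track the generalized Fourier coefficient $\widehat w(m,t)=\int_D w(x,t)\,\theta_m(x)\,dx$ and to prove the pointwise‑in‑time bound $\big|\tfrac{d}{dt}\widehat w(m,t)\big|\le C n/m^4$; integrating this in $t$ then gives the stated lower bound on the halving time exactly as in Theorem~\ref{THM: PAPER SLOW DECAY}. The first step is to turn $\tfrac{d}{dt}\widehat w$ into an \emph{exact} finite sum. Because $w$ is defined by $\partial_x^2 w=e$ with $w(1,t)=\partial_x w(1,t)=0$, the function $\partial_t w$ solves $\partial_x^2(\partial_t w)=\partial_t e=\partial_t h$ with the same conditions at $x=1$, while at the left endpoint $\partial_t e(-1,t)=\partial_x\partial_t e(-1,t)=0$ because $e(-1,t)$ and $\partial_x h(-1,t)$ are identically zero in $t$ as long as all $b_i\in D$. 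Using the eigen‑ODE $\theta_m^{(4)}=w_m^4\theta_m$ with $w_m=\mu_m^{-1/4}=\Theta(m)$ and the boundary conditions $\theta_m(1)=\theta_m'(1)=\theta_m''(-1)=\theta_m'''(-1)=0$ from Appendix~\ref{sec:GF}, I integrate by parts four times; every boundary term is annihilated by one of these vanishing conditions, and since $\partial_x^2\partial_t h=\sum_i\dot a_i\,\delta_{b_i}-\sum_i a_i\dot b_i\,\delta'_{b_i}$ in the distributional sense, this yields
\[
\frac{d}{dt}\widehat w(m,t)=w_m^{-4}\Big(\sum_{i=1}^n\dot a_i(t)\,\theta_m(b_i(t))+\sum_{i=1}^n a_i(t)\dot b_i(t)\,\theta_m'(b_i(t))\Big).
\]

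Next I would substitute the gradient‑flow identities $\dot a_i=-\int_D e\,\sigma(\cdot-b_i)$ and $a_i\dot b_i=a_i^2\int_D e\,\sigma'(\cdot-b_i)$ and interchange the order of summation and integration to obtain
\[
\frac{d}{dt}\widehat w(m,t)=w_m^{-4}\int_D e(y,t)\Big(\sum_{b_i<y}a_i^2(t)\,\theta_m'(b_i)-\sum_{b_i<y}(y-b_i)\,\theta_m(b_i)\Big)\,dy .
\]
The second inner sum is at most $2n\|\theta_m\|_{L^\infty(D)}=\cO(n)$ and causes no trouble. All the work is in the first sum, where the crude estimate $\sum_{b_i<y}a_i^2|\theta_m'(b_i)|\le nM\|\theta_m'\|_{L^\infty}=\cO(nm)$ only reproduces the weaker rate of Theorem~\ref{THM: PAPER SLOW DECAY}, so I must extract one extra power of $m$ from the oscillatory cancellation of $\theta_m'$ against the near‑uniform nodes $b_i$.

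To do this, order $b_1<\dots<b_n$, set $S_k(y):=\sum_{i\le k,\ b_i<y}\theta_m'(b_i)$, and apply Abel summation: $\sum_{b_i<y}a_i^2\theta_m'(b_i)=a_K^2 S_K-\sum_{k<K}(a_{k+1}^2-a_k^2)S_k$ with $K=\max\{i:b_i<y\}$, so $\big|\sum_{b_i<y}a_i^2\theta_m'(b_i)\big|\le(M+M')\sup_k|S_k|$ by the standing bound $\sup_i a_i^2\le M$ and the new hypothesis $\sum_i|a_{i+1}^2-a_i^2|\le M'$. It then remains to show $\sup_k|S_k|=\cO(n)$. Here I would invoke the sharp asymptotics of Appendix~\ref{sec:GF}: on $D$, $\theta_m$ equals a trigonometric mode of frequency $w_m=\Theta(m)$ up to an exponentially small correction, so its antiderivative is $\theta_m$ itself (size $\cO(1)$) and $\|\theta_m''\|_{L^\infty}=\cO(m^2)$. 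Reading $S_k$ as a left‑endpoint quadrature sum over the (initially equispaced, spacing $2/n$) nodes gives $S_k(y)=\tfrac n2\big(\theta_m(b_k)-\theta_m(-1)\big)+\cO(m^2)=\cO(n)$, the quadrature remainder $\cO(m^2)$ being dominated by the leading $\cO(n)$ term since $n\ge m^4\ge m^2$. Hence the first inner sum is $\cO\big((M+M')n\big)=\cO(n)$, uniformly in $y$.

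Combining the two bounds with the energy dissipation $\|e(\cdot,t)\|_{L^1(D)}\le\sqrt2\,\|e(\cdot,t)\|_{L^2(D)}\le\sqrt2\,\|e(\cdot,0)\|_{L^2(D)}$ along the flow and $w_m^{-4}=\Theta(m^{-4})$ gives $\big|\tfrac{d}{dt}\widehat w(m,t)\big|\le C n/m^4$ with $C=C(M,M',\|e(\cdot,0)\|_{L^2})$; then for any $t$ with $|\widehat w(m,t)|\le\tfrac12|\widehat w(m,0)|$ we get $\tfrac12|\widehat w(m,0)|\le|\widehat w(m,t)-\widehat w(m,0)|\le (Cn/m^4)\,t$, i.e.\ $t=\Omega\big(m^4|\widehat w(m,0)|/n\big)$. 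The main obstacle—already flagged by the authors—is that the biases migrate under the flow: both the exact identity of the first step and the quadrature estimate $\sup_k|S_k|=\cO(n)$ require the $b_i(t)$ to remain inside (a fixed neighborhood of) $D$ and quasi‑uniformly distributed throughout the relevant window $t^\ast=\Theta(m^4|\widehat w(m,0)|/n)$. Since $|\dot b_i|\le|a_i|\sqrt2\,\|e(\cdot,0)\|_{L^2}=\cO(1)$, the drift over $t^\ast$ is $\cO(m^4/n)$, and keeping this—and the resulting distortion of the node distribution—small enough for the asymptotics of $\theta_m,\theta_m',\theta_m''$ to stay valid is precisely what forces the scaling regime $n\ge m^4$ (with $n$ sufficiently large), rather than the milder $n\ge m^2$ that the quadrature error alone would suggest. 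Everything else is bookkeeping on top of the identity in the first step and the eigenfunction estimates already established in Appendix~\ref{sec:GF}.
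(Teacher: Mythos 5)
Your overall strategy is the same as the paper's (gain a factor of $m$ over Theorem~\ref{THM: PAPER SLOW DECAY} by playing the total-variation bound on $\{a_i^2\}$ against the oscillation of $\theta_m'$ — your Abel summation is the discrete counterpart of the paper's integration by parts plus discrepancy-function argument in Appendix~\ref{sec:part1}), but the two issues you defer to the last paragraph are not bookkeeping; they are exactly where the work is, and as written your argument does not close.

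First, the boundary terms you discard are of leading order. Your identity for $\tfrac{d}{dt}\widehat w(m,t)$ is only valid while every $b_i(t)$ stays in $D$; once biases cross $-1$, the terms $\partial_x^2\partial_t w(-1,t)\,\theta_m'(-1)$ and $\partial_x^3\partial_t w(-1,t)\,\theta_m(-1)$ no longer vanish (indeed $\partial_x^3 w(-1^+,t)$ jumps each time a bias crosses, so $\widehat w(m,\cdot)$ is not even classically differentiable there). Quantitatively: over the window $t^\ast=\Theta(m^4|\widehat w(m,0)|/n)$ the drift is $\cO(Kt^\ast)$, so up to $\cO(nKt^\ast)=\cO(m^2)$ biases cross; each contributes $\cO(\sqrt M)$ to $\partial_x^3w(-1^+,t)$ and $\cO(\sqrt M\,m\,Kt^\ast)$ to $\theta_m'(-1)\partial_x^2w(-1,t)$, and after dividing by $p_m=\Theta(m^4)$ the total is $\cO(m^{-2})$ — the same order as $|\widehat w(m,0)|$ itself. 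These terms must therefore be estimated with explicit constants and carried into the final inequality; this is precisely the role of $\cH_m$ and $\cJ_m$ in \eqref{EQ: H}--\eqref{EQ: J} and of Theorem~\ref{THM: J}. Second, your bound $\sup_k|S_k|=\cO(n)$ rests on the nodes being equispaced, which holds only at $t=0$. The left-endpoint quadrature error for the displaced nodes grows like $\cO(n\,Kt\,\|\theta_m''\|_\infty)=\cO(nm^2t)$, so the correct bound is $\sup_k|S_k|=\cO(n(1+m^2t))$; integrating then produces a $t^2$ term of size $\tfrac{n}{p_m}m^2t^2$, which at $t=t^\ast$ is again comparable to $|\widehat w(m,0)|$ when $n\asymp m^4$. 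One cannot simply absorb it: the conclusion $\tau=\Omega(m^4|\widehat w(m,0)|/n)$ requires solving the resulting quadratic inequality in $t$ and checking the regime $n\succ m^4$, $|\widehat w(m,0)|\prec n/m^6$, which is the content of \eqref{EQ: W DECAY 2}--\eqref{EQ: QUAD} in the paper. With these two estimates supplied (the paper's discrepancy bound $\tfrac2n+Kt$, the bound $|\cJ_m(t)-\cJ_m(0)|\le Cm\sqrt M K^2nt^2$, and the quadratic-equation case analysis), your outline becomes essentially the paper's proof; without them it is incomplete.
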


The above results show that reducing the initial error in high-frequency modes by gradient-based learning dynamics is slow. Moreover, due to the low-pass filter nature shown earlier, it is difficult to capture high-frequency modes for two-layer neural networks with evenly or uniformly distributed initial biases. These two effects show why a two-layer neural network struggles with high-frequency modes in approximation even if a full training process is employed. 

Before we prove the above two theorems, we need to introduce the mathematical setup and prove a few lemmas and intermediate results. 

\subsection{Mathematical setup of learning dynamics}
Let $D = (-1,1)$, we consider approximating the objective function $f(x)\in C(D)$, with the shallow neural network 
\begin{equation*}
    h(x) =  \sum_{i=1}^n a_i \sigma(x - b_i).
\end{equation*}
Generally speaking, the biases $\{b_i\}_{i=1}^n$ are supported on $\bbR$ during the training process. For analysis purposes, we restrict $\{b_i\}_{i=1}^n\subset \overline{D^{\eps}}$, where $D^{\eps} =\supp\chi = (-1-\eps, 1)$, and modify the activation function in the two-layer neural network representation as follows
\begin{equation*}
    h(x) = \sum_{i=1}^n a_i \psi(x, b_i),\quad \psi(x, b_i) = \chi(b_i) \sigma(x - b_i), \quad \partial_x^2 \psi(x, b) =\partial_b^2 \psi(x, b)=\delta(x-b), ~x, b\in D.
\end{equation*}
Here $\chi$ is an approximation to the characteristic function of $D$ by adding a quadratic transition region:
\begin{equation*}
    \chi(x) =  \begin{cases}
        1,& x\in D, \\
        1-\left(\frac{1}{\eps} (x + 1)\right)^2, & x\in (-1-\eps, -1], \\
        0, & \text{otherwise},
    \end{cases}
\end{equation*}
and $\eps$ is a positive parameter. The loss function is 
\begin{equation*}
    \cL(h, f)\coloneqq \frac{1}{2}\int_D |h(x) - f(x)|^2 dx.
\end{equation*}
Minimizing $\cL(h, f)$ over the parameters $\{a_i\}_{i\in[n]}$ and $\{b_i\}_{i\in [n]}$ by gradient descent follows the gradient flow
\begin{equation}\label{EQ: FLOW A}
    \frac{d}{dt} a_i(t) = -\int_{D} ( h(x, t) - f(x) ) \psi(x, b_i(t)) d x 
\end{equation}
and 
\begin{equation}\label{EQ: FLOW B}
    \frac{d }{d t} b_i(t) = -a_i (t) \int_D (h(x, t) - f(x)) \partial_b\psi(x , b_i(t)) dx 
\end{equation}
with certain initial conditions $\{a_i(0)\}_{i\in[n]}$ and $\{b_i(0)\}_{i\in[n]}$. Due to the modification of the network representation, one can show that no bias can move outside $\overline{D^{\eps}}$ at a later time.
\begin{theorem}
    If the initial biases and weights satisfy $\{b_i(0)\}_{i\in[n]} \subset D^{\eps}$ and $\sup_{i\in[n]} |a_i(0)|<\infty$, we have $\{b_i(t)\}_{i\in[n]}\subset \overline{D^{\eps}}$, $\forall t\ge 0$.
\end{theorem}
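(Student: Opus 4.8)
The plan is to use the fact that the cutoff $\chi$ was engineered precisely so that the driving term in the $b_i$-equation switches off outside $\overline{D^\eps}$; once this is observed, the confinement follows from a soft continuity/barrier argument and needs no quantitative estimate. The first step is the purely algebraic observation that for every $x\in D=(-1,1)$ one has $\partial_b\psi(x,b)=0$ whenever $b\notin\overline{D^\eps}=[-1-\eps,1]$. Indeed, writing $\partial_b\psi(x,b)=\chi'(b)\sigma(x-b)-\chi(b)\sigma'(x-b)$: if $b<-1-\eps$ then $\chi\equiv0$ on a neighbourhood of $b$, so $\chi(b)=\chi'(b)=0$; if $b>1$ then for $x\in D$ we have $x-b<0$, hence $\sigma(x-b)=0$ and $\sigma'(x-b)=0$ (and $\chi\equiv0$ there as well). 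In both cases $\partial_b\psi(x,b)=0$, and substituting into \eqref{EQ: FLOW B} yields the key dynamical fact: $\frac{d}{dt}b_i(t)=0$ at every time $t$ at which $b_i(t)\notin\overline{D^\eps}$, irrespective of the size of $a_i(t)$.

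Next I would run a barrier argument separately at each endpoint. Fix $i$ and suppose, for contradiction, that $b_i(t_1)>1$ for some $t_1>0$. Since $b_i(0)\in D^\eps\subset(-1-\eps,1)$ and $t\mapsto b_i(t)$ is continuous, set $t_0:=\sup\{t\in[0,t_1]:b_i(t)\le1\}$; then $t_0<t_1$, $b_i(t_0)=1$ by continuity, and $b_i(t)>1$ for all $t\in(t_0,t_1]$. On that interval the previous step gives $\dot b_i\equiv0$, so $b_i$ is constant on $[t_0,t_1]$ and $b_i(t_1)=b_i(t_0)=1$, contradicting $b_i(t_1)>1$. Running the identical argument with the threshold $-1-\eps$ in place of $1$ excludes $b_i(t)<-1-\eps$. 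Hence $b_i(t)\in\overline{D^\eps}$ for all $t\ge0$ and all $i\in[n]$.

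I do not expect the confinement mechanism itself to present any real difficulty — the definition of $\chi$ does all the work — so the only points that deserve care are the prerequisites of the argument, namely that the gradient flow \eqref{EQ: FLOW A}--\eqref{EQ: FLOW B} genuinely has a solution on all of $[0,\infty)$ (the vector field for $b_i$ is not Lipschitz near $\pm1$, and $\chi$ is not smooth at the endpoints) and that $t\mapsto b_i(t)$ is absolutely continuous with derivative given by the right-hand side a.e., so that the "constant on $[t_0,t_1]$" conclusion is legitimate. This is where the hypothesis $\sup_i|a_i(0)|<\infty$ is used: the loss $\cL(h,f)$ is non-increasing along the flow, so $\|h(\cdot,t)-f\|_{L^2(D)}\le\|h(\cdot,0)-f\|_{L^2(D)}$, whence $|\dot a_i|\le\|h(\cdot,t)-f\|_{L^2(D)}\,\|\psi(\cdot,b_i)\|_{L^2(D)}\le C$ and therefore $|a_i(t)|\le|a_i(0)|+Ct$ remains finite on bounded time intervals; together with the confinement just proved this supplies the a priori bounds needed to continue the solution for all time. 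With those standard points in place, the proof reduces to the two steps above.
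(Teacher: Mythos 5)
Your proposal is correct and follows essentially the same route as the paper's own proof: bound $|\dot a_i|$ via Cauchy--Schwarz and the monotone loss to keep the flow well-defined, observe that $\partial_b\psi(x,b)=0$ for $b\notin\overline{D^\eps}$, and conclude by contradiction that no bias can exit. Your first-exit-time barrier argument is just a more carefully spelled-out version of the paper's "open path" contradiction, and your remarks on well-posedness make explicit a point the paper leaves implicit.
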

\begin{proof}
    At any time $t$, using Cauchy-Schwartz inequality, 
    \begin{equation*}
    \begin{aligned}
        \left|\frac{d}{dt} a_i(t)\right| &\le \sqrt{\int_D |h(x, t) - f(x)|^2 dx}\sqrt{ \int_D \psi(x, b_i(t))^2 dx } \\&\le   \sqrt{\int_D |h(x, 0) - f(x)|^2 dx}\sqrt{ \int_D \psi(x, b_i(t))^2 dx } 
        & <\infty.
    \end{aligned}
    \end{equation*}
    Therefore if initial $a_i(0)$ is finite, $a_i(t)$ is always finite, which in turn implies every derivative $\frac{d}{dt} b_i(t)$ is also finite.  
    Suppose at time $t > 0$ that $b_k(t) \in \overline{D^\eps}^{ \complement}$, then there exists an open path that $\forall t'\in (t_0, t_1)\subset (0, t)$ that $\frac{d}{dt} b_k(t') \neq 0$ and $b_k(t') \in \overline{D^\eps}^{\complement}$.
    However, $\partial_b \psi(x, b)  = 0$ for $b\in \overline{D^{\eps}}^{\complement}$, which is a contradiction. Therefore every $\{b_i(t)\}_{i\in[n]}\subset \overline{D^{\eps}}$.
\end{proof}

Define the Gram kernel $\cG(b, b')$ on $D^{\eps}\times D^{\eps}$ by
\begin{equation}
\label{eq:modified}
    \cG(b, b') = \int_D \psi(x, b) \psi(x, b') dx,
\end{equation}
which is a compact operator in $L^2(D^{\eps})$. Let $\lambda_k\ge 0,\  k=1,2, \ldots,$ be the eigenvalues in descending order and $\phi_k$ be the corresponding eigenfunctions, which form an orthonormal basis in $L^2(D^{\eps})$.  We have $  \cG(b, b') = \sum_{k \ge 1} \lambda_k \phi_k(b) \phi_k(b') $. Some properties of $\phi_k$ are studied in Appendix~\ref{sec:eigen}. Expand the coefficient $a_i(t)$ along the eigenfunction $\phi_k$ (an orthonormal basis in $L^2(D^{\epsilon}$) in discrete version (at the nodal points $b_i(t)$), its dynamics follows
\begin{equation*}
\begin{aligned}
\frac{d}{dt} \sum_{i=1}^n a_i(t)\phi_k(b_i(t)) &=\sum_{i=1}^n \phi_k(b_i(t)) \frac{d a_i}{d t} + \sum_{i=1}^n a_i(t) \phi_k'(b_i(t))\frac{d b_i}{dt} \\
   &= -\sum_{i=1}^n \phi_k(b_i(t))\int_{D} \left(h(x,t) - f(x)\right)\psi(x , b_i(t)) dx \\
   &\quad - \sum_{i=1}^n \phi_k'(b_i(t)) |a_i(t)|^2 \int_D (h(x, t) - f(x) ) \partial_b \psi(x, b_i(t)) dx \\
   &= -\sum_{i=1}^n \phi_k(b_i(t)) \left(\sum_{j=1}^n \bmG_{i,j}(t) a_j(t) - P(b_i(t)) \right) \\& \quad - \sum_{i=1}^n |a_i(t)|^2\phi_k'(b_i(t)) \left(\sum_{j=1}^n K_{i,j}(t) a_j(t) - Q(b_i(t)) \right).
\end{aligned}
\end{equation*}
where $\bmG = (\bmG_{i,j})_{i,j\in[n]}$ and $\bmK = (\bmK_{i,j})_{i,j\in[n]}$ are 
\begin{equation*}
    \bmG_{i,j}(t) = \cG(b_i(t), b_j(t)) ,\quad \bmK_{i,j}(t) = \partial_b \cG(b_i(t), b_j(t)).
\end{equation*}
The functions $P$ and $Q$ are 
\begin{equation*}
    P(b) = \int_D f(x) \psi(x, b) dx ,\quad Q(b) = \int_D f(x) \partial_b \psi(x, b) dx. 
\end{equation*}
We can represent $P(b)$ and $Q(b)$ as sum of eigenfunctions on $D^{\eps}$ as well:
\begin{equation*}
    P(b) = \sum_{k\ge 1} p_k \phi_k(b), \quad Q(b) = \sum_{k\ge 1} p_k \phi_k'(b).
\end{equation*}
Therefore we can derive that 
\begin{equation*}
\begin{aligned}
   &\phantom{\mspace{2mu}=\;} -\sum_{i=1}^n \phi_k(b_i(t)) \left(\sum_{j=1}^n \bmG_{i,j}(t) a_j(t) - P(b_i(t)) \right) \\&= - \sum_{l=1}^{\infty}  \sum_{i=1}^n \phi_k(b_i(t)) \phi_l(b_i(t))  \left[\lambda_l \sum_{j=1}^n \phi_l(b_j(t)) a_j(t) - p_l \right] 
\end{aligned}
\end{equation*}
and 
\begin{equation*}
\begin{aligned}
     &\phantom{\mspace{2mu}=\;} -\sum_{i=1}^n |a_i(t)|^2\phi_k'(b_i(t)) \left(\sum_{j=1}^n K_{i,j}(t) a_j(t) - Q(b_i(t)) \right)\\
     &=-\sum_{l=1}^{\infty} \sum_{i=1}^n |a_i(t)|^2  \phi_k'(b_i(t)) \phi_l'(b_i(t)) \left[\lambda_l \sum_{j=1}^n \phi_l(b_j(t)) a_j(t) - p_l\right].
\end{aligned}
\end{equation*}
Denote $\Theta_k(t) = \sum_{j=1}^n \phi_k(b_j(t)) a_j(t) - \frac{p_k}{\lambda_k}$, we find that 
\begin{equation}\label{EQ: THETA K}
    \frac{d \Theta_k(t)}{dt} = - \sum_{l=1}^{\infty} \lambda_l \left[  M_{l,k} (t) + S_{l,k}(t) \right]\Theta_l(t), 
\end{equation}
where the infinite matrices 
\begin{equation*}
    M_{l,k}(t) =\sum_{i=1}^n \phi_l(b_i(t)) \phi_k(b_i(t)),\quad S_{l,k}(t) = \sum_{i=1}^n |a_i(t)|^2  \phi_k'(b_i(t)) \phi_l'(b_i(t)) 
\end{equation*}
are both semi-positive.

Consider the energy $E(t) = \frac{1}{2} \sum_{k\ge 1}\lambda_k |\Theta_k^2(t)|$, then 
\begin{equation*}
    \frac{d E(t)}{d t} = - \sum_{k, l = 1}^{\infty} \lambda_k\lambda_l \Theta_k(t)  \left[M_{l,k}(t) + S_{l,k}(t)\right]\Theta_l(t).
\end{equation*}
Since the matrices are both non-negative, the energy is non-increasing, thus there is a limit for $E(t)$ as $t\to\infty$. Define the auxiliary function 
\begin{equation}\label{eq:aux}
    w(b, t)\coloneqq \sum_{k=1}^{\infty} \lambda_k \Theta_k(t) \phi_k(b),
\end{equation}
which is directly related to the error from the following relation for $b\in D$,
\begin{equation}\label{eq:order2}
\begin{aligned}
 \partial_b^2 w(b, t) & 
    = \partial_{b}^2\left[\sum_{k=1}^{\infty}\left(\lambda_k \sum_{j=1}^n \phi_k(b_j(t)) a_j(t) - p_k\right) \phi_k(b)\right]\\
    &= \partial_b^2 \left[\sum_{j=1}^n \cG(b, b_{j}(t)) a_j(t)- P(b)\right]\\
    &= \partial_b^2 \left[\sum_{j=1}^n  a_j(t)\int_D\psi(x,b)\psi(x,b_j(t)) dx- \int_D \!f(x)\psi(b)  dx \right]\\
    &=\sum_{j=1}^n  a_j(t)\int_D\delta(x- b)\psi(x,b_j(t)) dx - \!\int_D f(x)\delta(x - b)  dx
    =h(b, t) -  f(b).
\end{aligned}
\end{equation}
We first provide its regularity property which can be related to the spectral decay of the Gram matrix determined by the regularity of the activation function.

\begin{theorem}
    The auxiliary function $w\in H^1(D^{\eps})$ and $w\in H^2(D)$ uniformly, respectively.
\end{theorem}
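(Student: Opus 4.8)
\emph{Approach.} The plan is to bound $w(\cdot,t)$ in the two Sobolev norms separately, in each case controlling it by a quantity that the gradient flow renders monotone in $t$, so that the resulting bounds are automatically uniform in time.

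\emph{Step 1 ($w\in H^1(D^\eps)$, uniformly).} I would start from the energy $E(t)=\tfrac12\sum_{k\ge1}\lambda_k|\Theta_k(t)|^2$: it was shown above to be non-increasing because the symmetric matrices $M_{l,k}(t)+S_{l,k}(t)$ are non-negative, so $\sum_{k\ge1}\lambda_k|\Theta_k(t)|^2\le 2E(0)$ for every $t\ge0$ (with $E(0)<\infty$ under the standing assumptions on the initialization). Since $\{\phi_k\}$ is orthonormal in $L^2(D^\eps)$ and $w(\cdot,t)=\sum_k\lambda_k\Theta_k(t)\phi_k$, Parseval gives $\|w(\cdot,t)\|_{L^2(D^\eps)}^2=\sum_k\lambda_k^2\Theta_k^2(t)\le\lambda_1\sum_k\lambda_k\Theta_k^2(t)\le2\lambda_1E(0)$. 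For the first derivative I would estimate termwise,
\begin{equation*}
\|\partial_b w(\cdot,t)\|_{L^2(D^\eps)}\le\sum_{k\ge1}\lambda_k|\Theta_k(t)|\,\|\phi_k'\|_{L^2(D^\eps)}\le\Big(\sum_{k\ge1}\lambda_k\Theta_k^2(t)\Big)^{1/2}\Big(\sum_{k\ge1}\lambda_k\|\phi_k'\|_{L^2(D^\eps)}^2\Big)^{1/2},
\end{equation*}
and then invoke the eigenfunction bounds of Appendix~\ref{sec:eigen} for the modified kernel \eqref{eq:modified}, namely $\lambda_k=\cO(k^{-4})$ and $\|\phi_k'\|_{L^\infty(D^\eps)}=\cO(k)$, which make $\sum_k\lambda_k\|\phi_k'\|_{L^2(D^\eps)}^2=\cO(\sum_k k^{-2})<\infty$. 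Combined with $\sum_k\lambda_k\Theta_k^2(t)\le2E(0)$ this yields a $t$-independent bound on $\|\partial_b w(\cdot,t)\|_{L^2(D^\eps)}$, hence $w(\cdot,t)\in H^1(D^\eps)$ uniformly in $t$.

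\emph{Step 2 ($w\in H^2(D)$, uniformly).} This is \emph{not} obtained by pushing the series estimate one derivative further: since $\|\phi_k''\|_{L^\infty}=\cO(k^2)$, the sum $\sum_k\lambda_k\|\phi_k''\|_{L^2}^2\sim\sum_k k^0$ diverges, so one only gets $H^1$ on $D^\eps$ from the energy. Instead I would use the pointwise identity \eqref{eq:order2}, $\partial_b^2 w(b,t)=h(b,t)-f(b)$ for $b\in D$, which gives $\|\partial_b^2 w(\cdot,t)\|_{L^2(D)}^2=\int_D|h(x,t)-f(x)|^2\,dx=2\cL(h(\cdot,t),f)$. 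Because \eqref{EQ: FLOW A}--\eqref{EQ: FLOW B} is exactly the gradient flow of $\cL$ in the parameters $(a_i,b_i)$, one has $\frac{d}{dt}\cL(h(\cdot,t),f)=-\sum_i\big(|\partial_{a_i}\cL|^2+|\partial_{b_i}\cL|^2\big)\le0$, so $\|\partial_b^2 w(\cdot,t)\|_{L^2(D)}^2\le2\cL(h(\cdot,0),f)$, a finite $t$-uniform bound. Together with the $H^1(D^\eps)$ bound of Step 1 restricted to $D\subset D^\eps$, this gives $w(\cdot,t)\in H^2(D)$ with norm bounded uniformly in $t$.

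\emph{Main obstacle.} The only non-elementary input is the pair of eigenfunction estimates for the kernel \eqref{eq:modified} used in Step 1 — this is precisely where the regularity of the activation function enters, since a rougher activation gives a faster eigenvalue decay and hence a weaker gain of smoothness — and it is supplied by Appendix~\ref{sec:eigen}. Everything else reduces to the monotonicity of $E(t)$ and of the loss $\cL(h(\cdot,t),f)$ along the flow, plus Cauchy--Schwarz; I expect the bookkeeping around the transition region (checking that $E(0)$ and $\cL(h(\cdot,0),f)$ are finite for the modified representation, and that termwise differentiation of the series for $w$ is justified in $L^2$) to be the only place needing minor care.
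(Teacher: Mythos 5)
Your proposal is correct in substance and follows the same two pillars as the paper's proof: the identity $\partial_b^2 w(\cdot,t)=h(\cdot,t)-f$ on $D$ combined with monotonicity of the loss along the gradient flow for the $H^2(D)$ part, and the monotone energy $E(t)\le E(0)$ plus Parseval for $\|w(\cdot,t)\|_{L^2(D^{\eps})}$. The one place you genuinely diverge is the bound on $\|\partial_b w(\cdot,t)\|_{L^2(D^{\eps})}$. You estimate the series termwise and invoke $\lambda_k=\cO(k^{-4})$ and $\|\phi_k'\|_{L^\infty(D^{\eps})}=\cO(k)$ "from Appendix~\ref{sec:eigen}" — but that appendix only establishes the fourth-derivative relation and the cubic behaviour of $\phi_k$ on the transition region; the derivative asymptotics are proved in Appendix~\ref{sec:GF} only for the eigenfunctions $\theta_k$ of the \emph{unmodified} kernel on $D$, not for the modified kernel~\eqref{eq:modified} on $D^{\eps}$. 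Your step is salvageable without those asymptotics, since $\sum_k\lambda_k\|\phi_k'\|_{L^2(D^{\eps})}^2=\int_{D^{\eps}}\partial_b\partial_{b'}\cG(b,b')\big|_{b'=b}\,db<\infty$ by continuity of the mixed derivative of the kernel, but as written the citation does not support the claim. The paper sidesteps this entirely by differentiating the closed-form expression for $w$ to get $\partial_b w(b,t)=\int_D(h(x,t)-f(x))\,\partial_b\psi(x,b)\,dx$ and then using only $\|h(\cdot,t)-f\|_{L^2(D)}\le\|h(\cdot,0)-f\|_{L^2(D)}$ together with $\partial_b\psi\in C^0(D\times D^{\eps})$ — no eigenfunction asymptotics needed. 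A second, cosmetic difference: for the lower-order norms on $D$ the paper uses the boundary conditions $w(1,t)=\partial_b w(1,t)=0$ and Poincar\'e inequalities to control $\|w\|_{L^2(D)}$ and $\|\partial_b w\|_{L^2(D)}$ directly by $\|\partial_b^2 w\|_{L^2(D)}$, whereas you borrow them from the $D^{\eps}$ estimate; both are fine. Your closing remarks about justifying termwise differentiation and finiteness of $E(0)$ are exactly the right places to be careful, and the paper's integral-representation route is the cleaner way to discharge them.
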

\begin{proof}
      From~\eqref{eq:order2}, we have $\|\partial_b^2 w\|_{L^2(D)} \le \|h(\cdot, 0) - f(\cdot)\|_{L^2(D)}$ uniformly. We also notice that each eigenfunction $\phi_k$, $k\in\bbN$ satisfies that 
$\phi_k(b, t)|_{b = 1} = \partial_b \phi_k(b, t) |_{b=1} = 0$, hence $w(b, t)|_{b=1} = \partial_b w(1, t) = 0$. Using Cauchy-Schwartz inequality, we derive the following Poincar\'e inequalities, 
\begin{equation*}
    \left\| w(b, t) \right\|_{L^2(D)}^2 = \left\|\int_{1}^b \partial_b w(b', t) db'\right\|^2_{L^2(D)} \le 2 \left\| \partial_b w(b, t) \right\|_{L^2(D)}^2  
\end{equation*}
and 
\begin{equation*}
    \left\| \partial_b w(b, t) \right\|_{L^2(D)}^2 = \left\|\int_{1}^b \partial_b^2 w(b', t) db'\right\|^2_{L^2(D)} \le 2 \left\| \partial_b^2 w(b, t) \right\|_{L^2(D)}^2. 
\end{equation*}
Therefore $w\in H^2(D)$. Now we prove the other part of the theorem: $w\in H^1(D^{\eps})$ uniformly. Here we use the fact that $E(t) = \frac{1}{2}\sum_{k\in\bbN}\lambda_k |\Theta_k(t)|^2$ is uniformly bounded by $E(0)$, then 
\begin{equation*}
    \|w(\cdot, t)\|^2_{L^2(D)} \le \sum_{k\in\bbN} \lambda_k^2 |\Theta_k(t)|^2 < \lambda_1 \sum_{k\in\bbN} \lambda_k |\Theta_k(t)|^2 \le \lambda_1 E(0).
\end{equation*}
Follow the same derivation of~\eqref{eq:order2}, 
\begin{equation*}
\begin{aligned}
 \partial_b w(b, t) & 
    = \partial_{b} \left[\sum_{k=1}^{\infty}\left(\lambda_k \sum_{j=1}^n \phi_k(b_j(t)) a_j(t) - p_k\right) \phi_k(b)\right]\\
    & = \partial_b \left(\sum_{j=1}^n \cG(b, b_{j}(t)) a_j(t) - P(b)\right)\\
    &=  \partial_b \left(\sum_{j=1}^n  a_j(t)\int_D\psi(x,b)\psi(x,b_j(t)) dx - \int_D f(x)\psi(x,  b)  dx\right)\\
    &=\sum_{j=1}^n  a_j(t)\int_D \partial_b \psi(x, b) \psi(x,b_j(t)) dx- \int_D f(x) \partial_b \psi(x, b)  dx \\
    &=\int_D \left( h(x, t) - f(x)\right) \partial_b \psi(x, b) d x .
\end{aligned}
\end{equation*}
Thus using $\|h(\cdot, t) - f\|_{L^2(D)} \le \|h(\cdot, 0) - f\|_{L^2(D)}$ and $\partial_b \psi(x, b)\in C^0(D\times D^{\eps})$, the following inequality holds uniformly:
\begin{equation*}
    \|\partial_b w(b, t) \|_{L^2(D^{\eps})}^2 \le \|h(\cdot, 0) - f\|_{L^2(D)}^2 \int_{D\times D^{\eps}} |\partial_b \psi(x, b)|^2 dx d b < \infty.
\end{equation*}
\end{proof}
Let $\{E(t_m)\}_{m\ge 1}$ be a minimizing sequence for $E(t)$, then 
\begin{equation*}
\begin{aligned}
&\phantom{\mspace{2mu}=\;} \sum_{k,l=1}^{\infty} \lambda_k\lambda_l\Theta_k(t_m)[M_{l,k}(t_m) + S_{l,k}(t_m)]\Theta_l(t_m) \\&=\sum_{i=1}^n \left( |w(b_{i}(t_m), t_m)|^2+a_i^2(t_m)|\partial_b w(b_{i}(t_m), t_m)|^2\right) \to 0,
\end{aligned}
\end{equation*}
which implies that $w(b_{i}(t_m), t_m)\to 0$ and $|a_i(t_m)\partial_b w(b_i(t_m),t_m)| \to 0$. Note that $H^1(D^{\eps})$ embeds into $L^2(D^{\eps})$ compactly, there exists a subsequence $\{w(\cdot, t_{m_s})\}_{s\ge 1}$ converges to $\bar{w}\in H^1(D^{\eps})$ strongly (which is in $C^{0,\alpha}(D^{\eps})$). Here are two cases: 
\begin{itemize}
\item If the gradient dynamics $b_i(t)$ converges to $b_i^{\ast}$ as $t\to\infty$, then we must have  $\bar{w}(b_i^{\ast}) = 0$. If $\lim_{t\to\infty }a_i(t) \neq 0$ or does not exists, then $\partial_b \bar{w}(b_i^{\ast}) = 0$. 
\item If during the dynamics $b_i(t)$ is not converging, then there exists an open set 
$T\subset D^{\eps}$ that $T\subset \limsup_{\tau\to\infty} \{b_i(t)\}_{t\ge \tau}$ or equivalently, $T$ appears infinitely often in the dynamics. Then we must have $    \bar{w}(b) = 0,\,\forall b\in T$. 
\end{itemize}
\begin{remark}
    In mean-field representation that $n\to\infty$ and assume the limiting measure $\int_{\bbR} \mu(da, b, t)$ has full support on $D^{\eps}$, we immediately conclude that $\bar{w} \equiv 0$. Hence $\Theta_k(t)\to 0$ and $E(t)\to 0$ as $t\to\infty$. This implies the network will converge to the objective function. However, in a discrete setting, it becomes more complicated due to the existence of local minimums.
\end{remark}

\subsection{Generalized Fourier analysis of learning dynamics}
In this section, we study the convergence of the learning dynamics in terms of frequency modes, especially the asymptotic behavior for high-frequency components. The key relation is \eqref{eq:order2}, which implies one can study the evolution of the Fourier modes of $\partial^2_bw(b,t)$ to understand the evolution of error $h(b,t)-f(b)$. The other key fact is $\partial_b^4\phi(b)=\lambda_k^{-1}\phi(b)$ and the spectral decay rate $\lambda_k=\Theta(k^{-4})$.  However, due to the bounded domain of interest $b\in D$ and non-periodicity at the boundary, generalized Fourier modes have to be designed for our study. We introduce an orthonormal basis  $\theta_k(x)\in C(D)$ that solves the eigenvalue problem 
\begin{equation}\label{EQ: EIG SYS}
\begin{aligned}
     \theta_k^{(4)}(x) &= p_k \theta_k(x), \\
    \theta_k(1) &= \theta_k'(1) = 0,  \\
    \theta_k''(-1) &= \theta'''_k(-1) = 0.
\end{aligned}
\end{equation}

In Appendix~\ref{sec:GF} we provide an explicit characterization of $\theta_k$. In a nutshell, $p_k=\Theta(k^4)$ and $\theta_k$ form a complete orthonormal basis for $L^2(D)$ and it is close to a shifted Fourier mode when $k$ is relatively large. 

Let $\widehat{w}$ denote the \emph{generalized} discrete Fourier transform of $w$ on $D$:
\begin{equation}
    \widehat{w}(k, t) = \int_D \theta_k(b) w(b, t) db.
\end{equation}
where $\theta_k$ is the eigenfunction defined by~\eqref{EQ: EIG SYS}. We emphasize the following key relation (due to the property of $\texttt{ReLU}$ activation function): $\partial_b^2 w(b, t) = h(b, t)  - f(b)$ for $b\in D$ by~\eqref{eq:order2}.
 Therefore the generalized discrete Fourier transform of $\partial^2_b w$ will be the generalized discrete Fourier transform of the error $h(\cdot, t) - f(\cdot)$ on $D$. 
Using Lemma~\ref{LEM: 4TH DERIV}, $\partial^4_b w(b, t) = \sum_{k=1}^{\infty} \Theta_k(t)\phi_k(b) $ for $b\in D$. 
\begin{lemma} The following equality holds.
    $$\int_D w(b, t) \theta_k^{(4)}(b) db = \int_D \partial^2_b w(b, t) \theta_k''(b) db.$$
\end{lemma}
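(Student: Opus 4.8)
The plan is to prove this by integrating by parts twice in $b$, transferring two derivatives from $\theta_k$ onto $w$, and then checking that every boundary term produced at $b=\pm 1$ vanishes. The point is that the two contributions at $b=1$ are killed by the boundary behaviour of $w$ (inherited from the eigenfunctions $\phi_k$), while the two contributions at $b=-1$ are killed by the boundary conditions imposed on $\theta_k$ in \eqref{EQ: EIG SYS}; each of the four endpoint terms is annihilated by exactly one of these two sets of conditions.

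First I would record the regularity available. Solving $\theta_k^{(4)}=p_k\theta_k$ shows that $\theta_k$ is a finite linear combination of $\cosh$, $\sinh$, $\cos$, $\sin$, hence $\theta_k\in C^{\infty}(\overline D)$; in particular $\theta_k''$, $\theta_k'''$ and $\theta_k^{(4)}$ extend continuously to $b=\pm 1$. By the theorem just proved, $w(\cdot,t)\in H^2(D)$, and since $H^2(D)\hookrightarrow C^1(\overline D)$ in one dimension, both $w$ and $\partial_b w$ have well-defined values at $b=\pm 1$. From $w(b,t)=\sum_{k}\lambda_k\Theta_k(t)\phi_k(b)$ together with $\phi_k(1)=\phi_k'(1)=0$ for every $k$ one obtains $w(1,t)=\partial_b w(1,t)=0$, while \eqref{EQ: EIG SYS} gives $\theta_k''(-1)=\theta_k'''(-1)=0$.

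Then I would carry out the two integrations by parts. Since $w(\cdot,t)\in H^2(D)$ and $\theta_k\in C^{4}(\overline D)$, the identity
\[
\int_D w\,\theta_k^{(4)}\,db
=\int_D \partial_b^2 w\,\theta_k''\,db
+\Big[\,w\,\theta_k'''-\partial_b w\,\theta_k''\,\Big]_{b=-1}^{b=1}
\]
is justified (two applications of the $H^1$ integration-by-parts rule, using $w,\partial_b w\in H^1(D)$ and the smoothness of $\theta_k$). The bracket at $b=1$ equals $w(1,t)\theta_k'''(1)-\partial_b w(1,t)\theta_k''(1)=0$ because $w(1,t)=\partial_b w(1,t)=0$, and the bracket at $b=-1$ equals $w(-1,t)\theta_k'''(-1)-\partial_b w(-1,t)\theta_k''(-1)=0$ because $\theta_k''(-1)=\theta_k'''(-1)=0$. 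Hence the boundary contribution vanishes and the claimed equality follows.

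I do not expect a genuine analytic obstacle: the argument is bookkeeping of boundary terms, and the only thing that needs care is matching the pair $(w,\partial_b w)$ — controlled at the right endpoint through $\phi_k$ — against the pair $(\theta_k'',\theta_k''')$ — controlled at the left endpoint through \eqref{EQ: EIG SYS}. All the regularity needed ($w\in H^2(D)$ uniformly in $t$, and the smoothness of $\theta_k$) is already established above.
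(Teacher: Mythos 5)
Your proof is correct and follows essentially the same route as the paper's: two integrations by parts, with the boundary terms at $b=1$ killed by $w(1,t)=\partial_b w(1,t)=0$ (inherited from $\phi_k(1)=\phi_k'(1)=0$) and those at $b=-1$ killed by $\theta_k''(-1)=\theta_k'''(-1)=0$. The extra regularity bookkeeping you include ($w\in H^2(D)\hookrightarrow C^1(\overline D)$, smoothness of $\theta_k$) is a harmless refinement of the paper's argument.
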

\begin{proof}
    Note $\theta_k''(-1) = \theta_k'''(-1) = 0, w(1,t)=\partial_bw(1,t)=0 ~(\mbox{due to } \phi_k(1)=\phi'_k(1)=0)$, 
    \begin{equation*}
    \begin{aligned}
            \int_D w(b, t) \theta_k^{(4)}(b) db &= w(b, t) \theta_k^{(3)}(b)\big|_{\partial D} -\int_D \partial_b w(b, t) \theta_k^{(3)}(b) db \\&= - \partial_b w(b, t) \theta_k^{(2)}(b)\big|_{\partial D} + \int_D \partial^2_b w(b, t) \theta_k''(b) db \\
            &= \int_D \partial^2_b w(b, t) \theta_k''(b) db.
    \end{aligned}
    \end{equation*}
\end{proof}
\begin{lemma}
There exists a constant $c > 0$ that 
    $|\widehat{w}(k, t)|\le c k^{-2}$.
\end{lemma}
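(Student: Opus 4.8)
The plan is to use the fourth-order eigenrelation $\theta_k^{(4)} = p_k\theta_k$ on $D$ to transfer two derivatives off of $w$ and onto the test function $\theta_k$; this turns $\widehat{w}(k,t)$ into a pairing against $\partial_b^2 w = h(\cdot,t)-f$, which we already know is uniformly bounded in $L^2(D)$ because the energy is non-increasing. Concretely, I would first write, using \eqref{EQ: EIG SYS},
\[
\widehat{w}(k,t) = \int_D w(b,t)\theta_k(b)\,db = \frac{1}{p_k}\int_D w(b,t)\theta_k^{(4)}(b)\,db,
\]
then invoke the preceding lemma to replace the right-hand side by $\tfrac{1}{p_k}\int_D \partial_b^2 w(b,t)\,\theta_k''(b)\,db$, and finally apply Cauchy--Schwarz to get $|\widehat{w}(k,t)| \le p_k^{-1}\|\partial_b^2 w(\cdot,t)\|_{L^2(D)}\,\|\theta_k''\|_{L^2(D)}$.

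It then remains to control the two factors. For the first, \eqref{eq:order2} gives $\partial_b^2 w(b,t) = h(b,t)-f(b)$ on $D$, and monotonicity of the energy (equivalently the uniform $H^2(D)$ bound already established) yields $\|\partial_b^2 w(\cdot,t)\|_{L^2(D)} \le \|h(\cdot,0)-f\|_{L^2(D)}$, uniformly in $t$. For the second, I would integrate by parts twice: the boundary conditions $\theta_k(1)=\theta_k'(1)=0$ and $\theta_k''(-1)=\theta_k'''(-1)=0$ from \eqref{EQ: EIG SYS} annihilate every boundary term, so $\|\theta_k''\|_{L^2(D)}^2 = \int_D \theta_k^{(4)}\theta_k\,db = p_k\|\theta_k\|_{L^2(D)}^2 = p_k$ by orthonormality of $\{\theta_k\}$. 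Combining, $|\widehat{w}(k,t)| \le \|h(\cdot,0)-f\|_{L^2(D)}\, p_k^{-1/2}$; since $p_k=\Theta(k^4)$ by the explicit characterization in Appendix~\ref{sec:GF}, this gives $|\widehat{w}(k,t)| \le c\,k^{-2}$ for a suitable constant $c>0$, uniformly in $t\ge 0$.

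There is no real obstacle here; the statement is essentially a bookkeeping consequence of results already in hand. The two points needing care are: (i) the integration by parts in the preceding lemma and in the computation of $\|\theta_k''\|_{L^2(D)}$ both rely on exactly the mixed boundary data that $w$ and $\theta_k$ enjoy (in particular $w(1,t)=\partial_b w(1,t)=0$, which holds because each $\phi_k$ vanishes to first order at $b=1$), and (ii) the identity $\partial_b^2 w = h(\cdot,t)-f$ holds only on $D$, not on the enlarged domain $D^{\eps}$, so all pairings must be taken over $D$ — which is precisely the domain in the definition of $\widehat{w}$. The quantitative rate is driven entirely by the $p_k=\Theta(k^4)$ growth, and it is the square-root of this, $p_k^{-1/2}\asymp k^{-2}$, that produces the claimed decay.
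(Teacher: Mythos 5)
Your proof is correct and follows essentially the same route as the paper's: transfer four derivatives onto $\theta_k$ via the eigenrelation and the preceding integration-by-parts lemma, apply Cauchy--Schwarz against $\partial_b^2 w = h(\cdot,t)-f$, and use $p_k=\Theta(k^4)$. The only (minor, and slightly sharper) difference is that you compute $\|\theta_k''\|_{L^2(D)}^2 = p_k$ exactly by integrating by parts with the mixed boundary conditions, whereas the paper simply cites the $\cO(k^2)$ bound from Theorem~\ref{lem:B4}; both yield the same $k^{-2}$ rate.
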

\begin{proof}
    Using $\theta_k^{(4)}(b)  = p_k \theta_k(b)$, we have 
    \begin{equation*}
        |p_k \widehat{w}(k, t)| \le  \int_D |\partial^2_b w(b, t) \theta_k''(b) |db\le \|h(\cdot, 0) - f(\cdot)\|_{L^2(D)}  \| \theta_k'' \|_{L^2(D)}. 
    \end{equation*}
   Since $ \| \theta_k'' \|_{L^2(D)} = \cO(k^2)$ by Lemma~\ref{lem:B4} and $p_k=\Theta(k^4)$ by Lemma~\ref{lem:B1}, there exists a constant $c > 0$ that $|\widehat{w}(k, t)|\le \frac{c}{k^2}$.
\end{proof}
From now on, we assume that $b_i(t)$ is arranged in ascending order. Let $s = s(t)$ be the smallest index such that $\{ b_{j}(t) \}_{j\ge s}\subset {D}$. Then use integration by parts taking into account the boundary condition of $\theta_k, w$, 
\begin{equation}
\label{eq:integrationbypart}
\begin{aligned}
 \int_D \theta_k(b) \partial^4_b w(b, t) db &= \theta_k(b) \partial^3_b w(b, t)\Big|_{\partial D}  -\theta_k'(b)\partial^2_b w|_{\partial D} + \int_D \theta''_k(b) \partial^2_b w(b, t) db  \\
 &=\theta_k(b) \partial^3_b w(b, t)\Big|_{\partial D} -\theta_k'(b)\partial^2_b w|_{\partial D} +\int_D \theta_k^{(4)} w(b, t) db \\
 &= \theta_k(b) \partial^3_b w(b, t)\Big|_{\partial D}  -\theta_k'(b)\partial^2_b w|_{\partial D} +  p_k \widehat{w}(k, t) .
\end{aligned}  
\end{equation}
From the fact,  $\partial^2_b w(b,t)=h(b,t)-f(b)=\sum_{i=1}^n a_i(t)\chi(b_i(t))\sigma(b-b_i(t))-f(b)$ and $\theta_k(1)=\theta'_k(1)=0$, the two boundary terms can be explicitly written out as
\begin{equation}\label{EQ: H}
\begin{aligned}
\cH_k(t)\coloneqq \theta_k(b) \partial^3_b w(b, t)\Big|_{\partial D} =  -\theta_k(-1) \left[\sum_{i=1}^{s-1} a_i(t) \chi(b_i(t)) -f'(-1)\right],
\end{aligned}
\end{equation}
\begin{equation}\label{EQ: J}
  \cJ_k(t)\coloneqq -\theta'_k(b) \partial^2_b w(b, t)\Big|_{\partial D}=- \theta'_k(-1) \left[\sum_{i=1}^{s-1} \chi(b_i(t) )a_i(t) (1 + b_i(t))+f(-1)\right].
\end{equation}
Differentiating~\eqref{eq:integrationbypart} in time and using~\eqref{EQ: THETA K}, \eqref{eq:aux}, and~\eqref{eq:order4}, we have
\begin{equation}\label{EQ: PDE}
\begin{aligned}
        &\phantom{\mspace{2mu}=\;} p_{m} \partial_t \widehat{w}(m, t) + \cH_m'( t) + \cJ_m'( t) =  \sum_{k=1}^{\infty} \Theta_k'(t) \widehat{\phi_k}(m)\\
        &=  -  n \sum_{k=1}^{\infty}\widehat{\phi_k}(m) \left(  \int_{D^{\eps}} w(b, t) \phi_k(b) \mu_0(b, t) db +  \int_{D^{\eps}} \partial_b w(b, t) \phi'_k(b) \mu_2(b, t) db\right) , 
\end{aligned}
\end{equation}
where $\mu_0$ and $\mu_2$ are defined by the following positive distributions ($n$ can be finite)
\begin{equation}
\begin{aligned}
        \mu_0(b, t) &= \frac{1}{n}\sum_{i=1}^n \delta(b - b_i(t)), \\ \mu_2(b, t) &=  \frac{1}{n}\sum_{i=1}^n |a_i(t)|^2\delta(b - b_i(t)) .
\end{aligned}
\end{equation}
Apply the equality $    \sum_{k=1}^{\infty} \phi_k(x) \phi_k(y) = \delta(x - y)$ to~\eqref{EQ: PDE}, we find 
\begin{equation*}
\begin{aligned}
        &\phantom{\mspace{2mu}=\;}  \sum_{k=1}^{\infty}\widehat{\phi_k}(m) \int_{D^{\eps}} w(b, t) \phi_k(b) \mu_0(b, t) db  \\
        &= \int_{D^{\eps}} \int_D w(b, t) \sum_{k=1}^{\infty}  \phi_k(b) \phi_k(b') \mu_0(b, t) \theta_m(b') db' db \\
        &=  \int_{D^{\eps}} \int_D w(b, t) \delta(b - b') \mu_0(b, t) \theta_m(b') db' db \\
        &= \int_D w(b', t) \mu_0(b', t) \theta_m(b') db' = \widehat{w\mu_0}(m, t).
\end{aligned}
\end{equation*}
Similarly, 
\begin{equation*}
\begin{aligned}
 &\phantom{\mspace{2mu}=\;} \sum_{k=1}^{\infty} \widehat{\phi_k}(m)  \int_{D^{\eps}}  \partial_b w(b, t) \phi'_k(b) \mu_2(b, t) db \\&=
  \int_{D^{\eps}} \int_D \partial_b w(b, t) \sum_{k=1}^{\infty}  \phi_k'(b) \phi_k(b') \mu_2(b, t) \theta_m(b') db' db \\
  &=  \int_{D^{\eps}} \int_D \partial_b w(b, t) \delta'(b - b')\mu_2(b, t) \theta_m(b') db' db\\
 &=    \int_D \theta_m(b') \delta'(b - b') db' \int_{D^{\eps}}  \partial_b w(b, t) \mu_2(b, t)   db \\
 &=\int_{D} \partial_b w(b', t) \mu_2(b', t) \theta_m'(b') db' .
\end{aligned}
\end{equation*}
Therefore~\eqref{EQ: PDE} can be further reduced to 
\begin{equation}\label{EQ: NEW PDE}
    \partial_t \widehat{w}(m, t) = -\frac{1}{p_m} (\cH_m'(t) + \cJ_m'(t)) -\frac{n}{p_m}  \widehat{w\mu_0}(m, t)-\frac{n}{p_m} \int_{D} \partial_b w(b', t) \mu_2(b', t) \theta_m'(b') db' .
\end{equation}
Now we provide an estimate for the above equation and show a slow reduction of high-frequency modes in the initial error during the gradient flow. 
\begin{theorem}\label{THM: SLOW DECAY}
Assume that $\sup_{1\le i \le n} |a_i(t)|^2$ is uniformly bounded by $M > 0$ and the biases $\{b_i(0)\}$ are initially equispaced on $D$. If $\widehat{w}(m, 0) \neq 0$, then there exists a constant $\widetilde{C} > 0$ depending on the initialized loss that 
 \begin{equation}
    | \widehat{w}(m, t) | > \frac{1}{2}|\widehat{w}(m, 0)|,\quad 0\le t\le    \frac{ p_m |\widehat{w}(m, 0)|}{2n\widetilde{C}(m+1)}.
 \end{equation}
 Especially, denote the initial error in the generalized Fourier mode $\theta_m$ by $|\widehat{w}(m, 0)| > c' m^{-2}$ for certain $c' > 0$, then the error in the generalized Fourier mode $\theta_m$ takes at least $\cO(\frac{c'm}{n} ) $ time to get reduced by half following gradient decent dynamics. 
\end{theorem}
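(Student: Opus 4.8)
The plan is to derive from \eqref{EQ: NEW PDE} a bound on $|\partial_t\widehat{w}(m,t)|$ that holds uniformly in $t$, of the form $|\partial_t\widehat{w}(m,t)|\le \widetilde{C}\,n(m+1)/p_m$, and then integrate in time. I will use four standing facts. First, since \eqref{EQ: FLOW A}--\eqref{EQ: FLOW B} is a gradient flow, the loss $\cL(h(\cdot,t),f)$ is non-increasing, so $\|h(\cdot,t)-f\|_{L^2(D)}\le E_0:=\|h(\cdot,0)-f\|_{L^2(D)}$ for all $t$. Second, combining this with the hypothesis $|a_i(t)|\le\sqrt{M}$, with $\|\chi\|_{L^\infty}\le 1$, $\|\chi'\|_{L^\infty}\le 2/\eps$, and with Cauchy--Schwarz applied to \eqref{EQ: FLOW A}--\eqref{EQ: FLOW B}, one obtains uniform velocity bounds $|\dot a_i(t)|\le C_1$ and $|\dot b_i(t)|\le C_2$, where $C_1,C_2$ depend only on $E_0,M,\eps$. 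Third, from Appendix~\ref{sec:GF} I use $p_m=\Theta(m^4)$, $\|\theta_m\|_{L^\infty(D)}=\cO(1)$, and $\|\theta_m'\|_{L^\infty(D)}=\cO(m)$ (Lemmas~\ref{lem:B1} and~\ref{lem:B4}). Fourth, from the regularity result proved above, $w(\cdot,t)\in H^2(D)$ uniformly in $t$, hence $\|w(\cdot,t)\|_{L^\infty(D)}+\|\partial_b w(\cdot,t)\|_{L^\infty(D)}\le C_w$ uniformly by the one-dimensional Sobolev embedding.

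For the derivative bound, the cleanest route is to differentiate the identity $p_m\widehat{w}(m,t)=\int_D\big(h(b,t)-f(b)\big)\theta_m''(b)\,db$ (which is the already-proved lemma $\int_D w\,\theta_m^{(4)}=\int_D\partial_b^2 w\,\theta_m''$ together with $\theta_m^{(4)}=p_m\theta_m$ and \eqref{eq:order2}), substitute the flow equations for $\dot a_i,\dot b_i$ into $\partial_t h$, and integrate by parts twice in $b$, using the boundary conditions $\theta_m(1)=\theta_m'(1)=0$ from \eqref{EQ: EIG SYS} to move both $b$-derivatives off $\theta_m$ and onto the piecewise-linear integrand. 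This is precisely the organization of \eqref{EQ: NEW PDE}, whose four terms I then estimate: $\tfrac1{p_m}|\cH_m'(t)|$ and $\tfrac{n}{p_m}|\widehat{w\mu_0}(m,t)|$ are each a sum of at most $n$ contributions of size $\cO(1)$ times $\|\theta_m\|_{L^\infty}=\cO(1)$, hence $\cO(n/p_m)$; whereas $\tfrac1{p_m}|\cJ_m'(t)|$ and $\tfrac{n}{p_m}\big|\int_D\partial_b w\,\mu_2\,\theta_m'\,db\big|$ are each a sum of at most $n$ contributions of size $\cO(1)$ times $\|\theta_m'\|_{L^\infty}=\cO(m)$, hence $\cO(nm/p_m)$. (In the $\cJ_m'$ term the factor $|1+b_i(t)|\le\eps$ for biases lying in the transition region keeps each summand $\cO(1)$; in the $\mu_0,\mu_2$ terms the factors $\|w\|_{L^\infty},\|\partial_b w\|_{L^\infty}\le C_w$ do the same.) Adding and using $p_m=\Theta(m^4)$ gives $|\partial_t\widehat{w}(m,t)|\le\widetilde{C}\,n(m+1)/p_m$ with $\widetilde{C}$ depending only on the initial loss (through $E_0$), on $M$, and on $\eps$.

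It remains to integrate. Since $t\mapsto h(b,t)$ is Lipschitz uniformly in $b\in D$ (the flow velocities are uniformly bounded and $\sigma,\chi$ are Lipschitz), $t\mapsto p_m\widehat{w}(m,t)=\int_D(h-f)\theta_m''$ is Lipschitz, hence absolutely continuous, and the bound above yields
\begin{equation*}
\big|\widehat{w}(m,t)-\widehat{w}(m,0)\big|\;\le\;\int_0^t\big|\partial_s\widehat{w}(m,s)\big|\,ds\;\le\;\frac{n\widetilde{C}(m+1)}{p_m}\,t .
\end{equation*}
If $\widehat{w}(m,0)\ne 0$, then for $0\le t\le p_m|\widehat{w}(m,0)|/\big(2n\widetilde{C}(m+1)\big)$ the right-hand side is at most $\tfrac12|\widehat{w}(m,0)|$, so $|\widehat{w}(m,t)|>\tfrac12|\widehat{w}(m,0)|$, which is the claim. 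Finally, since $p_m=\Theta(m^4)$ the admissible window is of order $m^3|\widehat{w}(m,0)|/n$; and since $\partial_b^2 w=e$ while $\theta_m$ is, to leading order, a Fourier mode of frequency $\Theta(m)$, one has $|\widehat{w}(m,0)|\simeq m^{-2}|\widehat{e}(m,0)|$, so if the initial error carries a non-vanishing mode-$m$ component then $|\widehat{w}(m,0)|>c'm^{-2}$ for some $c'>0$ and the window is of order $c'm/n$ in time, i.e.\ at least $\cO(c'm)$ gradient steps at the natural step size $\cO(1/n)$.

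The step demanding the most care is the control of the boundary contributions $\cH_m'$ and $\cJ_m'$ in \eqref{EQ: H}--\eqref{EQ: J}: these are sums over the biases that have drifted out of $D$ into the transition region $(-1-\eps,-1]$, and one must bound each summand using the uniform velocity estimates together with $|1+b_i(t)|\le\eps$ there (for $\cJ_m'$), while also noting that $\cH_m(t)$ can jump as a bias crosses $\partial D$ --- which is why the integration proceeds through the representation $p_m\widehat{w}(m,t)=\int_D(h-f)\theta_m''$, for which $\widehat{w}(m,\cdot)$ is manifestly absolutely continuous, rather than term-by-term through \eqref{EQ: NEW PDE}. The $\mu_0$- and $\mu_2$-contributions are, by comparison, routine once the uniform $L^\infty$ bounds on $w$ and $\partial_b w$ from the earlier regularity result are in hand.
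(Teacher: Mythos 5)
Your proposal follows the paper's proof almost verbatim: the same four-term decomposition \eqref{EQ: NEW PDE}, the same $\cO(n/p_m)$ and $\cO(nm/p_m)$ bounds on the $\mu_0$- and $\mu_2$-terms via $\|\theta_m\|_{L^\infty}=\cO(1)$, $\|\theta_m'\|_{L^\infty}=\cO(m)$, the uniform $H^2$ bound on $w$ and the hypothesis $|a_i(t)|^2\le M$, and the same integration in time followed by $p_m=\Theta(m^4)$ and $|\widehat{w}(m,0)|>c'm^{-2}\Rightarrow\tau=\Omega(c'm/n)$.

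The one genuine gap is in the boundary terms. You assert $|\cH_m'(t)|\le\cO(n)$ and $|\cJ_m'(t)|\le\cO(nm)$ pointwise as ``sums of at most $n$ contributions of size $\cO(1)$.'' But $\cH_m(t)=-\theta_m(-1)\bigl[\sum_{i=1}^{s(t)-1}a_i(t)\chi(b_i(t))-f'(-1)\bigr]$ is not absolutely continuous: each time a bias crosses $b=-1$ the sum acquires a new term of size $a_i(t)\chi(-1)=a_i(t)$, i.e.\ a jump of magnitude up to $\sqrt{M}$, so the cumulative change of $\cH_m$ over $[0,t]$ is controlled not by a derivative bound on the surviving summands but by the \emph{number} $s(t)-1$ of biases that have entered the transition region by time $t$. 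Bounding that count is exactly where the hypothesis of equispaced initial biases enters: combined with the finite propagation speed $|\frac{d}{dt}b_i|\le K$ it yields $s(t)-1\le\frac12 nKt$, hence $|\cH_m(t)-\cH_m(0)|\le C\sqrt{M}\,nKt$ and $|\cJ_m(t)-\cJ_m(0)|\le C m\sqrt{M}\,nKt$, which is what the paper's estimates \eqref{EQ: HJ1}--\eqref{EQ: HJ2} actually establish. You list the equispaced assumption among your standing facts but never invoke it; without it (e.g., with many biases initialized just to the right of $-1$) the boundary contribution can be $\Theta(n)$ for arbitrarily small $t$, the linear-in-$t$ bound fails, and with it the half-reduction time. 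Replacing your pointwise bound on $\cH_m'$, $\cJ_m'$ by the paper's direct bound on the accumulated differences via the crossing count closes the gap; the rest of your argument is sound and identical in substance to the paper's.
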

\begin{proof}
We estimate the contribution from each term on the right-hand side of~\eqref{EQ: NEW PDE}. In particular we have $\|\theta_k\|_{L^{\infty}(D)} = \cO(1)$,  $\|\theta_k'\|_{L^{\infty}(D)} = \cO(k)$ from Lemma~\ref{lem:B4}.
\begin{enumerate}
    \item First, there exists a constant $C > 0$ that 
    \begin{equation}\label{EQ: W MU0}
    \begin{aligned}
        \left|-\frac{n}{p_m}  \widehat{w\mu_0}(m, t) \right| &= \left|\frac{n}{p_m}  \int_D w(b', t) \mu_0(b', t) \theta_m(b') db'\right|\\&\le \frac{n}{p_m}  \|w(\cdot, t)\|_{C(D)} \|\theta_m\|_{C(D)} \le \frac{C n}{p_m}. 
    \end{aligned} 
    \end{equation}
    \item    Since $w\in H^2(D)$, it can be embedded into $C^{1,\alpha}(D)$ compactly, which means $\partial_b w$ is uniformly bounded on $D$, hence there exists a constant $C'$ that
    \begin{equation*}
        \begin{aligned}
            \left|-  \frac{n}{p_m} \int_{D} \partial_b w(b', t) \mu_2(b', t) \theta_m'(b') db'   \right|&\le \frac{n}{p_m}  
           \|\theta_m'\|_{C(D)} \|\partial_b w\|_{C(D)} \int_D \mu_2(b, t) db  \\&\le \frac{C' M m n}{p_m} .
        \end{aligned}
    \end{equation*}
    \item  Using the definition of $\cH_m$ in~\eqref{EQ: H}  and $\cJ_m$ in~\eqref{EQ: J}, now we give the upper bounds of $|\cH_m(t) - \cH_m(0)|$ and $|\cJ_m(t) - \cJ_m(0)|$.
    Since initially $b_i(0)\in D$ and $\chi(b)\le 1$, we have  
    \begin{equation}\label{EQ: HJ1}
        \begin{aligned}
              |\cH_m(t) - \cH_m(0)| = |\theta_m(-1)| \left|\sum_{i=1}^{s(t)-1} a_i(t) \chi(b_l(t)) \right| \le C''\sqrt{M} |s(t) - 1|
        \end{aligned}
    \end{equation}
    and 
    \begin{equation}\label{EQ: HJ2}
        \begin{aligned}
               |\cJ_m(t) - \cJ_m(0)| = |\theta'_m(-1) |\left|  \sum_{i=1}^{s(t) -1} a_i(t)\chi(b_i(t)) (1 + b_i(t))\right|\le C''' m \sqrt{M} |s(t) - 1|.
        \end{aligned}
    \end{equation}
    Now we estimate the number $s(t)$. Because the biases have a finite propagation speed
    \begin{equation*}
        \left|\frac{d}{dt} b_i(t)\right| \le |a_i|\int_{D} |h(x, t) - f(x)||\partial_b \psi(x,b_i(t))| dx \le K\coloneqq C''''\sqrt{ M}  \|h(\cdot, 0) - f(\cdot)\|_{L^2(D)} .
    \end{equation*}
    Therefore there are at most $\frac{1}{2}nKt$ initially evenly spaced biases moving into the transition interval. That gives $|s(t) - 1|\le \frac{1}{2} nKt$.  
\end{enumerate}
The above estimates imply that there exists a constant $\widetilde{C} > 0$ that 
    \begin{equation}\label{EQ: INEQ}
    \begin{aligned}
       |\widehat{w}(m, t)| - |\widehat{w}(m, 0)|   &\ge - \frac{n}{p_m} (C'M m  + C) t -   \frac{1}{p_m} \left(C''\sqrt{M} + C''' m \sqrt{M}\right) \frac{1}{2} K n t\\
       &\ge -\frac{\widetilde{C} n}{p_m}(m+1) t.
    \end{aligned}
    \end{equation}
When $\widehat{w}(m, 0)\neq 0$, we solve the lower bound of the \emph{half-reduction} time $\tau > 0$ that 
    \begin{equation*}
    \begin{aligned}
    \frac{1}{2}|\widehat{w}(m, 0)|  &= \frac{\widetilde{C} n}{p_m} \left( m+1\right) \tau \quad\Longrightarrow \quad  \tau = \frac{p_m |\widehat{w}(m, 0)|}{2n\widetilde{C} \left(m+1\right) }.
    \end{aligned}
    \end{equation*}
In particular, if $|\widehat{w}(m, 0)| >c'm^{-2}$ for certain $c' > 0$, the \emph{half-reduction} time is at least $\cO(\frac{c'm}{n})$.
\end{proof}

To keep a discretized gradient descent method close to the continuous gradient flow, the learning rate or step size should be small. In practice, one typically takes $\Delta t = \cO(\frac{1}{n})$, it will take at least $\cO(m)$ time steps to reduce the initial error $h(x, 0) - f(x)$ in the generalized Fourier mode $\theta_m$ by half. 
It is also worth noticing that this phenomenon does not depend on the convergence of the trainable parameters. 

A lower bound estimate only provides an optimistic scenario which may not be sharp.  When the network width and the frequency mode satisfy different scaling laws, improved estimates can be obtained. See Appendix~\ref{sec:improved} for improved estimates, numerical evidence, and the proof of Theorem~\ref{THM: PAPER SLOW DECAY 2}.

\section{Rashomon set for two-layer $\texttt{ReLU}$ neural networks}
\label{sec:Rashomon}
In~\cite{semenova2019study,semenova2022existence}, the authors claimed that the measure of the so-called Rashomon set can be used as a criterion to see if a simple model exists for the approximation problem. Let $D = B_d(1)$ be the unit ball in $\bbR^d$. The Rashomon set $\calR_{\eps}$ for the two-layer neural network class $\calH_n$ is defined as the following
\begin{equation*}
    \calR_\eps \coloneqq \big\{h\in \calH_n\mid \|h - f\|_{L^2(D)}\le \eps \|f\|_{L^2(D)}\big\},
\end{equation*}
 where $m$ is the number of parameters for $\calH_n$. If we normalize the measure for $\calH_n$, the measure of the Rashomon set quantifies the probability that the loss is under a certain threshold for a random pick of parameters.
 The main purpose of this section is to characterize the probability measure of the Rashomon set with $\calH_n$ representing the two-layer \texttt{ReLU} neural networks 
\begin{equation*}
      h(\bmx) = \frac{1}{n}\sum_{j=1}^n a_j \sigma(\bmw_j \cdot \bmx - b_j )   + \bmv\cdot \bmx + c,\quad \bmx\in D\subset \bbR^d,
\end{equation*}
where the parameters $\{a_j\}_{j=1}^n$ are bounded by $[-A, A]$ and $\{b_i\}_{i=1}^n \subset [-1,1]$.
It is already known that this network can approximate a function $f$ with an error no more than $\mathcal{O}(\sqrt{d + \log n }\; n^{-1/2 - 1/d})$ if the Fourier transform $\widehat{f}$ satisfies $\int_{\mathbb{R}^d} |\widehat{f}(\zeta)| \|\zeta\|_{1}^2 d\zeta < \infty$~\cite{klusowski2018approximation}. The general bounds in H\"older space have been studied in~\cite{mao2023rates}.
Taking Laplacian on $h$, 
\begin{equation*}
    \Delta h(\bmx) = \frac{1}{n}\sum_{j=1}^n a_j \Delta \sigma(\bmw_j \cdot \bmx - b_j )  = \frac{1}{n}\sum_{j=1}^n a_j  \delta(\bmw_j \cdot \bmx - b_j ).
\end{equation*}
\begin{theorem}\label{THM: RASH}
  Suppose $f\in C(D)$ such that there exists $g\in C_0^2(D)$ that $\Delta g = f$, then the Rashomon set $\mathcal{R}_{\eps}\subset \calH_n$ satisfies 
    \begin{equation*}
        \bbP(\mathcal{R}_{\eps}) \le \exp\left(-\frac{n(1-\eps)^2 \|f\|^4_{L^2(D)}}{2A^2 \kappa^2}\right), \quad \kappa\coloneqq\sup_{(\bmw, b)}\int_{\{\bmx\in D, \bmw\cdot \bmx = b\}}g(\bmx) dH_{d-1}(\bmx).
    \end{equation*}
\end{theorem}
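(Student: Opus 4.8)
The plan is to test the residual $h-f$ against the target function $f$ itself and to exploit the hypothesis $f=\Delta g$ with $g\in C_0^2(D)$. Pairing $h$ with $f$ in $L^2(D)$ and integrating by parts, the $\texttt{ReLU}$ part of $h$ is converted into a sum of plane integrals of $g$, each controlled by $\kappa$; the affine part of $h$ drops out because it is harmonic; and one is left with a single scalar quantity that membership in $\calR_\eps$ forces to be large, while concentration forces it to be small with overwhelming probability. Concretely, the first step is to establish
\[
\int_D h(\bmx)\,f(\bmx)\,d\bmx \;=\; \frac1n\sum_{j=1}^n a_j\,R_j,\qquad R_j:=\int_{\{\bmx\in D:\ \bmw_j\cdot\bmx=b_j\}} g(\bmx)\,dH_{d-1}(\bmx),
\]
using that $g,\nabla g$ vanish on $\partial D$ so $\int_D h\,\Delta g=\int_D g\,\Delta h$, that $\Delta(\bmv\cdot\bmx+c)=0$, and that $\Delta\sigma(\bmw\cdot\bmx-b)=\delta(\bmw\cdot\bmx-b)$. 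By the definition of $\kappa$ one has $|R_j|\le\kappa$ for every $j$ and every admissible $(\bmw_j,b_j)$.

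Next, for any $h\in\calR_\eps$, Cauchy--Schwarz gives $|\langle f-h,f\rangle_{L^2(D)}|\le\|f-h\|_{L^2(D)}\|f\|_{L^2(D)}\le\eps\|f\|_{L^2(D)}^2$, hence $\frac1n\sum_j a_jR_j=\langle h,f\rangle_{L^2(D)}\ge(1-\eps)\|f\|_{L^2(D)}^2$ (nontrivial precisely when $\eps<1$), so $\calR_\eps$ is contained in the event $\{\frac1n\sum_j a_jR_j\ge(1-\eps)\|f\|_{L^2(D)}^2\}$. Now treat the parameters as drawn from the normalized product prior on $\calH_n$: the triples $(a_j,\bmw_j,b_j)$ are i.i.d., with $a_j$ symmetric about $0$ and independent of $(\bmw_j,b_j)$, so $X_j:=a_jR_j$ are i.i.d., mean zero, and satisfy $|X_j|\le A\kappa$; the affine parameters $\bmv,c$ are irrelevant since they already disappeared from the key identity. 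Hoeffding's inequality then yields $\bbP(\frac1n\sum_j X_j\ge t)\le\exp(-nt^2/(2A^2\kappa^2))$, and taking $t=(1-\eps)\|f\|_{L^2(D)}^2$ gives $\bbP(\calR_\eps)\le\exp(-n(1-\eps)^2\|f\|_{L^2(D)}^4/(2A^2\kappa^2))$, as claimed.

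The step requiring the most care is the derivation of the key identity: one must justify the integration by parts at the distributional level, i.e.\ that the non-smooth terms $\sigma(\bmw_j\cdot\bmx-b_j)$ may legitimately be paired with $g\in C_0^2(D)$ through $\Delta\sigma=\delta$ and that no boundary term survives (which is exactly where the compact support of $g$ is used), and that the harmonic affine part contributes zero after integration by parts. Once the identity and the bound $|R_j|\le\kappa$ are in place, the probabilistic part is routine, modulo recording the precise prior on the parameters and checking the mean-zero and boundedness hypotheses that make Hoeffding's inequality applicable.
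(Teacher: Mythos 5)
Your proposal is correct and follows essentially the same route as the paper: pair $h$ with $f=\Delta g$, use Green's identity and $\Delta\sigma(\bmw\cdot\bmx-b)=\delta(\bmw\cdot\bmx-b)$ to write $\langle h,f\rangle=\frac1n\sum_j a_jR_j$ with $|R_j|\le\kappa$, show membership in $\calR_\eps$ forces $\langle h,f\rangle\ge(1-\eps)\|f\|^2_{L^2(D)}$, and apply Hoeffding to the mean-zero bounded variables $X_j=a_jR_j$. The only cosmetic difference is that you obtain the lower bound on $\langle h,f\rangle$ directly via Cauchy--Schwarz, whereas the paper expands $\|h-\Delta g\|^2$ and uses $\langle h,\Delta g\rangle\ge\frac{1-\eps^2}{2}\|\Delta g\|^2+\frac12\|h\|^2$; both are valid.
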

\begin{proof}
    Let $g\in C_0^{2}(D)$ that solves $\Delta g(\bmx) = f(\bmx)$ with Dirichlet boundary condition.  
\begin{equation*}
\begin{aligned}
    \aver{\Delta h(\bmx), g(\bmx)} = \frac{1}{n}\sum_{j=1}^n X_j,\quad X_j\coloneqq a_j \int_{\bmw_j\cdot \bmx = b_j} g(\bmx) d H_{d-1}(\bmx).
\end{aligned}
\end{equation*}
The random variables $X_j$ are i.i.d and bounded by $[-A \kappa, A \kappa]$ where $\kappa$ is an absolute constant defined by
\begin{equation*}
    \kappa\coloneqq \sup_{(\bmw_j, b_j)\in\bbS^{d-1}\times [-1,1]}\int_{\bmw_j\cdot \bmx = b_j} g(\bmx) d H_{d-1}(\bmx).
\end{equation*}
Then use Hoeffding's inequality, 
\begin{equation*}
    \bbP\left[  \frac{1}{n} \sum_{j=1}^n X_j - \bbE[X_j]\ge t \right] \le \exp\left( - \frac{n t^2}{ 2  A^2 \kappa^2 }\right).
\end{equation*}
Especially if $\bbE(a_j)  = 0$ hence $\bbE(X_j) = 0$, then 
\begin{equation*}
        \bbP\left[   \aver{\Delta h(\bmx), g(\bmx)} \ge t \right] \le \exp\left( - \frac{n t^2}{ 2  A^2 \kappa^2 }\right).
\end{equation*}
Using Green's formula, 
\begin{equation*}
     \aver{\Delta h(\bmx), g(\bmx)} - 
     \aver{ h(\bmx), \Delta g(\bmx)} = \int_{\partial D} \left( \partial_n h(\bmx) f(\bmx) - \partial_n f(\bmx) h(\bmx) \right) ds = 0, 
\end{equation*}
then using the network to approximate $\Delta g = f$ is difficult if $\kappa$ is small in the sense that
\begin{equation*}
\begin{aligned}
    \bbP\left[\|h(\bmx) - \Delta g(\bmx)\|_{L^2(D)}\le \eps \|\Delta g\|_{L^2(D)}\right] &\le 
    \bbP\left[ \aver{ h(\bmx), \Delta g (\bmx)} \ \ge (1-\eps) \|\Delta g\|_{L^2(D)}^2 \right] \\& \le \exp\left( - \frac{n (1-\eps)^2\|\Delta g\|_{L^2(D)}^4}{ 2 A^2 \kappa^2 }\right).
\end{aligned}
\end{equation*}
Here we have used the fact that $\|h(\bmx) - \Delta g(\bmx)\|_{L^2(D)}\le \eps \|\Delta g\|_{L^2(D)}$ implies both $\|h\|_{L^2(D)}\ge (1-\eps)\|\Delta g\|_{L^2(D)}$ and
\begin{equation*}
    \aver{h, \Delta g} \ge \frac{1 - \eps^2}{2} \|\Delta g\|^2_{L^2(D)}  + \frac{1}{2}\|h\|^2_{L^2(D)} \ge (1 - \eps) \|\Delta g\|_{L^2(D)}^2.
\end{equation*}
\end{proof}
If $f$ oscillates with frequency $\nu$ in every direction, then $\kappa\approx \nu^{-2}$ and the probability measure of Rashomon set scales as $\exp(-\mathcal{O}(\nu^{-4}))$ which gives another perspective why oscillatory functions are difficult to approximate by neural networks in general. The proof and extension to more general activation functions are provided in Appendix~\ref{SEC: RASH BOUND}. 


\section{Further discussions}\label{sec:discussions}
In this study, it is shown that the use of highly correlated activation functions in a two-layer neural network makes it filter out fine features (high-frequency components) when finite machine precision is imposed, which is an implicit regularization in practice. Moreover, increasing the network width does not improve the numerical accuracy after a certain threshold is reached, although the universal approximation property is proved in theory. 
The smoother the activation function is, the faster the Gram matrix spectrum decays (see Remark~\ref{re:relu-k} and Appendix~\ref{sec:app-discussion}), and hence the stronger the regularization is. We plan to investigate how a multi-layer network could overcome these issues through effective decomposition and composition in our future work.





\section*{Acknowledgments}
The authors gratefully acknowledge the editor and the anonymous referees for their constructive and insightful comments, which have substantially improved the quality and presentation of the paper.

\section*{Funding}
S. Zhang was partially supported by
start-up fund P0053092 from The Hong Kong Polytechnic University.
H. Zhao was partially supported by NSF grants DMS-2309551, and DMS-2012860. Y. Zhong was partially supported by NSF grant DMS-2309530 and H. Zhou was partially supported by NSF grant DMS-2307465.


\bibliographystyle{plain}   
\bibliography{references}


\appendix


\section{Properties of eigenfunctions}\label{sec:eigen}
We show properties of the eigenfunctions $\phi_k$ for the Gram kernel defined by~\eqref{EQ: EIGEN} or equivalently by~\eqref{eq:modified}.
\begin{lemma}\label{LEM: 4TH DERIV}
$\phi_k$ is a cubic polynomial over $(-1-\eps, -1)$ and
\begin{equation}\label{eq:order4}
    \partial_b^4 \phi_k(b) = \begin{cases}
        0, & b\in (-1 - \eps, -1),\\
        \lambda_k^{-1} \phi_k(b), & b\in (-1,1),
    \end{cases}
\end{equation}
where $\partial_b \phi_k$ is continuous at $b=-1$ but $\partial^2_b\phi_k$ has a jump at $b=-1$.
\end{lemma}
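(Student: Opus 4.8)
The plan is to exploit the eigenvalue relation $\lambda_k\phi_k(b)=\int_{D^\eps}\cG(b,b')\phi_k(b')\,db'$ together with the factorization $\cG(b,b')=\chi(b)\int_D\sigma(x-b)\psi(x,b')\,dx$ that comes from $\psi(x,b)=\chi(b)\sigma(x-b)$. Setting $u_k(x)\coloneqq\int_{D^\eps}\psi(x,b')\phi_k(b')\,db'$, the relation becomes $\lambda_k\phi_k(b)=\chi(b)F_k(b)$ with $F_k(b)\coloneqq\int_D\sigma(x-b)u_k(x)\,dx$. Since $\psi$ is Lipschitz in $x$, $u_k$ is continuous, so $F_k\in C^1(\bbR)$ and $\phi_k$ inherits whatever regularity $\chi F_k$ has; this identity drives all three claims.

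For $b\in(-1-\eps,-1)$ I would note that every $x\in D$ has $x>-1>b$, so $\sigma(x-b)=x-b$ and $F_k(b)=\int_D x\,u_k(x)\,dx - b\int_D u_k(x)\,dx$ is affine in $b$ there. Multiplying by the quadratic $\chi(b)=1-\big(\tfrac{b+1}{\eps}\big)^2$ shows $\phi_k$ is a cubic polynomial on $(-1-\eps,-1)$, hence $\partial_b^4\phi_k\equiv0$ there. For $b\in D$, where $\chi\equiv1$, I would differentiate $\lambda_k\phi_k=F_k$: from $F_k(b)=\int_b^1(x-b)u_k(x)\,dx$ one gets $F_k''(b)=u_k(b)$, and differentiating $u_k(x)=\int_{D^\eps}\chi(b')\sigma(x-b')\phi_k(b')\,db'$ twice with $\partial_x^2\sigma(x-b')=\delta(x-b')$ gives $u_k''(x)=\chi(x)\phi_k(x)=\phi_k(x)$ on $D$. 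Chaining these, $\lambda_k\partial_b^4\phi_k=F_k^{(4)}=u_k''=\phi_k$, i.e. $\partial_b^4\phi_k=\lambda_k^{-1}\phi_k$ on $(-1,1)$; bootstrapping this identity shows $\phi_k\in C^\infty(D)$, which retroactively justifies the repeated differentiations. (Equivalently, one may verify $\partial_b^4\cG(b,b')=\delta(b-b')$ for $b,b'\in D$ and $\partial_b^4\cG(b,b')=0$ for $b\in(-1-\eps,-1)$, then integrate against $\phi_k$.)

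For the behavior at $b=-1$ I would track one-sided limits in $\lambda_k\phi_k=\chi F_k$. The transition profile is built so that $\chi\in C^1$ near $-1$ with $\chi(-1)=1$ and $\chi'(-1)=0$, while $\chi''$ jumps from $-2/\eps^2$ (left) to $0$ (right); and $F_k\in C^1$ near $-1$ with $F_k''$ jumping from $0$ to $u_k(-1)$. Differentiating once, $\lambda_k\phi_k'=\chi'F_k+\chi F_k'$ has all terms continuous at $-1$, so $\phi_k\in C^1$ near $-1$. Differentiating again, $\lambda_k\phi_k''=\chi''F_k+2\chi'F_k'+\chi F_k''$, and because $\chi'(-1)=0$ the jump of $\phi_k''$ at $-1$ equals $\lambda_k^{-1}\big(u_k(-1)+\tfrac{2}{\eps^2}F_k(-1)\big)$, which is nonzero in general; hence $\phi_k\notin C^2$ at $-1$.

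The step I expect to be the main obstacle is the careful handling of the distributional identities and one-sided limits near $b=-1$: legitimizing $\partial_x^2\sigma=\delta$ inside the integrals (differentiation under the integral sign through the kink of $\sigma$), establishing that $F_k\in C^1$ with the precise jump $u_k(-1)$ in $F_k''$ (which first requires the continuity of $u_k$), and respecting that $\chi$ is only $C^1$—not $C^2$—at $-1$. These pieces must be assembled in the right order (continuity of $u_k$ $\Rightarrow$ $C^1$ of $F_k$ $\Rightarrow$ regularity of $\phi_k$ $\Rightarrow$ validity of the fourth-order computation), and the smoothing/bootstrap argument on $D$ is what makes the formal identity $\partial_b^4\phi_k=\lambda_k^{-1}\phi_k$ rigorous.
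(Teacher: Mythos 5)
Your proposal is correct and follows essentially the same route as the paper's proof: the cubic structure on $(-1-\eps,-1)$ from $\sigma(x-b)=x-b$ times the quadratic $\chi$, the fourth-derivative identity on $D$ via $\partial_b^4\cG(b,b')=\delta(b-b')$, and the one-sided limits at $b=-1$ exploiting $\chi'(-1)=0$ for continuity of $\phi_k'$ and the discontinuity of $\chi''$ (together with the jump of $F_k''$ from $0$ to $u_k(-1)$) for the jump in $\phi_k''$. The factorization $\lambda_k\phi_k=\chi F_k$ with the auxiliary $u_k$ is only a notational repackaging of the same computation the paper carries out directly on the double integral.
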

\begin{proof}
    We use the definition of eigenfunction
    \begin{equation*}
        \int_{D^{\eps}} \cG(b, b') \phi_k(b') db' = \lambda_k \phi_k(b).
    \end{equation*}
    For $b\in (-1, 1)$, $\chi(b) = 1$, then differentiating both sides 
    \begin{equation*}
        \lambda_k \partial^4_b \phi_k(b) = \partial^4_b \int_{D^{\eps}} \int_{D} \sigma(x - b) \sigma(x - b')\chi(b') \phi_k(b') dx d b' =\int_{D^{\eps}} \delta(b-b')\chi(b') \phi_k(b') d b' = \phi_k(b).   
    \end{equation*}
    For $b\in (-1-\eps, -1)$, $\chi(b)$ is quadratic, $\sigma(x-b) = x- b$ for $x\in D$, we differentiate both sides 
    \begin{equation*}
        \lambda_k \partial^4_b \phi_k(b) = \partial^4_b\left(  \chi(b)\int_{D^{\eps}} \int_{D} (x - b) \sigma(x - b')\chi(b') \phi_k(b') dx d b' \right) = 0.
    \end{equation*}
    Now, we compute $\partial_b \phi_k$ and $\partial^2_b \phi_k$ across $b = -1$. Note $\chi'(-1) = 0$, then 
    \begin{equation*}
    \begin{aligned}
        \lim_{b\to -1^{-}} \partial_b \phi_k(b) &=  \frac{1}{\lambda_k} \lim_{b\to -1^{-}}\partial_b \left(  \chi(b)\int_{D^{\eps}} \int_{D} (x - b) \sigma(x - b')\chi(b') \phi_k(b') dx d b' \right) \\
        &=\frac{1}{\lambda_k} \chi'(-1) \lim_{b\to -1^{-}}\left(\int_{D^{\eps}} \int_{D} (x - b) \sigma(x - b')\chi(b') \phi_k(b') dx d b' \right)\\
        &\quad + \frac{1}{\lambda_k} \chi(-1) \lim_{b\to -1^{-}}\left(\int_{D^{\eps}} \int_{D} (-1) \sigma(x - b')\chi(b') \phi_k(b') dx d b' \right)\\
        &= \frac{1}{\lambda_k} \left(\int_{D^{\eps}} \int_{D} (-1) \sigma(x - b')\chi(b') \phi_k(b') dx d b' \right)
    \end{aligned}
    \end{equation*}
    and because $\sigma'(x - b)  = 1$ if $b=-1$ and $x\in D$,
    \begin{equation*}
    \begin{aligned}
        \lim_{b\to -1^{+}} \partial_b \phi_k(b) &=  \frac{1}{\lambda_k} \lim_{b\to -1^{+}}\partial_b \left(  \int_{D^{\eps}} \int_{D} \sigma(x - b) \sigma(x - b')\chi(b') \phi_k(b') dx d b' \right) \\
        &=\frac{1}{\lambda_k}\lim_{b\to -1^{+}}\left(\int_{D^{\eps}} \int_{D} -\sigma'(x - b) \sigma(x - b')\chi(b') \phi_k(b') dx d b' \right) \\
        &= \frac{1}{\lambda_k}\left(\int_{D^{\eps}} \int_{D} (-1) \sigma(x - b')\chi(b') \phi_k(b') dx d b' \right) .
    \end{aligned}
    \end{equation*}
    For $\partial^2_b \phi_k$ across $b=-1$, following a similar process, 
    \begin{equation*}
    \begin{aligned}
        \lim_{b\to -1^{-}} \partial^2_b \phi_k(b) &=  \frac{1}{\lambda_k} \lim_{b\to -1^{-}}\partial^2_b \left(  \chi(b)\int_{D^{\eps}} \int_{D} (x - b) \sigma(x - b')\chi(b') \phi_k(b') dx d b' \right) \\
        &=\frac{1}{\lambda_k} \chi''(-1) \lim_{b\to -1^{-}}\left(\int_{D^{\eps}} \int_{D} (x - b) \sigma(x - b')\chi(b') \phi_k(b') dx d b' \right)\\
        &= -\frac{1}{\lambda_k} \frac{2}{\eps^2}\left(\int_{D^{\eps}} \int_{D} (x + 1) \sigma(x - b')\chi(b') \phi_k(b') dx d b' \right)
    \end{aligned}
    \end{equation*}
    and 
    \begin{equation*}
    \begin{aligned}
        \lim_{b\to -1^{+}} \partial^2_b \phi_k(b) &=  \frac{1}{\lambda_k} \lim_{b\to -1^{+}}\partial^2_b \left(  \int_{D^{\eps}} \int_{D} \sigma(x - b) \sigma(x - b')\chi(b') \phi_k(b') dx d b' \right) \\
        &=\frac{1}{\lambda_k}\lim_{b\to -1^{+}}\left(\int_{D^{\eps}} \int_{D} \delta(x - b) \sigma(x - b')\chi(b') \phi_k(b') dx d b' \right) \\
        &=\frac{1}{\lambda_k} \int_{D^{\eps}} \sigma(-1- b')\chi(b') \phi_k(b') d b'  = \frac{1}{\lambda_k} \int_{-1-\eps}^{-1} (-1- b') \chi(b') \phi_k(b') d b'. 
    \end{aligned}
    \end{equation*}
    As $\eps \to 0$, $\lim_{b\to -1^{+}} \partial^2_b \phi_k(b)=O(\eps)$ while $ \lim_{b\to -1^{-}} \partial^2_b \phi_k(b) =O(\eps^{-2})$.
\end{proof}

\section{Generalized Fourier modes}\label{sec:GF}
In this section, we study the eigenfunctions $\phi_k$ for the one-dimensional continuous Gram kernel defined in \eqref{EQ: EIGEN}, which are exactly the generalized Fourier modes $\theta_k$ defined in~\eqref{EQ: EIG SYS}.

\begin{lemma}
\label{lem:B1}
    The eigenfunction $\theta_k$ satisfies 
    \begin{equation*}
        \theta_k(x) = A_k \cosh(c_k x) + B_k \sinh(c_k x) + C_k \cos(c_k x) + D_k\sin(c_k x),
    \end{equation*}
    where $\tanh(c_k) \tan(c_k) = \pm 1$, $p_k = c_k^4$, and $c_{2k+1}\in ((k+\frac{1}{4})\pi, (k+\frac{1}{2})\pi)$,  $c_{2k+2}\in ((k + \frac{1}{2})\pi, (k+\frac{3}{4})\pi)$, $k\ge 0$.
\end{lemma}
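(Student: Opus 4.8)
The plan is to turn the eigenvalue problem \eqref{EQ: EIG SYS} into a constant-coefficient ODE, solve it, impose the four boundary conditions, and then localize the roots of the transcendental equation that results. First I would establish that every $p_k$ is strictly positive: since $\theta_k$ is an eigenfunction of the self-adjoint, positive Gram operator with kernel $\mathcal{G}$ from \eqref{EQ: EIGEN}, one has $p_k=\mu_k^{-1}>0$, and equivalently (self-containedly) multiplying $\theta_k^{(4)}=p_k\theta_k$ by $\theta_k$ and integrating by parts twice over $[-1,1]$ makes every boundary term vanish (the conditions $\theta_k(1)=\theta_k'(1)=0$ kill the terms at $1$, and $\theta_k''(-1)=\theta_k'''(-1)=0$ kill those at $-1$), so $p_k\int_{-1}^1\theta_k^2=\int_{-1}^1(\theta_k'')^2\ge 0$; moreover $p_k=0$ would force $\theta_k$ to be a cubic polynomial that the four conditions then force to vanish. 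Writing $p_k=c_k^4$ with $c_k>0$, the characteristic roots $\pm c_k,\pm i c_k$ of $r^4=c_k^4$ give exactly the asserted form $\theta_k=A_k\cosh(c_kx)+B_k\sinh(c_kx)+C_k\cos(c_kx)+D_k\sin(c_kx)$.

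Next I would substitute this ansatz into the four boundary conditions. Exploiting the parities of $\cosh,\sinh,\cos,\sin$, suitable sums and differences of the four linear relations decouple the homogeneous $4\times4$ system for $(A_k,B_k,C_k,D_k)$ into a $2\times2$ block in $(B_k,C_k)$ and a $2\times2$ block in $(A_k,D_k)$. Setting each block's determinant to zero and dividing by $\cosh c\,\cos c$ (nonzero at any root) yields $\tanh c\,\tan c=-1$ from the $(B_k,C_k)$ block and $\tanh c\,\tan c=1$ from the $(A_k,D_k)$ block, which together are the claimed condition $\tanh(c_k)\tan(c_k)=\pm1$; in either case the block has a one-dimensional kernel, so a genuine eigenfunction of the stated form exists.

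Finally I would count and order the roots. Since $0<\tanh c<1$ for $c>0$, a root of either block equation requires $|\tan c|>1$, so $c$ must lie on one of the sub-branches $\big((k+\tfrac14)\pi,(k+\tfrac12)\pi\big)$ (where $\tan c>1$) or $\big((k+\tfrac12)\pi,(k+\tfrac34)\pi\big)$ (where $\tan c<-1$), $k\ge0$. On the first kind $\tanh c\,\tan c>0$, so only the $(A_k,D_k)$ equation $\tan c=\coth c$ can hold there; the function $h(c):=\tan c-\coth c$ satisfies $h'(c)=\sec^2 c+(\sinh c)^{-2}>0$ and runs from $1-\coth\big((k+\tfrac14)\pi\big)<0$ up to $+\infty$, so it has exactly one zero. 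On the second kind $\tanh c\,\tan c<0$, so only the $(B_k,C_k)$ equation $\tan c=-\coth c$ can hold; the function $g(c):=\tan c+\coth c$ satisfies $g'(c)=\sec^2 c-(\sinh c)^{-2}\ge 2-(\sinh(\pi/2))^{-2}>0$ and runs from $-\infty$ up to $-1+\coth\big((k+\tfrac34)\pi\big)>0$, so it too has exactly one zero. These sub-branches interleave as $\big(\tfrac14\pi,\tfrac12\pi\big)<\big(\tfrac12\pi,\tfrac34\pi\big)<\big(\tfrac54\pi,\tfrac32\pi\big)<\cdots$, each root is simple, and since the Gram operator is compact, self-adjoint and positive the list of values $p=c^4$ taken in increasing order of $c$ exhausts its spectrum; hence $c_{2k+1}$ lies in $\big((k+\tfrac14)\pi,(k+\tfrac12)\pi\big)$ and $c_{2k+2}$ lies in $\big((k+\tfrac12)\pi,(k+\tfrac34)\pi\big)$, as claimed.

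The step I expect to be the main obstacle is this last localization-and-indexing argument: one has to pin down the monotonicity of $\tan c\pm\coth c$ on every relevant $\tan$-branch (including the small-$c$ branches, where $(\sinh c)^{-2}$ is not negligible), rule out stray roots, and verify that interleaving the two root families reproduces exactly the stated indexing with $p_k=c_k^4$. By comparison, solving the ODE and decoupling the $4\times4$ boundary system are routine bookkeeping.
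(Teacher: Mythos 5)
Your proposal is correct and follows essentially the same route as the paper's proof: reduce \eqref{EQ: EIG SYS} to the constant-coefficient ODE, use the parity of $\cosh,\sinh,\cos,\sin$ to decouple the four boundary conditions into two $2\times 2$ blocks yielding $\tanh(c)\tan(c)=\pm 1$, and then localize the roots on the branches of $\tan$. You supply some details the paper leaves implicit (positivity of $p_k$, the explicit monotonicity of $\tan c\mp\coth c$ and the root count per branch), but the argument is the same.
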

\begin{proof}
    The form of the eigenfunction is standard. Using the boundary conditions, $\theta_k(1) = \theta_k''(-1) = 0$, 
    \begin{equation*}
    \begin{aligned}
            A_k\cosh(c_k) + B_k\sinh(c_k) + C_k \cos(c_k)+ D_k\sin(c_k) &= 0,\\
            A_k\cosh(c_k) - B_k\sinh(c_k) - C_k\cos(c_k) + D_k\sin(c_k) &= 0,
    \end{aligned}
    \end{equation*}
    which means 
    \begin{equation*}
         A_k\cosh(c_k) + D_k \sin(c_k) =0,\quad B_k\sinh(c_k) + C_k\cos(c_k) = 0.
    \end{equation*}
    The other two boundary conditions are
    \begin{equation*}
        \begin{aligned}
            A_k\sinh(c_k) + B_k\cosh(c_k) - C_k\sin(c_k) + D_k\cos(c_k) &= 0,\\
            -A_k\sinh(c_k) + B_k\cosh(c_k) - C_k\sin(c_k) - D_k\cos(c_k) &= 0,  
        \end{aligned}
    \end{equation*}
    which means
    \begin{equation*}
        A_k\sinh(c_k) + D_k\cos(c_k) = 0,\quad B_k\cosh(c_k) - C_k\sin(c_k) = 0.
    \end{equation*}
    If $C_k\neq 0$, then $B_k = -C_k\frac{\cos(c_k)}{\sinh(c_k)} = C_k\frac{\sin(c_k)}{\cosh(c_k)}$, which is $\tan(c_k)\tanh(c_k) = -1$. If $D_k\neq 0$, then we can derive $\tan(c_k)\tanh(c_k) = 1$. Let $r(x) = \tan(x)\tanh(x)$, for $x\in (0 , \frac{\pi}{2})$ 
   or $x\in (n\pi+\frac{1}{2}\pi, n\pi + \frac{3\pi}{2})$, $n\in\bbZ$, the function $r$ is monotone, therefore $c_{2k+1}\in ((k+\frac{1}{4})\pi, (k+\frac{1}{2})\pi)$,  $c_{2k+2}\in ((k + \frac{1}{2})\pi, (k+\frac{3}{4})\pi)$, $k\ge 0$.
\end{proof}
From the above analysis, the eigenfunctions are 
\begin{equation*}
    \begin{aligned}
        \theta_{2k+1}(x) &= C_{2k+1}\left(-\frac{\cos(c_{2k+1})}{\sinh(c_{2k+1})} \sinh(c_{2k+1} x)  +  \cos(c_{2k+1} x)\right),\\
        \theta_{2k+2}(x) &= D_{2k+2}\left( - \frac{\sin(c_{2k+2})}{\cosh(c_{2k+2})}\cosh(c_{2k+2} x) + \sin(c_{2k+2} x)  \right),
    \end{aligned}
\end{equation*}
which shows $\{\theta_k\}_{k\ge 1}$ are actually the eigenfunctions of the Gram kernel $\cG$ in~\eqref{EQ: GRAM KERNEL}. 

\begin{lemma}
    $\{\theta_k\}_{k\ge 1}$ forms an orthonormal basis of $L^2(D)$.
\end{lemma}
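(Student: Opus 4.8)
The plan is to identify $\{\theta_k\}_{k\ge1}$ with the complete eigensystem of the integral operator $K$ whose kernel is the Green's function $\mathcal{G}$ from \eqref{EQ: GRAM KERNEL}, and then to read off completeness from the spectral theorem for compact self-adjoint operators. First I would record the three elementary structural facts about $K$: its kernel $\mathcal{G}$ is continuous on $D\times D$ and symmetric, so $K$ is Hilbert--Schmidt, hence compact, and self-adjoint; and $K$ is positive semidefinite, since
\[
\langle Kh,h\rangle=\int_D\Big(\int_{-1}^1\sigma(z-x)h(x)\,dx\Big)^{2} dz\ge 0 ,
\]
so in particular every eigenvalue is $\ge 0$.

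The key point is that $K$ is \emph{injective}. If $Kh=0$ then $\langle Kh,h\rangle=0$, and by the display above the function $F(z):=\int_{-1}^{z}(z-x)h(x)\,dx$ vanishes a.e.\ on $[-1,1]$; differentiating twice in $z$ (the boundary contribution $(z-z)h(z)$ being zero) gives $F''=h=0$. Hence $\ker K=\{0\}$, so $0$ is not an eigenvalue, and by the spectral theorem the normalized eigenfunctions of $K$ form an orthonormal basis of $(\ker K)^{\perp}=L^2(D)$: orthogonality across distinct eigenvalues is automatic from self-adjointness, and within an eigenspace (which is one-dimensional here by Lemma~\ref{lem:B1}, since the $p_k=c_k^4$ are distinct) one simply normalizes.

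It then remains to check that these eigenfunctions are exactly the $\theta_k$ defined by \eqref{EQ: EIG SYS}. Since $\mathcal{G}$ is the Green's function of $\partial_x^4$ for the separated boundary conditions $\mathcal{G}(1,y)=\mathcal{G}_x(1,y)=\mathcal{G}_{xx}(-1,y)=\mathcal{G}_{xxx}(-1,y)=0$, applying $\partial_x^4$ to $K\theta=p^{-1}\theta$ and using $\partial_x^4\mathcal{G}(x,y)=\delta(x-y)$ converts the integral eigenproblem with eigenvalue $p^{-1}$ into the boundary value problem \eqref{EQ: EIG SYS} with $p_k=c_k^4$, and this passage is reversible. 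Thus $\{\theta_k\}_{k\ge1}$ is precisely the eigensystem of $K$ and is an orthonormal basis of $L^2(D)$. The only step requiring care is this last equivalence: one must make sure the differentiation is legitimate (it is, because $K=(\partial_x^4)^{-1}$ under the stated boundary conditions) so that no eigenfunctions are created or lost in passing between the integral and differential formulations; the remaining steps are routine.
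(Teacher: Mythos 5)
Your proof is correct, and it takes a somewhat different route from the paper's. The paper's own proof of this lemma only establishes orthogonality (and hence orthonormality after normalization): it integrates by parts on the fourth-order ODE to get $p_i\langle\theta_i,\theta_j\rangle=p_j\langle\theta_i,\theta_j\rangle$ and invokes simplicity of the eigenvalues; completeness is deferred to the immediately following lemma, which argues that any $\gamma$ orthogonal to all $\theta_k$ satisfies $K\gamma=0$ and then solves the resulting boundary value problem $\gamma^{(4)}=0$ with $\gamma(1)=\gamma'(1)=\gamma''(-1)=\gamma'''(-1)=0$ to force $\gamma\equiv0$. You instead prove the full statement at once by working directly with the compact self-adjoint positive operator $K$: orthogonality across distinct eigenvalues comes for free from self-adjointness, and completeness comes from injectivity of $K$, which you establish more elementarily via the explicit quadratic form $\langle Kh,h\rangle=\int_D\bigl(\int_{-1}^{1}\sigma(z-x)h(x)\,dx\bigr)^2dz$ and two differentiations, rather than via the boundary value problem. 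Both injectivity arguments are sound; yours avoids having to solve the BVP and is arguably cleaner, while the paper's ODE-based orthogonality computation is more self-contained in that it never appeals to the spectral theorem. Your closing identification of the $\theta_k$ with the eigensystem of $K$ via the Green's function property is the same passage the paper performs in Section 2 (equations \eqref{EQ: EIGEN}--\eqref{eq:ODE}), and you are right to flag its reversibility as the one step needing care. One minor gloss: simplicity of each eigenvalue follows not merely from the $c_k$ being distinct but from the fact that the boundary conditions pin down a one-dimensional solution space for each $c_k$ (as the computation in Lemma~\ref{lem:B1} shows); as you note, even without simplicity one could orthonormalize within each finite-dimensional eigenspace, so this does not affect the conclusion.
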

\begin{proof}
Since each eigenvalue is simple from Theorem~\ref{lem:B4} and we verify the following equality using integration by parts, 
    \begin{equation*}
       p_i \int_D \theta_i(x) \theta_j(x) dx =  \int_D \theta_i^{(4)} (x) \theta_j(x) dx  = \int_D \theta_i(x) \theta_j^{(4)} (x) dx = p_j  \int_D \theta_i(x) \theta_j(x) dx.
    \end{equation*}
    Thus $\{\theta_i\}_{i\ge 1}$ forms an orthonormal basis.
\end{proof} 
\begin{lemma}
    The eigenfunctions $\{\theta_k\}_{k\ge 1}$ can form a complete basis for $L^2(D)$. 
\end{lemma}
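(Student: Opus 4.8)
The plan is to deduce completeness from the spectral theorem for compact self-adjoint operators applied to the integral operator $K$ with kernel $\mathcal{G}$ from~\eqref{EQ: EIGEN}, after verifying that $K$ is injective. Recall from the preceding lemmas that the $\theta_k$ are precisely the eigenfunctions of $K$ on $D$, so it suffices to show that the eigenfunctions of $K$ span $L^2(D)$.

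First I would record the standard structural facts about $K$: since $\mathcal{G}$ is continuous (indeed polynomial) on the compact square $D\times D$, the operator $K$ is Hilbert--Schmidt, hence compact; it is self-adjoint because $\mathcal{G}(x,y)=\mathcal{G}(y,x)$ is real; and it is positive semi-definite because
\[
\langle Kh,h\rangle=\int_D\Big(\int_D \sigma(z-x)h(x)\,dx\Big)^2 dz\ge 0 .
\]
Then by the spectral theorem, $L^2(D)$ splits orthogonally as the closure of the span of the eigenspaces of $K$ together with $\ker K$; thus completeness of $\{\theta_k\}$ reduces to showing $\ker K=\{0\}$.

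The crux is this injectivity. Suppose $Kh=0$. Then the displayed quadratic form vanishes, which forces $g(z):=\int_{-1}^{z}(z-x)h(x)\,dx=0$ for a.e.\ (hence, by continuity, every) $z\in D$. Differentiating twice in $z$ gives $g'(z)=\int_{-1}^{z} h(x)\,dx$ and $g''(z)=h(z)$, so $h=0$ in $L^2(D)$. Hence $K$ is injective and its eigenfunctions form a complete orthonormal system; combined with the orthonormality already established, $\{\theta_k\}_{k\ge 1}$ is a complete orthonormal basis of $L^2(D)$.

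Finally I would close the loop by confirming that the list $\{\theta_k\}_{k\ge 1}$ of Lemma~\ref{lem:B1} really captures \emph{all} eigenfunctions of $K$: for any eigenpair $(\mu,\phi)$ with $\mu>0$, differentiating the identity $\int_D\mathcal{G}(x,y)\phi(y)\,dy=\mu\phi(x)$ four times yields $\phi^{(4)}=\mu^{-1}\phi$ together with the boundary conditions of~\eqref{EQ: EIG SYS} (exactly as in the derivation of~\eqref{eq:ODE}); since $\mu^{-1}>0$ its fourth roots are $\pm c,\pm ic$ with $c=\mu^{-1/4}>0$, so $\phi\in\mathrm{span}\{\cosh(cx),\sinh(cx),\cos(cx),\sin(cx)\}$, and the four boundary conditions force $\tan(c)\tanh(c)=\pm1$ and determine $\phi$ up to a scalar. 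As each eigenvalue is simple, the enumeration of Lemma~\ref{lem:B1} is exhaustive. The only genuinely delicate point is this last bookkeeping --- ruling out ``hidden'' eigenfunctions not on the list --- which is why one wants the iff-characterization above rather than merely the forward implication; everything else is routine. (An alternative, self-contained route would invoke the classical completeness theory for regular self-adjoint fourth-order two-point boundary value problems, cf.\ \cite{naimark1967}, but the argument via $K$ is shorter and reuses the paper's existing setup.)
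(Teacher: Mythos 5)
Your proof is correct and follows essentially the same route as the paper: both reduce completeness to showing that the compact, self-adjoint operator $K$ annihilates no nonzero function, and both conclude via the identity $\partial_z^4\mathcal{G}(z,\cdot)=\delta$ (equivalently, your two differentiations of the inner integral $\int_{-1}^{z}(z-x)h(x)\,dx$). Your detour through the quadratic form $\langle Kh,h\rangle\ge 0$ and your explicit check that the list of Lemma~\ref{lem:B1} exhausts the eigenfunctions are somewhat more carefully written than the paper's version, but they are not a different argument.
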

\begin{proof}
Suppose $\{\theta_m\}_{m=1}^{\infty}$ is not complete, then there exists a nonzero $\gamma \in L^2(D)$ that $\widehat{\gamma}(m) = 0$ for all $m\in \bbN$, then using the fact that $\{\theta_m\}_{m\ge 1}$ are the eigenfunctions of the Gram kernel $\cG$ in~\eqref{EQ: GRAM KERNEL}, note $\cG(x,y) = \cG(y,x)$, it is self-adjoint, then by Hilbert-Schmidt theorem,  
\begin{equation}
   \int_D \cG(x, y) \gamma(y) dy = 0,
\end{equation}
and we differentiate the above equation 4 times on both sides, which leads to $\gamma^{(4)}(x) = 0$ which means $\gamma$ should be a cubic polynomial with boundary conditions $\gamma(1) = \gamma'(1) = \gamma''(-1) = \gamma'''(-1) = 0$, which means $\gamma\equiv 0$, it is a contradiction with our assumption of $\|\gamma\|_D\neq 0$.
\end{proof}

Next, we show the eigenfunctions are \emph{almost} Fourier modes.
\begin{theorem}
\label{lem:B4}
The following statements hold
\begin{enumerate}
\item 
$(k+\frac{1}{4})\pi <c_{2k+1} < (k+\frac{1}{4})\pi + e^{-2c_{2k+1}}$ and $(k+\frac{3}{4})\pi> c_{2k+2} > (k+\frac{3}{4})\pi  -e^{-2c_{2k+2}}$.
  \item $A_k, B_k = \cO(e^{-c_k})$ and $C_k, D_k = \cO(1)$. 
    \item $\|\theta_k\|_{L^{\infty}(D)} = \cO(1)$,  $\|\theta_k'\|_{L^{\infty}(D)} = \cO(k)$ and $\|\theta_k''\|_{L^{\infty}(D)} = \cO(k^2)$. 
    \item $\|\theta_{2k+1}(x) - \cos(c_{2k+1} x)\|_{L^2(D)} = \cO(k^{-1/2})$ and $\|\theta_{2k+2}(x) - \sin(c_{2k+2} x)\|_{L^2(D)} = \cO(k^{-1/2})$.
\end{enumerate}
\end{theorem}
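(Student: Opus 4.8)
The plan is to establish the four assertions in order, all resting on the explicit description of $\theta_k$ in Lemma~\ref{lem:B1} and the displayed formulas following it: the boundary conditions decouple the coefficient vector into an $(A_k,D_k)$ block and a $(B_k,C_k)$ block, exactly one of which is nontrivial, so that, up to normalization, $\theta_k$ is one of the two forms $\cos(c_kx)-\frac{\cos c_k}{\sinh c_k}\sinh(c_kx)$ or $\sin(c_kx)-\frac{\sin c_k}{\cosh c_k}\cosh(c_kx)$, with $c_k$ the root of the associated characteristic equation $\tan(c_k)\tanh(c_k)=\pm1$ in the coarse bracket of Lemma~\ref{lem:B1}, whose endpoint $x_0\in\{(k+\frac14)\pi,(k+\frac34)\pi\}$ satisfies $|\tan x_0|=1$.

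\emph{Statement (1).} Here I would analyze the transcendental equation asymptotically. Using $1-\tanh c=\frac{2e^{-2c}}{1+e^{-2c}}<2e^{-2c}$, the characteristic equation $\tan(c_k)=\pm\coth(c_k)$ rewrites as $|\tan(c_k)|=1/\tanh(c_k)=1+\cO(e^{-2c_k})$. At the bracket endpoint $x_0$ one has $|\tan x_0|=1$ and $|\tan x_0\tanh x_0|=\tanh x_0<1$, so the root must be displaced from $x_0$ into the bracket, in the direction in which $|\tan|$ increases; combined with $\tan'=\sec^2\ge1$ and $\sec^2(x_0)=2$, a Taylor expansion gives $|c_k-x_0|=\cO(e^{-2c_k})$, placing $c_{2k+1}$ just above $(k+\frac14)\pi$ and $c_{2k+2}$ just below $(k+\frac34)\pi$. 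Consequently the roots in distinct brackets are distinct (so every eigenvalue is simple), $c_k=\Theta(k)$, and $p_k=c_k^4=\Theta(k^4)$. I expect this to be the main obstacle: obtaining the \emph{sharp} one-sided window $|c_k-x_0|<e^{-2c_k}$ asserted in (1) — rather than just the $\cO(e^{-2c_k})$ estimate above — forces one to carry the next-order term of the characteristic equation (the correction to $\sec^2(\xi)$ near $x_0$) and check that it leaves the leading balance intact.

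\emph{Statements (2) and (3).} These reduce to bookkeeping on the explicit form. The hyperbolic coefficient is $\mp\frac{\cos c_k}{\sinh c_k}$ or $\mp\frac{\sin c_k}{\cosh c_k}$, which is $\cO(e^{-c_k})$ since $\cosh c_k,\sinh c_k\sim\frac12e^{c_k}$ while $|\cos c_k|,|\sin c_k|\le1$; thus $A_k,B_k=\cO(e^{-c_k})$ for the unnormalized $\theta_k$. Computing $\|\theta_k\|_{L^2(D)}^2$ for the unnormalized function, the oscillatory term contributes $1+\cO(c_k^{-1})$ (using $\int_{-1}^1\sin^2(c_kx)\,dx=1-\frac{\sin 2c_k}{2c_k}$ and its cosine analogue), the hyperbolic term $\cO(c_k^{-1})$ (its $L^2$-norm is $\sim|\cos c_k|/\sqrt{c_k}$ or $\sim|\sin c_k|/\sqrt{c_k}$), and the cross term $\cO(c_k^{-1})$; hence the normalizing constant is $1+\cO(k^{-1})$, which gives $C_k,D_k=\cO(1)$ and preserves $A_k,B_k=\cO(e^{-c_k})$. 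For (3), insert these bounds into $\theta_k,\theta_k',\theta_k''$: a hyperbolic piece has the shape $\bigl(\text{coeff}=\cO(e^{-c_k})\bigr)\cdot c_k^{\,j}\cdot(\cosh\text{ or }\sinh)(c_kx)$, and since $|\cosh(c_kx)|,|\sinh(c_kx)|\le\cosh c_k=\cO(e^{c_k})$ and $c_k=\cO(k)$ it is $\cO(k^{\,j})$; the trigonometric pieces are manifestly $\cO(c_k^{\,j})=\cO(k^{\,j})$. With $j=0,1,2$ this yields $\|\theta_k\|_{L^\infty(D)}=\cO(1)$, $\|\theta_k'\|_{L^\infty(D)}=\cO(k)$, $\|\theta_k''\|_{L^\infty(D)}=\cO(k^2)$.

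\emph{Statement (4).} Writing the normalizing factor as $N_k=1+\cO(k^{-1})$, one has $\theta_{2k+1}(x)-\cos(c_{2k+1}x)=(N_k-1)\cos(c_{2k+1}x)-N_k\frac{\cos c_{2k+1}}{\sinh c_{2k+1}}\sinh(c_{2k+1}x)$; the first term has $L^2(D)$-norm $\cO(k^{-1})$ and the second has norm $\cO(e^{-c_{2k+1}})\cdot\cO(e^{c_{2k+1}}/\sqrt{c_{2k+1}})=\cO(k^{-1/2})$, for a total of $\cO(k^{-1/2})$; the bound for $\theta_{2k+2}$ against $\sin(c_{2k+2}x)$ is identical with the trigonometric and hyperbolic roles interchanged.
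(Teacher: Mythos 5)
Your treatment of parts (2)--(4) is essentially the paper's proof. The paper likewise computes the $L^2$ norm of the unnormalized mode exactly (the displayed norm computation in its proof), finds it equals $1+\cO(k^{-1})$, concludes $C_{2k+1}=1+\cO(k^{-1})$ and $|B_{2k+1}|\le \sinh(c_{2k+1})^{-1}|C_{2k+1}|=\cO(e^{-c_{2k+1}})$, bounds the sup-norms term by term using $|\sinh(c_kx)|\le\sinh(c_k)$ on $[-1,1]$ so that each hyperbolic piece stays $\cO(c_k^{\,j})$, and reads off part (4) from the same norm computation exactly as you do. No substantive difference there.

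For part (1) you have correctly isolated the one real issue, but you leave it open: a mean-value/Taylor argument only yields $|c_k-x_0|=\cO(e^{-2c_k})$. In fact, using $\sec^2(\xi)\ge\sec^2(x_0)=2$ on the bracket gives only $c_{2k+1}-(k+\tfrac14)\pi\le \tfrac{1}{e^{2c_{2k+1}}-1}$, which is \emph{strictly larger} than $e^{-2c_{2k+1}}$, so the sharp one-sided window in the statement does not follow from your expansion no matter how the next-order term is tracked. The paper avoids the expansion entirely by an exact identity: from $\tan(c_{2k+1})=\coth(c_{2k+1})=1+\tfrac{2}{e^{2c_{2k+1}}-1}$ and the tangent subtraction formula,
\begin{equation*}
\tan\!\bigl(c_{2k+1}-(k+\tfrac14)\pi\bigr)=\frac{\tan(c_{2k+1})-1}{1+\tan(c_{2k+1})}=e^{-2c_{2k+1}}
\end{equation*}
exactly, and then the elementary inequality $x<\tan x$ on $(0,\tfrac{\pi}{4})$ gives $c_{2k+1}-(k+\tfrac14)\pi<e^{-2c_{2k+1}}$; the bound for $c_{2k+2}$ is symmetric. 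Replace your Taylor step with this computation and part (1) is complete. Everything downstream in your write-up uses only $c_k=\Theta(k)$ and $|c_k-x_0|=\cO(e^{-2c_k})$, so the remainder of your argument is unaffected.
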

\begin{proof}
    For $c_{2k+1}$, from the relation $\tan(c_{2k+1}) = \coth(c_{2k+1}) = 1 + 2/ (e^{2c_{2k+1}} - 1)$, we obtain that 
    \begin{equation*}
      c_{2k+1} - (k+\frac{1}{4})\pi <   \tan(c_{2k+1} - (k+\frac{1}{4})\pi) = \frac{\tan(c_{2k+1}) - 1}{1 + \tan(c_{2k+1})} = e^{-2 c_{2k+1} }.
    \end{equation*}
    The first inequality uses $x < \tan x$ for $x\in (0, \frac{\pi}{4})$. The result for $c_{2k+2}$ follows a similar derivation.
    
We only prove for $\theta_{2k+1}$, the proof is similar for $\theta_{2k+2}$. 
    \begin{equation}\label{EQ: NORM}
    \begin{aligned}
        &\phantom{\mspace{2mu}=\;} \int_{D} \left(-\frac{\cos(c_{2k+1})}{\sinh(c_{2k+1})} \sinh(c_{2k+1} x)  +  \cos(c_{2k+1} x)\right)^2 dx \\&=  \int_D   \left(\frac{\cos(c_{2k+1})}{\sinh(c_{2k+1})} \sinh(c_{2k+1} x) \right)^2 dx + \int_D \cos^2(c_{2k+1} x) dx \\&\quad - 2\int_D  \frac{\cos(c_{2k+1})}{\sinh(c_{2k+1})} \sinh(c_{2k+1} x)\cos(c_{2k+1} x) dx  \\
        &=  \left(\frac{\cos(c_{2k+1})}{\sinh(c_{2k+1})}\right)^2 \left[\frac{\sinh(2c_{2k+1} )}{2c_{2k+1}} - 1\right] + \left[\frac{\sin(2 c_{2k+1})}{2c_{2k+1}} + 1\right] \\
        &= \frac{\cos^2(c_{2k+1})\coth(c_{2k+1})}{c_{2k+1}} + \frac{\sin(2c_{2k+1})}{2c_{2k+1}} + 1 - \left(\frac{\cos(c_{2k+1})}{\sinh(c_{2k+1})}\right)^2 
        =1 + \cO(k^{-1}).
    \end{aligned}
    \end{equation}
    Therefore $C_{2k+1} = 1 + \cO(k^{-1})$ and $|B_{2k+1}|\le \sinh(c_{2k+1})^{-1} |C_{2k+1}|=\cO(e^{-c_{2k+1}})$. The norm $\|\theta_{2k+1}\|_{L^{\infty}(D)} \le 2 C_{2k+1} = \cO(1)$ and 
\begin{equation*}
    \theta_{2k+1}'(x) = C_{2k+1}c_{2k+1}\left(-\frac{\cos(c_{2k+1})}{\sinh(c_{2k+1})}\cosh(c_{2k+1} x) - \sin(c_{2k+1} x)\right),
\end{equation*}
which means $\|\theta_{2k+1}'\|_{L^{\infty(D)}} = \cO(k)$ and similarly,  $\|\theta_{2k+1}''\|_{L^{\infty(D)}} = \cO(k^2)$. We also can see from~\eqref{EQ: NORM} that  
    \begin{equation*}
        \int_D |\theta_{2k+1}(x) - C_{2k+1}\cos(c_{2k+1} x)|^2 dx = \cO(k^{-1}),
    \end{equation*}
which means $\theta_k \sim C_{2k+1}\cos(c_{2k+1} x) + \cO(k^{-{1/2}}) = \cos(c_{2k+1} x) + \cO(k^{-1/2})$. 
\end{proof}

\section{Improved bounds for learning dynamics}
\label{sec:improved}
\subsection{Part I}\label{sec:part1}
In the first step, we provide an improved estimate for $$\int_{D} \partial_b w(b', t) \mu_2(b', t) \theta_m'(b') db' = \frac{1}{n}\sum_{i=1}^n \partial_b w(b_i(t), t) |a_i(t)|^2 \theta'_m(b_i(t)).  $$ 
Here we slightly abuse the notation and treat $\partial_b w$ as zero outside $D$. Define the piecewise linear continuous function $\cA(\cdot, t) \in C(D)$ that $\cA(b_i(t), t) = |a_i(t)|^2$, 
then we have the following equation using integration by parts,
\begin{equation*}
\begin{aligned}
    &\phantom{\mspace{2mu}=\;} \frac{1}{n}\sum_{i=1}^n \partial_b w(b_i(t), t) |a_i(t)|^2 \theta_m'(b_i(t)) - \int_D \partial_b w(b, t) \cA(b, t)  \theta_m'(b) db \\
     &= -\int_D \texttt{disc}(b, t) \partial_b \left[ \partial_b w(b, t) \cA(b, t)  \theta_m'(b) \right] db, 
\end{aligned}
\end{equation*}
where $\texttt{disc}$ is the discrepancy function 
\begin{equation*}
    \texttt{disc}(b, t) = \frac{1}{n} \sum_{i=1}^n \one_{[-1, b)}(b_i(t)) - b
\end{equation*}
and $\one_{S}$ denotes the characteristic function on the set $S$.
\begin{lemma}\label{LEM: VAR}
     Let $\cV(t)$ be the total variation of $\partial_b w(b, t) \cA(b, t) \theta_m'(b)$, then there exists an absolute constant $C > 0$ that $$\cV(t) \le C V(\cA) m^2 \|h(\cdot, t) - f(\cdot)\|_{L^2(D)}, $$
     where $V(\cA)$ is the total variation of $\cA$: 
     \begin{equation}\label{eq:TV}
         V(\cA) = a_1^2(t) + \sum_{i=1}^{n-1} |a_i^2(t) - a_{i+1}^2(t)| + a_n^2(t).
     \end{equation}
\end{lemma}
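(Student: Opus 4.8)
The plan is to estimate $\cV(t)$, the total variation on $D$ of the product $F_m(b,t):=\partial_b w(b,t)\,\cA(b,t)\,\theta_m'(b)$, via the Leibniz inequality for functions of bounded variation. Recall that if $g_1,g_2,g_3$ are bounded and of bounded variation on an interval $I$ then $g_1g_2g_3$ is of bounded variation and
\begin{equation*}
 V_I(g_1g_2g_3)\le \|g_2\|_{L^\infty(I)}\|g_3\|_{L^\infty(I)}\,V_I(g_1)+\|g_1\|_{L^\infty(I)}\|g_3\|_{L^\infty(I)}\,V_I(g_2)+\|g_1\|_{L^\infty(I)}\|g_2\|_{L^\infty(I)}\,V_I(g_3),
\end{equation*}
which follows from two applications of the two-factor rule $V_I(gh)\le\|g\|_{L^\infty(I)}V_I(h)+\|h\|_{L^\infty(I)}V_I(g)$. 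Taking $g_1=\partial_b w(\cdot,t)$, $g_2=\cA(\cdot,t)$, $g_3=\theta_m'$ on $I=D$, it then suffices to bound the sup-norm and the total variation of each of the three factors.

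First I would collect those bounds. For $\theta_m'$, Lemma~\ref{lem:B4} gives $\|\theta_m'\|_{L^\infty(D)}=\cO(m)$ and $\|\theta_m''\|_{L^\infty(D)}=\cO(m^2)$, hence $V_D(\theta_m')=\int_D|\theta_m''|\,db\le 2\|\theta_m''\|_{L^\infty(D)}=\cO(m^2)$. For $\partial_b w(\cdot,t)$, the identity \eqref{eq:order2}, namely $\partial_b^2 w(b,t)=h(b,t)-f(b)$ on $D$, together with $\partial_b w(1,t)=0$, gives $\partial_b w(b,t)=-\int_b^1\big(h(x,t)-f(x)\big)\,dx$; since $w(\cdot,t)\in H^2(D)$ this is absolutely continuous on $\overline D$, with
\begin{equation*}
 \|\partial_b w(\cdot,t)\|_{L^\infty(D)}\le\sqrt2\,\|h(\cdot,t)-f(\cdot)\|_{L^2(D)}\quad\text{and}\quad V_D(\partial_b w)=\int_D\big|h(b,t)-f(b)\big|\,db\le\sqrt2\,\|h(\cdot,t)-f(\cdot)\|_{L^2(D)},
\end{equation*}
the first estimate by Cauchy--Schwarz on $|D|=2$ and the second because $\partial_b w$ is the primitive of $\partial_b^2 w=h-f$. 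Finally, $\cA(\cdot,t)$ is the continuous piecewise-linear interpolant of the values $|a_i(t)|^2$ at the sorted nodes $b_i(t)$, extended by $0$; it is of bounded variation with $V_D(\cA)\le V(\cA)$, the quantity in \eqref{eq:TV}, and, since it vanishes at its outer endpoints, $\|\cA\|_{L^\infty(D)}\le V(\cA)$.

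Substituting into the Leibniz inequality,
\begin{equation*}
 \cV(t)\le \|\cA\|_{L^\infty(D)}\|\theta_m'\|_{L^\infty(D)}V_D(\partial_b w)+\|\partial_b w\|_{L^\infty(D)}\|\theta_m'\|_{L^\infty(D)}V_D(\cA)+\|\partial_b w\|_{L^\infty(D)}\|\cA\|_{L^\infty(D)}V_D(\theta_m'),
\end{equation*}
and the three terms are of order $V(\cA)\,m\,\|h(\cdot,t)-f(\cdot)\|_{L^2(D)}$, $V(\cA)\,m\,\|h(\cdot,t)-f(\cdot)\|_{L^2(D)}$, and $V(\cA)\,m^2\,\|h(\cdot,t)-f(\cdot)\|_{L^2(D)}$ respectively; the last, which carries the factor $V_D(\theta_m')=\cO(m^2)$, dominates and gives $\cV(t)\le C\,V(\cA)\,m^2\,\|h(\cdot,t)-f(\cdot)\|_{L^2(D)}$ for an absolute constant $C>0$. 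I do not anticipate a genuine obstacle here; the only care points are (a) verifying that $F_m(\cdot,t)$ carries no spurious jumps so that the BV product rule applies directly — which holds because $w(\cdot,t)\in H^2(D)$ forces $\partial_b w(\cdot,t)\in C^0(\overline D)$, and $\cA$, $\theta_m'$ are continuous as well — and (b) being consistent that $\cA$ (and $\partial_b w$, as in the preamble) is extended by $0$ outside $D$, so that the contributions at the ends are exactly the terms $a_1^2(t)$ and $a_n^2(t)$ appearing in \eqref{eq:TV} and introduce no additional variation.
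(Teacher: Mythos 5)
Your proof is correct and follows essentially the same route as the paper: the paper's one-line justification ("bounded variation functions form a Banach algebra," together with $\partial_b^2 w = h - f$ and $|\theta_m''| = \cO(m^2)$) is precisely the BV product rule you spell out, with the same three factor bounds and the same dominant term carrying $V_D(\theta_m') = \cO(m^2)$. Your version simply makes the constants and the endpoint/continuity checks explicit.
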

\begin{proof}
The result comes from the fact $\partial^2_b w(b, t)=h(b,t)-f(b) $, $|\theta^{''}_m(b)|=O(m^2)$,
and bounded variation functions form a Banach algebra.
\end{proof}
\begin{lemma}\label{LEM: BOUND 2}
    Suppose $\sup_{1\le i\le n}|a_i(t)|^2$ is uniformly bounded by $M$, the total variation of $\cA$ is bounded by $M'$, and the initial biases are equispaced distributed, then there exists a constant $C > 0$ such that 
    \begin{equation*}
        \left|\int_{D} \partial_b w(b', t) \mu_2(b', t) \theta_m'(b') db' \right| \le  C \|h(\cdot, 0) - f(\cdot)\|_{L^2(D)}\left(   1  +   \left( \frac{2}{n} + Kt\right) m^2 \right).
    \end{equation*}
The constant $K = \sqrt{2M}  \|h(x, 0) - f(x)\|_{L^2(D)}$. 
\end{lemma}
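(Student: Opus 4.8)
The plan is to combine the quadrature (integration-by-parts) identity recorded just above the lemma with the total-variation bound from Lemma~\ref{LEM: VAR}, and then control the remaining honest integral $\int_D \partial_b w(b,t)\,\cA(b,t)\,\theta_m'(b)\,db$ by elementary $L^2$--$L^\infty$ estimates. Concretely, writing
\[
\frac{1}{n}\sum_{i=1}^n \partial_b w(b_i(t),t)\,|a_i(t)|^2\,\theta_m'(b_i(t))
= \int_D \partial_b w(b,t)\,\cA(b,t)\,\theta_m'(b)\,db
- \int_D \texttt{disc}(b,t)\,\partial_b\!\big[\partial_b w(b,t)\,\cA(b,t)\,\theta_m'(b)\big]\,db,
\]
I would bound the last term using $\|\texttt{disc}(\cdot,t)\|_{L^\infty(D)} \le \cV(t)$ is of bounded variation) — more precisely, the integral of a function against the derivative of a BV function is bounded by $\|\texttt{disc}\|_\infty$ times the total variation $\cV(t)$ of $\partial_b w\,\cA\,\theta_m'$, and by Lemma~\ref{LEM: VAR} $\cV(t)\le C V(\cA)\,m^2\,\|h(\cdot,t)-f(\cdot)\|_{L^2(D)}\le C M' m^2 \|h(\cdot,0)-f(\cdot)\|_{L^2(D)}$, where I use the monotonicity $\|h(\cdot,t)-f\|_{L^2}\le\|h(\cdot,0)-f\|_{L^2}$ established earlier from the energy decay.

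Next I would estimate $\|\texttt{disc}(\cdot,t)\|_{L^\infty(D)}$. At $t=0$ the biases are equispaced, so the discrepancy of the initial point set is at most $\tfrac{1}{n}$ (one can take $\tfrac{2}{n}$ to be safe with the endpoint conventions). Each bias moves with speed at most $K$, where $K=\sqrt{2M}\,\|h(\cdot,0)-f(\cdot)\|_{L^2(D)}$ — this follows exactly as in the propagation-speed estimate inside the proof of Theorem~\ref{THM: SLOW DECAY}, using $|a_i|\le\sqrt M$, the bound $\|h(\cdot,t)-f\|_{L^2}\le\|h(\cdot,0)-f\|_{L^2}$, and $\partial_b\psi\in C^0$ (here the constant $\sqrt2$ comes from $\|\partial_b\psi(\cdot,b)\|_{L^2(D)}\le\sqrt2$). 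Since shifting every point of a set by at most $Kt$ changes the discrepancy by at most $Kt$, we get $\|\texttt{disc}(\cdot,t)\|_{L^\infty(D)}\le \tfrac{2}{n}+Kt$. Combining, the quadrature-error term is bounded by $C\,(\tfrac{2}{n}+Kt)\,m^2\,\|h(\cdot,0)-f(\cdot)\|_{L^2(D)}$ after absorbing $M'$ into $C$.

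Finally, for the honest integral I would use Cauchy–Schwarz together with $\|\cA(\cdot,t)\|_{L^\infty(D)}\le M$ and the uniform bound $\|\partial_b w(\cdot,t)\|_{L^2(D)}\le C\|h(\cdot,0)-f(\cdot)\|_{L^2(D)}$ (from the $w\in H^2(D)$ theorem and the Poincaré chain $\|\partial_b w\|_{L^2}\lesssim\|\partial_b^2 w\|_{L^2}=\|h-f\|_{L^2}$), plus $\|\theta_m'\|_{L^\infty(D)}=\cO(m)$ from Lemma~\ref{lem:B4}; this gives a term of size $\cO(m)\,\|h(\cdot,0)-f(\cdot)\|_{L^2(D)}$, which is dominated by the ``$1$'' inside the parenthesis of the claimed bound after one notes $m \le m^2$, or can simply be kept as the leading constant term $C\|h(\cdot,0)-f(\cdot)\|_{L^2(D)}$. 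Adding the two contributions yields exactly
\[
\Big|\int_D \partial_b w(b',t)\,\mu_2(b',t)\,\theta_m'(b')\,db'\Big|
\le C\,\|h(\cdot,0)-f(\cdot)\|_{L^2(D)}\Big(1+\big(\tfrac{2}{n}+Kt\big)m^2\Big).
\]
The main obstacle is the bookkeeping in the total-variation step: one must be careful that $\partial_b w(\cdot,t)\,\cA(\cdot,t)\,\theta_m'(\cdot)$ is genuinely of bounded variation with the stated $m^2$-dependence — here $\partial_b w$ is only $H^1$ (hence absolutely continuous with $\partial_b^2 w=h-f\in L^2$, so BV with variation $\lesssim\|h-f\|_{L^2}$ via Cauchy–Schwarz on the interval), $\cA$ is piecewise linear with prescribed variation $\le M'$, and $\theta_m'$ contributes variation $\|\theta_m''\|_{L^1}=\cO(m^2)$; the Banach-algebra property of BV functions (used in Lemma~\ref{LEM: VAR}) then has to be applied with these three factors, tracking that the product of their sup-norms and variations stays $\cO(m^2)$ and not larger.
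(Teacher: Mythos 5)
Your decomposition into the honest integral plus the discrepancy-weighted quadrature error, the use of Lemma~\ref{LEM: VAR} for the total variation of $\partial_b w\,\cA\,\theta_m'$, and the bound $\|\texttt{disc}(\cdot,t)\|_{L^\infty}\le \tfrac{2}{n}+Kt$ via the finite propagation speed of the biases all match the paper's argument. However, there is a genuine gap in your treatment of the honest integral $\int_D \partial_b w(b,t)\,\cA(b,t)\,\theta_m'(b)\,db$. Your direct Cauchy--Schwarz estimate with $\|\theta_m'\|_{L^\infty}=\cO(m)$ yields a contribution of order $m\,\|h(\cdot,0)-f\|_{L^2(D)}$, and your claim that this is ``dominated by the $1$ inside the parenthesis after one notes $m\le m^2$'' is false: the $m^2$ in the stated bound carries the small prefactor $\tfrac{2}{n}+Kt$, so for $t$ small and $n\gg m^2$ the claimed right-hand side is $\cO(1)\cdot\|h(\cdot,0)-f\|_{L^2(D)}$ while your bound is $\cO(m)\cdot\|h(\cdot,0)-f\|_{L^2(D)}$. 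You would only prove the weaker estimate $C\|h(\cdot,0)-f\|_{L^2}\bigl(m+(\tfrac{2}{n}+Kt)m^2\bigr)$, which is no better than the crude bound already used in Theorem~\ref{THM: SLOW DECAY} and would destroy the improvement that Theorem~\ref{THM: SLOW DECAY 2} is built on.

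The fix is to integrate by parts once more in the honest integral, moving the derivative off $\theta_m'$:
\begin{equation*}
\left|\int_D \partial_b w\,\cA\,\theta_m'\,db\right|
\le \left|\int_D \theta_m\,\partial_b\bigl(\partial_b w\,\cA\bigr)\,db\right|
+\left|\theta_m\,\partial_b w\,\cA\,\Big|_{\partial D}\right|
\le \|\theta_m\|_{C(D)}\Bigl[\|\partial_b^2 w\|_{L^2}\|\cA\|_{L^2}+\|\partial_b w\|_{C(D)}V(\cA)+M\|\partial_b w\|_{C(D)}\Bigr],
\end{equation*}
which is $\cO(1)$ in $m$ because $\|\theta_m\|_{C(D)}=\cO(1)$, $\|\partial_b^2 w\|_{L^2}=\|h(\cdot,t)-f\|_{L^2}$, and $V(\cA)\le M'$. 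Note that this is precisely where the hypothesis $V(\cA)\le M'$ enters for the honest integral; in your version you only invoke $\|\cA\|_\infty\le M$ there, which is a sign the extra integration by parts is missing. The rest of your argument is sound.
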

\begin{proof}
Let $L=\|\theta_m\|_{C(D)}=\cO(1)$, we apply integration by parts and there exists a constant $C' > 0$ such that 
\begin{equation}\label{EQ: FOUR EST}
\begin{aligned}
   &\phantom{\mspace{2mu}=\;} \left|\int_D \partial_b w(b, t) \cA(b, t) \theta'_m(b) db \right| \\&\le \left| \int_{D} \theta_m(b) \partial_b\left(\partial_b w(b, t)\cA(b, t) \right) db \right| + \left|\theta_m(b) \partial_b w(b, t)\cA(b, t) \Big|_{\partial D}\right|\\
    &\le L\left[\int_{D} |\partial_b^2 w(b, t) \cA(b, t)  | db  + \int_{D} |\partial_b w(b, t)||\partial_b \cA(b, t) | db + M\|\partial_b w(\cdot,t)\|_{C(D)}\right]  \\
  &\le L \left[\|\partial_b^2 w(\cdot, t) \|_{L^2(D)} \|\cA(\cdot, t)\|_{L^2(D)} + \|\partial_b w(\cdot,t)\|_{C(D)} V(\cA) +  M\|\partial_b w(\cdot,t)\|_{C(D)}\right] \\
  &\le C'\|h(\cdot, t) - f(\cdot)\|_{L^2(D)} (M+M').
\end{aligned}
\end{equation}

Therefore we have a new estimate:
\begin{equation*}
\begin{aligned}
      \left|\int_D \partial_b w(b, t) \mu_2(b, t) \theta'_m(b) db \right| &\le C' \|h(\cdot, t) - f(\cdot)\|_{L^2(D)} (M+M') \\&\quad +  \left|\int_D \texttt{disc}(b, t) \partial_b \left[ \partial_b w(b, t) \cA(b, t) \theta_m'(b) \right] db \right|.
\end{aligned}
\end{equation*}
Notice that this bound will be better than the constant bound in Theorem~\ref{THM: SLOW DECAY} if the second term on the right-hand side is relatively small. Next, we characterize the discrepancy function $\texttt{disc}(b, t)$.
\begin{equation*}
\begin{aligned}
    \left|\texttt{disc}(b, t) - \texttt{disc}(b, 0) \right|&= \left|\frac{1}{n} \sum_{i=1}^n \one_{[-1, b)}(b_i(t)) - b - \left(\frac{1}{n} \sum_{i=1}^n \one_{[-1, b)}(b_i(0)) - b \right) \right| \\
    &= \frac{1}{n} \left|\sum_{i=1}^n \left(\one_{[-1, b)}(b_i(t)) - \one_{[-1, b)}(b_i(0))\right) \right|. 
\end{aligned}
\end{equation*}
Because 
\begin{equation}\label{EQ: FINITE SPEED}
   \begin{split}
       \left| \frac{d}{dt} b_i(t) \right| 
       & \le |a_i(t)|\int_D |h(x,t) - f(x)| |\partial_b \psi(x,b_i(t))| dx 
       \\ & \le K\coloneqq C'''\sqrt{M}  \|h(x, 0) - f(x)\|_{L^2(D)},
   \end{split}
\end{equation}
and within time $t$, the maximum distance of propagation is $K t$ for each bias. Therefore, 
\begin{equation*}
\begin{aligned}
        \frac{1}{n} \left|\sum_{i=1}^n \left(\one_{[-1, b)}(b_i(t)) - \one_{[-1, b)}(b_i(0))\right) \right| &\le \frac{1}{n}\left|\sum_{i=1}^n \left(\one_{[-1+Kt, b-Kt)}(b_i(0)) - \one_{[-1, b)}(b_i(0))\right) \right|  \\
        &= \frac{1}{n} \sum_{i=1}^n \one_{[-1, -1+Kt)\cup [b - Kt, b)}(b_i(0)) \le Kt + \frac{1}{n}.
\end{aligned}
\end{equation*}
Therefore, using $|\texttt{disc}(b, 0)|\le\frac{1}{n}$ for equispaced biases, 
\begin{equation*}
\begin{aligned}
     \left|\int_D \partial_b w(b, t) \mu_2(b, t) \theta'_m(b) db \right|&\le C' \|h(x, t) - f(x)\|_{L^2(D)} (M+M') +  \left( \frac{2}{n} + Kt\right) \cV(t) \\
    &\le  \|h(x, 0) - f\|_{L^2(D)} (M+M') \left(   C' +   \left( \frac{2}{n} +   Kt\right) C'' m^2 \right).
\end{aligned}
\end{equation*}
\end{proof}
\begin{remark}
    If the bound of $V(\cA)$ can be as large as $\cO(n M)$ at the worst case, then we may return to the Theorem~\ref{THM: SLOW DECAY}. In fact, if $V(\cA)$ becomes $\cO(m)$, we may also have to return to the Theorem~\ref{THM: SLOW DECAY}.
\end{remark}

\subsection{Part II}
In this part, we try to optimize the bound for $\cJ_m(t) - \cJ_m(0)$: 
\begin{equation*}
   \cJ_m(t) - \cJ_m(0)=  \theta_m'(-1) h(-1, t) = \theta'_m(-1) \sum_{i=1}^{s-1} a_i(t) \chi(b_i(t))(-1 - b_i(t)).
\end{equation*}
The main result is the following estimate.
\begin{theorem}\label{THM: J}
Assume that $\sup_{1\le i=1\le n} |a_i(t)|^2$ is uniformly bounded by $M > 0$ and the biases $\{b_i(0)\}_{i=1}^n$ are initially equispaced on $D$, then there exists a constant $C > 0$ that $$| \cJ_m(t) - \cJ_m(0)| \le C m \sqrt{M} K^2 n t^2.$$
\end{theorem}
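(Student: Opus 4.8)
The plan is to start from the explicit expression for $\cJ_m(t)-\cJ_m(0)$ recorded just before the statement, namely $\cJ_m(t)-\cJ_m(0)=\theta_m'(-1)\,h(-1,t)=\theta_m'(-1)\sum_{i=1}^{s-1}a_i(t)\chi(b_i(t))(-1-b_i(t))$, where $s=s(t)$ is the first index with $\{b_j(t)\}_{j\ge s}\subset D$ and we used that $s(0)=1$ for equispaced initial biases, together with $\sigma(-1-b_i(t))=0$ whenever $b_i(t)>-1$. Applying $|a_i(t)|\le\sqrt M$, $0\le\chi\le 1$, and $\|\theta_m'\|_{L^\infty(D)}=\cO(m)$ from Lemma~\ref{lem:B4}, the task reduces to bounding $\sum_{i<s}|b_i(t)+1|$, i.e. the total ``overshoot'' of those biases that have left $D$ through the left endpoint by time $t$.

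The key step is a per-bias bound on this overshoot. For $i<s$ we have $b_i(t)\le -1<b_i(0)$ (the latter because the initial biases lie in $D$), so by continuity there is a last time $\tau_i\in(0,t)$ with $b_i(\tau_i)=-1$ and $b_i<-1$ on $(\tau_i,t]$. The finite-propagation-speed estimate $|\dot b_i|\le K$, the same bound as in \eqref{EQ: FINITE SPEED} with $K=\cO(\sqrt M\,\|h(\cdot,0)-f\|_{L^2(D)})$ (from Cauchy--Schwarz, $|a_i|\le\sqrt M$, the energy decay $\|h(\cdot,t)-f\|_{L^2}\le\|h(\cdot,0)-f\|_{L^2}$, and boundedness of $\partial_b\psi$), yields two facts: first, $b_i(0)+1=b_i(0)-b_i(\tau_i)\le K\tau_i$, so $\tau_i\ge (b_i(0)+1)/K$; second, $-1-b_i(t)=b_i(\tau_i)-b_i(t)\le K(t-\tau_i)$. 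Combining, $0\le -1-b_i(t)\le Kt-(b_i(0)+1)$, and in particular a bias can have left $D$ only if $b_i(0)+1\le Kt$. Note this bound is per index and uses no ordering of the $b_i(t)$.

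Summing, with the initial biases equispaced on $D$ at spacing $h\approx 2/n$, set $u_i:=b_i(0)+1$ (so the $u_i$ are the points $h,2h,\dots$). Then $\sum_{i<s}|b_i(t)+1|\le\sum_{i}(Kt-u_i)^+$, and since $(Kt-u_i)^+\le\frac1h\int_{u_{i-1}}^{u_i}(Kt-u)^+\,du$ for each $i$ (the integrand dominates the constant on that subinterval), this telescopes into $\frac1h\int_0^{\infty}(Kt-u)^+\,du=\frac{(Kt)^2}{2h}=\cO(nK^2t^2)$. Plugging back, $|\cJ_m(t)-\cJ_m(0)|\le|\theta_m'(-1)|\,\sqrt M\sum_{i<s}|b_i(t)+1|=\cO(m\sqrt M\,K^2nt^2)$, which is the claim.

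The main obstacle, and the place where care is needed, is the per-bias overshoot bound: one must correctly extract the ``head start'' time $\tau_i\ge(b_i(0)+1)/K$ that a bias spends travelling from its initial position to the left endpoint \emph{before} it can contribute to the transition region, and then recognize that only the remaining time $t-\tau_i$ controls the overshoot. This is exactly what upgrades the crude linear-in-$t$ bound $\cO(m\sqrt M\,nKt)$ of Theorem~\ref{THM: SLOW DECAY} to the quadratic-in-$t$ bound here: the biases that have just barely entered the transition region (the bulk of them) each contribute only $\cO(h)$, and the Riemann-sum comparison makes $\sum(Kt-u_i)^+$ genuinely quadratic. The remaining bookkeeping — the precise convention for ``equispaced on $D$'', the at-most-one bias sitting within $\cO(1/n)$ of the endpoint, and absorbing the $\eps$-dependent constants into $C$ — is routine.
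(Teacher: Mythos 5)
Your proof is correct and follows essentially the same route as the paper's: both rest on the finite propagation speed $|\dot b_i|\le K$, the observation that only biases initially within $Kt$ of $-1$ can have crossed (so $s(t)-1=\cO(nKt)$ for equispaced initialization), and a per-bias overshoot bound of order $Kt$. Your Riemann-sum evaluation of $\sum_i(Kt-u_i)^+$ is a slightly sharper way of summing than the paper's ``max term times number of terms'' bound, but the mechanism and the resulting $\cO(m\sqrt{M}K^2nt^2)$ estimate are identical.
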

\begin{proof}
    The biases are propagating with finite speed that $|\frac{d}{dt} b_i(t)|\le K\coloneqq C'''\sqrt{M}\|h(\cdot, 0) - f(\cdot)\|_{L^2(D)}$ (see~\eqref{EQ: FINITE SPEED}).  If the initial biases are \emph{equispaced} distributed on $D$, then $s(t) - 1\le \frac{Kn t}{2}$. For each $1\le i\le s(t) - 1$, the bias $b_i(t)$ satisfies 
\begin{equation*}
    |b_i(t) - b_i(0)|\le Kt,\quad\text{and}\quad  |b_i(0) - (-1)|\le Kt.
\end{equation*}
Therefore 
\begin{equation*}
    \begin{aligned}
    \sum_{i=1}^{s(t)-1}\chi(b_i(t))|-1 - b_i(t)| &\le  \sum_{i=1}^{s(t)-1} |-1 - (b_i(0) - Kt)| \\
    &\le \sum_{i=1}^{s(t)-1}|-1 - b_i(0)| + Kt(s(t) - 1) \\
    &\le K^2 n t^2. 
    \end{aligned}
\end{equation*}
Therefore $  |\cJ_m - \cJ_m(0)|\le |\theta_m'(-1)| \sqrt{M} K^2 n t^2\le C m \sqrt{M} K^2 n t^2$.
\end{proof}

\subsection{Part III}
Now we can prove the following theorem with the improved bounds. In the following, we assume $n\ge m$. In other words, it only makes sense to study the learning dynamics for those frequency modes which can be resolved by the grid resolution corresponding to the network width.  
\begin{theorem}\label{THM: SLOW DECAY 2}
        Suppose $\sup_{1\le i\le n}|a_i(t)|^2$ is uniformly bounded by $M$, the total variation of $\cA$~\eqref{eq:TV} is bounded by $M'$, and the initial biases are equally spaced. Let $n \ge m^4$ be sufficiently large, then it takes at least $\cO(\frac{m^4 |\widehat{w}(m, 0)|}{n})$ to reduce the initial error in {generalized} Fourier mode $|\widehat{w}(m, t)| \le \frac{1}{2} |\widehat{w}(m, 0)|$. Especially when $|\widehat{w}(m, 0)|>c'm^{-2}$, the half-reduction time is at least $\cO(\frac{m^2 c'}{n})$.
\end{theorem}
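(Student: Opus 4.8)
The plan is to re-run the argument of Theorem~\ref{THM: SLOW DECAY}, but to replace the crude bounds on the $\mu_2$-term and on $\cJ_m$ by the sharper ones of Lemma~\ref{LEM: BOUND 2} and Theorem~\ref{THM: J}. First I would integrate the evolution equation~\eqref{EQ: NEW PDE} in time, which gives
\[
\widehat{w}(m,t)-\widehat{w}(m,0) = -\frac{\cH_m(t)-\cH_m(0)}{p_m} - \frac{\cJ_m(t)-\cJ_m(0)}{p_m} - \frac{n}{p_m}\int_0^t \widehat{w\mu_0}(m,s)\,ds - \frac{n}{p_m}\int_0^t\!\Big(\int_D \partial_b w\,\mu_2\,\theta_m'\,db\Big)ds ,
\]
so that it suffices to control the four terms on the right, using $p_m=\Theta(m^4)$ throughout.

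Next I would estimate each term. With $\|w(\cdot,s)\|_{C(D)}$ uniformly bounded (since $w\in H^2(D)$) and $\|\theta_m\|_{C(D)}=\cO(1)$, the $\widehat{w\mu_0}$ term is $\cO(nt/m^4)$, as in~\eqref{EQ: W MU0}; by~\eqref{EQ: HJ1} together with the finite-speed bound $|s(t)-1|\le \tfrac12 nKt$, the $\cH_m$ term is $\cO(nt/m^4)$; integrating the estimate of Lemma~\ref{LEM: BOUND 2} in $s$ makes the $\mu_2$ term $\cO(nt/m^4 + t/m^2 + nt^2/m^2)$; and Theorem~\ref{THM: J} makes the $\cJ_m$ term $\cO(nt^2/m^3)$. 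Invoking the hypothesis $n\ge m^4$ then lets me absorb $t/m^2\le nt/m^4$ and $nt^2/m^3\le nt^2/m^2$, so there is a constant $\widetilde C>0$ with $|\widehat{w}(m,0)|-|\widehat{w}(m,t)|\le \widetilde C\,(nt/m^4 + nt^2/m^2)$ for all $t\ge0$.

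Finally I would solve the half-reduction inequality. Because of the a priori bound $|\widehat{w}(m,0)|\le c m^{-2}$ (from $|\widehat w(k,t)|\le ck^{-2}$) and $n\ge m^4$, one has $m^6|\widehat{w}(m,0)|\le c\,m^4\le c\,n$, which forces the quadratic remainder $nt^2/m^2$ to stay at most a fixed multiple of the linear term $nt/m^4$ over the window $t=\cO(m^4|\widehat{w}(m,0)|/n)$; hence the linear term governs, and for a suitable constant $c_0>0$ the inequality $\widetilde C(nt/m^4+nt^2/m^2)\le\tfrac12|\widehat{w}(m,0)|$ holds for all $t\le c_0 m^4|\widehat{w}(m,0)|/n$. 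This gives the stated half-reduction time $\cO(m^4|\widehat{w}(m,0)|/n)$; specializing to $|\widehat{w}(m,0)|>c'm^{-2}$ yields $\cO(c'm^2/n)$, which is $\cO(m^2)$ discretized gradient steps since the step size must be $\cO(1/n)$.

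The main obstacle is bookkeeping rather than any single hard inequality: I must make sure the quadratic-in-$t$ remainders produced by Lemma~\ref{LEM: BOUND 2} and Theorem~\ref{THM: J} stay genuinely subordinate to the linear term throughout the window up to the half-reduction time, and this is exactly where the regime $n\ge m^4$ and the universal bound $|\widehat{w}(m,0)|\le cm^{-2}$ are used in tandem. A secondary care point is to cite the structural hypotheses---$\sup_i|a_i(t)|^2\le M$ with $V(\cA)\le M'$, and equispaced initial biases---at precisely the places where the constants in~\eqref{EQ: HJ1}, Lemma~\ref{LEM: BOUND 2}, and Theorem~\ref{THM: J} depend on them.
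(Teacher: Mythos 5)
Your proposal is correct and follows essentially the same route as the paper's proof: the same decomposition of the integrated evolution equation~\eqref{EQ: NEW PDE}, the same inputs (Lemma~\ref{LEM: BOUND 2}, Theorem~\ref{THM: J}, and the $\cH_m$ and $\widehat{w\mu_0}$ bounds from Theorem~\ref{THM: SLOW DECAY}), leading to the same bound $\cO(nt/m^4+nt^2/m^2)$ on the decrease of $|\widehat{w}(m,\cdot)|$. The only difference is cosmetic: the paper solves the resulting quadratic for $\tau$ explicitly and enumerates scaling regimes, whereas you use $|\widehat{w}(m,0)|\le cm^{-2}$ and $n\ge m^4$ to show directly that the quadratic term is subordinate on the window $t=\cO(m^4|\widehat{w}(m,0)|/n)$ --- both arguments are valid and equivalent in the stated regime.
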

\begin{proof}
    Using Lemma~\ref{LEM: BOUND 2}, there exists a constant $C'' > 0$ that 
 \begin{equation*}
        \left|\int_{D} \partial_b w(b', t) \mu_2(b', t) \theta_m'(b') db' \right| \le  C''\left(   1  +   \left( \frac{1}{n} + t\right) m^2 \right).
    \end{equation*}
    We only consider the case that $n\ge m$, otherwise we return to Theorem~\ref{THM: SLOW DECAY}. 
    Recall that in Theorem~\ref{THM: SLOW DECAY} that $|\widehat{w\mu_0}(m, t)|$ is uniformly bounded (see~\eqref{EQ: W MU0}) and $|\cH_m(t) - \cH_m(0)| \le \frac{1}{2}|\theta_m(-1)|\sqrt{M} K n t$ (see~\eqref{EQ: HJ1} and~\eqref{EQ: HJ2}). Combine the estimate in Theorem~\ref{THM: J} and follow the same process as~\eqref{EQ: INEQ}, we can find a constant $C''' > 0$ that 
\begin{equation}\label{EQ: W DECAY 2}
\begin{aligned}
    |\widehat{w}(m, t)| - |\widehat{w}(m, 0)| 
    \le -C'''\frac{n}{p_m}  \left(\left( 1 + \frac{m^2}{n}\right)t + (m+m^2) t^2\right).
\end{aligned}
\end{equation}
    We solve an upper bound for the half-reduction time $\tau$ from the quadratic equation 
    \begin{equation}\label{EQ: QUAD}
        \frac{1}{2}|\widehat{w}(m, 0)| =  \frac{n}{p_m}C'''\left(\left( 1 + \frac{m^2}{n}\right)\tau + (m+m^2) \tau^2\right) .
    \end{equation}
    The solution satisfies
    \begin{equation}
        \tau \ge \frac{p_m}{2C'''n}  \frac{|\widehat{w}(m, 0)|}{\sqrt{(1+m^2/n)^2 + 2(m+m^2) p_m |\widehat{w}(m, 0)|/(C''' n)}}.
    \end{equation}
    Let's use the notations $A \prec B$ and $A\succ B$ to represent the relation 
    $A = \cO(B)$ and $B = \cO(A)$, respective.
    We find the following regimes: 
    \begin{enumerate}
        \item $n \succ m^2$ and $|\widehat{w}(m, 0)| \succ \frac{n}{m^6}$, then $\tau = \cO\left(m\sqrt{\frac{\widehat{w}(m, 0)}{n}}\right)$.
        \item $n \succ m^2$ and $|\widehat{w}(m, 0)| \prec \frac{n}{m^6}$, then $\tau = \cO\left(\frac{m^4}{n}|\widehat{w}(m, 0)|\right)$.
        \item $m \prec n \prec m^2$ and $|\widehat{w}(m, 0)| \succ \frac{1}{nm^2}$,  then $\tau = \cO\left(m\sqrt{\frac{\widehat{w}(m, 0)}{n}}\right)$.
        \item $m \prec n \prec m^2$ and $|\widehat{w}(m, 0)| \prec \frac{1}{nm^2}$, then $\tau = \cO( m^2 |\widehat{w}(m, 0)|)$.
    \end{enumerate}
    In the second case, when $n \succ m^4$ is sufficiently large and $|\widehat{w}(m, 0)| > c'm^{-2}$, the half-reduction time becomes $\cO(\frac{c'm^2}{n})$, which is a better bound than the one in Theorem~\ref{THM: SLOW DECAY}.
\end{proof}

\begin{remark}\label{rem: fix bias}
If the biases $b_i$ are equally spaced and fixed, the learning dynamics become the gradient flow for the least square problem. Using a standard Fourier basis one gets a simpler version of ~\eqref{EQ: NEW PDE} without the boundary terms and the last term involving $\theta_m'$. 
{{Since $w(b,t)$ is $H^2$, one gets $\widehat{\mu_0w}(m,t)\le C ( \frac{1}{m^2} + m \text{disc}(\{b_i(0)\}_{i=1}^n) ) $, for $n\succ m^3$ and equispaced $\{b_i\}_{i=1}^n$ that $\text{disc}(\{b_i(0)\}_{i=1}^n) = \cO(\frac{1}{n})$, it takes $\cO(m^4)$ steps to reduce the initial error in mode $m$ by half. }}
Hence the full learning dynamics, i.e., involving the bias, while requiring more computation cost in each step, may speed up the convergence. See Fig~\ref{fig: fix-bias} for an example.
\end{remark}

\subsection{Numerical experiments}
In the following, we perform numerical experiments to demonstrate the scaling laws with a different total variation of $|a_i(t)|^2$. The objective function is 
$$f(x) = \sin(k \pi x)$$
with $k$ chosen from selected high frequencies. We set the number of neurons $n = k^{\beta}$, $\beta\in \{2, 3, 4\}$ for a selected frequency $k$. The learning rate is selected as $n^{-1}$. This set up is regime 1 above, where $\hat{w}(k,0)=k^{-2}$, and hence the number of iterations should be of order $\cO(n\tau)=\cO(\sqrt{n})$.

The initialization of biases $\{b_i(0)\}_{i=1}^n$ are equispaced and ordered ascendingly. The weights $\{a_i(0)\}_{i=1}^n$ are initialized in the following ways.
\begin{enumerate}
    \item[(A)] $a_i(0) = \frac{1}{2} (-1)^i$. 
    \item[(B)] $a_i(0) = \frac{1}{2}\cos(i)$.  
\end{enumerate}
We record the dynamics of the network (denoted by $h_k$) at exactly frequency $k$ through the projection
\begin{equation*}
    E_k(t) = \left|\int_{D} (h_k(x, t) - f(x) ) f(x) dx \right|.
\end{equation*}
As we will see in the experiments, the total variation in $\cA$ is slowly varying in time, so we can summarize the \emph{a priori} theoretical lower bounds and the experimental results of the number of epochs in the following tables~\ref{tab:my_label 1} and~\ref{tab:my_label 2}. The initialization (A) has a relatively small total variation, and the experiments agree with the theoretical bounds quite well. However, similar results are observed for initialization (B), which has a relatively large total variation during the training, see Fig~\ref{fig: case a-4},~\ref{fig: case a-3},~\ref{fig: case a-2} for initialization (A) and Fig~\ref{fig: case b-4},~\ref{fig: case b-3},~\ref{fig: case b-2} for initialization (B). One possible explanation is the overestimate using the total variation in~\eqref{EQ: FOUR EST} which might be unnecessary. As we mentioned in Remark~\ref{rem: fix bias}, we record the training dynamics with \emph{fixed} biases, initialization (A), and choose $n = k^4$, see Fig~\ref{fig: fix-bias}. The result matches the argument in Remark~\ref{rem: fix bias}. For comparison purposes, we additionally demonstrate an example with \texttt{Adam} optimizer, which shows a similar scaling relation as the \texttt{GD} optimizer at the initial training stage, see Fig~\ref{fig: fix-bias}. 

All these tests show a consistent phenomenon, the higher the frequency, the slower the learning dynamics.


\begin{table*}
    \centering
    \caption{Theoretical lower bounds.}
    \begin{tabular}{ccccc}
    \toprule
        & $TV(\cA)$ & $\beta=2$ & $\beta=3$ & $\beta=4$  \\
         \midrule
       Init A & $\cO(1)$ & $\cO(k)$  & $\cO(k^{1.5})$  &   $\cO(k^2$) \\
         \midrule
       Init B & $\cO(n)$ & $\cO(k)$  & $\cO(k)$  &  $\cO(k)$  \\ 
       \bottomrule 
    \end{tabular}
    \label{tab:my_label 1}
\end{table*}
\begin{table*}
    \centering
    \caption{Experimental fitted results.}
    \begin{tabular}{ccccc}
    \toprule
        & $TV(\cA)$ & $\beta=2$ & $\beta=3$ & $\beta=4$  \\
            \midrule 
       Init A & $\cO(1)$ & $\cO(k^{1.61})$  & $\cO(k^{1.82})$  &   $\cO(k^{1.87}$) \\
         \midrule
       Init B & $\cO(n)$ &  $\cO(k^{1.59})$ &  $\cO(k^{1.75})$ &   $\cO(k^{1.90}) $ \\ 
       \bottomrule
    \end{tabular}
    \label{tab:my_label 2}
\end{table*}

\begin{figure}
    \centering
    \includegraphics[height=0.23\textwidth]{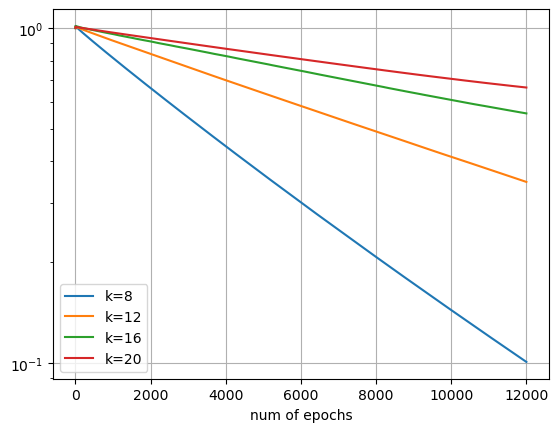}
    \hfill
    \includegraphics[height=0.23\textwidth]{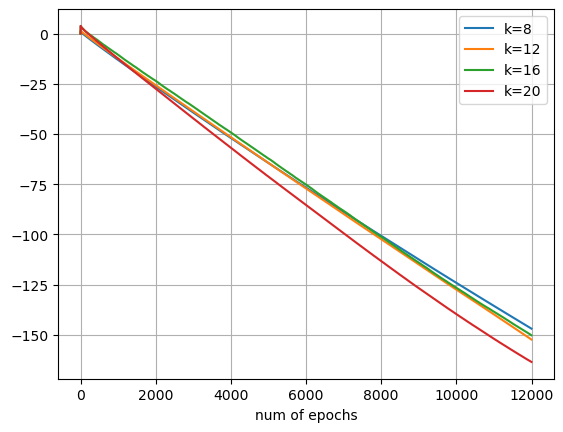}
    \hfill
    \includegraphics[height=0.23\textwidth]{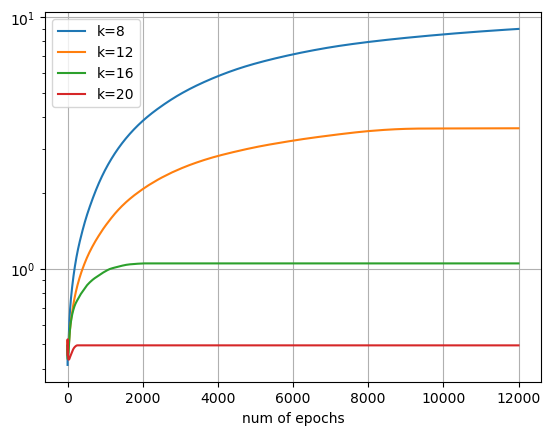}
    \caption{Experiment for initialization (A) and $\beta=4$. Left: the graphs of $E_k(t)$. Middle: the graphs of $k^{1.87}\ln(E_k(t))$. Right: the total variations of $\cA$.}
    \label{fig: case a-4}
\end{figure}

\begin{figure}
    \centering
    \includegraphics[height=0.23\textwidth]{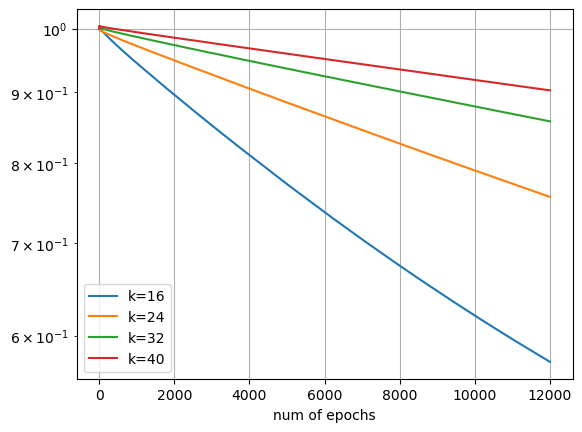}\hfill
    \includegraphics[height=0.23\textwidth]{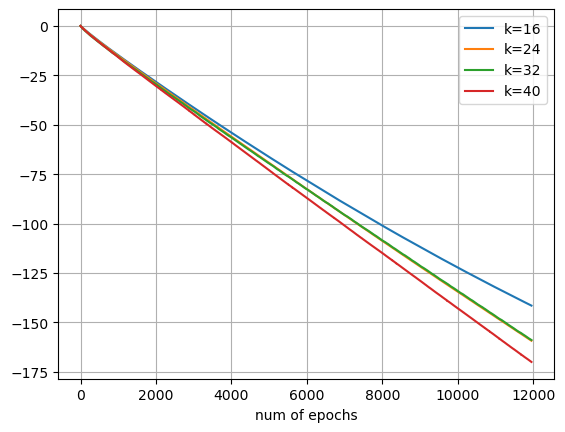}\hfill
    \includegraphics[height=0.23\textwidth]{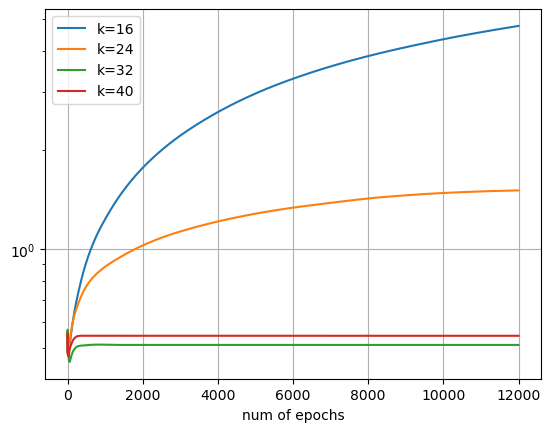}
    \caption{Experiment for initialization (A) and $\beta=3$. Left: the graphs of $E_k(t)$. Middle: the graphs of $k^{1.82}\ln(E_k(t))$. Right: the total variations of $\cA$.}
    \label{fig: case a-3}
\end{figure}
\begin{figure}
    \centering
    \includegraphics[height=0.23\textwidth]{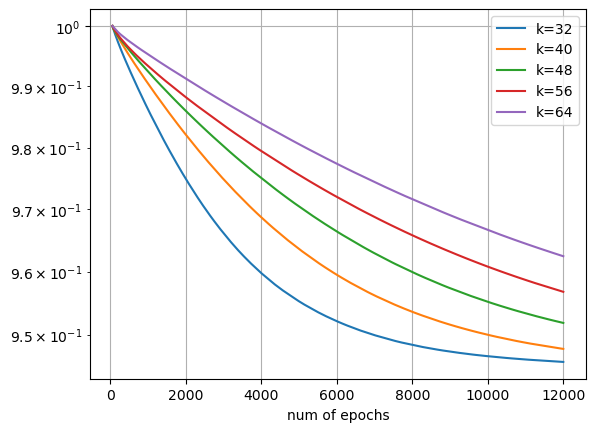}\hfill
    \includegraphics[height=0.23\textwidth]{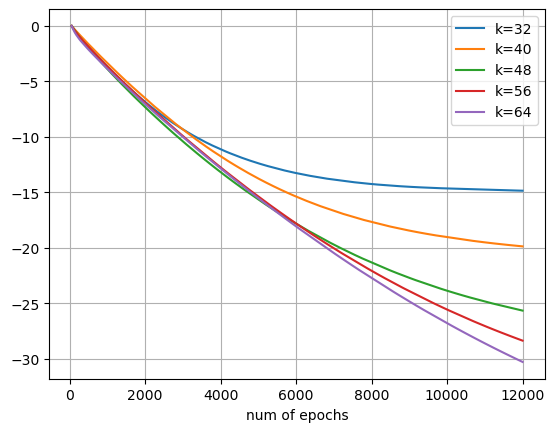}\hfill
    \includegraphics[height=0.23\textwidth]{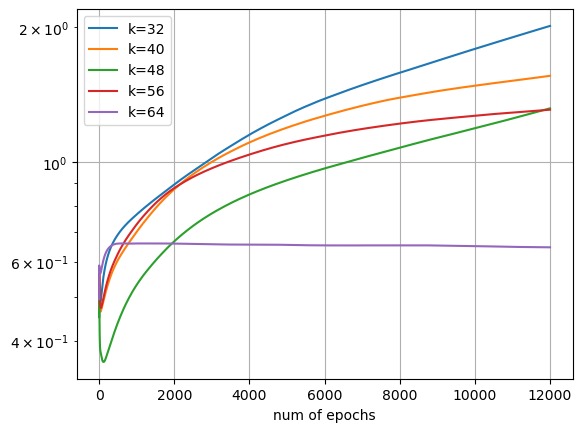}
    \caption{Experiment for initialization (A) and $\beta=2$. Left: the graphs of $E_k(t)$. Middle: the graphs of $k^{1.61}\ln(E_k(t))$ (fitting first 2000 epochs only). Right: the total variations of $\cA$.}
    \label{fig: case a-2}
\end{figure}



\begin{figure}
    \centering
    \includegraphics[height=0.23\textwidth]{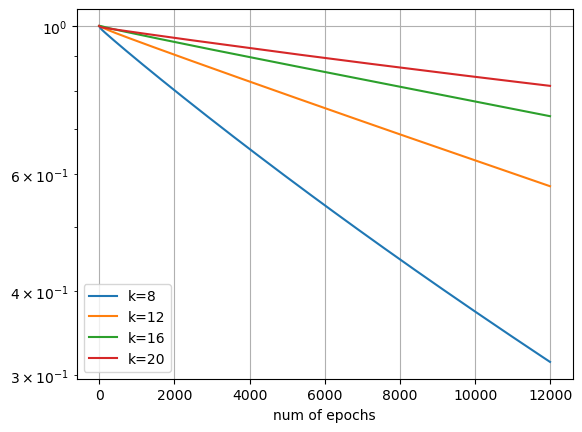}\hfill
    \includegraphics[height=0.23\textwidth]{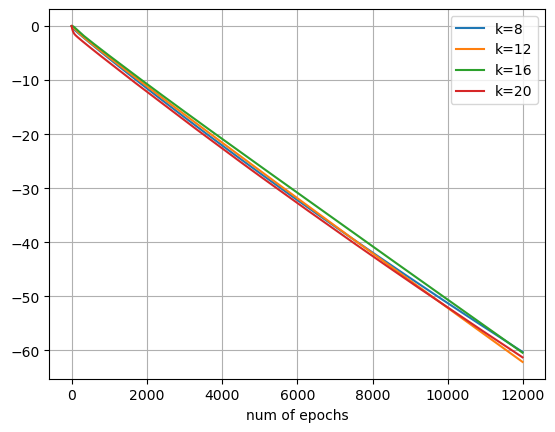}\hfill
    \includegraphics[height=0.23\textwidth]{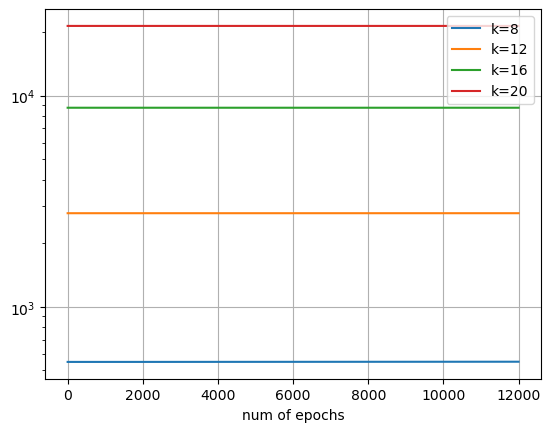}
    \caption{Experiment for initialization (B) and $\beta=4$. Left: the graphs of $E_k(t)$. Middle: the graphs of $k^{1.90}\ln(E_k(t))$ (fitting first 2000 epochs only). Right: the total variations of $\cA$.}
    \label{fig: case b-4}
\end{figure}

\begin{figure}
    \centering
    \includegraphics[height=0.23\textwidth]{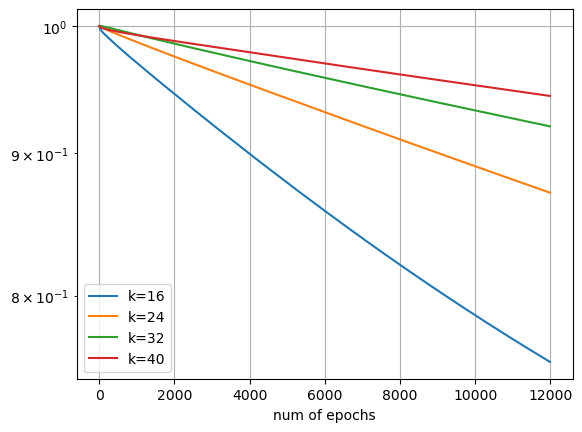}\hfill
    \includegraphics[height=0.23\textwidth]{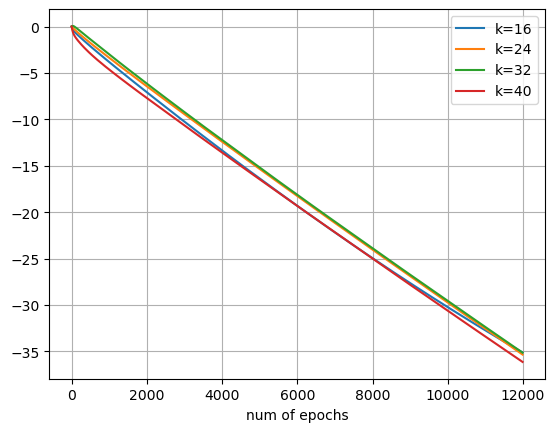}\hfill
    \includegraphics[height=0.23\textwidth]{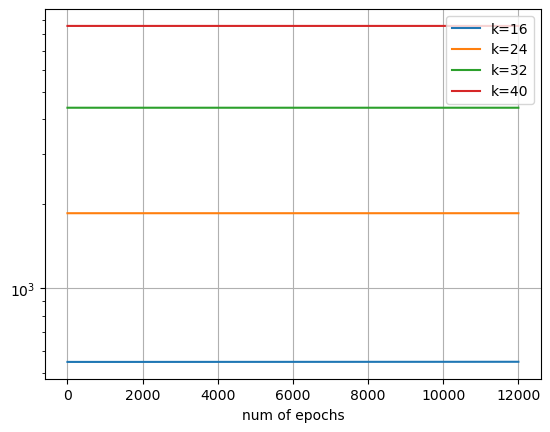}
    \caption{Experiment for initialization (B) and $\beta=3$. Left: the graphs of $E_k(t)$. Middle: the graphs of $k^{1.75}\ln(E_k(t))$. Right: the total variations of $\cA$.}
    \label{fig: case b-3}
\end{figure}

\begin{figure}
    \centering
    \includegraphics[height=0.23\textwidth]{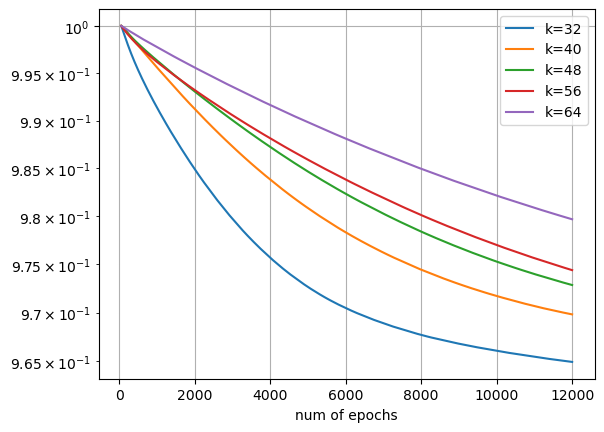}\hfill
    \includegraphics[height=0.23\textwidth]{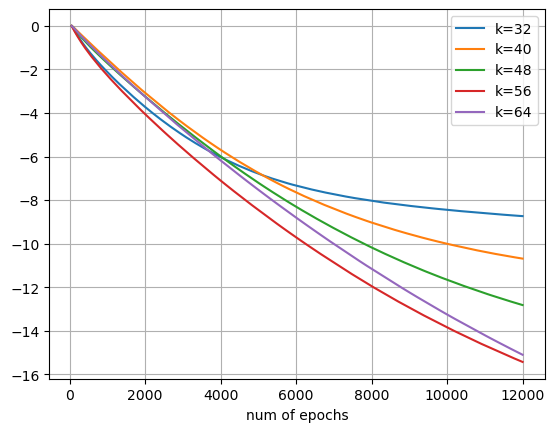}\hfill
    \includegraphics[height=0.23\textwidth]{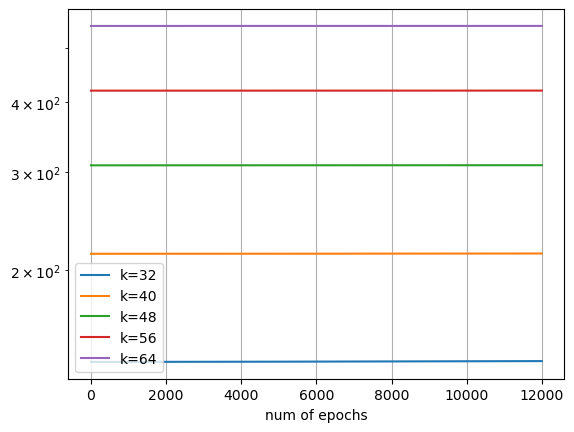}
    \caption{Experiment for initialization (B) and $\beta=2$. Left: the graphs of $E_k(t)$. Middle: the graphs of $k^{1.59}\ln(E_k(t))$ (fitting first 2000 epochs only). Right: the total variations of $\cA$.}
    \label{fig: case b-2}
\end{figure}

\begin{figure}
    \centering
\includegraphics[height=0.23\textwidth]{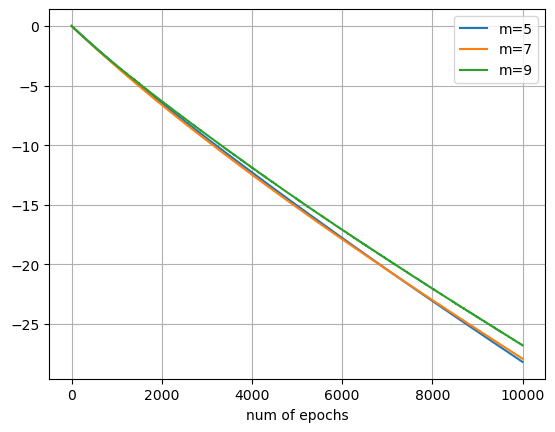}\hfill
\includegraphics[height=0.23\textwidth]{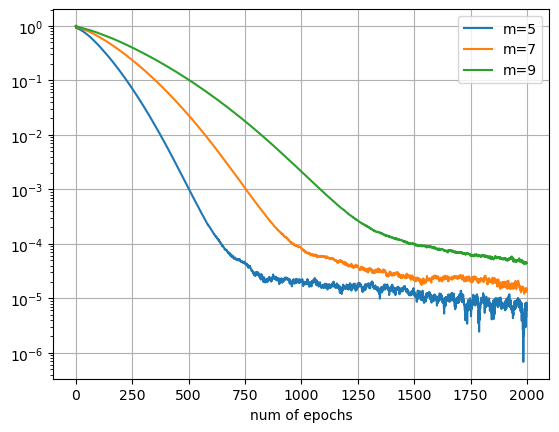}\hfill
\includegraphics[height=0.23\textwidth]{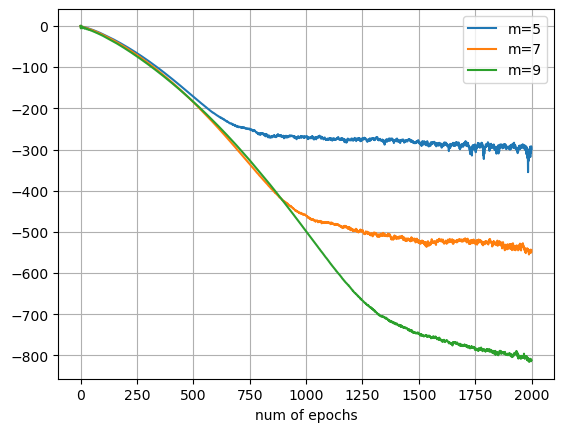}
\caption{Additional experiment for initialization (A) and $\beta=4$. Left: Graphs of $k^{3.5}\ln(E_k(t))$ for $k=5, 7, 9$ with \emph{fixed} equispaced biases, trained by \texttt{GD}. Middle: Graphs of $E_k(t)$ for $k=5,7,9$, trained by \texttt{Adam}. Right: Graphs of $k^{2}\ln E_k(t)$ for $k=5,7,9$, trained by \texttt{Adam}.}
\label{fig: fix-bias}
\end{figure}

\subsection{Further remarks}
\subsubsection{Initial distribution of biases}
If the initial biases are uniformly distributed instead of equispaced, the previous estimates need to be modified. In particular, the upper bound of $s(t) - 1$ will become $\frac{Knt}{2} + \cO_p(\sqrt{Knt})$ using the Chebyshev inequality. The discrepancy of $\{b_i(0)\}_{i=1}^n$ will be also updated to $\cO_p(n^{-1/2})$. Then we have the following modified estimates:
\begin{equation*}
\begin{aligned}
    |\cH_m(t) - \cH_m(0)| &\le \frac{1}{2}|\theta_m(-1)|\sqrt{M} (nKt + \cO_p(\sqrt{nKt})),\\
    |\cJ_m(t) - \cJ_m(0)| &\le |\theta'_m(-1)| \sqrt{M} K t (Knt+\cO_p(\sqrt{Knt})),\\
    \left|\int_{D} \partial_b w(b', t) \mu_2(b', t) \theta_m'(b') db' \right| &\le C'' \left(1 +\left(t + \cO_p\left(\frac{1}{\sqrt{n}}\right) \right)m^2\right),
\end{aligned}
\end{equation*}
and~\eqref{EQ: W DECAY 2} becomes 
\begin{equation*}
    |\widehat{w}(m, t)| - |\widehat{w}(m, 0)| \;\mathop{\le}_{p}\; -C''' \frac{n}{p_m}  \left(\left( 1 + \frac{m^2}{\sqrt{n}}\right)t + (m+m^2) t^2 + (1+mt)\sqrt{\frac{t}{n}}\right).  
\end{equation*}
In particular, when $n\succ m^4$ and $|\widehat{w}(m, 0)| > c'm^{-2}$, the half-reduction time is still the same $\cO_p(m^2 c'/n)$ as Theorem~\ref{THM: SLOW DECAY 2}. A similar probabilistic estimate can be derived for initial biases sampled from a continuous probability density function which is bounded from below and above by positive constants.

\subsubsection{Activation function}
    The regularity of the activation function plays a crucial role in the analysis. In general, using a smoother activation function, which leads to a faster spectrum decay of the corresponding Gram matrix, will take an even longer time to eliminate higher frequencies. For instance, if the activation function is chosen as $\frac{1}{p!}\sigma^p(x)$, $p\ge 1$, where $\sigma$ is the \texttt{ReLU} activation function, then a similar analysis will show that if $n\succ m^{p+3}$, then under the assumption of Theorem~\ref{THM: SLOW DECAY 2}, the \emph{half-reduction} time of ``frequency'' $m$ is at least $\cO(\frac{m^{2p+2}}{n} |\widehat{w}(m, 0)|)$ in the Theorem~\ref{THM: SLOW DECAY 2}. This also implies that using a shallow neuron network with smooth activation functions will be even worse for learning and approximating high-frequency information by minimizing the $L^2$ error or mean-squared error (\texttt{MSE}). 

    In the following, we perform a simple numerical experiment to validate our conclusion. We set the objective function as the Fourier mode $f(x) = \sin(m\pi x)$ on $D$, $m\in\bbN$ and use the activation functions 
    $\texttt{ReLU}^p(x)\coloneqq \frac{1}{p!}\sigma^p(x)$, $p=1,2$ to train the shallow neuron network $h_{m,p}(x, t)$ to approximate $f(x)$, respectively.  The number of neurons $n = 10^4$. We record the the \emph{error} of the Fourier mode $$E_{m,p}(t)\coloneqq \left|\int_{D} (h(x, t) - f(x)) f(x) dx \right|$$ 
    at each iteration. The errors are shown in Figure~\ref{FIG: RATIO} for $m=5, 7, 9$. We can observe that the decay rates of $\texttt{ReLU}$ and $\texttt{ReLU}^2$ are $\cO(m^{-2})$ and $\cO(m^{-3})$ roughly.
\begin{figure}
\centering
\includegraphics[height=0.18\textwidth]
{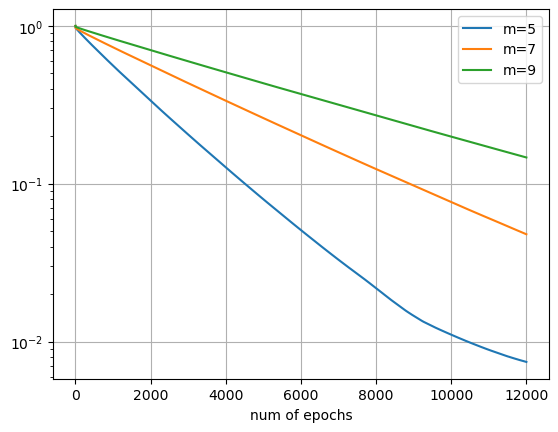}\hfill
\includegraphics[height=0.18\textwidth]{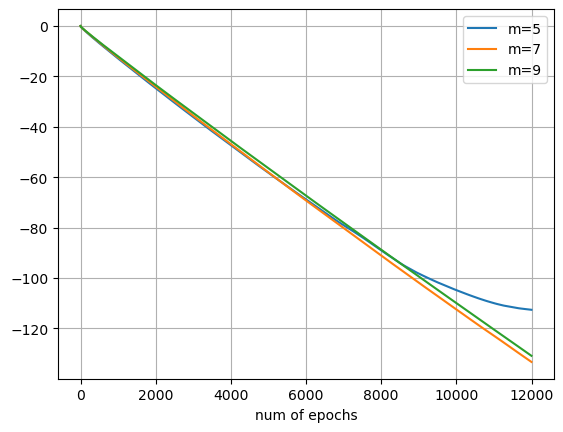}\hfill
\includegraphics[height=0.18\textwidth]{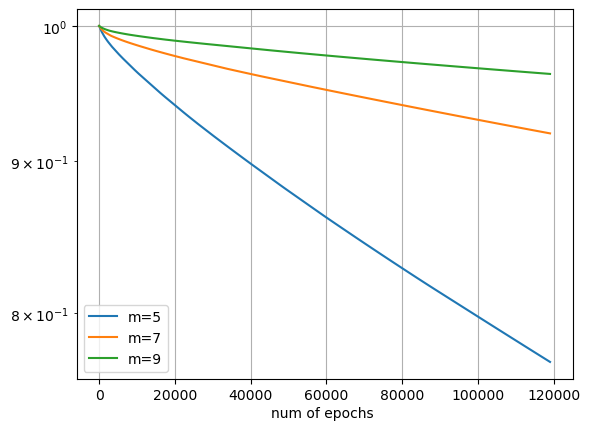}\hfill
\includegraphics[height=0.18\textwidth]{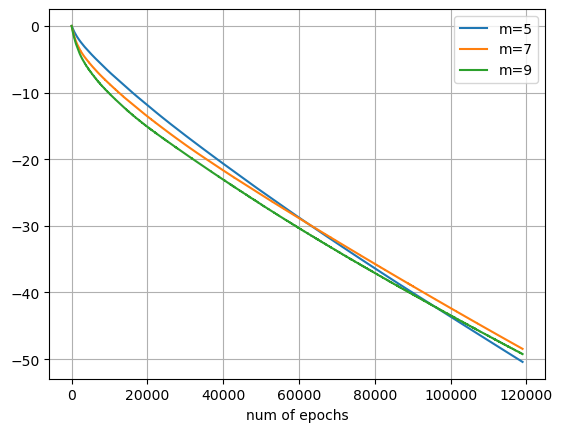}
\caption{The comparison of $\texttt{ReLU}$, $\texttt{ReLU}^2$ for the approximation to $f(x) = \sin(m\pi x)$. From left to right, the figures are $E_{m,1}(t)$, $m^{1.95}\ln(E_{m,1}(t))$, $E_{m,2}(t)$, $m^{3.25}\ln(E_{m,2}(t))$.}
\label{FIG: RATIO}
\end{figure}
\subsubsection{Boundedness of weights}
One may notice that the requirement of weights $\sup_{i\ge 1} |a_i(t)|^2 \le M$ for all $t > 0$ can be relaxed to $0\le t \le \tau$, where $\tau$ denotes the lower bound of half-reduction time in Theorem~\ref{THM: SLOW DECAY} or Theorem~\ref{THM: SLOW DECAY 2}. When $n \succ m$ in Theorem~\ref{THM: SLOW DECAY} or $n \succ m^2$ in Theorem~\ref{THM: SLOW DECAY 2}, such requirement can be relaxed to only the initial condition $\sup_{i\ge 1} |a_i(0)|^2 \le M$.

\section{Rashomon set for bounded activation function}
\label{SEC: RASH BOUND}
 In this section, we characterize the Rashomon set for a general bounded activation function instead of \texttt{ReLU}. We consider a more general setting of the parameter space: $a_i$ are mean-zero i.i.d sub-Gaussian random variables that 
\begin{equation*}
   \bbP[ | a_i | > t] < 2e^{-m t^2}, \quad i\in [n]
\end{equation*} 
for some $m > 0$. Then we have the following estimate for the Rashomon set by following a similar idea for the proof of Theorem~\ref{THM: RASH}.
 \begin{theorem}
     Assuming the same network structure as Theorem~\ref{THM: RASH} and $\sigma$ as a bounded activation function that 
     \begin{equation}\label{EQ: ACTIV}
        -1\le \sigma(t) \le 1, \quad \sigma'(t) > 0, \quad\forall t\in\bbR, 
     \end{equation}
     then the Rashomon set's measure
    \begin{equation*}
     \bbP[\|h(\bmx) - f(\bmx)\|_{L^2(D)} \le \eps \|f\|_{L^2(D)}] \le 2\exp\left(-\frac{C n(1-\eps)^2\|f\|^4_{L^2(D)}}{4\kappa^2 \ell^2 \theta^2}\right),
\end{equation*} 
where $\theta = \|a_i\|_{\psi_2}$ is the \emph{Orlicz} norm, $\ell = \tn{diam}(D)$, and $\kappa$ denotes 
\begin{equation*}
    \kappa \coloneqq \sup_{(\bmw,t)\in \bbS^{d-1}\times\bbR} \left|\int_{\{ \bmx\cdot \bmw = t, \bmx\in D \} } f(\bmx) d H_{d-1}\bmx \right|. 
\end{equation*}
 \end{theorem}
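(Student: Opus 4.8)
The plan is to follow the proof of Theorem~\ref{THM: RASH}, but since here the constant $\kappa$ is expressed directly through $f$ rather than through $g=\Delta^{-1}f$, I would pair the network $h$ with $f$ itself instead of pairing $\Delta h$ with $g$. Write $h(\bmx)=\frac1n\sum_{j=1}^n a_j\sigma(\bmw_j\cdot\bmx-b_j)+\bmv\cdot\bmx+c$. Using the hypothesis inherited from Theorem~\ref{THM: RASH} that $f=\Delta g$ for some $g\in C_0^2(D)$, two integrations by parts show that every affine function is $L^2(D)$-orthogonal to $\Delta g$, hence the affine head drops out of the inner product and
$$\langle h,f\rangle_{L^2(D)}=\frac1n\sum_{j=1}^n X_j,\qquad X_j\coloneqq a_j\int_D\sigma(\bmw_j\cdot\bmx-b_j)\,f(\bmx)\,d\bmx .$$
This is the analogue of the step in Theorem~\ref{THM: RASH} where the Laplacian annihilates the affine part.

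The next step converts $X_j$ into a slice integral. By the coarea formula applied to the linear functional $\bmx\mapsto\bmw_j\cdot\bmx$,
$$\int_D\sigma(\bmw_j\cdot\bmx-b_j)\,f(\bmx)\,d\bmx=\int_{\bbR}\sigma(t-b_j)\,\mathcal{R}f(\bmw_j,t)\,dt ,$$
and $\mathcal{R}f(\bmw_j,\cdot)$ is supported in an interval of length at most $\ell=\tn{diam}(D)$; together with $|\sigma|\le 1$ and $|\mathcal{R}f(\bmw_j,t)|\le\kappa$ this gives $|X_j|\le\kappa\ell\,|a_j|$. Only boundedness of $\sigma$ is used for the upper bound; the monotonicity $\sigma'>0$ is not needed. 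Conditionally on all $\{\bmw_j,b_j,\bmv,c\}$ the $X_j$ are independent and mean zero (since $\bbE a_j=0$), and sub-Gaussian with $\|X_j\|_{\psi_2}\le\kappa\ell\,\theta$. Hoeffding's inequality for sums of independent sub-Gaussian random variables then yields, for every $s>0$,
$$\bbP\!\left[\frac1n\sum_{j=1}^n X_j\ge s\right]\le 2\exp\!\left(-\frac{c\,n\,s^2}{\kappa^2\ell^2\theta^2}\right)$$
for an absolute constant $c>0$, and since the right side does not depend on the conditioning variables the bound holds unconditionally.

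Finally, exactly as in Theorem~\ref{THM: RASH}, the event $\|h-f\|_{L^2(D)}\le\eps\|f\|_{L^2(D)}$ forces $\|h\|_{L^2(D)}\ge(1-\eps)\|f\|_{L^2(D)}$ and hence $\langle h,f\rangle\ge\tfrac12\|h\|_{L^2(D)}^2+\tfrac{1-\eps^2}{2}\|f\|_{L^2(D)}^2\ge(1-\eps)\|f\|_{L^2(D)}^2$. Taking $s=(1-\eps)\|f\|_{L^2(D)}^2$ in the previous display gives
$$\bbP(\mathcal{R}_\eps)\le 2\exp\!\left(-\frac{c\,n(1-\eps)^2\|f\|_{L^2(D)}^4}{\kappa^2\ell^2\theta^2}\right),$$
which is the claim after setting $C=4c$. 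The only genuinely new ingredient relative to Theorem~\ref{THM: RASH} is the coarea identity that extracts the Radon transform of $f$ directly from a bounded activation (together with the support bound by $\ell$, which produces the factor $\kappa\ell$ in the exponent), plus the bookkeeping that shows the affine part of $h$ vanishes under $\langle\cdot,f\rangle$; replacing Hoeffding's bounded-variable inequality by its sub-Gaussian version is routine. I expect the step requiring the most care to be the coarea/slicing estimate, since that is where the exponent's dependence on $\kappa$ and $\ell$ is pinned down.
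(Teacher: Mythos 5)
Your proof is correct and follows essentially the same route as the paper's: the same reduction of the Rashomon event to $\langle h,f\rangle \ge (1-\eps)\|f\|_{L^2(D)}^2$, the same slicing of $\int_D \sigma(\bmw_j\cdot\bmx-b_j)f(\bmx)\,d\bmx$ into an integral of $\sigma$ against the Radon transform of $f$ to obtain $|X_j|\le \kappa\,\ell\,|a_j|$ (the paper carries a harmless extra factor of $2$ from integrating over an interval of length $2\ell r_i$), and the same sub-Gaussian Hoeffding inequality. Your explicit check that the affine head $\bmv\cdot\bmx+c$ is annihilated by $\langle\cdot,f\rangle$ via $f=\Delta g$ with $g\in C_0^2(D)$ is a detail the paper's proof leaves implicit, and your observation that only boundedness of $\sigma$ (not monotonicity) is used matches the paper as well.
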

 \begin{proof}
     We denote $r_i \coloneqq |\bmw_i|$ and $\ell = \text{diam}(D)$, then let 
     \begin{equation*}
         X_i\coloneqq a_i\int_{D} f(\bmx) \sigma(\bmw_i\cdot \bmx - b_i) d\bmx = \frac{a_i}{r_i} \int_{-\ell r_i}^{\ell r_i} \sigma(s - b_i) \int_{\{\bmx\cdot \bmw_i = s\}} f(\bmx) dH_{d-1}(\bmx) ds.
     \end{equation*}
     Then $\aver{h,f}=\frac{1}{n}\sum_{i=1}^n X_i$ and 
     \begin{equation*}
         \bbP\left[ \|h - f\|^2_{L^2(D)} \le\eps^2 \|f\|_{L^2(D)}^2  \right] \le \bbP\left[ (1-\eps)\|f\|_{L^2(D)}^2 \le \frac{1}{n}\sum_{i=1}^n X_i\right].
      \end{equation*}
      Since the random variable $a_i$ is sub-Gaussian, then  $X_i$ is also sub-Gaussian by $|X_i|\le 2 a_i \ell \kappa$. One can apply the Hoeffding's inequality (see Theorem 2.6.2~\cite{vershynin2018high}) that 
      \begin{equation*}
          \bbP\left[ (1 - \eps)\|f\|_{L^2(D)}^2 \le \frac{1}{n}\sum_{i=1}^n X_i\right] \le 2\exp\left(\frac{-C n (1-\eps)^2\|f\|_{L^2(D)}^4}{4\kappa^2 \ell^2 \theta^2}\right).
      \end{equation*}
      for certain absolute constant $C$.
 \end{proof}
 The constant $\kappa$ stands for the largest possible average of $f$ on every hyperplane $\{\bmx\cdot \bmw = t\}$, $t\in\bbR$. When $f(\bmx)$ is oscillatory in all directions, the constant $\kappa$ becomes small. More intuitively speaking, an activation function of the form~\eqref{EQ: ACTIV} can not feel oscillations in $f$, i.e., $\aver{f, \sigma}$ is small due to cancellation. According to the heuristic argument in~\cite{semenova2022existence}, it also implies that it is relatively difficult to find a shallow neural network with activation function~\eqref{EQ: ACTIV} that can approximate highly oscillatory functions well. In other words, the optimal set of parameters only occupies an extremely small measure of the parameter space for highly oscillatory functions.

\section{Further discussions}\label{sec:app-discussion}
\subsection{General case of Gram matrix of two-layer $\texttt{ReLU}$ networks in one dimension}
When the two-layer \texttt{ReLU} network in one dimension is 
\begin{equation*}
    f(x) = c + \sum_{i=1}^n a_i \sigma(w_i x - b_i),\quad x\in D \coloneqq [-1,1],
\end{equation*}
where $w_i\in \{+1, -1\}$ obeys the Bernoulli distribution with $p=\frac{1}{2}$. Then the corresponding Gram matrix has the following block structure of continuous kernels 
 \begin{equation*}
 \begin{aligned}
   \calG  \coloneqq  \begin{pmatrix}
      \calG^{++}(x, y) &  \calG^{+-}(x, y)  \\  \calG^{+-}(x, y)  & \calG^{--}(x, y) 
     \end{pmatrix}
 \end{aligned}
 \end{equation*}
 where the sub-kernels are
 \begin{equation*}
 \begin{aligned}
     \calG^{++} (x, y) &= \frac{1}{24}(2 - x - y - |x - y|)^2 (2 - x - y + 2 |x-y|),\\
     \calG^{--}(x, y) &= \frac{1}{24}(2 + x+y-|x-y|)^2 (2 + x + y + 2 |x-y|),\\
     \calG^{+-}(x, y) &= \frac{1}{48} \left[|x - y| - (x-y)\right]^3,\\
     \calG^{-+}(x, y) &= \frac{1}{48} \left[|x - y| + (x-y)\right]^3. 
 \end{aligned}
 \end{equation*}
Note the kernels $\calG^{+-}(x, y) = \calG^{-+}(-x, -y)$ and $\calG^{++}(x, y) = \calG^{--}(-x, -y)$, suppose $(g_k^{+}, g_k^{-})$ is an eigenfunction of $\calG$ for eigenvalue $\lambda_k$, we can obtain: 
\begin{itemize}
    \item If $  \phi_k(x) = g_k^{+}(x)+ g_k^{-}(-x) \not\equiv 0$, then it is an eigenfunction of $\calG_{\phi} = \calG^{++}(x, y) + \calG^{+-}(x,-y) $ for eigenvalue $\lambda_k$. 
\item If $  \psi_k(x) = g_k^{+}(x) - g_k^{-}(-x)\not\equiv 0$, then it is an eigenfunction of $\calG_{\psi} = \calG^{++}(x, y) - \calG^{+-}(x,-y) $ for eigenvalue $\lambda_k$.
\end{itemize}

 The kernel $\calG_{\phi}\in C^2(D\times D)$ is 
 \begin{equation*}
     \calG_{\phi}(x, y) = \frac{1}{12}(|x-y|^3 + |x+y|^3 + 4 + 12 xy - 6(x+y) - 6xy(x+y)).
 \end{equation*}
  Based on the above observation, it is straightforward to derive the following theorem.
 \begin{theorem}
 If the two kernels $\calG_{\phi}$ and $\calG_{\psi}$ do not allow common eigenvalues, then $(g_k^{+}, g_k^{-})$ is an eigenfuntion of $\calG$, then they satisfy either $g_k^{+}(x) = g_k^-(x)$ or $g_k^{+}(x) = g_k^{-}(-x)$.
 \end{theorem}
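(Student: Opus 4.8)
The plan is to use the reflection $x\mapsto -x$, which intertwines the two diagonal blocks of $\calG$ and the two off‑diagonal blocks, to decouple the $2\times2$ block operator $\calG$ into the two scalar operators with kernels $\calG_\phi$ and $\calG_\psi$, and then to read off the dichotomy from the hypothesis that these two kernels have disjoint spectra.

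First I would write the eigenvalue equation $\calG(g_k^+,g_k^-)=\lambda_k(g_k^+,g_k^-)$ componentwise,
\begin{equation*}
\int_D \calG^{++}(x,y)g_k^+(y)\,dy+\int_D\calG^{+-}(x,y)g_k^-(y)\,dy=\lambda_k g_k^+(x),
\end{equation*}
\begin{equation*}
\int_D \calG^{-+}(x,y)g_k^+(y)\,dy+\int_D\calG^{--}(x,y)g_k^-(y)\,dy=\lambda_k g_k^-(x).
\end{equation*}
In the second equation I would replace $x$ by $-x$, rewrite the $\calG^{-+}$ term via the kernel identity $\calG^{-+}(-x,y)=\calG^{+-}(x,-y)$, and rewrite the $\calG^{--}$ term by the change of variables $y\mapsto -y$ together with $\calG^{--}(-x,-y)=\calG^{++}(x,y)$; in the first equation I would treat the $\calG^{+-}$ term by the same change $y\mapsto -y$. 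Writing $\tilde{g}_k^-(x)\coloneqq g_k^-(-x)$ and letting $A,B$ be the integral operators with (symmetric) kernels $\calG^{++}(x,y)$ and $\calG^{+-}(x,-y)$, the system becomes
\begin{equation*}
Ag_k^++B\tilde{g}_k^-=\lambda_k g_k^+,\qquad Bg_k^++A\tilde{g}_k^-=\lambda_k\tilde{g}_k^-.
\end{equation*}
Adding and subtracting yields $(A+B)\phi_k=\lambda_k\phi_k$ and $(A-B)\psi_k=\lambda_k\psi_k$ with $\phi_k(x)=g_k^+(x)+g_k^-(-x)$ and $\psi_k(x)=g_k^+(x)-g_k^-(-x)$; since $A+B$ and $A-B$ are precisely the operators with kernels $\calG_\phi$ and $\calG_\psi$, this reproves the two displayed bullet facts and provides the decoupling needed.

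Next I would observe that $(g_k^+,g_k^-)\neq(0,0)$ forces $(\phi_k,\psi_k)\neq(0,0)$, since $\phi_k\equiv\psi_k\equiv 0$ would give $g_k^+\equiv 0$ and $\tilde{g}_k^-\equiv 0$, hence $g_k^-\equiv 0$. If both $\phi_k\not\equiv 0$ and $\psi_k\not\equiv 0$, then by the two displayed identities $\lambda_k$ is an eigenvalue of $\calG_\phi$ and simultaneously of $\calG_\psi$, contradicting the hypothesis that these kernels admit no common eigenvalue. Hence exactly one of $\phi_k,\psi_k$ vanishes identically: if $\psi_k\equiv 0$ then $g_k^+(x)=g_k^-(-x)$, and if $\phi_k\equiv 0$ then $g_k^+(x)=-g_k^-(-x)$, i.e.\ $(g_k^+,g_k^-)$ is, respectively, the reflection-symmetric or the reflection-antisymmetric pair determined by a single eigenfunction of $\calG_\phi$ or $\calG_\psi$ — which is the asserted dichotomy.

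The argument is mostly bookkeeping once the reflection trick is set up; the one step that genuinely needs care is the transformation of the second block equation, where the reflection $x\mapsto -x$ and the substitution $y\mapsto -y$ must be combined so that the $g_k^+$-dependence is never flipped — handled by rewriting the off-diagonal term through the kernel identity rather than a substitution — since a spurious sign there would swap the roles of $\phi_k$ and $\psi_k$. I would also record, for the hypothesis to be meaningful, that $A\pm B$ are self-adjoint compact operators, their kernels $\calG_\phi$, $\calG_\psi$ being symmetric and continuous on $D\times D$, so that ``eigenvalue''/``eigenfunction'' are unambiguous and the implication $\phi_k\not\equiv 0\Rightarrow\lambda_k\in\mathrm{spec}(\calG_\phi)$ is immediate from linearity of $(A+B-\lambda_k)\phi_k=0$; the derived identities for $\phi_k$ and $\psi_k$ hold in $L^2(D)$ because $\calG$ is bounded.
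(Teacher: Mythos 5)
Your argument is correct, and it is in fact more than the paper supplies: the paper states the two bullet identifications of $\phi_k$ and $\psi_k$ and then declares the theorem ``straightforward to derive'' without proof, so your reflection/decoupling computation (which rederives those bullets via $A\pm B$) plus the disjoint-spectrum dichotomy is exactly the intended route, written out in full. The one point you should not paper over is the final conclusion: what your (correct) argument yields is that either $\psi_k\equiv 0$, giving $g_k^{+}(x)=g_k^{-}(-x)$, or $\phi_k\equiv 0$, giving $g_k^{+}(x)=-g_k^{-}(-x)$. The theorem as printed asserts ``$g_k^{+}(x)=g_k^{-}(x)$'' for one of the alternatives, which does not follow (the kernel $\calG_{\phi}$ is not invariant under $(x,y)\mapsto(-x,-y)$ because of the $-6(x+y)$ terms, so its eigenfunctions have no definite parity and $-g_k^{-}(-x)$ cannot be traded for $g_k^{-}(x)$). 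You should state explicitly that the printed conclusion appears to contain a sign/reflection typo and that the correct dichotomy is $g_k^{+}(x)=\pm g_k^{-}(-x)$, rather than calling your conclusion ``the asserted dichotomy.'' All the supporting details you flag (symmetry of the kernel $\calG^{+-}(x,-y)=\frac{1}{48}\left[|x+y|-(x+y)\right]^3$, compact self-adjointness, and the non-vanishing of $(\phi_k,\psi_k)$) check out.
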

 \begin{remark}
     The kernel $K_{\alpha} = |x-y|^{\alpha}$ has been studied in~\cite{bogoya2012eigenvalues} for $\alpha=1$, where the eigenvalues have a leading positive term and all of the rest eigenvalues are negative and decay as $\frac{c}{(2k+1)^2}$. Consider the eigenvalue for $|x - y|^3$, we need to find 
     \begin{equation*}
         \lambda h(x) = \int_{-1}^x (x - y)^3 h(y) dy + \int_{x}^1 (y- x)^{3} h(y) d y
     \end{equation*}
     by differentiating the above equation 4 times, we get $f^{(4)} = \frac{ 12 }{\lambda }h(x)$, let $\omega^4 = \frac{12}{\lambda}$, then the solution consists of the basis 
     \begin{equation*}
         \sum_{k=0}^3 A_k \exp(\omega e^{\frac{2\pi i  k}{4}} x ).
     \end{equation*} 
   Thus solving the eigensystem is equivalent to solving a $4\times 4$ matrix $\det(M) = 0$ for $\omega$. Similar arguments hold for the Hankel kernel $|x+y|^3$, they share the same basis. The exact values are quite expensive to compute. 
 \end{remark}
   Now we apply the same idea for $\calG_{\phi}$, by the same differentiation technique used in deriving~\eqref{eq:ODE}, we arrive at the same form:
   \begin{equation*}
       \phi_k(x) =     
         \sum_{l=0}^3 c_l \exp(\omega e^{\frac{2\pi i  l}{4}} x ),
    \end{equation*}   
    where $\omega^4 = \frac{2}{\lambda}$, here $\lambda > 0$, thus we choose $\omega\in\bbR^{+}$ and the basis are more explicit:
    \begin{equation*}
               \phi_k(x) = c_0 \cosh(\omega x) + c_1 \sinh(\omega x) + c_2 \cos(\omega x) + c_3\sin(\omega x).
    \end{equation*}
    The eigenvalues $\lambda_k$ can be computed in a similar way as in Theorem~\ref{lem:B4} and there are constants $c_1, c_2 > 0$ that $c_1 k^{-4} \le \lambda_k \le c_2  k^{-4}$.

\subsection{Leaky $\texttt{ReLU}$ activation function}
For the leaky $\texttt{ReLU}$ activation function with parameter $\alpha\in (0, 1)$, $ \sigma_{\alpha}(x) = \sigma(x) - \alpha\sigma(-x)$, we can derive the Gram matrix $G_{\alpha}$:
\begin{equation*}
\begin{aligned}
    G_{\alpha, ij} &= \int_{-1}^1 \sigma_{\alpha}(x - b_i) \sigma_{\alpha}(x - b_i)  d x \\&= \int_{-1}^1(\sigma(x-b_i) - \alpha\sigma(-x + b_i) ) (\sigma(x-b_j) - \alpha\sigma(-x + b_j) ) dx  \\
    &=\calG(b_i, b_j) + \alpha^2 \calG(-b_i, -b_j) -\frac{\alpha}{6}|b_i - b_j|^3 .
\end{aligned}
\end{equation*}
Then we can derive the following estimate for the eigenvalue for $G_{\alpha}$. Let the kernel $\calG_{\alpha}(x, y) \coloneqq \calG(x, y) + \alpha^2 \calG(-x, -y) -\frac{\alpha}{6}|x - y|^3 $.
\begin{theorem}
    Suppose $b_i$ are quasi-evenly spaced on $[-1, 1]$, $b_i = -1 + \frac{2(i -1)}{n}+ o\left(\frac{1}{n}\right)$. Let $\lambda_1 \ge \lambda_2 \ge \cdots \ge \lambda_n \ge 0$ be the eigenvalues of the Gram matrix $G_{\alpha}$ then $|\lambda_k - \frac{n}{2}\mu_{\alpha, k} |\le C$ for some constant $C = \calO(1)$, where $\mu_{\alpha, k} = \calO(\frac{(\alpha-1)^2}{k^4})$ is the $k$-th eigenvalue of $\calG_{\alpha}$.
\end{theorem}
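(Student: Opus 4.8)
The plan is to adapt the proof of Theorem~\ref{thm:spectrum1D} almost verbatim; the only genuinely new ingredient is the spectral decay of the continuous leaky kernel $\calG_\alpha$. First I would record the basic structure. Since $\sigma_\alpha(x)=\sigma(x)-\alpha\sigma(-x)$, the identity already derived in the excerpt gives $\calG_\alpha(x,y)=\int_{-1}^1\sigma_\alpha(z-x)\sigma_\alpha(z-y)\,dz=\calG(x,y)+\alpha^2\calG(-x,-y)-\tfrac{\alpha}{6}|x-y|^3$, so $\calG_\alpha$ is a bona fide Gram kernel and hence continuous, symmetric and positive semidefinite on $[-1,1]^2$; its integral operator $K_\alpha$ is therefore compact self-adjoint with eigenvalues $\mu_{\alpha,1}\ge\mu_{\alpha,2}\ge\cdots\ge0$ tending to $0$. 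Each of the three summands is Lipschitz on the compact square, so $\calG_\alpha$ is Lipschitz there; that is the only regularity I will need in the perturbation steps.

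Step~1 would be to pin down $\mu_{\alpha,k}=\Theta\big((1-\alpha)^2k^{-4}\big)$. Differentiating the eigenvalue identity $\int_{-1}^1\calG_\alpha(x,y)\phi_k(y)\,dy=\mu_{\alpha,k}\phi_k(x)$ four times in $x$ and using $\partial_x^4\calG(x,y)=\delta(x-y)$, $\partial_x^4\calG(-x,-y)=\delta(x-y)$ and $\partial_x^4|x-y|^3=12\,\delta(x-y)$, I obtain $\partial_x^4\calG_\alpha(x,y)=(1-2\alpha+\alpha^2)\delta(x-y)=(1-\alpha)^2\delta(x-y)$, hence $\phi_k^{(4)}=\tfrac{(1-\alpha)^2}{\mu_{\alpha,k}}\phi_k$ on $(-1,1)$, with boundary conditions inherited from the vanishing of the appropriate $x$-derivatives of $\calG_\alpha$ at the endpoints (the leaky analogues of the conditions behind Lemma~\ref{lem:B1}). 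Setting $c_k\coloneqq\big((1-\alpha)^2/\mu_{\alpha,k}\big)^{1/4}$, Birkhoff's asymptotics for self-adjoint fourth-order boundary value problems (as invoked via \cite{naimark1967}) give $c_k=\Theta(k)$, i.e.\ $\mu_{\alpha,k}=\Theta\big((1-\alpha)^2k^{-4}\big)$; for the mere upper bound stated in the theorem one may alternatively combine Weyl's subadditivity of eigenvalues with the $\Theta(k^{-4})$ decays of the three summand kernels.

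Steps~2 and~3 are then the transcription of the proof of Theorem~\ref{thm:spectrum1D}. Following \cite{widom1958eigenvalues} I would introduce the piecewise-constant kernel $\calG_\alpha^{*}$ obtained by rounding both arguments of $\calG_\alpha$ down to the grid $\{-1+\tfrac{2(i-1)}{n}\}$; for the exactly equispaced biases $b_i^{*}=-1+\tfrac{2(i-1)}{n}$ the Gram matrix $\bmG_\alpha^{*}$ then has eigenvalues exactly $\tfrac{n}{2}\mu_{\alpha,i}^{*}$, where $\mu_{\alpha,i}^{*}$ are the eigenvalues of $K_\alpha^{*}$, and Weyl's inequality for compact self-adjoint operators with the Lipschitz bound gives $|\mu_{\alpha,i}-\mu_{\alpha,i}^{*}|\le\|K_\alpha-K_\alpha^{*}\|=\calO(n^{-1})$. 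For the quasi-equispaced biases $\tilde b_i=b_i^{*}+o(1/n)$, Weyl's inequality for Hermitian matrices gives $|\lambda_i(\tilde\bmG_\alpha)-\lambda_i(\bmG_\alpha^{*})|\le\|\tilde\bmG_\alpha-\bmG_\alpha^{*}\|\le\big(\sum_{i,j}|\calG_\alpha(\tilde b_i,\tilde b_j)-\calG_\alpha(b_i^{*},b_j^{*})|^2\big)^{1/2}=o(1)$, since each of the $n^2$ terms is $o(1/n^2)$. Combining, $|\lambda_i(\tilde\bmG_\alpha)-\tfrac{n}{2}\mu_{\alpha,i}|\le o(1)+\tfrac{n}{2}\,\calO(n^{-1})=\calO(1)$, which is the assertion. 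The main obstacle I anticipate lies entirely inside Step~1: writing down the correct endpoint boundary conditions for $\phi_k$. Because $\sigma_\alpha(z-1)=-\alpha(1-z)$ does not vanish, one has $\calG_\alpha(1,y)\ne0$ in general, so the clean conditions $\phi_k(1)=\phi_k'(1)=\phi_k''(-1)=\phi_k'''(-1)=0$ of the pure-$\texttt{ReLU}$ case are replaced by conditions that couple the two endpoints (the same subtlety flagged in the excerpt's discussion of $\calG_\phi$); verifying those and feeding them into Birkhoff's machinery to certify $c_k=\Theta(k)$ is the only nonroutine computation, everything in Steps~2--3 needing nothing beyond Lipschitz continuity of $\calG_\alpha$ on a compact square.
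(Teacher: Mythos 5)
Your overall architecture matches the paper's: reduce the discrete eigenvalues to those of the continuous kernel $\calG_\alpha$ by the Widom/Weyl discretization already used in Theorem~\ref{thm:spectrum1D} (the paper does not even repeat Steps 2--3; it inherits them), and obtain the decay of $\mu_{\alpha,k}$ from the fourth-order ODE $\psi_{\alpha,k}^{(4)}=\frac{(\alpha-1)^2}{\mu_{\alpha,k}}\psi_{\alpha,k}$, whose derivation (including the $(1-\alpha)^2=1+\alpha^2-2\alpha$ bookkeeping from $\partial_x^4|x-y|^3=12\,\delta(x-y)$) you carry out correctly. The divergence is in how Step~1 is finished. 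The paper's entire proof of this theorem consists of writing the characteristic (secular) equation for $w_{\alpha,k}=\sqrt{|1-\alpha|}\,\mu_{\alpha,k}^{-1/4}$ explicitly --- the combination $Z_{0,\alpha}+Z_{1,\alpha}\cos(2w)+Z_{2,\alpha}\sin(2w)=0$ with the degree-$\le 4$ polynomial coefficients $P_{i,\alpha},Q_{i,\alpha},R_{i,\alpha}$ --- and then localizing roots in the intervals $[n\pi,(n+\tfrac12)\pi]$ and $[(n+\tfrac12)\pi,(n+1)\pi]$ by sign changes, which is precisely the verification that $w_{\alpha,k}=\Theta(k)$. You instead appeal to Birkhoff asymptotics via~\cite{naimark1967}. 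That appeal is only valid once one checks that the endpoint conditions for $\calG_\alpha$ (which, as you correctly note, couple $x=\pm 1$ because $\sigma_\alpha(z-1)\neq 0$) are Birkhoff-regular; that check is not routine and is exactly the content the paper supplies by brute force. So your proof is a correct skeleton with its one load-bearing computation explicitly deferred.

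Two smaller points. First, your fallback of Weyl subadditivity applied to the three summand kernels can give at best $\mu_{\alpha,k}=\cO(k^{-4})$ with a constant like $1+\alpha+\alpha^2$; it cannot produce the $(\alpha-1)^2$ prefactor, since the individual kernels do not degenerate as $\alpha\to 1$ while their sum does (loss of nonlinearity collapses all but the leading eigenvalue). So that route proves a strictly weaker statement than the one claimed. Second, in Step~3 note that the $o(1)$ bound on $\|\tilde{\bmG}_\alpha-\bmG_\alpha^{*}\|_F$ requires each entry to be $o(1/n)$, not $o(1/n^2)$ as you wrote; Lipschitz continuity of $\calG_\alpha$ together with $|\tilde b_i-b_i^{*}|=o(1/n)$ gives exactly $o(1/n)$ per entry and hence $o(1)$ for the Frobenius norm over $n^2$ entries, so the conclusion stands but the stated per-entry rate is off.
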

\begin{proof}
Let the kernel $\calG_{\alpha}(x, y) \coloneqq \calG(x, y) + \alpha^2 \calG(-x, -y) -\frac{\alpha}{6}|x - y|^3 $, then using the same differentiation technique in deriving~\eqref{eq:ODE}, if $\psi_{\alpha, k}$ is an eigenfunction of $\calG_{\alpha}$ for the eigenvalue $\mu_{k,\alpha}$, we have
\begin{equation*}
    \psi_{\alpha, k}^{(4)} = \frac{(\alpha-1)^2}{\mu_k} \psi_{\alpha, k}.
\end{equation*}
Let $w_{\alpha, k} = \sqrt{|1-\alpha|} \mu_k^{-\frac{1}{4}}$, then equivalently we obtain the following equation for $w_{\alpha, k}$:
\begin{equation*}
   \begin{aligned}
    & \left(P_{0,\alpha}(w_{\alpha, k}) + P_{1,\alpha}(w_{\alpha, k})\cos(2 w_{\alpha, k}) + P_{2,\alpha}(w_{\alpha, k}) \sin(2w_{\alpha, k})\right)\\&+\tanh(w_{\alpha,k}) \left(Q_{0,\alpha}(w_{\alpha, k})  + Q_{1,\alpha}(w_{\alpha, k})\cos(2 w_{\alpha, k}) + Q_{2,\alpha}(w_{\alpha, k}) \sin(2w_{\alpha, k}) \right) \\
    &+ \tanh^2(w_{\alpha,k}) \left(R_{0,\alpha}( w_{\alpha, k})  + R_{1,\alpha}( w_{\alpha, k})\cos(2 w_{\alpha, k}) + R_{2,\alpha}( w_{\alpha, k}) \sin(2w_{\alpha, k}) \right) = 0,
   \end{aligned}
   \end{equation*}
   where $P_{i,\alpha}, Q_{i,\alpha}, R_{i,\alpha}$, $i=0,1,2$ are polynomials of $w_{\alpha, k}$ of degree $\le 4$.  Set $A_{\alpha}(x) = (- 36 \alpha^4  + 42\alpha^5 - 12\alpha^6) x^2$ and $B_{\alpha}(x) = (8\alpha^4 - 8\alpha^5 + 2\alpha^6) x^4$, then 
   \begin{equation*}
       \begin{aligned}
          P_{0,\alpha} (x) &= \frac{3}{2} + 6\alpha^2 - 6\alpha^3 + \frac{3}{2}\alpha^4 +A_{\alpha}(x)  + B_{\alpha}(x), \\
          P_{1,\alpha}(x) &= P_{0,\alpha}(x) - 3,\\
          P_{2,\alpha}(x) &= -3\alpha^2(\alpha^2-3\alpha+3) x + 2\alpha^2(\alpha-2)(2-9\alpha^2+6\alpha^3)x^3,
       \end{aligned}
   \end{equation*}
    \begin{equation*}
       \begin{aligned}
          Q_{0,\alpha}(x) &= 2\alpha^2 x (-18 + 15\alpha - 42\alpha^2 - 57\alpha^3 + 18\alpha^4 + (12 \alpha^2-14\alpha^3 + 4\alpha^4) x^2), \\
          Q_{1,\alpha}(x) &= 2\alpha^2 x(9 - 6\alpha + 45\alpha^2 - 18\alpha^4 + (8 + 4\alpha + 24\alpha^2 - 28\alpha^3 + 8\alpha^4) x^2),\\
          Q_{2,\alpha} (x) & = -12\alpha^2 x^2 (-4 + 3\alpha - 10\alpha^2 + 13 \alpha^2 + 4\alpha^4),
       \end{aligned}
   \end{equation*}
    \begin{equation*}
       \begin{aligned}
          R_{0,\alpha}(x) &= \frac{1}{2}\left[ 3 - 24\alpha^2 + 12 \alpha^3 + 111\alpha^4 - 144 \alpha^5 + 48\alpha^6 \right] -A_{\alpha}(x) + B_{\alpha}(x), \\
          R_{1,\alpha}(x) &= \frac{1}{2}\left[-3+48\alpha^2 - 36\alpha^2-105\alpha^4 + 144\alpha^5 -48\alpha^6\right] - A_{\alpha}(x) + B_{\alpha}(x),\\
          R_{2,\alpha}(x) &= \alpha^2(27-21\alpha-87\alpha^2+114\alpha^3 - 36\alpha^4)x + \alpha^2(8-4\alpha -12\alpha^2 + 14\alpha^3 - 4\alpha^4) x^2.
       \end{aligned}
   \end{equation*}
   We can rewrite the equation as 
   \begin{equation*}
       Z_{0,\alpha}(w_{\alpha, k}) + Z_{1,\alpha}(w_{\alpha, k})\cos(2 w_{\alpha, k}) + Z_{2, \alpha}(w_{\alpha, k}) \sin(2 w_{\alpha, k}) = 0,
   \end{equation*}
   where $Z_{0,\alpha} = P_{0,\alpha} + \tanh(w_{\alpha, k}) Q_{0, \alpha} + \tanh^2(w_{\alpha, k}) R_{0, \alpha}$, $Z_{1,\alpha} = P_{1,\alpha} + \tanh(w_{\alpha, k}) Q_{1, \alpha} + \tanh^2(w_{\alpha, k}) R_{1, \alpha}$ and $Z_{2,\alpha} = P_{2,\alpha} + \tanh(w_{\alpha, k}) Q_{2, \alpha} + \tanh^2(w_{\alpha, k}) R_{2, \alpha}$. It is not hard to show that as $w_{\alpha, k} > 0$, we have $0< 1-\tanh(w_{\alpha, k}) \le 2e^{-2w_{\alpha, k}}$, and there exists a constant $c > 0$ that $\forall x > c$, 
   \begin{equation*}
       Z_{0,\alpha}(x) + Z_{1,\alpha}(x) > 0 ,\quad \text{ and }\quad  Z_{0,\alpha}(x) - Z_{1,\alpha}(x) < 0   
   \end{equation*}
   by computing the sign of leading power in $x$, 
   which implies that there exist roots on the intervals $[n\pi, (n+\frac{1}{2})\pi]$ and $[(n+\frac{1}{2})\pi, (n+1)\pi]$, respectively for sufficiently large $n$.
\end{proof}
The following corollary can be derived by using the Corollary~\ref{COR: EIGEN 1D}. It shows that the $k$th eigenvalue grows as $\calO((\alpha - 1)^2 k^{-4})$ when $k$ is sufficiently large.
\begin{corollary}
    When $\{b_i\}_{i=1}^n$ are i.i.d uniformly distributed on $[-1,1]$, then with probability $1-p$ that 
    \begin{equation*}
    \begin{aligned}
          |\lambda_k - \frac{n}{2}\mu_{\alpha, k} |= \begin{cases}
             \cO\left(n^{\frac{5}{8}}i^{-3} \sqrt{\log \frac{n}{p}}\right), &  i < n^{\frac{7}{8}},\\
             \cO\left(n^{-2}  \sqrt{\log\frac{n}{p}}\right), & n^{\frac{7}{8}} \le  i \le n,
         \end{cases}       
    \end{aligned}
    \end{equation*}
    for certain constant $C > 0$, where $\mu_{\alpha, k} = \calO(\frac{(\alpha-1)^2}{k^4})$  is the $k$-th eigenvalue of $\calG_{\alpha}$.
\end{corollary}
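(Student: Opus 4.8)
The plan is to deduce this corollary from the probabilistic spectral estimate of Theorem~\ref{COR: EIGEN 1D}, run with the one-dimensional $\texttt{ReLU}$ Gram kernel $\calG$ replaced by the leaky-$\texttt{ReLU}$ kernel $\calG_\alpha(x,y)=\calG(x,y)+\alpha^2\calG(-x,-y)-\frac{\alpha}{6}|x-y|^3$ and with $\rho\equiv\tfrac12$ the uniform density on $[-1,1]$. The proof of Theorem~\ref{COR: EIGEN 1D} uses only four structural facts about the continuous kernel and its eigensystem: (i) $\partial_x^4\calG=\delta$, so that the eigenfunctions solve a constant-coefficient fourth-order ODE of the form \eqref{eq:ODE}; (ii) the eigenvalues decay like $\Theta(k^{-4})$; (iii) the eigenfunctions are uniformly bounded in $L^\infty$; and (iv) their first derivatives satisfy $\|\phi_k'\|_{L^\infty}=\cO(k)$, which is what controls the quadrature error $\|\Phi_m^T\Phi_m-\frac n2\mathrm{Id}_m\|_{\rm op}$. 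First I would verify that $\calG_\alpha$ enjoys the analogues of all four; once that is done, the argument of Theorem~\ref{COR: EIGEN 1D} transfers essentially word for word.

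For (i), the computation $\partial_x^4|x-y|^3=12\,\delta(x-y)$ together with $\partial_x^4\calG(-x,-y)=\delta(x-y)$ (the reflection $x\mapsto-x$ leaves a fourth derivative invariant) gives $\partial_x^4\calG_\alpha=(1+\alpha^2-2\alpha)\,\delta=(1-\alpha)^2\,\delta$, so an eigenfunction $\psi_{\alpha,k}$ of $\calG_\alpha$ for eigenvalue $\mu_{\alpha,k}$ satisfies $\psi_{\alpha,k}^{(4)}=\frac{(1-\alpha)^2}{\mu_{\alpha,k}}\psi_{\alpha,k}$ — the same form as in the leaky-$\texttt{ReLU}$ section, up to the constant $(1-\alpha)^2$ — and hence is a linear combination of $\cosh(w_{\alpha,k}x),\sinh(w_{\alpha,k}x),\cos(w_{\alpha,k}x),\sin(w_{\alpha,k}x)$ with $w_{\alpha,k}=\sqrt{|1-\alpha|}\,\mu_{\alpha,k}^{-1/4}$. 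Fact (ii) is exactly the content of the theorem immediately preceding this corollary, which located the roots $w_{\alpha,k}$ in successive intervals of length $\tfrac\pi2$ and concluded $\mu_{\alpha,k}=\Theta\big((1-\alpha)^2k^{-4}\big)$. Facts (iii) and (iv) would be obtained by an argument patterned on Lemma~\ref{lem:B4}: reading off the boundary conditions for $\psi_{\alpha,k}$ directly from $\int\calG_\alpha(x,y)\psi_{\alpha,k}(y)\,dy=\mu_{\alpha,k}\psi_{\alpha,k}(x)$, solving the resulting $4\times4$ linear system, and checking that after $L^2$-normalization the $\cos,\sin$-coefficients are $\cO(1)$ while the $\cosh,\sinh$-coefficients are $\cO(e^{-w_{\alpha,k}})$, so that $\|\psi_{\alpha,k}\|_{L^\infty}=\cO(1)$ and $\|\psi_{\alpha,k}'\|_{L^\infty}=\cO(k)$.

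With (i)--(iv) in hand I would then copy the proof of Theorem~\ref{COR: EIGEN 1D}: write $\bmG=\Phi_m\Lambda_m\Phi_m^T+\bmE_m$ with $(\Phi_m)_{ij}=\psi_{\alpha,j}(b_i)/\sqrt{\rho(b_i)}$ and $(\Lambda_m)_{ij}=\delta_{ij}\mu_{\alpha,i}$; bound $\|\bmE_m\|_{\rm op}=\cO(n/m^3)$ using (iii); apply Hoeffding's inequality to the quadrature error so that $\|\Phi_m^T\Phi_m-\frac n2\mathrm{Id}_m\|_{\rm op}=\cO\!\big(m\sqrt{n\log(m^2/p)}\big)$ with probability $1-p$ using (iv); combine the two bounds through Ostrowski's and Weyl's inequalities as in Theorem~\ref{THM: Cond Num}; and optimize over $m=\min(n^{1/8}i,\,n)$. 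This produces exactly the stated two-regime estimate, with $\mu_{\alpha,k}$ in place of $\tilde\mu_i$, which one may also verify with the direct spectral computation quoted in the preceding theorem.

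The main obstacle is precisely step (iii)--(iv), the uniform $L^\infty$ control of $\psi_{\alpha,k}$ and $\psi_{\alpha,k}'$. Although morally identical to the $\texttt{ReLU}$ case of Lemma~\ref{lem:B4}, the reflected term $\alpha^2\calG(-x,-y)$ in $\calG_\alpha$ mixes the two endpoints, so the boundary conditions for $\psi_{\alpha,k}$ are \emph{not} simply inherited from those of $\calG$ and must be re-derived from the integral eigenvalue equation; one then has to check that the hyperbolic coefficients in the eigenfunction remain exponentially small in $w_{\alpha,k}$, so that no spurious growth contaminates the $L^\infty$ bounds. Everything else is a transcription of the corresponding step in the $\texttt{ReLU}$ analysis and introduces only $\alpha$-dependent constants.
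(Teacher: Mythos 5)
Your proposal follows exactly the route the paper takes: the paper offers no proof of this corollary beyond the single remark that it ``can be derived by using'' the earlier probabilistic eigenvalue theorem, and your plan — verify the $\partial_x^4\calG_\alpha=(1-\alpha)^2\delta$ identity, import the eigenvalue asymptotics from the preceding theorem, establish the $L^\infty$ bounds on $\psi_{\alpha,k}$ and $\psi_{\alpha,k}'$, then rerun the $\Phi_m\Lambda_m\Phi_m^T+\bmE_m$ decomposition with Hoeffding, Ostrowski, and Weyl — is precisely that derivation spelled out. Your flag that the boundary conditions and the uniform eigenfunction bounds must be re-derived for $\calG_\alpha$ (since the reflected term couples the two endpoints) identifies the one step the paper silently omits, and your sketch of how to close it is the right one.
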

The leading eigenvalue $\mu_{\alpha, 1}$ can be estimated from above using the Hilbert-Schmidt norm of $\calG_{\alpha}$ and also using the Perron-Frobenious theorem~\cite{frobenius1912matrizen} (or Krein-Rutman theorem for positive compact operators), one can estimate both $\mu_{\alpha, 1}$ from below. 
\begin{corollary}
    For any $\alpha\in (0, 1)$, the leading eigenvalue $\mu_{\alpha, 1} \in [0.941, 2.754]$.
\end{corollary}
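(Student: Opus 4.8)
Write $K_\alpha$ for the integral operator on $L^2[-1,1]$ with kernel $\calG_\alpha$. Since $\calG_\alpha(x,y)=\int_{-1}^1\sigma_\alpha(z-x)\sigma_\alpha(z-y)\,dz$ with $\sigma_\alpha(t)=\tfrac{1+\alpha}{2}t+\tfrac{1-\alpha}{2}|t|$, the operator $K_\alpha$ is compact, self-adjoint and positive semidefinite, so $\mu_{\alpha,1}=\|K_\alpha\|_{\mathrm{op}}=\max_{\|v\|_{L^2}=1}\langle K_\alpha v,v\rangle$, $\mu_{\alpha,1}^2\le\sum_{k\ge1}\mu_{\alpha,k}^2=\|\calG_\alpha\|_{L^2([-1,1]^2)}^2$, and $\sum_k\mu_{\alpha,k}^2\le\mu_{\alpha,1}\sum_k\mu_{\alpha,k}$. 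The plan is to turn both inequalities into single-variable inequalities on $\alpha\in[0,1]$, using only elementary piecewise-polynomial integrals.

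First I would record that every quantity involved is an explicit function of $\alpha$. From $\calG_\alpha(x,y)=\calG(x,y)+\alpha^2\calG(-x,-y)-\tfrac{\alpha}{6}|x-y|^3$ one gets at once $\int_{-1}^1\calG_\alpha(x,x)\,dx=\tfrac43(1+\alpha^2)$ (because $\int_{-1}^1\calG(x,x)\,dx=\int_{-1}^1\tfrac{(1-x)^3}{3}\,dx=\tfrac43$ and the last term has vanishing diagonal). Expanding $\calG_\alpha^2$ and evaluating the six Hilbert–Schmidt pairings of $\calG$, $\calG(-\cdot,-\cdot)$ and $|x-y|^3$ — each an integral of a polynomial over a triangle, e.g.\ $\|\calG\|_{\mathrm{HS}}^2=\tfrac{176}{105}$ and $\int_{[-1,1]^2}|x-y|^6\,dx\,dy=\tfrac{64}{7}$ — gives $\|\calG_\alpha\|_{\mathrm{HS}}^2=:Q(\alpha)$, a quartic with rational coefficients; similarly, splitting the inner integral at $z=x$ gives $S(x,\alpha):=\int_{-1}^1|\calG_\alpha(x,y)|\,dy$ in closed form (piecewise polynomial in $x$).

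For the upper bound I would use $\mu_{\alpha,1}\le\|K_\alpha\|_{\mathrm{HS}}=\sqrt{Q(\alpha)}$, or the sharper Holmgren/Schur bound $\mu_{\alpha,1}\le\sup_{x\in[-1,1]}S(x,\alpha)$; in either case this is one explicit function of $\alpha$ whose maximum over $[0,1]$ is found by locating its finitely many critical points, yielding $\mu_{\alpha,1}\le2.754$ (indeed $\|\calG_\alpha\|_{\mathrm{HS}}\le\int_{-1}^1\calG_\alpha(x,x)\,dx=\tfrac43(1+\alpha^2)\le\tfrac83$ already suffices). For the lower bound I would use the trace–ratio estimate, valid for any positive operator,
\[
\mu_{\alpha,1}\ \ge\ \frac{\|\calG_\alpha\|_{\mathrm{HS}}^2}{\int_{-1}^1\calG_\alpha(x,x)\,dx}\ =\ \frac{3\,Q(\alpha)}{4(1+\alpha^2)},
\]
a rational function of $\alpha$ whose minimum over $[0,1]$ is $\ge 0.941$. (Alternatively, $\mu_{\alpha,1}\ge\langle K_\alpha\phi,\phi\rangle/\|\phi\|_{L^2}^2$ for a fixed positive $\phi$ adapted to the $\alpha=0$ eigenfunction, optimized in $\alpha$.)

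The main obstacle is not the algebra but the \emph{uniformity in $\alpha$} of the lower bound: the naive Perron–Frobenius/Collatz–Wielandt choice $\phi\equiv1$ degenerates as $\alpha\downarrow0$ since $\calG_\alpha(1,\cdot)\to0$, while for $\alpha$ near $1$ the kernel is no longer pointwise nonnegative ($\calG_1=\tfrac23+2xy<0$ when $xy<-\tfrac13$), so a literal positivity-of-the-kernel argument cannot be run on all of $[0,1]$. The trace–ratio bound above avoids both difficulties, using only $K_\alpha\ge0$ as an operator; what then remains is the elementary one-variable optimization of $3Q(\alpha)/\bigl(4(1+\alpha^2)\bigr)$ (and, if used, of $S(x,\alpha)$) on the interval $[0,1]$.
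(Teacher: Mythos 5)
Your proposal is correct, and its lower bound runs on a genuinely different mechanism from the paper's. The paper bounds $\mu_{\alpha,1}$ from below by computing $v(x)=\int_{-1}^1\calG_\alpha(x,y)\,dy$ in closed form and using $\mu_{\alpha,1}=\|K_\alpha\|_{\rm op}\ge \|K_\alpha\bmone\|_{L^2}/\|\bmone\|_{L^2}=\|v\|_{L^2}/\sqrt{2}$ (its appeal to Perron--Frobenius is not actually needed there; positive semidefiniteness of $K_\alpha$ suffices), then minimizes the resulting explicit function of $\alpha$, finding the minimum $0.941$ near $\alpha\approx 0.351$. You instead use the trace ratio $\mu_{\alpha,1}\ge\|K_\alpha\|_{\rm HS}^2/\operatorname{trace}(K_\alpha)$ with $\operatorname{trace}(K_\alpha)=\tfrac{4}{3}(1+\alpha^2)$; this is sound, reuses the Hilbert--Schmidt quantity already needed for the upper bound, and correctly sidesteps the two obstructions you name (the kernel is not pointwise nonnegative near $\alpha=1$, and pointwise Collatz--Wielandt with $\phi\equiv 1$ degenerates at $x=1$). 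Carrying out the pairings you list gives $\|\calG\|_{\rm HS}^2=\tfrac{176}{105}$, $\langle\calG,|x-y|^3\rangle=\tfrac{16}{35}$, $\langle\calG(\cdot,\cdot),\calG(-\cdot,-\cdot)\rangle=\tfrac{8}{63}$, $\||x-y|^3\|_{\rm HS}^2=\tfrac{64}{7}$, hence $Q(\alpha)=\tfrac{176}{105}-\tfrac{16}{105}\alpha+\tfrac{32}{63}\alpha^2-\tfrac{16}{105}\alpha^3+\tfrac{176}{105}\alpha^4$, and $3Q(\alpha)/\bigl(4(1+\alpha^2)\bigr)$ takes the values $\tfrac{44}{35}$ at $\alpha=0$, $\tfrac{4}{3}$ at $\alpha=1$, and stays above roughly $1.07$ on all of $[0,1]$, so it clears $0.941$ with room to spare; your trace upper bound $\mu_{\alpha,1}\le\tfrac{8}{3}<2.754$ likewise suffices and is simpler than the paper's degree-eight Hilbert--Schmidt polynomial (indeed your sharper constants are consistent with, and imply, the stated interval). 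The only caveat is that your write-up defers the evaluation of $Q(\alpha)$ and the two one-variable optimizations; these must actually be carried out for the proof to be complete, but they go through exactly as you indicate.
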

\begin{proof}
First, we compute that
\begin{equation*}
\begin{aligned}
  v(x) &\coloneqq  \int_{-1}^1 \calG_{\alpha}(x, y) dy \\&= \frac{(x-1)^2(x^2 + 6x + 17) - 2\alpha(1 + 6x^2 +  x^4) + \alpha^2 (x+1)^2(x^2 - 6x + 17)}{24},
\end{aligned}
\end{equation*}
 then the leading eigenvalue satisfies 
\begin{equation*}
  \mu_{\alpha, 1} \ge \frac{\|v\|_{L^2[-1, 1]}}{\sqrt{2}} \ge 2\sqrt{\frac{(728 - 323\alpha + 450\alpha^2 - 323 \alpha^3 + 728\alpha^4)}{2835}}.
\end{equation*}
Minimize the right-hand side, we find that $\mu_{\alpha, 1} \ge 0.941$, $\forall \alpha\in [0, 1]$, the minimum is achieved around $\alpha=0.351$. For the upper bound, we compute the Hilbert-Schmidt norm
\begin{equation*}
\begin{aligned}
   \sqrt{ \int_{-1}^1 \int_{-1}^1 |\calG_{\alpha}(x, y)|^2 dx dy } 
   &=\sqrt{\beta(\Upsilon\cdot (1, \alpha, \alpha^2, \dots, \alpha^8))} \le \frac{32}{3\sqrt{15}} \approx 2.754, \quad \alpha\in [0, 1],
\end{aligned}
\end{equation*}
where $\beta = \frac{512}{212837625}$ and $$\Upsilon=(1370738, -172283, 394834, -98757, 164086, -98757, 394834, -172283, 1370738)\in \bbR^9, $$
and the maximum is achieved at $\alpha = 1$.
\end{proof}

We should observe that the leading eigenvalue $\mu_{\alpha, 1}$ actually is uniformly bounded from below if $\alpha\in(0,1)$. Therefore the decay of eigenvalues is even worse for $\alpha$ close to $1$. This somewhat is straightforward since $\alpha\sim 1$ means the loss of nonlinearity and the eigenvalues collapse to zeros except the leading one.

\subsection{Analytic activation functions}
For analytic activation functions such as $\texttt{Tanh}$ or $\texttt{Sigmoid}$, the Gram matrix is formed by 
\begin{equation*}
    \bmG_{i,j} = \int_{D} \sigma(\bmw_i \cdot \bmx - b_i) \sigma(\bmw_j \cdot \bmx - b_j) d\bmx .
\end{equation*}
Particularly, if the weights $|\bmw_i|\le A < \infty$, then the kernel function can be viewed as 
\begin{equation*}
    \calG(\bmx, \bmy) = \int_{D} \sigma(\bmx\cdot \bmz) \sigma(\bmy \cdot \bmz) d \bmw  ,\quad \bmx, \bmy \in [-A, A]\times [-1, 1],
\end{equation*}
where $\bmz \coloneqq (\bmw, -1)\in \bbR^{d+1}$. Since the kernel is analytic in both $\bmx$ and $\bmy$, the eigenvalues of the kernel are decaying faster than any polynomial rate~\cite{reade1983eigenvalues,reade1984eigenvalues}. Two examples in one dimension are provided in Figure~\ref{fig:spectrum:1D:smooth:act}.

\begin{figure}
    \centering	
        \begin{subfigure}[c]{0.420833\linewidth}
    \centering            \includegraphics[width=0.90985\textwidth]{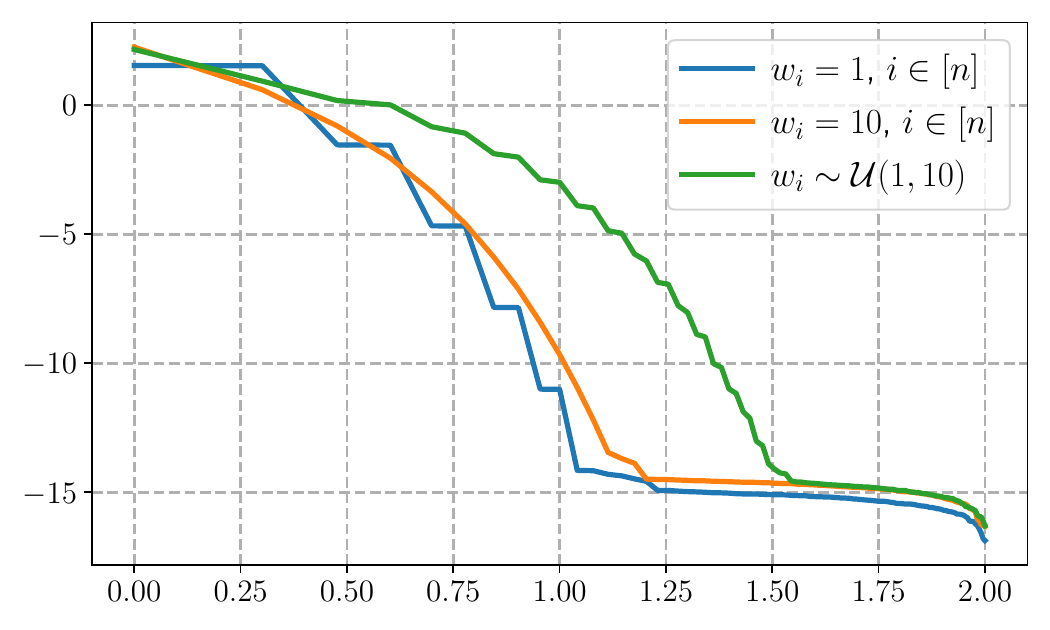}
    \subcaption{\texttt{Tanh}: $\sigma(x)=(e^{x}-e^{-x})/(e^{x}+e^{-x})$.}
    \end{subfigure}
    \hfill
    \begin{subfigure}[c]{0.420833\linewidth}
    \centering            \includegraphics[width=0.90985\textwidth]{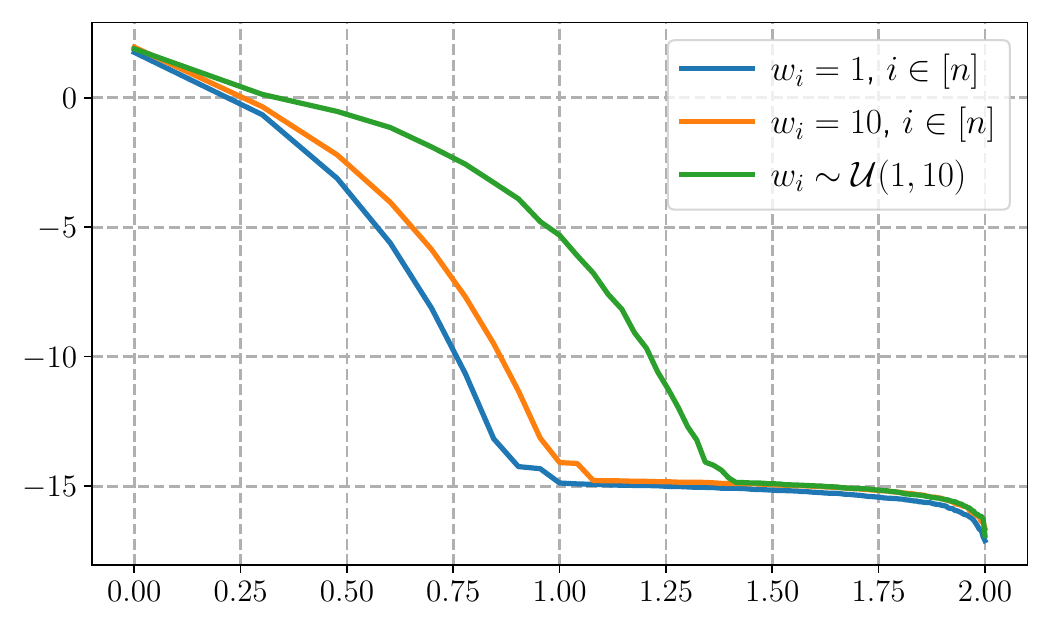}
    \subcaption{\texttt{Sigmoid}: $\sigma(x)=1/(1+e^{-x})$.}
    \end{subfigure}\hfill  
    \caption{Illustrations of the spectrum of Gram matrices in the one-dimensional case with $n=100$ for \texttt{Tanh} and \texttt{Sigmoid} activation functions. 
    The $x$-axis and $y$-axis correspond to $\log_{10}k$ and $\log_{10}\lambda_k$, respectively, for $k\in [n]$.
    Here, $(b_i)_{i=1}^{n}$ is evenly spaced in the interval $[-1,1]$ and $(w_i)_{i=1}^{n}$  is chosen from one of three cases: $w_i=1$ for all $i$, $w_i=10$ for all $i$, or $w_i$ randomly sampled from a uniform distribution $\calU(1,10)$.}
    	\label{fig:spectrum:1D:smooth:act}
\end{figure}

\end{document}